\pgfplotsset{compat=1.18}
\newtheorem{theorem}{Theorem}[section]
\newtheorem{lemma}[theorem]{Lemma}
\newtheorem{corollary}[theorem]{Corollary}
\newtheorem{proposition}[theorem]{Proposition}
\theoremstyle{definition}
\newtheorem{definition}[theorem]{Definition}
\theoremstyle{remark}
\newtheorem{remark}[theorem]{Remark}
\tiny\color{gray},
\newcommand{\lstbg}[3][0pt]{{\fboxsep#1\colorbox{#2}{\strut #3}}}
\lstdefinelanguage{diff}{
  basicstyle=\ttfamily\small,
  morecomment=[f][\lstbg{red!20}]-,
  morecomment=[f][\lstbg{green!20}]+,
}
\lstdefinelanguage{diffpython}{
  language=diff,
  morekeywords={def, if, else, for, while, return, import, from, as, class, with, try, except, finally, raise, lambda, and, or, not, in, is, None, True, False},
  morecomment=[l]{\#},
  morestring=[b]",
  morestring=[b]',
}
\newtcolorbox{AIbox}[2][]{aibox,title={#2},#1}
\newtcolorbox{Cognibox}[2][]{cognibox,title=#2,#1}
\definecolor{takeawayFrame}{HTML}{2D6B67} 
\definecolor{takeawayTitle}{HTML}{1E5A57} 
\definecolor{takeawayBack}{HTML}{F7FBFA}  
\newtcolorbox{Takeaway}[1][]{takeaway,title={TAKEAWAY},#1}
\newtcolorbox{TakeawayBox}[2][]{%
  takeaway,
  title={\fontsize{9}{9}\selectfont\bfseries\scshape #2}, 
  #1
}
\newtcolorbox{TakeawayPlain}[1][]{takeawayplain,#1}
\NewDocumentCommand{\DropCapTitle}{m}
 {
  \tl_set:Nx \l_tmpa_tl { \tl_trim_spaces:n {#1} }

  \tl_set:Nx \l_tmpb_tl { \tl_head:N \l_tmpa_tl } 
  \tl_set:Nx \l_tmpc_tl { \tl_tail:N \l_tmpa_tl } 

  {\fontsize{11}{11}\selectfont\bfseries\scshape \l_tmpb_tl}%
  {\fontsize{10}{10}\selectfont\mdseries\scshape \l_tmpc_tl}%
 }
\newtcolorbox{TakeawayDrop}[2][]{%
  takeaway,
  title={\DropCapTitle{#2}},
  #1
}
\newcommand{\TitleInitial}[1]{{\fontsize{12}{12}\selectfont \textswab{#1}}}
\newcommand{\TitleRest}[1]{{\fontsize{9}{10}\selectfont \mdseries\scshape #1}}
\NewDocumentCommand{\DropCapTitleOldBook}{m}
 {
  \tl_set:Nx \l_tmpa_tl { \tl_trim_spaces:n {#1} }
  \tl_set:Nx \l_tmpb_tl { \tl_head:N \l_tmpa_tl } 
  \tl_set:Nx \l_tmpc_tl { \tl_tail:N \l_tmpa_tl } 
  \TitleInitial{\l_tmpb_tl}\kern0.5pt\TitleRest{\l_tmpc_tl}
 }
\newtcolorbox{TakeawayDropGothic}[2][]{%
  takeaway,
  title={\DropCapTitleOldBook{#2}},
  #1
}
\definecolor{cardBack}{HTML}{F4F5F7}
\definecolor{cardFrame}{HTML}{E6E8EC}
\newtcolorbox{PaperCard}[1][]{papercard,#1}
\definecolor{thmFrame}{HTML}{CCCCFB}   
\definecolor{thmTitle}{HTML}{CCCCFB}   
\newtcolorbox{ThmBox}[2][]{thmbox,title={#2},#1}
\newtcolorbox{ThmBoxCapital}[2][]{thmbox,title={\DropCapTitle{#2}},
  #1}
\newtcolorbox{ThmBoxGothic}[2][]{thmbox,title={\DropCapTitleOldBook{#2}},
  #1}
\definecolor{Navy}{HTML}{0F172A}
\definecolor{Slate}{HTML}{64748B}
\definecolor{Emerald}{HTML}{10B981} 
\definecolor{Rose}{HTML}{F43F5E}    
\definecolor{AxisColor}{HTML}{CBD5E1} 
\definecolor{emerald}{HTML}{10B981}   
\definecolor{slate900}{HTML}{0F172A}  
\definecolor{cogni_orange}{HTML}{FF5101}  
\definecolor{cogni_magenta}{HTML}{FF2165}  
\definecolor{brandA}{RGB}{31,41,55}   
\definecolor{brandB}{HSB}{180,0.6,0.8}
\newtcolorbox{Observation}[2][]{observation,title=#2,#1}
\definecolor{rliableolive}{HTML}{BBCC33}
\definecolor{rliableblue}{HTML}{77AADD}
\definecolor{rliablered}{HTML}{EE8866}
\definecolor{SDEblue}{RGB}{28 58 88}
\definecolor{cc1}{rgb}{1.0, 0.44, 0.37}
\definecolor{cc2}{rgb}{0.0, 0.2, 0.6}
\definecolor{cc3}{RGB}{255, 191, 0}
\definecolor{cc4}{RGB}{0, 128, 128}
\theoremstyle{definition}
\crefname{section}{Sec.}{Sec.}
\crefname{theorem}{Theorem}{Theorems}
\crefname{corollary}{Corollary}{Corollaries}
\crefname{lemma}{Lemma}{Lemmas}
\crefname{equation}{Eq.}{Eq.}
\crefname{proposition}{Proposition}{Propositions}
\crefname{claim}{Claim}{Claims}
\crefname{remark}{Remark}{Remarks}
\crefname{observation}{Observation}{Observations}
\crefname{assumption}{Assumption}{Assumptions}
\crefname{template}{Template}{Template}
\crefname{definition}{Definition}{Definitions}
\crefname{appendix}{App.}{Apps.}
\crefname{algorithm}{Algorithm}{Algorithms}
\crefname{figure}{Fig.}{Fig.}
\crefname{table}{Table}{Tables}
\crefname{property}{Property}{Properties}
\crefname{line}{Line}{Lines}
\definecolor{table-blue}{RGB}{173, 216, 230}
\definecolor{darkblue}{rgb}{0, 0, 0.5}
\newcommand{\TPR}{\mathrm{TPR}}
\newcommand{\FPR}{\mathrm{FPR}}
\newcommand{\Var}{\mathrm{Var}}
\newcommand{\p}{\mathbf{p}}
\newcommand{\y}{y}
\newcommand{\q}{\mathbf{q}}
\newcommand{\A}{\mathbf{A}}
\newcommand{\E}{\mathbb{E}}
\newcommand{\J}{\mathfrak{J}}
\newcommand{\unif}{u}
\newcommand{\vv}{v}
\newcommand{\ww}{w}
\DeclareMathOperator{\logit}{logit}
\algrenewcommand\algorithmiccomment[1]{\hfill$\triangleright$~#1}
\algrenewcommand\algorithmicrequire{\textbf{Inputs:}}
\algrenewcommand\algorithmicensure{\textbf{Output:}}
\theoremstyle{remark}
\definecolor{table-blue}{HTML}{2F6FEF}   
\definecolor{accent-violet}{HTML}{B14FFF}
\colorlet{delta-pos}{green!55!black}
\colorlet{delta-neg}{red!65!black}
\pgfplotsset{compat=1.18}
\definecolor{accbase}{RGB}{65,105,225} 
\definecolor{good}{RGB}{46,160,67}     
\definecolor{bad}{RGB}{220,53,69}      
\def\eqref#1{equation~\ref{#1}}
\def\1{\bm{1}}
\def\vv{{\bm{v}}}
\DeclareMathAlphabet{\mathsfit}{\encodingdefault}{\sfdefault}{m}{sl}
\SetMathAlphabet{\mathsfit}{bold}{\encodingdefault}{\sfdefault}{bx}{n}
\definecolor{cogni_orange}{HTML}{FF5101}
\definecolor{cogni_magenta}{HTML}{FF2165}
\colorlet{abstractTop}{cogni_orange!4!white}
\colorlet{abstractBottom}{cogni_magenta!1!white}
\newtcolorbox{abstractshade}{
  enhanced,
  boxrule=0pt,
  arc=10pt,
  left=5pt,right=5pt,top=0pt,bottom=0pt,
  before skip=8pt, after skip=10pt,
  interior style={top color=abstractTop, bottom color=abstractBottom},
}
\let\origabstract\abstract
\let\endorigabstract\endabstract
\renewenvironment{abstract}{%
  \begin{abstractshade}%
  \origabstract%
}{%
  \endorigabstract%
  \end{abstractshade}%
}
\DeclareRobustCommand{\CognichipAffil}{%
  \raisebox{-0.15\height}{\includegraphics[height=2.5ex]{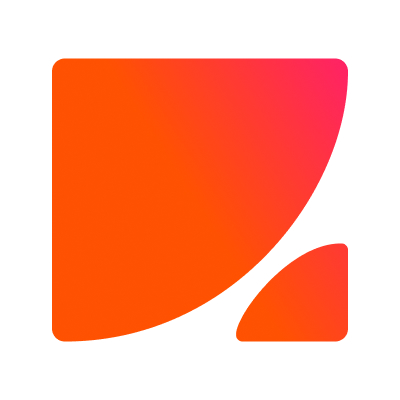}}%
  \hspace{0.4em}Cognichip AI%
}
\title{Rate or Fate? \newline RLV$^\varepsilon$R: Reinforcement Learning with Verifiable Noisy Rewards}
\author{%
\parbox{\textwidth}{\centering
\textbf{Ali Rad$^{1}$\footnotemark[1]\quad
Khashayar Filom$^{1}$\quad
Darioush Keivan$^{1}$}\\
\textbf{Peyman Mohajerin Esfahani$^{2}$\quad
Ehsan Kamalinejad$^{1}$}\\[0.6em]
$^{1}$\CognichipAffil\quad
$^{2}$University of Toronto
}%
}
\begin{document}
\maketitle
\footnotetext[1]{Corresponding author: \texttt{ali@cognichip.ai}}
\vspace{-0.7cm}
\vspace{-0.45cm}
\begin{center}
\textbf{\faGithub~\url{https://github.com/cognichip/Noisy-RL}}
\end{center}
\vspace{-0.2cm}
\begin{center}
     \today
\end{center}
\vspace{-0.3cm}

\begin{abstract}
Reinforcement learning with verifiable rewards (RLVR) is a simple but powerful paradigm for training LLMs: sample a completion, verify it, and update. In practice, however, \textbf{the verifier is almost never clean}--unit tests probe only limited corner cases; human and synthetic labels are imperfect; and LLM judges (e.g., RLAIF) are noisy and can be exploited--and this problem worsens on harder domains (especially coding) where tests are sparse and increasingly model-generated. We ask a pragmatic question: does the verification noise merely slow down the learning (\emph{rate}), or can it flip the outcome (\emph{fate})?

To address this, we develop an analytically tractable \textbf{multi-armed bandit view of RLVR dynamics}, instantiated with GRPO and validated in controlled experiments. Modeling \textbf{false positives} and \textbf{false negatives} and grouping completions into recurring reasoning modes yields a replicator-style (\textbf{natural-selection}) flow on the probability simplex. The dynamics decouples into within-correct-mode competition and a one-dimensional evolution for the mass on incorrect modes, whose drift is determined solely by Youden's index $J=\mathrm{TPR}-\mathrm{FPR}$. This yields a sharp \textbf{phase transition}: when $J>0$, the incorrect mass is driven toward extinction (learning); when $J=0$, the process is neutral; and when $J<0$, incorrect modes amplify until they dominate (anti-learning and collapse). In the learning regime $J>0$, noise primarily rescales convergence time (``rate, not fate''). Experiments on verifiable programming tasks under synthetic noise reproduce the predicted $J=0$ boundary. Beyond noise, the framework offers a general lens for analyzing RLVR stability, convergence, and algorithmic interventions.
\end{abstract}





\vspace{-0.6cm}
\begin{figure}[h]
    \centering
    \includegraphics[width=1\linewidth]{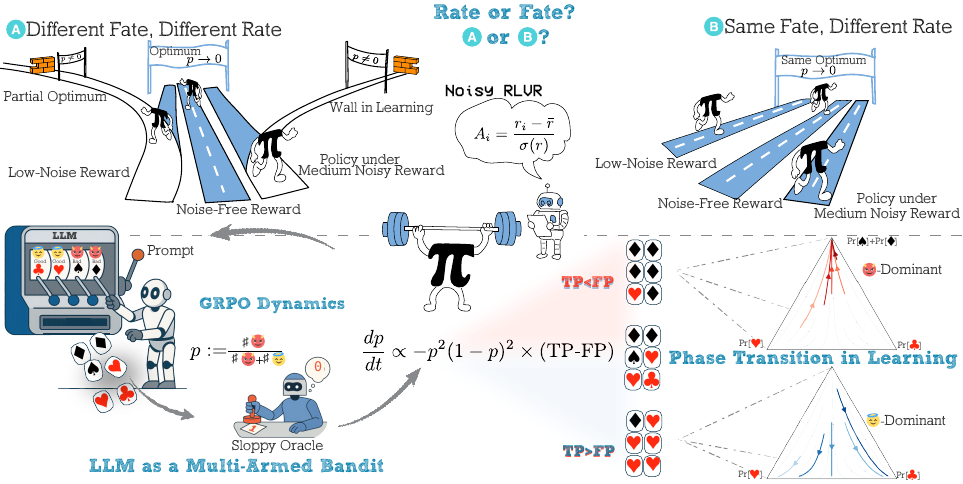}
    \label{fig:fig1}
\end{figure}

\newpage

\vspace{-0.2cm}

\vspace{-0.1cm}

\begin{figure}
    \centering
    \includegraphics[width=1.0\linewidth]{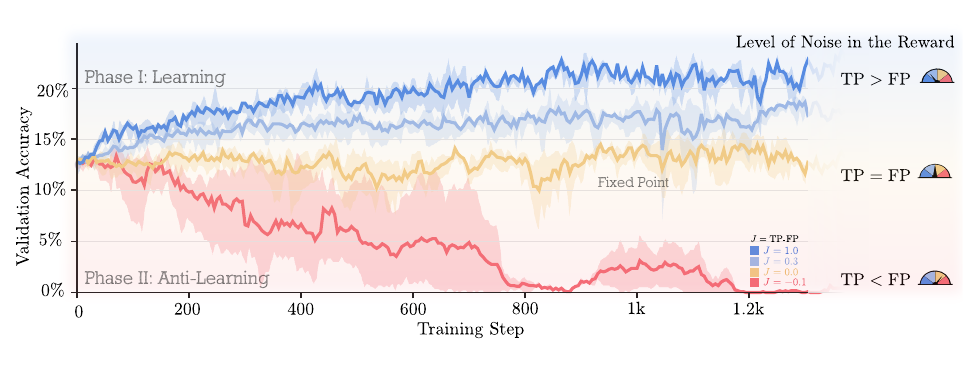}
\caption{\textbf{Phase transition under a sloppy oracle.}
We vary a single noise knob $J$, defined as
$J = 1 - \delta_{\mathrm{FP}} - \delta_{\mathrm{FN}} = \mathrm{TPR} - \mathrm{FPR}$,
which summarizes the net effect of false-positive and false-negative reward corruption induced by a noisy oracle.
As $J$ crosses zero, we observe a sharp transition from \emph{Phase I} (learning; $J>0$, accuracy improves with training and the bad-solution mass $p$ for a prompt converges to $0$) to \emph{Phase II} (unlearning; $J<0$, accuracy systematically degrades and $p$ converges to $1$), with $J=0$ as the critical boundary.
Curves report \emph{validation} performance during GRPO training on a Python code-generation task (two epochs) starting from Qwen-2.5-3B; solid lines denote the run-averaged $\mathbb{E}[\mathrm{pass@1}]$ and shaded bands indicate $95\%$ confidence intervals across runs.
Although our mean-field ODE characterizes the \emph{training-time} dynamics of $p$, the same sign-of-$J$ transition is mirrored on held-out validation prompts in this setup, suggesting the drift is not merely a training-set artifact.
}
    \label{fig:phase-transition-experimental-results}
\end{figure}

\section{Introduction}\label{sec:intro}

\paragraph{Reinforcement Learning and LLMs.} Recent breakthroughs in the reasoning capabilities of large language models (LLMs) through Reinforcement Learning (RL)--particularly Reinforcement Learning with Verifiable Rewards (RLVR) and group-normalized algorithms such as  Group Relative Policy Optimization (GRPO) (\cite{rlvr,su2025rlvrbridge})--have greatly expanded the frontier of model intelligence (\cite{shao2024deepseekmath}). These methods provide further hope for the idea that true creativity and intelligence can emerge through self-play, interaction with an environment, and reward-based feedback.

\paragraph{Group-Normalized RL and RLVR.} Group-normalized  approaches in RL, like GRPO, eliminate the need for an explicit reward model (critic or PRM,  \cite{ppo,lightman2023let}) in verifiable domains such as mathematics and code generation, and demonstrate that even a few rollouts per prompt are often sufficient to approximate the advantage of each generated sequence (\cite{rlvr,su2025rlvrbridge}). However, the cornerstone of these algorithms remains the sequence-level reward. This naturally leads to several key questions:

\begin{PaperCard}
\begin{quote}
\emph{How sensitive is RL training to the quality of grand truth labels and rewards? Is the performance robust? Or does performance converge to a fraction of the noise-free label as simply as "you get what you pay for /garbage in, garbage out"?}
\end{quote}
\end{PaperCard}


 Approaches such as LLM-as-Judge and RLAIF (\cite{Bai2022ConstitutionalAI,Rlaif_vs_rlhf})  have shown some promise to replacing human preference labeling with AI feedback (synthetic preferences).
Alternative methods attempt to remove the need for ground-truth supervision altogether, relying instead on label-free or self-rewarding mechanisms. However, these too are vulnerable to the same sources of noise, such as false positives and false negatives. For example, methods based on majority voting (\cite{zuo2025ttrl}), using the consistency of reasoning  traces (\cite{zhang2025consistent}) or using it's own model internal feedback (RLIF), like self-certainty--which leverage the log-probability confidence of generated sequences--aim to approximate reliable reward signals without external supervision (e.g., \cite{intuitor, deep_think_with_confidence}). 
\begin{TakeawayDrop}{Motivation}
\textbf{Learning without ground truth?} Reinforcement learning is fundamentally driven by reward and environment feedback, and is therefore highly sensitive to their quality. Can an agent still learn and self-improve when the feedback is noisy and no direct ground-truth signal is available?\newline To make this question precise, We begin by analyzing \textbf{RL dynamics under noisy feedback}. 
\end{TakeawayDrop}

\begin{figure}
    \centering
    \includegraphics[width=1\linewidth]{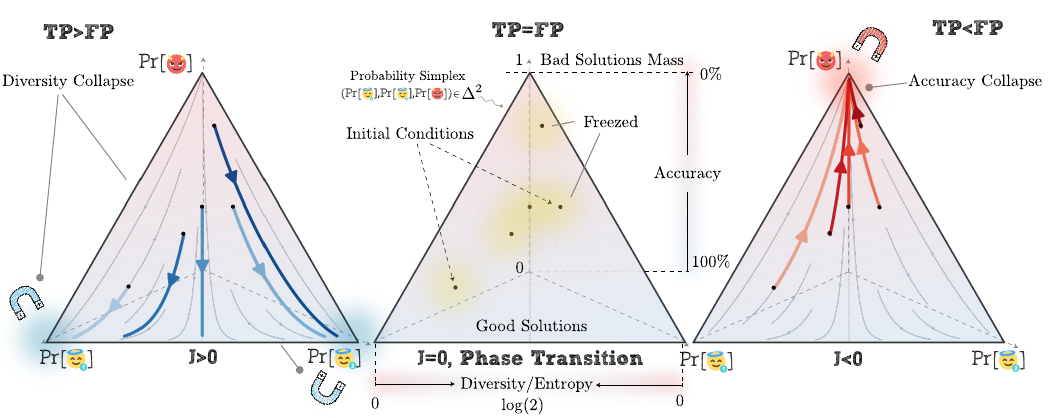}
    \caption{
    \textbf{The Winner Takes It All.}
The GRPO mean-field dynamics exhibit a striking structural property: in the absence of any symmetry among initial good arm masses, one arm ultimately dominates.
When the oracle is in the \emph{learning} regime ($J = \mathrm{TPR} - \mathrm{FPR} > 0$), the total bad-arm mass $p(t)$ converges to $0$, and among the good arms the algorithm converges to the one with the highest initial probability  (see Appendix \ref{sec:inner-good} for a rigorous theoretical analysis of this phenomenon.).
At the \emph{critical} boundary ($J=0$), the probability vector remains fixed throughout training, yielding a continuum of equilibrium points.
In the \emph{unlearning} regime ($J<0$), the flow reverses and the bad arm becomes the almost-sure winner.
This “winner-take-all” behavior is an inherent feature of the GRPO-style replicator dynamics.}
    \label{fig:placeholder}
\end{figure}

\paragraph{Noisy Reward.} Both major sources of supervision--human annotations and synthetically generated examples--are inherently prone to \emph{label noise}, which directly induces \emph{reward noise} in reinforcement learning settings. In addition, real-world automatic checkers are often imperfect: unit tests may be incomplete, edge cases may be missed, and multiple correct solutions may go unrecognized, etc. This problem becomes even more acute when LLM models produce the labels or rewards (\cite{grattafiori2024llama}), since AI-generated supervision can introduce systematic biases rather than purely random noise.
\paragraph{Noisy Reward for Coding Tasks.} These challenges are particularly severe in RLVR for coding, where verification relies on incomplete test suites and admits many semantically correct implementations, unlike short-answer math problems (e.g., AIME) or multiple-choice benchmarks with a fixed answer key (e.g., MMLU; \cite{hendryckstest2021}). Although current generative models perform impressively on many programming tasks, as we push toward increasingly difficult problems we should expect test coverage and fidelity to deteriorate. In the extreme, pass/fail outcomes can become essentially uncorrelated with true functional correctness, so that unit-test based verification approaches (and may even fall below) chance-level reliability. For this reason, we center our experiments on Python programming tasks, where imperfect verification is the norm and the resulting noise regime is both realistic and practically important.


In practice, these issues manifest primarily as \textbf{False Positives (FP)}--incorrect solutions that receive positive reward-and \textbf{False Negatives (FN)}--correct solutions that receive zero or negative reward. Together, these developments highlight an increasing dependence on potentially noisy supervision signals in RL training pipelines. This naturally raises a central question:\newline

\begin{PaperCard}
\begin{quote}
\emph{ Is there a noise threshold beyond which learning becomes unreliable and may even collapse, especially under AI-generated supervision?}

\end{quote}
\end{PaperCard}

To address this question, we start by considering the per-prompt sequence-level graders, and the reward is a noisy binary signal \(r\in\{0,1\}\). We model reward/label noise by two rates of flipping the true label with  false–negative and false–positive rates
\begin{equation}\label{eq:def-deltas}
\delta_{\mathrm{FN}}=\Pr(r=0\mid \text{good}),\qquad
\delta_{\mathrm{FP}}=\Pr(r=1\mid \text{bad}).
\end{equation}
Notice that these error rates in general can be time dependent, i.e $\delta_{\mathrm{FP},\mathrm{FN}}=\delta_{\mathrm{FP},\mathrm{FN}}(t)$.
Based on these two levels of noise, we can define a scalar 
\begin{equation}\label{eq:def-J}
\boldsymbol{J \;\coloneqq\; 1-\delta_{\mathrm{FN}}-\delta_{\mathrm{FP}}
\;=\;\mathrm{TPR}-\mathrm{FPR}}\footnote{Here, \(\mathrm{\bold{TPR}}:=\text{\textbf{T}rue \textbf{P}ositive \textbf{R}ate}=1-\delta_{\mathrm{FN}}\) and \(\mathrm{\bold{FPR}}:=\text{\textbf{F}alse \textbf{P}ositive \textbf{R}ate}=\delta_{\mathrm{FP}}\). For brevity, we also use TP and FP to denote the \emph{rates} 
$\mathrm{TPR}$ and $\mathrm{FPR}$. 
Thus TP/FP should always be read as TPR/FPR.}\in[-1,1] .
\end{equation}
which summarizes the \emph{net discriminative power} of the checker. In statistical decision theory this is known as \emph{\textbf{Youden’s index}}~\cite{youden1950index}. Concretely,
\begin{itemize}[leftmargin=13.5em]
\item \(J=1\): Perfect rewarder (\(\mathrm{TPR}=1,\ \mathrm{FPR}=0\));
\item \(J=0\): Chance-level (uninformative) rewarder (\(\mathrm{TPR}=\mathrm{FPR}\));
\item \(J<0\): Inverted or anti-informative rewarder.
\end{itemize}
Geometrically, \(J\) equals the \emph{vertical distance} between the ROC curve of the checker and the diagonal ``random'' line; thus it measures how far the verifier’s decisions deviate from random guessing. 

Recently, controlled experiments such as \cite{chen2025acereason} have shown that
imbalances between false-positive and false-negative rewards can induce severe
mode collapse during RL training. Related phenomena have also been documented
in mechanisms based on self-certainty, majority voting, or entropy
regularization (\cite{zhang2025co}), where systematic reward skew leads the model
to collapse onto a narrow subset of outputs. Methods that rely on LLM-as-Judge
further suffer from reward hacking, revealing structural vulnerabilities in
model-generated supervision. Complementary theoretical analyses of
\emph{noisy verifiers} in RLVR have begun to emerge (\cite{cai2025noisyrlvr}),
but despite this progress, a fundamental question remains:






\begin{TakeawayDrop}{This paper asks}
\begin{itemize}[leftmargin=0em]
\item \textbf{How much reward sloppiness can RLVR tolerate?}  
Concretely, under what levels of label/reward noise does GRPO/REINFORCE continue to improve accuracy, and when do characteristic failure modes (e.g., collapse or reward inversion) begin to emerge?

\item \textbf{What are the learning dynamics of RLVR under noisy rewards?}  
How do the magnitude and structure of noise affect asymptotic accuracy? In the infinite-training limit, can learning under noisy rewards reach the same performance as with noise-free signals?

\item \textbf{When and how do these dynamics break down-is there a phase transition?}  
Does the system exhibit a critical threshold beyond which learning abruptly fails or reverses direction, analogous to a phase transition in physical systems?

\item \textbf{Can we predict the \emph{rate} of accuracy improvement analytically?}  
Can we derive a closed-form drift (or mean-field ODE) for an accuracy-related state variable (e.g., total bad-mode mass \(p(t)\)) that yields time-to-accuracy predictions and cleanly separates \emph{rate} effects from \emph{fate} (the limiting performance)?
\end{itemize}
\end{TakeawayDrop}

To address these questions, we introduce the framework \textbf{RLV}$^\varepsilon$\textbf{R}: \textbf{R}einforcement \textbf{L}earning with \textbf{V}erifiable \textbf{Noisy} \textbf{R}ewards, to study the effects of noise through the lens of analytical insight.

\section{RLVR with Sloppy Rewards}

 Group-normalized reinforcement learning with verifiable rewards (RLVR) is an effective method to improve the policy optimization via simple, iterative loop. The goal is to maximize the expected reward by contrasting the performance of multiple completions generated from the same prompt. A single iteration proceeds as follows:

\begin{enumerate}[leftmargin=*, label=\textbf{\arabic*.}]
    \item \textbf{Sampling.} 
    For a given prompt $x$, we draw a cohort of $G$ independent completions from the current policy:
    \[
        y_1,\ldots,y_G \sim \pi_\theta(\cdot\mid x).
    \]
    Equivalently, sample indices $I_g \overset{\mathrm{iid}}{\sim}\mathrm{Categorical}(\mathbf p)$ with $\mathbf p=\mathrm{softmax}(\boldsymbol z)$, and let $y_g$ be the completion associated with $I_g$.
    \item \textbf{Scoring.} 
    We assign a raw reward $r_g := r(x,y_g)$ to each completion using a programmatic rule or a learned reward model.
    
    \item \textbf{Group Normalization.} 
    To isolate the relative quality of each completion, we compute per-sample advantages by standardizing rewards within the group:
    \[
      \widehat{A}_g \;=\; \frac{r_g - \bar r}{\sigma_r + \varepsilon},
    \]
    where $\bar r$ and $\sigma_r$ are the empirical mean and standard deviation of the rewards in the current batch.
    
    \item \textbf{Update.} 
    We perform a policy-gradient step to reinforce high-advantage completions (deferring PPO-style clipping or KL penalties to \S.\ref{subsec:mean-field-bad-mass}). The core update is:
    \begin{equation}\label{eq:objective-function}
      \Delta \theta
      \;=\;
      \eta\,\frac{1}{G}\sum_{g=1}^G \widehat{A}_g\,\nabla_{\theta}\log \pi_\theta(y_g\mid x).
    \end{equation}
\end{enumerate}

    
    
    

\begin{figure}
    \centering
    \includegraphics[width=1\linewidth]{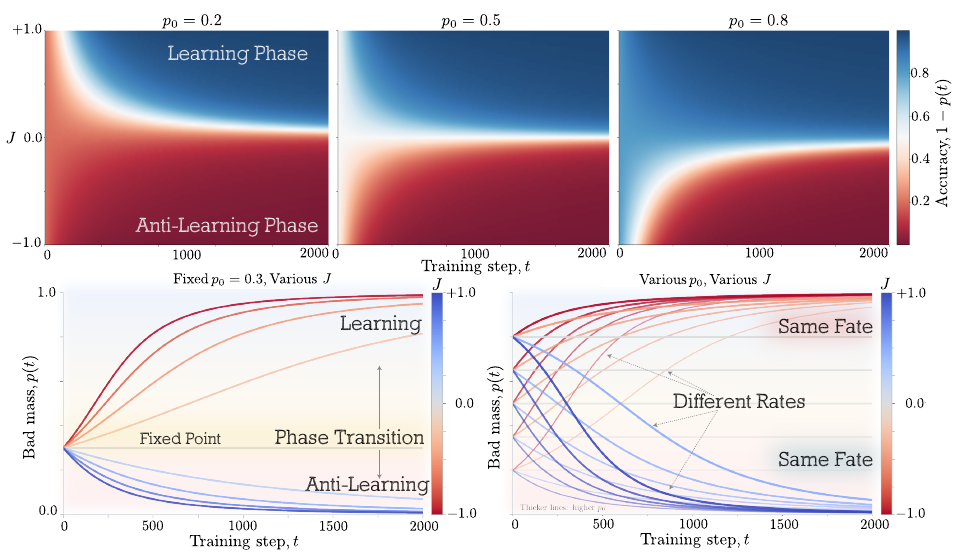}
    \caption{\textbf{ Rate, not Fate.} Our mean-field multi-armed bandit analysis predicts that, whenever learning succeeds ($J>0$), training with noisy rewards asymptotically converges to the same final accuracy as training on clean rewards; the noise level controls only the convergence speed, not the eventual performance. 
    The same phenomena happen for $J<0$. Our mean field model, along with experimental observation in Fig.~\ref{fig:phase-transition-experimental-results}  supports the ``convergence rate is different, fate is the same'' prediction.}
    \label{fig:placeholder}
\end{figure}
\subsection{Learning Dynamics}
To understand the essence of this update mechanism, let's consider the simple binary outcome setup 
 where the LLM generates either a ``good'' or ``bad'' solution. (In the \S \ref{sec:simplex-dynamics}, we will generalize the setup to the most general case.) Let $p$ be the probability of generating a bad solution, controlled by a logit $z$ such that $p=\sigma(z)=1/(1+e^{-z})\equiv\pi({\text{Bad}})$.

\paragraph{Good vs. Bad.}
For a per-sample normalized advantage $\widehat{A}$, the update direction is determined by the correlation between the normalized advantage $\widehat{A}$ and the gradient of the log-probability such that the expected logit update is
\[
\Delta z\ \propto\ \mathbb{E}\!\big[\widehat{A}\,\nabla_z\log\pi(a)\big].
\]
 In a binary setting, the score function simplifies to $\nabla_z\log\pi(\text{bad}) = 1-p$ and $\nabla_z\log\pi(\text{good}) = -p$.

By defining the conditional expected advantages $f(\text{bad}) = \mathbb{E}[\widehat{A} \mid \text{bad}]$ and $f(\text{good}) = \mathbb{E}[\widehat{A} \mid \text{good}]$,
we can compute the full expectation over actions:
\[
\mathbb{E}[\widehat{A}\,\nabla_z\log\pi(a)]
= p\,f(\mathrm{bad})\,(1-p) + (1-p)\,f(\mathrm{good})\,(-p)
= p(1-p)\bigl(f(\mathrm{bad})-f(\mathrm{good})\bigr).
\]

Passing to continuous time using $\dot p=\frac{dp}{dz}\,\dot z=p(1-p)\,\dot z$ together with \eqref{eq:objective-function}, we observe:

\begin{TakeawayDrop}{The Law of  Motion}
Group normalization up-weights better-than-average samples and down-weights worse-than-average ones. In the simplest two-class reduction (``good'' vs.\ ``bad''), this induces a \emph{replicator-skeleton} for the bad mass $p(t)=\pi_\theta(\mathrm{bad}\mid x)$:

\begin{equation}\label{eq-ODE-simple}
\dot p(t)
= -\,\eta\,\bigl[p(t)\bigl(1-p(t)\bigr)\bigr]^2 \,\Bigl(f(\mathrm{good})-f(\mathrm{bad})\Bigr),\qquad \textbf{GRPO Dynamics}
\end{equation}
where $f(\mathrm{class}):=\E[\widehat{A}\mid \mathrm{class}]$. Thus if $f(\mathrm{good})>f(\mathrm{bad})$, then $\dot p<0$ and accuracy rises.
\end{TakeawayDrop}

Despite its simplicity, \eqref{eq-ODE-simple} provides profound insight into the behavior of RL algorithms under group normalization. As long as good solutions yield higher normalized scores than bad ones, that is, when $f(\mathrm{good})>f(\mathrm{bad})$, the system drains probability mass, $p(t)$, from the bad state.
 
\paragraph{Interpretation.}
More generally, the dynamics of the form
\[
\dot p_i(t)=p_i(t)\Bigl(f_i\bigl(p(t)\bigr)-\bar f\bigl(p(t)\bigr)\Bigr),
\qquad
\bar f(p)=\sum_j p_j\, f_j(p),\qquad \textbf{Replicator Dynamics}
\]
are called the \emph{\textbf{replicator dynamics}}. Here, $f$ is the \emph{\textbf{fitness}} function, $p_i\ge 0$ and $\sum_i p_i=1$, and each type (or strategy) $i$ is rewarded or penalized according to how its fitness compares with the population average: types with above‑average fitness grow in frequency, while those with below‑average fitness decline (\cite{cressman2003evolutionary}).

\begin{PaperCard}
\begin{quote}
\emph{GRPO is like a \textbf{natural selection}. The one with \textbf{fitness} above the average will survive eventually.} 
\end{quote}
\end{PaperCard}

\subsection{Noisy Rewards}

To understand how reward noise shapes GRPO dynamics in \eqref{eq-ODE-simple}, we consider the binary-reward
setting $r\in\{0,1\}$ in which the observed signal is corrupted by the
false-positive and false-negative rates introduced in
\eqref{eq:def-deltas}. As summarized by Youden’s index $J$
in~\eqref{eq:def-J}, such a ``sloppy'' grader departs from the ground truth. Under this noise model, the effective success probability $q(p)$ and the reward
variance $\sigma(p)^2$ become functions of the bad-mass $p$:
\[
q(p) := \mathbb{E}[r]
      = (1-\delta_{\mathrm{FN}})
        - (1-\delta_{\mathrm{FP}}-\delta_{\mathrm{FN}})\,p,
\qquad
\sigma(p)^2 := \mathrm{Var}[r]
              = q(p)\bigl(1-q(p)\bigr).
\]

\paragraph{Signal vs. Noise.}
How does this corruption affect the learning dynamics? Remarkably, under group normalization, the advantage gap takes a purely geometric form (see Appendix~\ref{appendix:Group-Normalized RL with Noisy Reward}):
\begin{equation}\label{eq:delta_r_noisy}
\mathbb{E}[\widehat{A}\mid \text{good}] - \mathbb{E}[\widehat{A}\mid \text{bad}] \;=\; \frac{1-\delta_{\text{FN}}-\delta_{\text{FP}}}{\sigma(\delta_{\text{FN}},\delta_{\text{FP}},p)}\equiv\; \frac{J}{\sigma(p)}.
\end{equation}

\paragraph{The Crucial Role of Youden's Index.}
Substituting this back into the equation describing our dynamics, we see that $J$ acts as a signed coefficient of friction for learning.
\begin{itemize}[leftmargin=*]
    \item \textbf{If $J > 0$ (Signal):} The grader is better than random chance. The update pushes mass toward the correct solution, though the speed is throttled by the noise level $J$. 
    \item \textbf{If $J < 0$ (Anti-Signal):} The grader is systematically misleading. Learning actively reverses, optimizing for the wrong objective.
    \item \textbf{If $J = 0$ (Noise):} The gradient vanishes. The advantage gap collapses to zero, leaving the policy drifting in neutral territory regardless of the sample size.
\end{itemize}

\section{Phase Transition}


Let's continue to work with the binary outcome setting of the previous section.
Substituting the noisy reward gap from \eqref{eq:delta_r_noisy} into
the two-state GRPO dynamics~\eqref{eq-ODE-simple}, we obtain the following
scalar dynamics for the bad mass $p = \Pr(\text{bad})$:
\begin{equation}\label{eq:ODE-simple-binary}
    \dot{p} 
    \;=\; 
    -\,\eta\,\frac{J}{\sigma(p)}\,p^{2}(1-p)^{2},
    \qquad 
    J = \mathrm{TPR} - \mathrm{FPR}.
\end{equation}
Here $J$ is Youden’s index (the effective signal polarity), and $\sigma(p)>0$
is the group-normalized reward standard deviation at bad-mass level $p$.



\begin{TakeawayDrop}{Three Learning Regimes.}
    The sign of $J$ completely determines learning behavior: (Assuming $p_0\neq0,1\equiv$ degenerate cases)
\begin{equation*}
J > 0
\quad \Rightarrow \quad 
\dot{p} < 0 
\quad \Rightarrow \quad 
\textbf{Learning}: \text{ bad mass shrinks, accuracy rises to $1$}
\end{equation*}
\begin{equation*}
J = 0
\quad \Rightarrow \quad 
\dot{p} = 0 
\quad \Rightarrow \quad 
\textbf{Neutral}: \text{no systematic improvement, pure drift}
\end{equation*}
\begin{equation*}
J < 0
\quad \Rightarrow \quad 
\dot{p} > 0 
\quad \Rightarrow \quad 
\textbf{Anti-learning}: \text{bad mass grows, accuracy decays to $0$}
\end{equation*}
\end{TakeawayDrop}


This reveals a \textbf{sharp phase transition} at the critical boundary $\mathrm{TPR} = \mathrm{FPR}$, where the reward signal crosses from informative to misleading. 

Notice that in the noise-free setting (a perfect oracle),
\eqref{eq:ODE-simple-binary} simplifies to
\begin{equation}\label{eq-ODE-perfect-oracle}
    \dot{p}
    \;=\;
    -\,\eta\,p^{3/2}(1-p)^{3/2},\qquad \textbf{GRPO Dynamics with Perfect Oracle}
\end{equation}
which represents the baseline GRPO dynamics in the absence of reward noise.

\medskip
\noindent\emph{Note: In practice, $\mathrm{TPR}$ and $\mathrm{FPR}$ may drift as policy and rewarder co-evolve. without losing any generality, the equation above captures the instantaneous learning direction at each moment.}

\subsection{Bifurcation at the Critical Point}

\paragraph{Fixed points and stability.} 
For any $J\neq 0$, the system has two boundary equilibria: $p^\star = 0$ (all good) and $p^\star = 1$ (all bad), where $p^2(1-p)^2$ vanishes. Which equilibrium attracts depends entirely on the sign of $J$:
\begin{itemize}[leftmargin=*,itemsep=0.3em]
    \item \textbf{When $J>0$ (learning regime):} $p=0$ is globally attracting on $(0,1)$ while $p=1$ repels. Starting from any nontrivial mixture, accuracy $1-p(t)$ climbs monotonically toward perfection.
    
    \item \textbf{When $J<0$ (anti-learning regime):} The basin structure inverts completely. Now $p=1$ attracts and $p=0$ repels, causing RLVR updates to systematically degrade performance until accuracy collapses to zero.
    
    \item \textbf{At the knife edge $J=0$:} The entire interval $[0,1]$ becomes a continuum of neutrally stable fixed points---directional information vanishes entirely.
\end{itemize}

\medskip
\noindent\textbf{Crossing $J=0$ induces a fundamental qualitative change} in learning dynamics, delineating the boundary between reward signals that guide learning and those that actively mislead it. The ODE is identically zero in this case.

\paragraph{Special case: $J=1$.}
In the noise-free regime ($J=1$), where rewards are perfectly reliable,
the closed-form solution of~\eqref{eq-ODE-perfect-oracle} gives an explicit
trajectory for the bad-arm probability as a function of gradient steps:


\begin{equation}\label{eq:p(t)}
p(t)
=
\begin{cases}
\displaystyle
\frac{1}{2}
\;+\;
\frac{1}{2}\,
\frac{\varphi(p_0)-\frac{\eta}{2}t}{\sqrt{\,4+\bigl(\varphi(p_0)-\frac{\eta}{2}t\bigr)^2\,}}, 
& \text{if } p_0\neq 0,1, \qquad
\varphi(p)\;:=\;\frac{2p-1}{\sqrt{p(1-p)}}.\\[0.75em]
0,1, & \text{if } p_0 = 0,1 \,\, (\equiv \text{LLM has/hasn't absolute clue about the prompt})
\end{cases}
\end{equation}

In particular, if $p_0 \neq 0,1$, then the late–time asymptotic is
\[
p(t)\;\sim\;\frac{4}{\eta^2 t^2}\rightarrow 0,
\qquad t\to\infty.
\]
Thus, under perfectly reliable labels, the bad mass decays with a universal $t^{-2}$ tail: the accuracy $1-p(t)$ races toward $1$ at a polynomial rate determined entirely by the learning rate $\eta$.
 
%


\paragraph{RLVR limitation.}
Equation~\eqref{eq:p(t)} exposes an \emph{support} barrier: the boundary states $p_0\in\{0,1\}$ are absorbing, so modes with zero initial mass cannot be created by RLVR. In particular, the analysis assumes $1-p_0>0$ so that correct solutions are sampled occasionally and the gradient has something to amplify; if instead $1-p_0=0$ (the prompt lies beyond the base model’s capability), the dynamics are trapped at the degenerate equilibrium $p(t)\equiv 1$ and learning never takes off. Thus RLVR can sharpen and reweight reasoning paths already present in the base model, but it cannot reliably expand capability beyond its initial support, consistent with findings that RLVR mainly boosts pass@$1$ while large-$k$ coverage can shrink (\cite{yue2025does}).

\paragraph{Asymptotics: which tail, and when.}
The late-time behavior depends critically on whether the reward variance $\sigma(p)$ vanishes at the attracting equilibrium. Define the boundary variances. Let $\sigma_0=\sqrt{(1-\delta_{\mathrm{FN}})\,\delta_{\mathrm{FN}}}$ and
$\sigma_1:=\sqrt{\delta_{\mathrm{FP}}(1-\delta_{\mathrm{FP}})}$.
The late-time decay is governed by whether the reward variance \(\sigma(p)\) \emph{vanishes} at the attracting vertex.

\paragraph{Case (i): $J>0$ with attractor at $p=0$.}
Two distinct regimes emerge:
\begin{itemize}[leftmargin=1.2em,itemsep=2pt]
\item If \textbf{Nondegenerate noise} ($\delta_{\mathrm{FN}}>0$), then $\sigma(p)\to\sigma_0>0$ and late-time behavior ($\time \rightarrow 0$) of the dynamics will be 
\[
\dot p \;=\; -\eta\,\frac{J}{\sigma_0}\,p^2 \;+\;o(p^2)
\quad\Rightarrow\quad
p(t)\;\sim\;\frac{\sigma_0}{\eta J}\,\frac{1}{t}.
\]
\item If \textbf{Variance-degenerate case} ($\delta_{\mathrm{FN}}=0$), then
$\sigma(p)=\sqrt{q(1-q)}\sim\sqrt{Jp}$ and
\[
\dot p \;=\; -\eta\sqrt{J}\,p^{3/2} \;+\;o\!\bigl(p^{3/2}\bigr)
\quad\Rightarrow\quad
p(t)\;\sim\;\frac{4}{\eta^2 J}\,\frac{1}{t^2}.
\]
\end{itemize}

\paragraph{Case (ii): $J<0$ with attractor at $p=1$.}
Defining the good mass $u(t):=1-p(t)$, we find $q(1)=\delta_{\mathrm{FP}}$ and $\sigma(1)=\sigma_1$. Since $\sigma_1>0$ whenever $\delta_{\mathrm{FP}}\in(0,1)$,
\[
\dot{u} \;=\; -\,\frac{\eta|J|}{\sigma_1}\,u^2 \;+\;o(u^2)
\quad\Rightarrow\quad
u(t)=1-p(t)\;\sim\;\frac{\sigma_1}{\eta|J|}\,\frac{1}{t}.
\]
Note that no $t^{-2}$ regime appears here: setting $\delta_{\mathrm{FP}}=0$ would force $J\ge 0$, precluding this case.

Therefore, the learning direction is determined by $\operatorname{sign}(J)$, while the asymptotic tail follows a universal pattern:
\[
\text{error} \sim 
\begin{cases}
\mathcal{O}(t^{-1}) & \text{when reward variance is nonzero at the attractor,} \\
\mathcal{O}(t^{-2}) & \text{in the variance-degenerate case } (\delta_{\mathrm{FN}}=0, J>0).
\end{cases}
\]
This provides a crisp analytical characterization of test-time error $p(t)$ under noisy reward signals.

\subsection{Rate, Not Fate}

As long as $J>0$, the one–dimensional ODE for 
the bad mass has a single basin of attraction. In particular, both the noisy and the noise-free dynamics converge to the same limiting state (up to the critical knife-edge where the sign of $J$ flips). In this sense, reward noise does not change the \emph{fate} of training: the basin is the same, and so is the final performance. What changes is only the \emph{speed} at which we flow toward that basin.

More precisely, comparing the noisy and perfectly reliable dynamics gives the simple time–rescaling
\begin{equation}
    \frac{\dot p_{\mathrm{noisy}}}{\dot p_{\mathrm{perfect}}}
    \;\propto\;
    \frac{1}{J}
\end{equation}
Thus, for example, when $J=0.5$, the noisy system needs roughly twice compute steps to trace out the same trajectory in $p$. In other words, additional compute can compensate for imperfect data label/reward signal.

In the next sections, we generalize this analysis to the multi-solution setting and incorporate additional components such as importance sampling, the clipping ratio, and a possible KL penalty term, yielding a formulation that fully aligns with practical PPO/GRPO-style algorithms.

\subsection{Maximal Learnability at Intermediate Bad Mass}\label{subsec:p-half}

In the noiseless regime $J = 1$, our scalar dynamics for the bad mass $p$ satisfy
\[
|\Delta p| \;\propto\; \bigl[p(1-p)\bigr]^{3/2}.
\]
The prefactor $p(1-p)$ is maximized at
\[
p^\star \;=\; \tfrac{1}{2},
\]
so the largest single-step reduction in bad-arm mass occurs when the current bad probability is neither too small nor too large, but instead sits at the ``intermediate'' value $p \approx 1/2$. As $p \to 0$ or $p \to 1$, the factor $p(1-p)$ vanishes, and the dynamics slow down: once a prompt is either almost always solved correctly or almost always answered incorrectly, additional GRPO steps make only marginal progress on that prompt.

\paragraph{Connection to prior ``\(p(1-p)\)'' learnability observations.}
The emergence of an intermediate-difficulty optimum mirrors patterns observed in several seemingly distinct analyses.
\citet{bae2025online} derive progress bounds proportional to \(p(1-p)\) and empirically find that prompts with \(p(x)\approx 0.5\)
are most learnable. \citet{foster2025learning} relate learnability to reward variance; for Bernoulli rewards,
\(\Var(r)=q(1-q)\) is maximized at \(q=\tfrac12\).
Our mean-field GRPO dynamics provides a complementary \emph{dynamical} explanation: the same non-saturation phenomenon that yields high
information content also governs the instantaneous rate at which bad mass is eliminated. See Appendix \ref{sec:max-learnability} for more details.


\begin{quote}
    \textbf{Training is most efficient on ``medium-difficulty'' questions, where the model is roughly $50$--$50$ between good and bad solutions. Under asymmetric noise the optimum shifts but remains at an intermediate bad mass.}
\end{quote}

\section{LLM as a Multi-Armed Bandit}\label{sec:llm-bandit}

In contemporary reinforcement learning (RL) training paradigms for LLMs, such as RLVR and related frameworks, the supervision signal is typically provided only after the model has produced an entire response. Because the reward is evaluated at the completion level rather than per token, it is often more appropriate to treat the entire output sequence as a single decision made by the policy. This viewpoint naturally suggests a bandit-style abstraction, where each sequence corresponds to one action (or "arm") and the learning signal is attached to that action as a whole \citep{kreutzer2017bandit, nguyen2017reinforcement, dang2025weight}. Earlier sequence-level policy gradient methods--such as RLOO and related REINFORCE variants \citep{ahmadian2024back}--implicitly operated in this regime, while contemporary approaches like GRPO \citep{shao2024deepseekmath} make this perspective explicit by defining advantages and updates directly over full generations.

Adopting the bandit abstraction provides a clean and principled theoretical foundation for our methodology (see Appendix~\ref{app:MAB} for further details).

\paragraph{From sequences to modes.}
To ground this abstraction, consider a fixed prompt \(x\) (e.g., a math problem). Sampling an LLM at non-zero temperature yields many distinct token sequences, yet these typically collapse into a small number of \emph{reasoning modes}--canonical solution paths, recurring chains of thought, or standard solver templates. This clustering exposes a low-dimensional structure: each mode acts as an ``arm'' in the bandit abstraction, carrying its own probability mass. (Recent work has begun to map the landscape of such reasoning modes \citep{zhou2025landscape} and to leverage their structure for designing new training algorithms \citep{zhang2025consistent}.)

We now formalize this intuition. Fix a prompt \(x\) and let \(y \sim \pi_\omega(\cdot \mid x)\) be a sampled completion. With a length cutoff \(L_{\max}\), the effective support \(\mathcal{Y}_{\le L_{\max}}=\bigcup_{\ell=1}^{L_{\max}}\mathcal{V}^{\ell}\) is finite. We define a coarse-graining map \(\phi:\mathcal{Y}_{\le L_{\max}}\!\to\!\mathcal{H}\) that clusters sequences into \emph{reasoning modes} \(\mathcal{H}=\{h_1,\dots,h_{K+M}\}\). This induces a categorical \emph{mode policy}:
\[
\pi_{\theta}(h\mid x) := \sum_{y:\,\phi(y)=h}\pi^{(L)}_\omega(y\mid x), \qquad \pi^{(L)}_\omega(y\mid x)\;\propto\; \pi_\omega(y\mid x)\,\mathbf{1}\{y\in\mathcal{Y}_{\le L_{\max}}\},
\]
which we parameterize by \emph{effective logits} \(\theta=(\theta_1,\ldots,\theta_{K+M})\) such that \(\pi_\theta(h_i\mid x)=\mathrm{softmax}(\theta)_i\). The key quantity for our analysis is the total probability mass on incorrect modes, or the \emph{bad mass} \(p\):
\[
p \;=\; \Pr[\text{LLM chooses an incorrect mode} \mid x].
\]

Two boundary cases are notable: (i)~\textit{String-matching}, where \(\phi\) is the identity and \(\mathcal{H}=\mathcal{Y}_{\le L_{\max}}\); and (ii)~\textit{Infinite horizon}, where \(L_{\max}\to\infty\). In practice, our mode-level statements depend only on \(\pi_\theta(\cdot\mid x)\) over \(\mathcal{H}\) and remain invariant to the specific choice of a reasonable \(\phi\).

\paragraph{Good vs.\ bad families.}
To further analyze the distribution, we partition the set of modes into good (correct) and bad (incorrect) subsets:
\[
\mathcal{H}=\mathcal{H}^+\cup\mathcal{H}^-, \quad |\mathcal{H}^+|=K, \quad |\mathcal{H}^-|=M.
\]
Let \(\alpha\) and \(p\) denote the aggregate mass of the correct and incorrect families, respectively:
\[
\alpha \;=\;\sum_{h\in\mathcal{H}^+}\pi_\theta(h\mid x), \qquad p \;=\;\sum_{h\in\mathcal{H}^-}\pi_\theta(h\mid x) = 1 - \alpha.
\]
We define the relative distribution within the good modes as \(y \in \Delta^{K-1}\) and within the bad modes as \(z \in \Delta^{M-1}\), where:
\[
y_i := \frac{\pi_\theta(h_i\mid x)}{\alpha} \text{ for } h_i \in \mathcal{H}^+, \qquad z_j := \frac{\pi_\theta(h_j\mid x)}{p} \text{ for } h_j \in \mathcal{H}^-.
\]
Thus, the full distribution over all arms is given by the vector \((\alpha y_1, \dots, \alpha y_K, p z_1, \dots, p z_M)\). This decomposition allows us to analyze the model's performance (\(\alpha\) vs. \(p\)) independently from its internal preference for specific reasoning paths (\(y\) and \(z\)).

\begin{TakeawayDrop}{LLM as Multi-Armed Bandit}
\emph{Treat each prompt as a tiny bandit problem.} The model spreads probability across a few \textbf{reasoning modes} (arms). A sequence-level grader marks modes as correct (1) or incorrect (0). GRPO pushes probability toward above-average arms and away from below-average ones. Under noise-free rewards, the mass on incorrect modes shrinks monotonically.
\end{TakeawayDrop}

\section{Geometric Flow on the Probability Simplex}
\label{sec:simplex-dynamics}

\begin{figure}[t]
    \centering
    \includegraphics[width=1\linewidth]{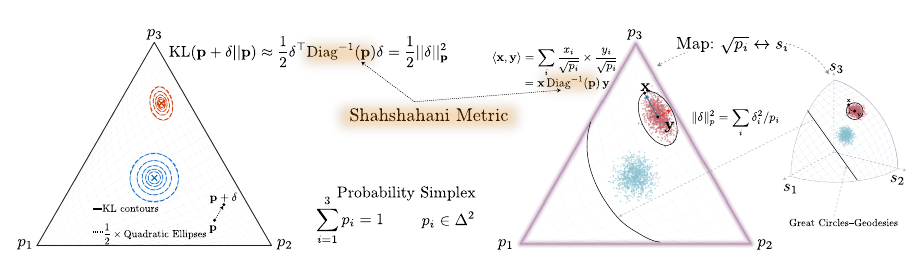}
    \caption{\textbf{Geometry of the Probability Simplex.} The policy $\mathbf{p}$ evolves on the non-Euclidean manifold $\Delta^{K+M-1}$. The softmax Jacobian $\mathfrak{J}(\mathbf{p})$ endows this space with the Shahshahani (Fisher) geometry, projecting updates onto the zero-sum tangent space $T_{\mathbf{p}}$. GRPO induces a mass-conserving replicator flow, $\dot{\mathbf{p}}=\eta \mathfrak{J}(\mathbf{p}) \mathbf{A}$, which dynamically redistributes probability mass based on relative advantage. In the local tangent space, the forward KL divergence manifests as the quadratic form $\frac{1}{2}\delta^\top \mathrm{Diag}(\mathbf{p})^{-1}\delta$, identifying $\mathfrak{J}(\mathbf{p})$ as the inverse Riemannian metric.}
    \label{fig:shah}
\end{figure}

We model the policy as a probability vector $\mathbf{p}$ distributed over $K$ ``good'' arms and a distinct ``bad'' arm comprising $M$ internal modes (representing bad-mass). The state space is the simplex:
\[
    \mathbf{p} = (p_1, \ldots, p_K, \ldots, p_{K+M}) \in \Delta^{K+M-1},
    \quad \text{where} \quad 
    \Delta^{d} = \left\{ \mathbf{x} \in \mathbb{R}_{\ge 0}^{d+1} : \mathbf{1}^\top \mathbf{x} = 1 \right\}.
\]
We denote $p:=p_{\mathrm{b}}$ as the aggregate bad-mass coordinate. Admissible velocity fields are constrained to the tangent space $T_{\mathbf{p}}\Delta^{K+M-1} = \{v \in \mathbb{R}^{K+M} : \mathbf{1}^\top v = 0\}$, ensuring total probability mass is strictly conserved.

The geometry of optimization on this manifold is governed by the softmax Jacobian, $\mathfrak{J}(\mathbf{p}) = \mathrm{Diag}(\mathbf{p}) - \mathbf{p}\mathbf{p}^\top$. This operator acts as the inverse  of the Riemannian metric tensor (associated with the Shahshahani or Fisher-Rao metric), mapping gradients into natural gradients on the tangent space:
\begin{equation}
    \mathfrak{J}(\mathbf{p}) v \;=\; \mathbf{p} \odot (v - \bar{v}), \quad \text{with} \quad \bar{v} := \mathbf{p}^\top v.
\end{equation}

 Under the GRPO objective, the policy update follows a continuous-time natural gradient flow 
\[
\dot{\mathbf{p}}
\;=\;
\eta\,\mathfrak{J}(\mathbf{p})^{2}\,\mathbf{A},
\qquad
\textbf{\emph{GRPO dynamics on }}\Delta^{K+M-1},
\]
 where the vector $\mathbf{A}$ comes from GRPO-style advantage computation. Equivalently, this can be expressed in the ``replicator-flow'' form:
\begin{equation}
\label{eq:pdot_general}
    \dot{\mathbf{p}} 
    \;=\; 
    \eta \mathbf{p} \odot \left[ \mathfrak{J}(\mathbf{p})\mathbf{A} - \langle \mathbf{p}, \mathfrak{J}(\mathbf{p})\mathbf{A} \rangle \mathbf{1} \right].
\end{equation}
This ``replicator dynamics'' form reveals two key properties: (1) \emph{Multiplicativity}, ensuring the faces of the simplex remain invariant; and (2) \emph{Relative Performance}, where mass flows between arms strictly based on their advantage relative to the mean (see Appendix.~\ref{appendix:Group-Normalized RL with Noisy Reward} for full details).

\subsection{Decoupling the Dynamics: Shape vs. Good and Bad Mass}
To disentangle the evolution of the ``good'' policy structure from the decay of the ``bad'' masses, we decompose the state vector. In the simplex interior, we parameterize $\mathbf{p}$ as:
\[
    \mathbf{p} = \bigl((1-p)y, \; pz\bigr),
\]
where $y \in \Delta^{K-1}$ and $z \in \Delta^{M-1}$ are the normalized distributions over good and bad arms respectively, and $p \in (0,1)$ represents the total bad mass.

Applying this coordinate change to \eqref{eq:pdot_general} decouples the system into shape dynamics (internal to $y$ and $z$) and mass dynamics (governing $p$). The block-diagonal evolution equations are:
\begin{TakeawayPlain}
\begin{subequations}
\label{eq:coupled_dynamics}
\begin{align}
    \dot y &= +\kappa(p)\,  y \odot \left( y - \|y\|_2^2  \boldsymbol{1}  \right), \label{eq:good_shape}\\
    \dot z &= -\kappa(p)\, z \odot \left(  z - \|z\|_2^2  \boldsymbol{1} \right), \label{eq:bad_shape}\\
    \dot p &= -\eta \frac{J}{\sigma(p)} [p(1-p)]^2 \left( \|y\|_2^2 + \|z\|_2^2 \right), \label{eq:mass_decay}
\end{align}
\end{subequations}
\end{TakeawayPlain}
where the time-rescaling factor is $\kappa(p) := \eta \frac{J}{\sigma(p)} p(1-p)$.
\begin{remark}
In the noise-free regime ($J=1$) with $\sigma(p)=\sqrt{p(1-p)}$, this factor simplifies to
\[
\kappa(p)=\eta\,\sqrt{p(1-p)},
\qquad
\frac{\eta J}{\sigma(p)}[p(1-p)]^2={\eta}\,{{p(1-p)}}^{\frac{3}{2}}.
\]
This specialization highlights that noise affects the effective time scale and sign (direction) of the dynamics, while the geometric structure of the flows remains unchanged.
\end{remark}
\begin{remark}
In the mass dynamics \eqref{eq:mass_decay}, the denominator is the standard deviation $\sigma(p) = (q(p)(1-q(p)))^{1/2}$, where $q(p) := (1-\delta_{\mathrm{FN}}) - Jp$. Since $J = 1 - \delta_{\mathrm{FP}} - \delta_{\mathrm{FN}}$, we have $\sigma(p) \in (0,1)$ for all $p \in (0,1)$, except in the singular case where $J=0$.
\end{remark}

\paragraph{Interpretation.} The coupled system \eqref{eq:coupled_dynamics} highlights a competition between three distinct geometric forces:

\begin{description}
    \item[Diversity Collapse in Good Arms:] Equation \eqref{eq:good_shape} describes a self-reinforcing flow. The term $\|y\|_2^2$ (the collision probability) acts as a threshold: arms with mass $y_i > \|y\|_2^2$ grow super-linearly, causing the distribution to sharpen and diversity to collapse onto the optimal arms. Let
\(
S^\star \;:=\; \arg\max_{i\in[K]} y_i(0)
\), then for $J>0$,
\[
y_i(t)\to 0\quad (i\notin S^\star), \qquad\textbf{GRPO's Diversity Collapse }
\]
    
    \item[Entropy Increase in Bad Arms:] Conversely, the bad-mass distribution $z$ evolves under a \emph{negative} feedback loop \eqref{eq:bad_shape}. This flow pushes $z$ away from concentration and toward the uniform distribution (maximum entropy) on $\Delta^{M-1}$.
    \[
z(t)\ \longrightarrow\ \frac{1}{M}\mathbf 1,
\]
This ``spreading'' effect slows the decay of $p$ if the bad mass is diffuse (i.e., when $\|z\|_2^2 \approx 1/M$). 

    \item[Bad Mass Evolution:] The total bad mass $p$ decays monotonically (provided $J > 0$) at a rate proportional to $[p(1-p)]^2$, but the rate is modulated by the structural sparsity $\|y\|_2^2 + \|z\|_2^2$. In the late-time limit, as the good arms collapse ($\|y\|_2^2 \to 1$) and bad arms homogenize ($\|z\|_2^2 \to 1/M$), the decay rate stabilizes, driven purely by the system's reward gap $J$
    \[
    \dot p
    = -\eta\frac{J}{\sigma(p)}[p(1-p)]^2\bigl(\|z\|_2^2+\|y\|_2^2\bigr)
    \;\xrightarrow[t\to\infty]{}\;
    \;\approx\;
    -\eta\frac{J}{\sigma(p)}[p(1-p)]^2,
    \]
    Motion is slow near the faces $p \approx 0$ and $p \approx 1$ and faster in the interior. In the learning regime $J>0$, the prefactor $J/\sigma(p)$ is positive, ensuring $p$ is monotonically driven down.

\end{description}

Thus, the simplex dynamics factor cleanly into a shape evolution within the good-arms simplex and a mass evolution that suppresses the bad arms.

\subsection{The right geometry: Shahshahani metric on \texorpdfstring{$\Delta$}{Δ}}

The geometry of the simplex is fundamentally non-Euclidean. Displacing probability mass from a rare arm incurs a ``higher cost'' than displacing the same amount from a common arm. The Shahshahani metric (\cite{shahshahani1979new}) formalizes this intuition by weighting tangent directions inversely proportional to the square root of probabilities (refer to Appendix \ref{sec:Shahshahani} for details):
\[
\langle u, v\rangle_{\mathrm{Shah};\,\mathbf{p}} 
\;=\; \sum_{i=1}^{K+M}\frac{u_i v_i}{p_i},
\qquad u,v\in T_{\mathbf{p}}\Delta^{K+M-1}.
\]
This definition corresponds precisely to the Fisher-information metric (\cite{amari2019fisher}) of the categorical family, up to a constant factor. Given a smooth function \(F:\Delta^{K+M}\!\to\mathbb{R}\), the associated \emph{natural gradient} is derived by projecting and reweighting the Euclidean gradient:
\[
\mathrm{grad}_{\mathrm{Shah}} F(\mathbf{p})
\;=\;\mathfrak{J}(\mathbf{p})\,\nabla F(\mathbf{p})
\;=\;\mathbf{p}\odot\bigl(\nabla F(\mathbf{p})-\langle \mathbf{p},\nabla F(\mathbf{p})\rangle\,\mathbf{1}\bigr)
\in T_{\mathbf{p}}\Delta^{K+M-1}.
\]

This structure simplifies significantly for the good-arm subsystem. The evolution of \(\dot y\) in \eqref{eq:good_shape} expresses a Shahshahani gradient flow of the potential \(\Phi\), scaled by \(\kappa(p)\):
\[
\dot y \;=\; \kappa(p)\,\mathrm{grad}_{\mathrm{Shah}}\Phi(y),
\qquad \Phi(y):=\tfrac12\|y\|_2^2 = \tfrac12\sum_{i=1}^{K}y_i^2.
\]
Consequently, within the good block, the dynamics perform \emph{natural-gradient ascent} on a potential that incentivizes concentration. This mechanism drives the mass toward a single dominant good arm.


\begin{TakeawayDrop}{Takeaway}
In the zero-noise limit, GRPO reduces to \emph{pure selection}: centered rewards induce a replicator-type drift, with effective ``fitness'' given by the natural-gradient signal \(\mathfrak{J}(\mathbf{p})\mathbf{A}\), shifting probability mass from below-average to above-average types, and introducing no additional nonlinearities beyond the softmax Jacobian.

\end{TakeawayDrop}

\paragraph{KL-regularized mirror ascent and the replicator limit.}

 A complementary discrete-time perspective employs entropic (KL) regularization. Let \(\p\in\Delta^{K+M-1}\) and \(\A\in\mathbb{R}^{K+M}\) be fixed. For a step size \(\eta>0\), we consider the KL-regularized maximization problem (cf. Proposition \ref{prop:mirror-ascent-detailed}):

\begin{equation}
\label{eq:mirror-problem}
\p^{+}\;=\;\arg\max_{\q\in\Delta^{K}}
\Bigl\{\;\langle \A,\q\rangle \;-\; \tfrac{1}{\eta}\,D_{\mathrm{KL}}(\q\;\|\;\p)\;\Bigr\} \Longrightarrow\;\;
\p^{+} \;=\; \frac{\p\odot e^{\eta \A}}{\mathbf{1}^\top\!\bigl(\p\odot e^{\eta \A}\bigr)}.
\end{equation}

Since the objective function is strictly concave with respect to \(q\), the maximizer \(p^{+}\) exists and is unique. Furthermore, it adopts the familiar \emph{multiplicative-weights} (or \emph{exponentiated-gradient}) form.

For small \(\eta\), the first-order expansion of the mirror-ascent step \eqref{eq:eg-update} corresponds to an Euler step of the \emph{natural-gradient (replicator) flow}:
\[
\p^{+}-\p \;=\; \eta\,\p\odot\bigl(\A-\langle \p,\A\rangle\,\mathbf{1}\bigr) \;+\; O(\eta^2),
\]
which implies
\[
\dot \p=\eta\,\mathfrak{J}(\p)\,A \;=\; \eta\, \p\odot\big(A-\langle \p,A\rangle\,\mathbf{1}\big).
\]
In essence, entropic mirror ascent discretizes the intrinsic geometry. It constitutes steepest ascent with respect to the Shahshahani metric, and its infinitesimal limit recovers replicator dynamics on the simplex.

In summary, the comparison is as follows:
\[
\left\{
\begin{array}{ll}
\text{GRPO :} &
\dot{\mathbf{p}}\;=\;\eta\,\mathfrak{J}(\mathbf{p})^2\,\mathbf{A}, \\[0.75em]
\text{GRPO with KL regularization :} &
\dot \p \;=\;\eta\, \mathfrak{J}(\p)\,\A.
\end{array}
\right.
\]

The resulting analysis highlights three key structural properties:

\begin{itemize}
    \item \textbf{Preservation of the simplex constraint.} The GRPO-induced dynamics are intrinsically tangent and multiplicative; probabilities remain nonnegative and sum to unity.
    
    \item \textbf{Decoupling of good and bad arm dynamics.} In the decomposition \(\mathbf{p}=((1-p)y,pz)\), the \emph{shape} \(y\) and \(y\) of the good/bad-arm distribution evolve via a Shahshahani gradient flow, whereas the total bad mass \(p\) is monotonically suppressed.
    \item \textbf{Geometric consistency.} The Shahshahani metric captures the natural geometry of \(\Delta\). Both the continuous-time GRPO flow \eqref{eq:pdot_general} and the discrete-time KL-mirror step \eqref{eq:eg-update} represent steepest-ascent procedures under this geometry. The former utilizes the natural-gradient signal \(\mathfrak{J}(\mathbf{p})\mathbf{A}\), while the latter reduces to the replicator flow \(\p\odot(\A-\langle \p,\A\rangle\mathbf{1})\) in the small-step limit.
\end{itemize}


\subsection{Finite Sampling Cause Genetic Drift Noise }
If the GRPO mean-field limit is viewed as an analogue to replicator-style natural selection, then the finite sampling of rollouts introduces a stochasticity equivalent to genetic drift.
GRPO-style updates rely on \emph{group estimates}: for each prompt, one averages over a finite set of $G$ rollouts to form a normalized advantage and then applies a policy-gradient step. Replacing population expectations by the empirical group mean introduces an additional randomness even if the underlying reward model were fixed. This ``finite-$G$'' effect is conceptually separate from reward noise (e.g.\ $\delta_{\mathrm{FN}},\delta_{\mathrm{FP}}$): it is simply Monte Carlo error from having only $G$ samples. Consequently, the learning dynamics do not follow the deterministic mean-field drift exactly; instead, they fluctuate around it, with a typical fluctuation size shrinking like $O(G^{-1/2})$ (and proportional to the learning rate $\eta$). For example, if $\hat y$ is the empirical frequency from $G$ i.i.d.\ categorical draws with mean $y$, then
\[
\sqrt{G}\,(\hat y-y)\ \Rightarrow\ \mathcal{N}\!\big(0,\;\text{Diag}(y)-y\,y^\top\big).
\]

At the level of the probability vector $\mathbf p$ on the simplex, this sampling-induced randomness has the canonical Fisher/Wright--Fisher geometry. Indeed, if $I\sim\mathrm{Cat}(\mathbf p)$ and $e_I$ is the corresponding one-hot vector, the score feature $e_I-\mathbf p$ has covariance
\[
\mathrm{Cov}\!\Big[\frac{1}{G}\sum_{g=1}^G\bigl(e_{I_g}-\p\bigr)\Big]
\;=\;\frac{1}{G}\Bigl(\mathrm{Diag}(\p)-\p\p^\top\Bigr)
\;:=\;\frac{1}{G}\,\Sigma(\p).
\]
so averaging $G$ i.i.d.\ rollouts produces fluctuations of order $\Sigma(\mathbf p)/G$. In a diffusion (continuous-time) approximation, this appears as a Wright--Fisher-type noise with diffusion speed, $\nu$, added on top of the deterministic drift derived earlier, as demonstrated in \cite{dang2025assessing}:
\[
dp \;=\;\dot{p}\,dt \;+\; \frac{\eta\sqrt{\nu}}{\sqrt{G}}\;\Sigma(p)^{1/2}\,dW_t.
\]
Advantage normalization and noisy rewards (i.e., $\frac{1}{G}\sum_{g=1}^G \tilde{\boldsymbol{r}}\bigl(e_{I_g}-\p\bigr)$) primarily rescale the \emph{amplitude} of this diffusion by an order-one, state-dependent factor, while the simplex-shaped matrix $\Sigma(\mathbf p)$ (and thus the vanishing of noise near the boundary) remains the same.

\section{Mean-Field Dynamics of Bad Mass in GRPO}
\label{subsec:mean-field-bad-mass}

In the preceding sections, we analyzed the mean-field probability dynamics induced by REINFORCE-style updates and identified the aggregate bad probability mass as a convenient one-dimensional summary of the learning trajectory. In this section we extend that analysis to GRPO, viewed as a concrete instantiation of a group-normalized policy-gradient method. Relative to the basic REINFORCE update, GRPO typically incorporates two additional mechanisms: (i) PPO-style importance sampling with ratio clipping, and (ii) KL regularization. Our goal here is to derive the corresponding mean-field evolution for the bad mass and to clarify which parts of the algorithm influence the leading-order drift.

\paragraph{The role of clipping and importance sampling.}
The GRPO update inherits PPO's importance sampling and ratio clipping by modifying the objective in \eqref{eq:objective-function}. Concretely, each per-sample advantage $A_i$ is reweighted by a clipped importance ratio,
\[
A_i \;\longmapsto\; A_i\, \mathrm{clip}\!\Bigl(
\,\frac{\pi_{\mathrm{new}}(i)}{\pi_{\mathrm{old}}(i)},
\;1-\varepsilon,\;1+\varepsilon'
\Bigr),
\]
where $\varepsilon,\varepsilon'>0$ are the PPO/GRPO clip thresholds \cite{ppo}. In the mean-field limit, our calculation in Appendix~\ref{app:bad-arm-ppo-grpo} shows that, in the small-step regime with fixed thresholds and $\eta\ll \varepsilon,\varepsilon'$, clipping and importance sampling do not alter the leading-order mean-field drift. Their contribution is absorbed into the $\mathcal{O}(\eta^2)$ remainder, and therefore does not affect the first-order phase portrait. We refer to Appendix~\ref{app:bad-arm-ppo-grpo} for the detailed expansion. 

We now state the resulting closed mean-field equation for the aggregate bad mass, together with its internal-time logit form and the corresponding small-heterogeneity refinement:


In the multi-bad-arm setting, define the aggregate bad mass and the within-block normalized states by
\[
p(t):=\sum_{m=1}^M p_{b_m}(t)\in[0,1],\qquad
y_j(t):=\frac{p_j(t)}{1-p(t)}\in\Delta^{K-1},\qquad
z_m(t):=\frac{p_{b_m}(t)}{p(t)}\in\Delta^{M-1},
\]
and set the within-block collision masses
\[
s_2(t):=\|y(t)\|_2^2\in\Big[\tfrac1K,1\Big],\qquad
t_2(t):=\|z(t)\|_2^2\in\Big[\tfrac1M,1\Big].
\]
The associated geometry factor is
\[
C_{\mathrm{geo}}(t):=s_2(t)+t_2(t)\in\Big[\tfrac1K+\tfrac1M,\,2\Big],
\]
we have:
\begin{ThmBoxCapital}{Theorem}
\begin{theorem}[Bad-mass ODE, internal-time logit form, and first-order geometry reduction]
\label{thm:bad-ode-2}
Under group-normalized GRPO with small stepsize $\eta\ll \varepsilon, \varepsilon'$(PPO-clipping factors) and fresh on-policy groups, the aggregate bad mass obeys the mean-field ODE
\begin{equation}
\label{eq:bad-ode}
\dot{p}(t)
\;=\;
-\;\eta\,\frac{J}{\sigma\!\bigl(p(t)\bigr)}\;[p(t)\bigl(1-p(t)\bigr)]^2\;
C_{\mathrm{geo}}(t)
\;+\;\mathcal{O}(\eta^2),
\end{equation}
where $J=$TPR-FPR reflects the good--bad advantage gap, and $\sigma(p)>0$ is the group-normalization scale.

\paragraph{Internal-time logit form.}
Assume $J\neq 0$ and define the logit
\[
L(t):=\log\frac{p(t)}{1-p(t)},
\]
together with the internal time change
\begin{equation}
\label{eq:tau-def}
\tau(t)\;:=\;\int_0^t \eta\,\frac{|J|}{\sigma(p(u))}\,p(u)\bigl(1-p(u)\bigr)\,du.
\end{equation}
Viewing $p,y,z$ as functions of $\tau$ via $t=t(\tau)$, one has, whenever $p(t)\in(0,1)$,
\begin{equation}
\label{eq:logit-internal}
\frac{dL}{d\tau}\;=\;-\operatorname{sign}(J)\,C_{\mathrm{geo}}(\tau)
\;=\;-\operatorname{sign}(J)\,\bigl(s_2(\tau)+t_2(\tau)\bigr).
\end{equation}
If $J=0$, the deterministic drift term vanishes at this order.

\paragraph{Small-heterogeneity regime.}
Write the within-block states at $\tau=0$ as
\[
y(0)=\unif_K+v_0,\quad \sum_{j=1}^K (v_0)_j=0,
\qquad
z(0)=\unif_M+w_0,\quad \sum_{m=1}^M (w_0)_m=0,
\]
where $\unif_K=(1/K,\dots,1/K)$ and $\unif_M=(1/M,\dots,1/M)$. Define the heterogeneities
\[
\zeta_0:=\|v_0\|_2^2=s_2(0)-\frac{1}{K},
\qquad
\xi_0:=\|w_0\|_2^2=t_2(0)-\frac{1}{M}.
\]
Then Theorem~\ref{thm:etaExpansion} implies that, in the near-uniform regime (that is, while the conditions of that theorem hold),
\begin{equation}
\label{eq:L-expansion-in-main}
L(\tau)=L(0)
-\operatorname{sign}(J)\Big(\tfrac1K+\tfrac1M\Big)\tau
-\operatorname{sign}(J)\frac{K}{2}\zeta_0\Big(e^{\frac{2}{K}\tau}-1\Big)
-\operatorname{sign}(J)\frac{M}{2}\xi_0\Big(1-e^{-\frac{2}{M}\tau}\Big)
+\widetilde R_L(\tau),
\end{equation}
with a remainder $\widetilde R_L(\tau)$ controlled by the same $O(\zeta_0^{3/2}e^{3\tau/K}+\xi_0^{3/2})$ bound as in Theorem~\ref{thm:etaExpansion}.

In particular, to first order the dependence of the bad-mass drift on the within-block initialization $y(0),z(0)$ enters only through the two scalars $(\zeta_0,\xi_0)$. All finer details contribute only at order $O(\zeta_0^{3/2})$ and $O(\xi_0^{3/2})$ and higher.

\paragraph{Sign structure.}
Since $C_{\mathrm{geo}}(\tau)>0$ for all interior states, \eqref{eq:logit-internal} yields the global monotonicity:
\begin{itemize}[leftmargin=*,itemsep=0.2em]
\item If $J>0$, then $L$ decreases in $\tau$, hence $p(t)$ decreases monotonically toward $0$ (learning succeeds).
\item If $J<0$, then $L$ increases in $\tau$, hence $p(t)$ increases monotonically toward $1$ (anti-learning).
\item If $J=0$, the deterministic drift vanishes at this order (neutral evolution).
\end{itemize}
\end{theorem}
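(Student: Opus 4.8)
\emph{Proof plan.} The statement bundles four claims --- the bad-mass ODE \eqref{eq:bad-ode}, the internal-time logit form \eqref{eq:logit-internal}, the small-heterogeneity expansion \eqref{eq:L-expansion-in-main}, and the sign structure --- and the plan is to dispatch them in that order: the first two by direct computation from the material of Sections 3--5, the third by an ODE perturbation argument, and the fourth as an immediate corollary. For \eqref{eq:bad-ode}: this is the multi-bad-arm instance of \eqref{eq:mass_decay} once we identify $C_{\mathrm{geo}}(t)=s_2(t)+t_2(t)=\|y(t)\|_2^2+\|z(t)\|_2^2$. Concretely I would start from the GRPO step \eqref{eq:objective-function} with each advantage reweighted by the clipped importance ratio, take the fresh-on-policy expectation, insert the noisy group-normalized advantage gap $\E[\widehat A\mid\mathrm{good}]-\E[\widehat A\mid\mathrm{bad}]=J/\sigma(p)$ from \eqref{eq:delta_r_noisy}, and pass to continuous time through the softmax Jacobian to obtain the replicator flow \eqref{eq:pdot_general}; summing the components over the $M$ bad modes under the decomposition $\mathbf p=((1-p)y,pz)$ collapses the good--bad cross terms into the single scalar $J/\sigma(p)$ and produces $\dot p=-\eta\frac{J}{\sigma(p)}[p(1-p)]^2 C_{\mathrm{geo}}$. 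The clipping and importance-sampling terms contribute only $\mathcal O(\eta^2)$ by the small-step expansion of Appendix~\ref{app:bad-arm-ppo-grpo} ($\eta\ll\varepsilon,\varepsilon'$), which I would simply cite.

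For the internal-time form \eqref{eq:logit-internal}: this is a one-line change of variables applied to the leading-order drift. Since $L=\log\frac{p}{1-p}$ we have $\dot L=\dot p/[p(1-p)]$, so dividing \eqref{eq:bad-ode} by $p(1-p)$ gives $\dot L=-\eta\frac{J}{\sigma(p)}p(1-p)\,C_{\mathrm{geo}}$ to leading order, and dividing further by $\frac{d\tau}{dt}=\eta\frac{|J|}{\sigma(p)}p(1-p)$ cancels $\eta$, $\sigma(p)$ and one factor $p(1-p)$, leaving $\frac{dL}{d\tau}=-\operatorname{sign}(J)\,C_{\mathrm{geo}}$. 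I would check that $\tau$ is a strictly increasing $C^1$ reparametrization on the interior --- which holds because $\sigma(p)>0$ there and $J\neq 0$ --- and that it sweeps all of $[0,\infty)$: from the tail asymptotics already established (error $\sim t^{-1}$ or $t^{-2}$ at the attractor), $\frac{d\tau}{dt}\sim c/t$, hence $\tau\sim c\log t\to\infty$. When $J=0$, $\frac{d\tau}{dt}\equiv 0$ and the leading drift vanishes.

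For the expansion \eqref{eq:L-expansion-in-main}, the first move is to push the shape equations \eqref{eq:good_shape}--\eqref{eq:bad_shape} into internal time: since $\dot y=\kappa(p)\,y\odot(y-\|y\|_2^2\boldsymbol1)$ with $\frac{d\tau}{dt}=|\kappa(p)|$, the $p$-dependence cancels and
\[
\frac{dy}{d\tau}=\operatorname{sign}(J)\,y\odot(y-s_2\boldsymbol1),\qquad
\frac{dz}{d\tau}=-\operatorname{sign}(J)\,z\odot(z-t_2\boldsymbol1),
\]
which are \emph{autonomous} in $\tau$. Taking $J>0$ (the other sign is mirror-symmetric), write $y=\unif_K+v$ with $\boldsymbol1^\top v=0$; then $s_2=\tfrac1K+\|v\|_2^2$ exactly, because the cross term $\tfrac2K\boldsymbol1^\top v$ vanishes. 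Differentiating along the flow yields $\frac{d}{d\tau}\|v\|_2^2=\tfrac2K\|v\|_2^2+2\sum_i v_i^3-2\|v\|_2^4$, and since $|2\sum_i v_i^3-2\|v\|_2^4|\le 4\|v\|_2^3$ we get $\frac{d}{d\tau}\|v\|_2^2=\tfrac2K\|v\|_2^2+O(\|v\|_2^3)$; variation of parameters with a bootstrap $\|v(\tau)\|_2^2\le 2\zeta_0 e^{2\tau/K}$ then gives $\|v(\tau)\|_2^2=\zeta_0 e^{2\tau/K}+O(\zeta_0^{3/2}e^{3\tau/K})$. Symmetrically $\frac{d}{d\tau}\|w\|_2^2=-\tfrac2M\|w\|_2^2+O(\|w\|_2^3)$ gives $\|w(\tau)\|_2^2=\xi_0 e^{-2\tau/M}+O(\xi_0^{3/2})$, with the correction uniformly $O(\xi_0^{3/2})$ because the exponential now decays. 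Substituting $C_{\mathrm{geo}}(\tau)=\tfrac1K+\tfrac1M+\|v(\tau)\|_2^2+\|w(\tau)\|_2^2$ into $L(\tau)=L(0)-\operatorname{sign}(J)\int_0^\tau C_{\mathrm{geo}}(u)\,du$ and integrating the four pieces ($\int_0^\tau e^{2u/K}du=\tfrac K2(e^{2\tau/K}-1)$, etc.) reproduces \eqref{eq:L-expansion-in-main}, and integrating the two remainders gives $\widetilde R_L(\tau)=O(\zeta_0^{3/2}e^{3\tau/K}+\xi_0^{3/2})$. The rigorous Grönwall bookkeeping and the precise validity window are exactly the content of Theorem~\ref{thm:etaExpansion}, which I would invoke here.

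Finally, the sign structure is immediate: $s_2\ge\tfrac1K$ and $t_2\ge\tfrac1M$, so $C_{\mathrm{geo}}(\tau)\ge\tfrac1K+\tfrac1M>0$ on the whole interior, and by the internal-time form $\frac{dL}{d\tau}$ has the constant sign $-\operatorname{sign}(J)$; since $\tau$ is an increasing reparametrization of $t$, $L(t)$ --- equivalently $p(t)=1/(1+e^{-L(t)})$ --- is strictly monotone, falling to $0$ if $J>0$ and rising to $1$ if $J<0$, while at $J=0$ the leading drift is identically zero. The main obstacle is the uniform-in-time control of $\widetilde R_L$ in the third step: the good-block heterogeneity $\|v(\tau)\|_2^2\sim\zeta_0 e^{2\tau/K}$ \emph{grows}, so the linearization is legitimate only while $\zeta_0 e^{2\tau/K}$ stays small, and turning the heuristic $O(\zeta_0^{3/2}e^{3\tau/K})$ error into a genuine bound requires a careful bootstrap that tracks the exponentially growing mode and isolates the cubic self-interaction $\sum_i v_i^3$ --- which is precisely why the expansion is asserted only in the near-uniform regime. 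The remaining steps (the ODE reduction and the time change) are routine once the Section~5 decomposition and the noisy advantage gap are in hand.
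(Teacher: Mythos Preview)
Your proposal is correct and follows essentially the same route as the paper: the bad-mass ODE is obtained by summing the bad-block components of the replicator flow $\dot{\mathbf p}=\eta\,\mathfrak J(\mathbf p)^2\mathbf A$ under the noisy advantage gap $J/\sigma(p)$ (Appendix~\ref{appendix:Group-Normalized RL with Noisy Reward}) with clipping absorbed into the $\mathcal O(\eta^2)$ remainder (Appendix~\ref{app:bad-arm-ppo-grpo}); the logit form is the same one-line time change you give; and the small-heterogeneity expansion is exactly the content of Theorem~\ref{thm:etaExpansion}, which derives the identical $\zeta'=\tfrac{2}{K}\zeta-2\zeta^2+2\sum_i v_i^3$ and $\xi'=-\tfrac{2}{M}\xi+2\xi^2-2\sum_m w_m^3$ laws and integrates them under the validity window $\sqrt{\zeta_0}\,e^{\tau/K}\le\tfrac12$. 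Your identification of the growing good-block mode as the main obstacle, and of the bootstrap/Gr\"onwall argument as the needed tool, matches the paper's treatment precisely.
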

\end{ThmBoxCapital}

\subsection{KL Regularization: From Phase Transition to Interior Equilibrium}
\label{sec:kl-phase}

The introduction of a KL penalty toward a reference bad mass $p_{\mathrm{ref}}$ incorporates a restoring drift within the probability space. When enforcing the KL term across the two classes (the forward-KL format), the contribution to the dynamics is given by:
\[
\dot p\big|_{\mathrm{KL}}
=-\beta\,p(1-p)\Big(\log\tfrac{p}{1-p}-\log\tfrac{p_{\mathrm{ref}}}{1-p_{\mathrm{ref}}}\Big).
\]
Consequently, the KL-regularized bad-mass ODE takes the form:
\begin{equation}\label{eq:bad-ode-kl}
\dot{p}
=
-\,\eta\,\frac{J}{\sigma(p)}\,[p(1-p)]^2\,C(y,z)
\;-\;\beta\,p(1-p)\Big(\log\tfrac{p}{1-p}-\log\tfrac{p_{\mathrm{ref}}}{1-p_{\mathrm{ref}}}\Big).
\end{equation}
An analogous form holds for the full reverse-KL penalty, where the logit gap $\ell-\ell_{\mathrm{ref}}$ is replaced by $\ell-\ell_{\mathrm{ref}}-D_{\mathrm{KL}}(y\|y^{\mathrm{ref}})+D_{\mathrm{KL}}(z\|z^{\mathrm{ref}})$; a detailed derivation is provided in Appendix~\ref{sec:kl}.

An immediate consequence of \eqref{eq:bad-ode-kl} is that the regularized dynamics now support interior fixed points $p^\star \in (0,1)$. At these points, the reward-driven drift is exactly balanced by the KL anchoring:
\begin{equation}\label{eq:fp-balance}
\beta\big(L(p^\star)-L(p_{\mathrm{ref}})\big)
=
-\,\eta\,\frac{J}{\sigma(p^\star)}\,p^\star(1-p^\star)\,C(y,z),
\qquad
L(p):=\log\tfrac{p}{1-p}.
\end{equation}

\paragraph{Unique interior equilibrium.} 
For any $\beta>0$ and fixed $(y,z)$, the regularized dynamics admit a unique globally stable fixed point $p^\star \in (0,1)$, as established in Appendix~\ref{sec:kl}. The position of this equilibrium is fundamentally determined by the sign of the alignment $J$:
\[
\begin{aligned}
&\text{If } J>0: & 0 < p^\star < p_{\mathrm{ref}} \quad &\text{and} \quad p^\star \searrow 0 \text{ as } \beta \downarrow 0; \\
&\text{If } J=0: & p^\star = p_{\mathrm{ref}}; & \\
&\text{If } J<0: & p_{\mathrm{ref}} < p^\star < 1 \quad &\text{and} \quad p^\star \nearrow 1 \text{ as } \beta \downarrow 0.
\end{aligned}
\]
In this sense, KL regularization transforms the sharp phase transition at $J=0$, which previously resulted in boundary collapse to $p=0$ or $p=1$, into a smooth and stabilized interior equilibrium for any $\beta>0$.

\paragraph{Asymptotic regimes.}
The behavior of the equilibrium depends on the relative strength of the KL anchoring. In the \textbf{strong-KL regime} ($\beta \to \infty$), we can linearize \eqref{eq:fp-balance} around $p_{\mathrm{ref}}$ to find:
\[
p^\star
\approx
p_{\mathrm{ref}}
-\frac{\eta J}{\beta}\,
\frac{\big[p_{\mathrm{ref}}(1-p_{\mathrm{ref}})\big]^2}{\sigma(p_{\mathrm{ref}})}\,
C(y,z).
\]
Here, $p^\star$ converges to $p_{\mathrm{ref}}$ at a rate of $\mathcal{O}(\beta^{-1})$. Conversely, in the \textbf{weak-KL regime} ($\beta \downarrow 0$) with $J<0$, we let $\varepsilon := 1-p$ to obtain:
\[
1-p^\star
\;\sim\;
\frac{\beta}{c}\,\log\!\frac{c}{\beta},
\qquad
c:=\frac{-\,\eta\,J}{\sigma(1)}\,C(y,z) > 0.
\]
Notably, even an infinitesimal $\beta > 0$ is sufficient to prevent total collapse to $p^\star=1$, although the equilibrium may drift arbitrarily close to the bad vertex as $\beta$ vanishes. Complementary to our mean-field ODE view, prior work studies the noise-free GRPO update with KL anchoring and shows that the induced success-probability dynamics follow a fixed-point iteration whose limit depends explicitly on the KL strength \cite{mroueh2025reinforcement}.
 We refer the reader to Appendix \ref{sec:kl} for more details on the results presented here.

\begin{figure}
    \centering
    \includegraphics[width=1\linewidth]{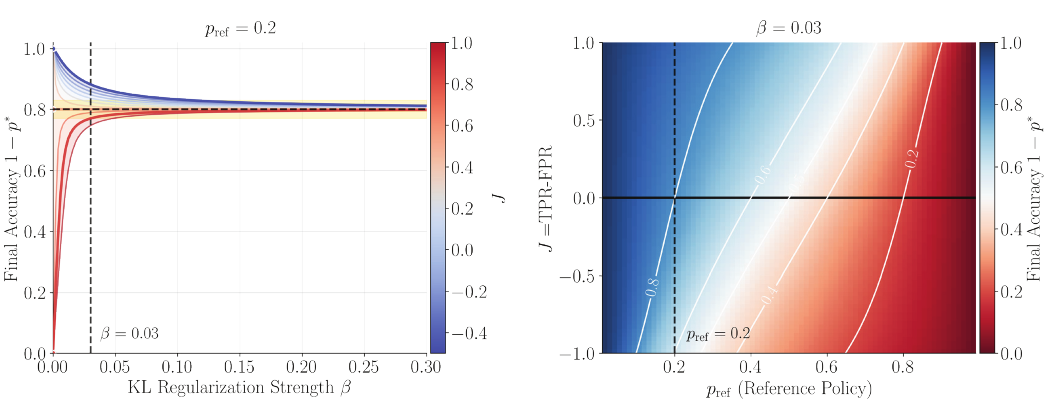}
    \caption{\textbf{KL regularization smooths the phase transition at $J=0$ via an interior fixed point.} 
    Incorporating a KL penalty toward a reference mass $p_{\mathrm{ref}}$ introduces a restoring drift, resulting in the regularized ODE \eqref{eq:bad-ode-kl}. For any $\beta>0$, the system possesses a unique stable interior equilibrium $p^\star \in (0,1)$ defined by the balance condition \eqref{eq:fp-balance}. This equilibrium satisfies $p^\star < p_{\mathrm{ref}}$ for $J>0$, $p^\star = p_{\mathrm{ref}}$ at $J=0$, and $p^\star > p_{\mathrm{ref}}$ for $J<0$. The reward-driven component is modulated by the multi-block collision factor $C(y,z) = \|y\|_2^2 + \|z\|_2^2 \in [1/K+1/M, 2]$. As $\beta \to \infty$, the equilibrium approaches $p_{\mathrm{ref}}$ due to strong anchoring; as $\beta \downarrow 0$, it approaches the reward-driven boundary, effectively smoothing the boundary collapse into a controlled interior state.}
    \label{fig:placeholder}
\end{figure}

\section{Experiments}
\label{sec:experiments}

We validate our theoretical analysis on programmatically verifiable Python coding tasks, testing whether real-world RL training exhibits the predicted phase transition at $J=0$, and whether noise merely rescales convergence speed without altering the basin of attraction. 

\subsection{Experimental Hypotheses}

Our analysis makes two sharp predictions:
\begin{itemize}[leftmargin=*,itemsep=0.3em]
    \item \textbf{($\mathcal{H}_1$) Phase Transition at $J=0$:}  Learning improves accuracy only when Youden's index $J=\mathrm{TPR}-\mathrm{FPR}$ is positive. At $J=0$, training yields no systematic improvement (neutral drift). For $J<0$, accuracy degrades as the system anti-learns.
    
    \item \textbf{($\mathcal{H}_2$) Rate, not Fate:} For any $J>0$, the sign of the reward signal determines the basin of attraction. Noise level affects only the convergence rate, more rollouts or training steps accelerate progress toward the same equilibrium without changing the asymptotic outcome.
\end{itemize}

\subsection{Setup}

\paragraph{Task and data.}
We use Python code generation with programmatic verification via unit tests. Our corpus is a filtered subset of high-quality problems from OpenR1~\cite{openr1} with $N_{\mathrm{train}}=10{,}239$ training prompts and $N_{\mathrm{val}}=594$ validation prompts. Each instance includes a natural-language specification, input/output examples, and a test harness combining public and hidden test cases.

\paragraph{Model and evaluation.}
We fine-tune \textbf{Qwen2.5-3B} as the base policy, evaluating $\mathbb{E}[\text{pass@1}]$ (the fraction of problems solved on the first attempt) averaged over five independent runs for each hyperparameter configuration.

\paragraph{Training algorithm.}
We employ standard \textbf{GRPO} with per-group advantage standardization and importance ratio clipping via the VeRL library~\cite{verl}. Each group contains $G=8$ rollouts per prompt, with returns normalized to zero mean and unit variance before computing advantages. We set the KL penalty to $\beta=0$ to isolate the pure reward-driven dynamics. Complete hyperparameters appear in Appendix~\ref{app:hparams}.

\paragraph{Synthetic verifier noise.}
Let $z\in\{0,1\}$ denote the \emph{true} correctness of a rollout under an oracle checker. The operational reward $r\in\{0,1\}$ is produced by a noisy checker characterized by
\[
\mathrm{TPR}=\Pr(r=1\mid z=1),
\qquad 
\mathrm{FPR}=\Pr(r=1\mid z=0),
\qquad 
J=\mathrm{TPR}-\mathrm{FPR}.
\]
We implement noise by independently flipping the oracle outcome with Bernoulli trials:
\[
r=\begin{cases}
1 \text{ with probability } \mathrm{TPR} &\text{if } z=1, \text{ else } 0\\
1 \text{ with probability } \mathrm{FPR} &\text{if } z=0, \text{ else } 0.
\end{cases}
\]
We explore a grid $\mathcal{J}\subset[-0.1,1]$ with multiple $(\mathrm{TPR},\mathrm{FPR})$ factorizations for each target $J$ to disentangle the effects of signal quality from error type prevalence.

\paragraph{Protocol.}
For each $J\in\mathcal{J}$, we train the base model for two epochs (1,410 gradient steps), logging metrics every 5 steps. Each configuration is run five times with different random seeds; we report mean and standard deviation of pass@1 on the validation set. All other hyperparameters remain fixed across noise conditions.

\paragraph{Baseline.}
Our primary baseline is GRPO with the noise-free oracle ($J=1$), representing the performance ceiling under perfect verification.

\subsection{Results}

\begin{table}[h]
  \centering
  \small
  \caption{\textbf{Validation Accuracy \& Noise Sensitivity.} Validation accuracy after two epochs across noise conditions. Improvement is visualized relative to the Baseline model performance. Bars to the left indicate degradation; bars to the right indicate gain.}
  \label{tab:noisegrid-main}
  
  \begin{tblr}{
    colspec = {
      Q[c,m] 
      Q[c,m] 
      S[table-format=2.2, table-space-text-post=\%] 
      S[table-format=-2.1, table-space-text-post=\%] 
      Q[r,m, wd=1.5cm] 
      Q[l,m, wd=1.5cm] 
    },
    row{1} = {bg=Navy, fg=white, font=\bfseries},
    cell{1}{5} = {c=2}{c}, 
    row{2} = {fg=Slate},
    vline{6} = {2-Z}{1pt, solid, AxisColor},
    hline{1,Z} = {1.2pt, Navy},
    hline{2} = {0.6pt, Navy},
    row{2-Z} = {3pt},
  }
    $J$ & $(\text{FPR, FNR})$ & {$\mathbf{E}$[Pass@1]} & {Improvement from the Base model} &  & \\
    
    $-0.1$ & (0.60, 0.50) & 0.16\% & \textcolor{Rose}{-12.6\%} 
           & {\color{Rose}\rule{1.2cm}{5pt}} & \\ 
    
    $0.0$  & (0.50, 0.50) & 13.4\% & \textcolor{Emerald}{+0.6\%} 
           & & {\color{Emerald}\rule{0.05cm}{5pt}} \\ 
    
    $0.3$  & (0.00, 0.70) & 16.0\% & \textcolor{Emerald}{+3.2\%} 
           & & {\color{Emerald}\rule{0.32cm}{5pt}} \\
    
    $0.3$  & (0.70, 0.00) & 14.6\% & \textcolor{Emerald}{+1.8\%} 
           & & {\color{Emerald}\rule{0.18cm}{5pt}} \\
    
    $0.7$  & (0.20, 0.10) & \textbf{18.6\%} & \textbf{\textcolor{Emerald}{+5.8\%}} 
           & & {\color{Emerald}\rule{0.58cm}{5pt}} \\
    
    $1.0$  & (0.00, 0.00) & \textbf{20.8\%} & \textbf{\textcolor{Emerald}{+8.0\%}} 
           & & {\color{Emerald}\rule{0.80cm}{5pt}} \\
           
  \end{tblr}
\end{table}

\paragraph{Phase transition confirmed ($\mathcal{H}_1$).}
Figure~\ref{fig:phase-transition-experimental-results} and Table~\ref{tab:noisegrid-main} confirms a sharp qualitative boundary at $J=0$. For $J>0$, all configurations show monotonic improvement in pass@1, with stronger signal ($J$ closer to 1) yielding faster convergence and higher final accuracy. At the critical point $J=0$, training produces minimal improvement ($+0.6$\%), consistent with neutral drift where reward noise cancels directional information. For $J<0$, accuracy actively degrades ($-12.6$\%), demonstrating anti-learning as predicted, the system systematically moves toward the bad equilibrium.

\paragraph{Noise rescales speed, not fate ($\mathcal{H}_2$).}
Figure~\ref{fig:phase-transition-experimental-results} shows learning trajectories across noise levels. For all $J>0$ conditions, the accuracy curves increase monotonically over the training horizon. Our experiments are limited to $1410$ steps (two epochs), so we remain agnostic about the exact asymptotic behavior; however, the observed trajectories are consistent with the hypothesis that both the noise–free and noisy regimes converge to the same basin of attraction.

The basin of attraction thus appears qualitatively unchanged: noisy signals with $J>0$ still drive the system toward the good equilibrium, only at a reduced velocity. This aligns with our theoretical prediction that noise rescales the multiplicative factor in $\dot{p}$ but preserves the sign structure that governs the long–term dynamics.

Notably, even heavily degraded signals ($J=0.3$) still enable learning, though convergence is substantially slower. The asymmetry between false positives and false negatives matters: at fixed $J=0.3$, the configuration (FPR=0.00, FNR=0.70) achieves 15.98\% while (FPR=0.70, FNR=0.00) reaches 14.64\%, suggesting FNs are more tolerable than FPs in this regime. This supports the theoretical investigation that convergence rate is $O(t^{-2})$ if $\text{FN}=0$ vs. $O(t^{-1})$  for $\text{FN}>0$.


\subsection{Limitations and Future Directions}

\paragraph{Oracle imperfection.}
Our verification relies on a finite test suite. While designed for high coverage, incomplete tests introduce systematic bias in estimated $(\mathrm{TPR},\mathrm{FPR})$, particularly for edge cases. 

\paragraph{Context length effects.}
As model performance declines (especially for $J<0$), there is potential to increase the generation of longer responses that exceed the maximum token limit. VeRL's handling of truncated rollouts, assigning them reward zero and high clipping ratios, introduces systematic false negatives that can shift the effective $J$ downward. This may explain some of the asymmetry between predicted and observed behavior in the anti-learning regime.

\paragraph{Generalization.}
Our experiments focus on Python coding with Qwen2.5-3B. The phase transition at $J=0$ is a fundamental property of the learning dynamics and should generalize across domains and architectures, but the specific decay rates and noise tolerance may vary with task complexity, model capacity, and verifier characteristics. Extensions to mathematical reasoning, creative writing with LLM-as-Judge, and larger models remain important future directions.

\paragraph{Time-dependent noise.}
Although our ODE framework is generalizable to arbitrary time-dependent noise, our experiments employ fixed noise rates. In practical settings, the simultaneous evolution of policy and reward models may induce drift in $\mathrm{TPR}(t)$ and $\mathrm{FPR}(t)$. While our mean-field analysis theoretically supports time-dependency, a full investigation of these co-evolutionary dynamics is reserved for future work.


\section{Conclusion}\label{sec:conclusion}

We asked a simple but operational question: \emph{how much slop is too much} in RLVR, when the verifier is imperfect and the learning algorithm repeatedly amplifies its feedback.
Our analysis shows that, for group-normalized policy-gradient methods (e.g., GRPO), the qualitative outcome is governed by a single scalar:
\[
J \;=\; \mathrm{TPR}-\mathrm{FPR}.
\]
When $J>0$, the verifier is net-informative and RL pushes probability mass toward correct solutions; when $J=0$, the signal is effectively chance-level and learning becomes neutral drift; when $J<0$, the signal is net-misleading and the updates become \emph{anti-learning}, systematically driving the policy toward incorrect modes.

\begin{TakeawayDrop}{Main finding}
\textbf{Group-normalized RL is directionally consistent under noisy rewards whenever $J=\mathrm{TPR}-\mathrm{FPR}>0$.}
In that regime, the aggregate bad-mode mass decreases monotonically and accuracy improves; noise primarily reduces the \emph{speed} of progress rather than changing the eventual basin of attraction (``rate, not fate'').
When $J<0$, the direction flips and performance collapses.
\end{TakeawayDrop}

\paragraph{What this paper contributed.}
Beyond identifying the $J=0$ boundary, we developed a minimal and predictive framework for noisy-reward RLVR:
\begin{itemize}[leftmargin=1.2em,itemsep=0.25em]
    \item \textbf{A multi-armed bandit abstraction for LLM completions.}
    We coarse-grain sequences into recurring \textbf{reasoning modes} (arms), making sequence-level RLVR analytically tractable.

    \item \textbf{A mean-field probability-simplex view of GRPO.}
    Group normalization induces a replicator-style (\textbf{natural-selection}) flow that redistributes probability mass based on relative advantage.

    \item \textbf{A closed-form governing variable: the bad-mode mass $p(t)$.}
    The dynamics decouple into (i) an \textbf{outer} evolution of $p(t)$ (good vs.\ bad families) and (ii) an \textbf{inner} competition within each family.
    This yields an explicit drift whose sign depends only on $\operatorname{sign}(J)$, producing the observed \textbf{phase transition} at $J=0$.

    \item \textbf{Rate laws and learnability insights.}
    In the learning regime ($J>0$), noise changes convergence rates (e.g., $t^{-1}$ vs.\ $t^{-2}$ tails depending on variance degeneracy) and predicts that prompts are most learnable at \textbf{intermediate difficulty} (roughly when the model is near $50$--$50$ between good and bad).

    \item \textbf{Geometry and diversity implications.}
    The simplex geometry (Shahshahani/Fisher) clarifies why GRPO produces \textbf{winner-take-all} behavior inside the good manifold (symmetry breaking / diversity collapse), even when multiple correct modes exist.

    \item \textbf{Where practical GRPO details enter.}
    PPO-style importance sampling and clipping do not change the leading-order drift in the small-step mean-field regime, while \textbf{KL anchoring} turns boundary collapse into a \textbf{unique interior equilibrium}, smoothing the sharpness of the transition without eliminating the fundamental dependence on $J$.
\end{itemize}

\begin{TakeawayDrop}{Practical takeaways for RLVR with noisy verifiers}
\begin{itemize}[leftmargin=0em,itemsep=0.25em]
    \item \textbf{Measure (or estimate) $J=\mathrm{TPR}-\mathrm{FPR}$ early.}
    If $J\le 0$, scaling RL compute will not fix the problem, it will stagnate or actively degrade the policy.

    \item \textbf{If $J>0$, compute helps (mostly) by buying time.}
    Noisy-but-informative rewards tend to slow training rather than changing its qualitative endpoint.

    \item \textbf{False positives are especially dangerous.}
    Holding $J$ fixed, the noise \emph{structure} can change speed; empirically, high FPR is often more damaging than high FNR.

    \item \textbf{Use KL regularization for stability, not as a substitute for signal quality.}
    KL anchoring can prevent extreme collapse and yields controlled interior behavior, but it cannot turn a net-misleading verifier into a learning signal.
\end{itemize}
\end{TakeawayDrop}

\paragraph{Closing perspective.}
Overall, \textbf{RLV$^\varepsilon$R} provides a simple analytic lens for understanding when RLVR is viable under imperfect verification: the verifier’s \emph{net discriminative power} (captured by $J$) determines the fate, while algorithmic details and noise structure shape the rate and stability.
This gives both a concrete diagnostic for verifier quality and a principled foundation for designing more robust RLVR pipelines in domains where clean ground-truth supervision is unavailable.




\bibliography{colm2024_conference}

@article{youden1950index,
  title={Index for rating diagnostic tests},
  author={Youden, William J},
  journal={Cancer},
  volume={3},
  number={1},
  pages={32--35},
  year={1950},
  publisher={Wiley Online Library}
}

@article{liu2025understanding,
  title={Understanding r1-zero-like training: A critical perspective},
  author={Liu, Zichen and Chen, Changyu and Li, Wenjun and Qi, Penghui and Pang, Tianyu and Du, Chao and Lee, Wee Sun and Lin, Min},
  journal={arXiv preprint arXiv:2503.20783},
  year={2025}
}

@book{lattimore2020bandit,
  title={Bandit algorithms},
  author={Lattimore, Tor and Szepesv{\'a}ri, Csaba},
  year={2020},
  publisher={Cambridge University Press}
}

@article{dang2025weight,
  title={Weight ensembling improves reasoning in language models},
  author={Dang, Xingyu and Baek, Christina and Wen, Kaiyue and Kolter, Zico and Raghunathan, Aditi},
  journal={arXiv preprint arXiv:2504.10478},
  year={2025}
}

@inproceedings{dang2025assessing,
  title={Assessing diversity collapse in reasoning},
  author={Dang, Xingyu and Baek, Christina and Kolter, J Zico and Raghunathan, Aditi},
  booktitle={Scaling Self-Improving Foundation Models without Human Supervision},
  year={2025}
}

@article{ahmadian2024back,
  title={Back to basics: Revisiting reinforce style optimization for learning from human feedback in llms},
  author={Ahmadian, Arash and Cremer, Chris and Gall{\'e}, Matthias and Fadaee, Marzieh and Kreutzer, Julia and Pietquin, Olivier and {\"U}st{\"u}n, Ahmet and Hooker, Sara},
  journal={arXiv preprint arXiv:2402.14740},
  year={2024}
}

@article{shao2024deepseekmath,
  title={Deepseekmath: Pushing the limits of mathematical reasoning in open language models},
  author={Shao, Zhihong and Wang, Peiyi and Zhu, Qihao and Xu, Runxin and Song, Junxiao and Bi, Xiao and Zhang, Haowei and Zhang, Mingchuan and Li, YK and Wu, Yang and others},
  journal={arXiv preprint arXiv:2402.03300},
  year={2024}
}

@article{rlvr,
  title   = {Reinforcement Learning with Verifiable Rewards Implicitly Incentivizes Correct Reasoning in Base LLMs},
  author  = {Wen, Xumeng and Liu, Zihan and Zheng, Shun and Xu, Zhijian and Ye, Shengyu and Wu, Zhirong and Liang, Xiao and Wang, Yang and Li, Junjie and Miao, Ziming and Bian, Jiang and Yang, Mao},
  journal = {arXiv preprint arXiv:2506.14245},
  year    = {2025},
  url     = {https://arxiv.org/abs/2506.14245}
}

@article{su2025rlvrbridge,
  title   = {Crossing the Reward Bridge: Expanding RL with Verifiable Rewards Across Diverse Domains},
  author  = {Su, Yi and Yu, Dian and Song, Linfeng and Li, Juntao and Mi, Haitao and Tu, Zhaopeng and Zhang, Min and Yu, Dong},
  journal = {arXiv preprint arXiv:2503.23829},
  year    = {2025},
  doi     = {10.48550/arXiv.2503.23829},
  url     = {https://arxiv.org/abs/2503.23829}
}

@article{cai2025noisyrlvr,
  title   = {Reinforcement Learning with Verifiable yet Noisy Rewards under Imperfect Verifiers},
  author  = {Cai, Xin-Qiang and Wang, Wei and Liu, Feng and Liu, Tongliang and Niu, Gang and Sugiyama, Masashi},
  journal = {arXiv preprint arXiv:2510.00915},
  year    = {2025},
  url     = {https://arxiv.org/abs/2510.00915}
}

@article{ppo,
  title   = {Proximal Policy Optimization Algorithms},
  author  = {Schulman, John and Wolski, Filip and Dhariwal, Prafulla and Radford, Alec and Klimov, Oleg},
  journal = {arXiv preprint arXiv:1707.06347},
  year    = {2017},
  url     = {https://arxiv.org/abs/1707.06347}
}

@misc{Bai2022ConstitutionalAI,
  title = {Constitutional {AI}: Harmlessness from {AI} Feedback},
  author = {Bai, Yuntao and Kadavath, Siddhant and Kundu, Shantanu and Askell, Amanda and Kernion, Jonathan and Jones, Andrew and Chen, Cameron and Goldie, Michael and Mirhoseini, Ali and McKinnon, C.},
  howpublished = {arXiv preprint arXiv:2212.08073},
  year = {2022},
  url = {https://arxiv.org/abs/2212.08073}
}

@article{grattafiori2024llama,
  title={The llama 3 herd of models},
  author={Grattafiori, Aaron and Dubey, Abhimanyu and Jauhri, Abhinav and Pandey, Abhinav and Kadian, Abhishek and Al-Dahle, Ahmad and Letman, Aiesha and Mathur, Akhil and Schelten, Alan and Vaughan, Alex and others},
  journal={arXiv preprint arXiv:2407.21783},
  year={2024}
}

@article{zhang2025consistent,
  title={Consistent Paths Lead to Truth: Self-Rewarding Reinforcement Learning for LLM Reasoning},
  author={Zhang, Kongcheng and Yao, Qi and Liu, Shunyu and Wang, Yingjie and Lai, Baisheng and Ye, Jieping and Song, Mingli and Tao, Dacheng},
  journal={arXiv preprint arXiv:2506.08745},
  year={2025}
}

@article{chen2025acereason,
  title={Acereason-nemotron: Advancing math and code reasoning through reinforcement learning},
  author={Chen, Yang and Yang, Zhuolin and Liu, Zihan and Lee, Chankyu and Xu, Peng and Shoeybi, Mohammad and Catanzaro, Bryan and Ping, Wei},
  journal={arXiv preprint arXiv:2505.16400},
  year={2025}
}

@article{zhang2025co,
  title={Co-rewarding: Stable Self-supervised RL for Eliciting Reasoning in Large Language Models},
  author={Zhang, Zizhuo and Zhu, Jianing and Ge, Xinmu and Zhao, Zihua and Zhou, Zhanke and Li, Xuan and Feng, Xiao and Yao, Jiangchao and Han, Bo},
  journal={arXiv preprint arXiv:2508.00410},
  year={2025}
}

@article{zuo2025ttrl,
  title={Ttrl: Test-time reinforcement learning},
  author={Zuo, Yuxin and Zhang, Kaiyan and Sheng, Li and Qu, Shang and Cui, Ganqu and Zhu, Xuekai and Li, Haozhan and Zhang, Yuchen and Long, Xinwei and Hua, Ermo and others},
  journal={arXiv preprint arXiv:2504.16084},
  year={2025}
}

@article{deep_think_with_confidence,
  title={Deep think with confidence},
  author={Fu, Yichao and Wang, Xuewei and Tian, Yuandong and Zhao, Jiawei},
  journal={arXiv preprint arXiv:2508.15260},
  year={2025}
}

@article{intuitor,
  title={Learning to reason without external rewards},
  author={Zhao, Xuandong and Kang, Zhewei and Feng, Aosong and Levine, Sergey and Song, Dawn},
  journal={arXiv preprint arXiv:2505.19590},
  year={2025}
}

@article{Rlaif_vs_rlhf,
  title={Rlaif vs. rlhf: Scaling reinforcement learning from human feedback with ai feedback},
  author={Lee, Harrison and Phatale, Samrat and Mansoor, Hassan and Mesnard, Thomas and Ferret, Johan and Lu, Kellie and Bishop, Colton and Hall, Ethan and Carbune, Victor and Rastogi, Abhinav and others},
  journal={arXiv preprint arXiv:2309.00267},
  year={2023}
}

@misc{openr1,
    title = {Open R1: A fully open reproduction of DeepSeek-R1},
    url = {https://github.com/huggingface/open-r1},
    author = {{Hugging Face}},
    month = {January},
    year = {2025}
}

@article{verl,
  title   = {HybridFlow: A Flexible and Efficient RLHF Framework},
  author  = {Guangming Sheng and Chi Zhang and Zilingfeng Ye and Xibin Wu and Wang Zhang and Ru Zhang and Yanghua Peng and Haibin Lin and Chuan Wu},
  year    = {2024},
  journal = {arXiv preprint arXiv: 2409.19256}
}

@book{cressman2003evolutionary,
  title={Evolutionary dynamics and extensive form games},
  author={Cressman, Ross},
  volume={5},
  year={2003},
  publisher={MIT Press}
}

@article{bae2025online,
  title={Online difficulty filtering for reasoning oriented reinforcement learning},
  author={Bae, Sanghwan and Hong, Jiwoo and Lee, Min Young and Kim, Hanbyul and Nam, JeongYeon and Kwak, Donghyun},
  journal={arXiv preprint arXiv:2504.03380},
  year={2025}
}

@article{foster2025learning,
  title={Learning to reason at the frontier of learnability},
  author={Foster, Thomas and Sims, Anya and Forkel, Johannes and Fellows, Mattie and Foerster, Jakob},
  journal={arXiv preprint arXiv:2502.12272},
  year={2025}
}

@article{kool2019buy,
  title={Buy 4 reinforce samples, get a baseline for free!},
  author={Kool, Wouter and van Hoof, Herke and Welling, Max},
  year={2019}
}

@article{williams1992simple,
  title={Simple statistical gradient-following algorithms for connectionist reinforcement learning},
  author={Williams, Ronald J},
  journal={Machine learning},
  volume={8},
  number={3},
  pages={229--256},
  year={1992},
  publisher={Springer}
}

@book{shahshahani1979new,
  title={A new mathematical framework for the study of linkage and selection},
  author={Shahshahani, Siavash},
  year={1979},
  publisher={American Mathematical Soc.}
}

@inproceedings{amari2019fisher,
  title={Fisher information and natural gradient learning in random deep networks},
  author={Amari, Shun-ichi and Karakida, Ryo and Oizumi, Masafumi},
  booktitle={The 22nd International Conference on Artificial Intelligence and Statistics},
  pages={694--702},
  year={2019},
  organization={PMLR}
}

@article{kreutzer2017bandit,
  title={Bandit structured prediction for neural sequence-to-sequence learning},
  author={Kreutzer, Julia and Sokolov, Artem and Riezler, Stefan},
  journal={arXiv preprint arXiv:1704.06497},
  year={2017}
}

@article{nguyen2017reinforcement,
  title={Reinforcement learning for bandit neural machine translation with simulated human feedback},
  author={Nguyen, Khanh and Daum{\'e} III, Hal and Boyd-Graber, Jordan},
  journal={arXiv preprint arXiv:1707.07402},
  year={2017}
}

@inproceedings{lightman2023let,
  title={Let's verify step by step},
  author={Lightman, Hunter and Kosaraju, Vineet and Burda, Yuri and Edwards, Harrison and Baker, Bowen and Lee, Teddy and Leike, Jan and Schulman, John and Sutskever, Ilya and Cobbe, Karl},
  booktitle={The Twelfth International Conference on Learning Representations},
  year={2023}
}

@article{zhou2025landscape,
  title={Landscape of thoughts: Visualizing the reasoning process of large language models},
  author={Zhou, Zhanke and Zhu, Zhaocheng and Li, Xuan and Galkin, Mikhail and Feng, Xiao and Koyejo, Sanmi and Tang, Jian and Han, Bo},
  journal={arXiv preprint arXiv:2503.22165},
  year={2025}
}

@inproceedings{hendryckstest2021,
  title     = {Measuring Massive Multitask Language Understanding},
  author    = {Hendrycks, Dan and Burns, Collin and Basart, Steven and Zou, Andy and Mazeika, Mantas and Song, Dawn and Steinhardt, Jacob},
  booktitle = {International Conference on Learning Representations (ICLR)},
  year      = {2021},
  note      = {arXiv:2009.03300}
}

@article{mroueh2025reinforcement,
  title={Reinforcement Learning with Verifiable Rewards: GRPO's Effective Loss, Dynamics, and Success Amplification},
  author={Mroueh, Youssef},
  journal={arXiv preprint arXiv:2503.06639},
  year={2025}
}

@article{yue2025does,
  title={Does reinforcement learning really incentivize reasoning capacity in llms beyond the base model?},
  author={Yue, Yang and Chen, Zhiqi and Lu, Rui and Zhao, Andrew and Wang, Zhaokai and Song, Shiji and Huang, Gao},
  journal={arXiv preprint arXiv:2504.13837},
  year={2025}
}
\bibliographystyle{colm2024_conference}

\newpage
\appendix
\newpage

\section{LLM as Multi-arm Bandit}\label{app:MAB}

 \subsection{Multi-armed bandits.}\label{subsec:Multi-armed bandits}
The multi-armed bandit (MAB) problem is a  model in optimization and probability that focuses on the exploration–exploitation trade-off. In this problem setup, a decision maker repeatedly selects one of \(K\) actions (``arms''); upon pulling arm \(a_t\in[K]\) at round \(t\), a stochastic reward \(R_t(a_t)\) is observed, drawn from an unknown distribution with mean \(\mu_{a_t}\). The goal is to maximize cumulative reward or equivalently minimize regret, despite this uncertainty:
\[
R_T \;=\; T\,\mu_* \;-\; \sum_{t=1}^T \mu_{a_t},
\qquad
\mu_* \;:=\; \max_{a\in[K]} \mu_a.
\]
This setting captures a wide range of real systems where feedback is noisy, delayed, or partial: online recommendation, A/B testing, adaptive science experiments, and (as emphasized in this work) coarse-grained evaluation of generative models. Classical algorithms balance information gathering with reward maximization (e.g., optimism/UCB, posterior sampling, or gradient-based updates), and their guarantees hinge on the number of arms, reward signal quality, and the horizon \(T\). In our context, the bandit abstraction serves as a tractable surrogate for complex, high-dimensional decision spaces while preserving the essential statistical structure of learning under uncertainty (\cite{lattimore2020bandit}. 
\subsection{Bandit Abstraction for LLMs}

In the context of generative AI, specifically large language models, a given problem (such as a request or prompt) can yield multiple potential solutions, particularly when these models operate with a non-zero temperature setting. Recall that the temperature parameter influences output of the final layer of the model, where it directly affects the selection of subsequent tokens from the logit vectors using the softmax mechanism adjusted by the specified temperature. This selection process at the token level results in the generation of various sequences, some of which are correct (if the domain is verifiable), while others may be incorrect. Although the space of possible sequences that an LLM can generate is theoretically infinite akin to the hypothetical scenario of a monkey randomly typing and eventually producing a proof of the Goldbach conjecture, the total number of answers is finite due to the maximum response length that is feasible for the model to generate these answers.

For a fixed prompt \(x\), an LLM samples a completion \(y\) from \(\pi_{\omega}(y\mid x)\). With nonzero temperature, the raw support over \emph{all} token sequences can be very large (in principle, unbounded). In practice, inference and training impose a maximum generation length \(L_{\max}\) (e.g., \texttt{max\_new\_tokens}) and an end-of-sequence token \(\langle\mathrm{eos}\rangle\). Let \(\mathcal{V}\) denote the finite vocabulary. The admissible completions are then drawn from the truncated set
\[
\mathcal{Y}_{\le L_{\max}} \;=\; \bigcup_{\ell=1}^{L_{\max}} \mathcal{V}^{\ell},
\qquad
\bigl|\mathcal{Y}_{\le L_{\max}}\bigr|
\;\le\; \sum_{\ell=1}^{L_{\max}} |\mathcal{V}|^{\ell}
\;=\; \frac{|\mathcal{V}|^{L_{\max}+1}-|\mathcal{V}|}{|\mathcal{V}|-1},
\]
so the effective support is \emph{finite}. (Stop-sequences and \(\langle\mathrm{eos}\rangle\) further reduce this set in practice.) 
Given a fixed prompt \(x\), a large language model (LLM) samples an output sequence \(y\) from its conditional policy \(\pi_{\omega}(y\,|\,x)\) (with base parameters \(\omega\)). In case of truncation, we can write the truncated policy as
\[
\pi_{\omega}^{(L)}(y\mid x) \;\propto\; \pi_{\omega}(y\mid x)\,\mathbf{1}\{y\in\mathcal{Y}_{\le L_{\max}}\}.
\]

\subsection{Coarse-graining into Modes}
 For a nonzero sampling temperature, the model typically admits many distinct answers to the same prompt, often spanning a very large (potentially infinite) support.  However, in the practice, due  to the limitation on  the output length (controlled by max tokens), the space of possible solution is practically   \emph{coarsen} into a finite collection of representatives \emph{reasoning modes} (or solution prototypes).
 By clustering the reasoning-equivalent response together as a one reasoning mode/arm, we can map outputs via a surjective map
\[
\phi:\;\mathcal Y_{\le L_{\max}} \longrightarrow 
\mathcal{H}\;=\;\{h_1,\dots,h_{K+M}\},
\]
where each mode \(h\in\mathcal H\) represents a literal or semantic/evaluative equivalence class (e.g., logically equivalent answers, rubric-equivalent or string matching equivalency).

In the next step, we can partition the modes into \emph{good} (correct) and \emph{bad} (incorrect) solutions,
\[
\mathcal{H}\;=\;\mathcal{H}^{+}\ \cup\ \mathcal{H}^{-},\qquad
|\mathcal{H}^{+}|=K,\ \ |\mathcal{H}^{-}|=M.
\] 
Without loss of generality, we index good modes by \(i\in\{1,\dots,K\}\) and bad modes by \(i\in\{K+1,\dots,K+M\}\). Sampling a response is now equivalent to pulling one arm from a \((K\!+\!M)\)-armed bandit with pull probabilities \(\pi_{\theta}(h_i\,|\,x)\). We then work with the induced categorical policy over modes,
\[
\pi_{\theta}(h_i\,|\,x)\;=\;\frac{\exp(\theta_i)}{\sum_{j=1}^{K+M} \exp(\theta_j)}
\;=\;\mathrm{softmax}(\theta)_i,
\]
where \(\theta=(\theta_1,\dots,\theta_{K+M})\) are \emph{effective logits} that summarize, for the fixed prompt \(x\), the aggregate probability mass the base model places on each mode. These logits are not a one-to-one reparameterization of \(\omega\); rather, they are low-dimensional coordinates (unique up to an additive constant) on the probability simplex over \(\mathcal{H}\) induced by \(\pi_{\omega}(\cdot\,|\,x)\).

In this work, since we are interested mostly in the total probability of bad arms, as we discussed in the $\S \ref{sec:llm-bandit}$, we define the bad arms mass probability by partitioning modes into good (correct) and bad (incorrect), 
\(
\mathcal{H}=\mathcal{H}^+\cup\mathcal{H}^-,
\ |\mathcal{H}^+|=K,\ |\mathcal{H}^-|=M,
\)
such that $p \;=\;\sum_{h\in\mathcal{H}^-}\pi_\theta(h\mid x)$

\newpage
\section{Noisy Rewards and Youden’s $J$ Index }
\label{appendix:retrieval-noisy-rewards}
\begin{figure}
    \centering
    \includegraphics[width=1\linewidth]{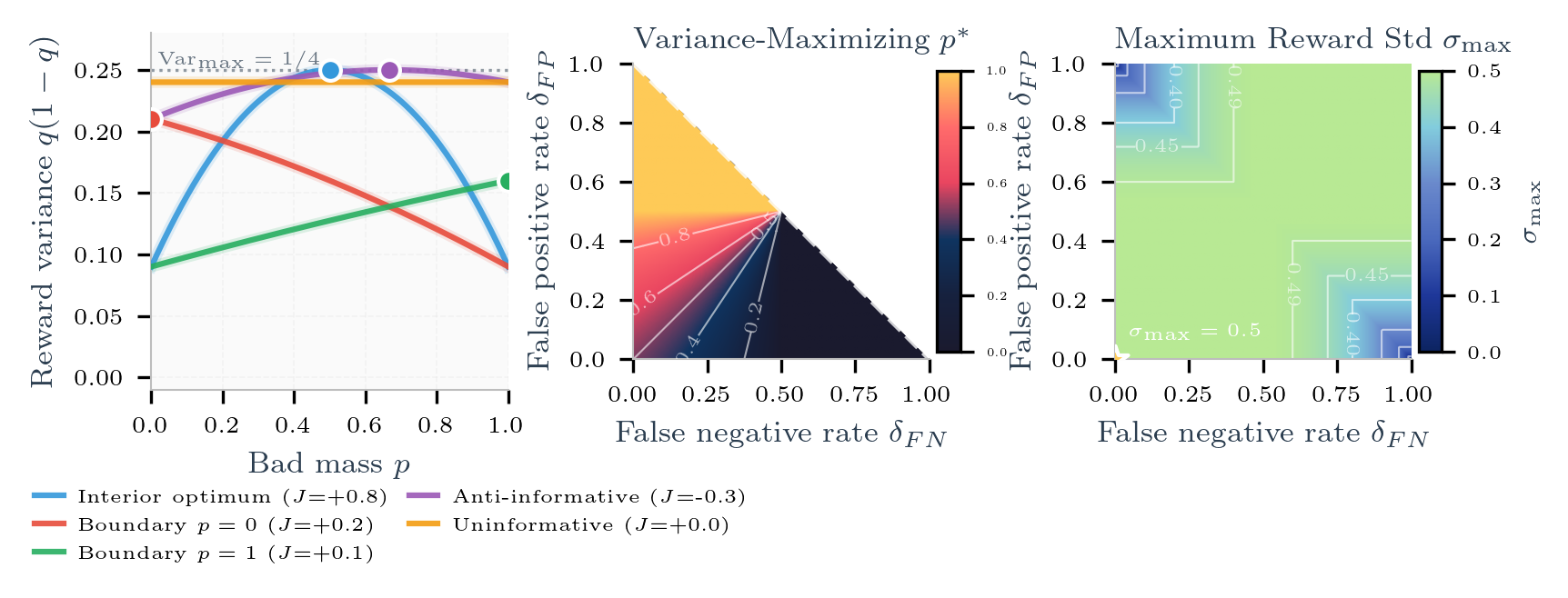}
    \caption{\textbf{Reward-variance geometry under noisy Bernoulli rewards.}
(Left) Reward variance $\Var(r)=q(p)\bigl(1-q(p)\bigr)$ as a function of bad mass $p$ for representative noise settings; markers indicate
$p^\star=\arg\max_{p\in[0,1]}\Var(r)$ (equivalently $q(p)=\tfrac12$ when attainable, otherwise the boundary $p\in\{0,1\}$).
(Middle) Heatmap of $p^\star(\delta_{\mathrm{FN}},\delta_{\mathrm{FP}})$ in the informative region $J>0$, with the dashed diagonal marking
the phase boundary $J=1-\delta_{\mathrm{FN}}-\delta_{\mathrm{FP}}=0$ and contours showing level sets of $p^\star$.
(Right) Maximum achievable reward standard deviation $\sigma_{\max}(\delta_{\mathrm{FN}},\delta_{\mathrm{FP}})
=\max_{p\in[0,1]}\sqrt{q(p)(1-q(p))}$ with contours. Throughout,
$q(p)=(1-\delta_{\mathrm{FN}})-Jp$ and $J=1-\delta_{\mathrm{FN}}-\delta_{\mathrm{FP}}$.}
\label{fig:variance_triptych}
\end{figure}



Recall the definition of noise that we had in \eqref{eq:def-deltas}:
\begin{equation*}
\delta_{\mathrm{FN}}=\Pr(r=0\mid \text{good}),\qquad
\delta_{\mathrm{FP}}=\Pr(r=1\mid \text{bad}),
\end{equation*}
and the Youden's Index, \eqref{eq:def-J}, as 
\begin{equation*}
J := 1-\delta_{\mathrm{FN}}-\delta_{\mathrm{FP}}
\;=\;\mathrm{TPR}-\mathrm{FPR}\in[-1,1].
\end{equation*}
where \(p=\Pr(\text{bad})\) denote the current bad mass (so \(\Pr(\text{good})=1-p\)). With this setup, the expected reward of a single pull is
\begin{equation}
\label{eq:q-definition}
\begin{aligned}
q(p)\;:=\;\mathbb{E}[r]
&=\mathbb{E}[r\mid \text{good}]\,(1-p)+\mathbb{E}[r\mid \text{bad}]\,p\\
&=(1-p)\,(1-\delta_{\mathrm{FN}})+p\,\delta_{\mathrm{FP}}
\;=\;(1-\delta_{\mathrm{FN}})\;-\;J\,p.
\end{aligned}
\end{equation}
Since \(r\) is Bernoulli with mean \(q\), its variance is
\begin{equation}
\label{eq:sigma-def}
\sigma^2(p)\;:=\;\mathrm{Var}(r)\;=\;q(p)\,\bigl(1-q(p)\bigr).
\end{equation}

we can also directly verify this property: 
\begin{equation}\label{eq:sigma-def}
\begin{aligned}
\mathrm{Var}(r)
&= (1-p)(1-\delta_{\mathrm{FN}})\,\delta_{\mathrm{FN}}
 \;+\; p\,\delta_{\mathrm{FP}}(1-\delta_{\mathrm{FP}})
 \;+\; p(1-p)\,J^2 \\
&=\big((1-\delta_{\mathrm{FN}})-Jp\big)\big(\delta_{\mathrm{FN}}+Jp\big)
\;=\; q(1-q).
\end{aligned}
\end{equation}
Notice that $q, \sigma(p)\equiv q,\sigma(\delta_{\text{FN}}, \delta_{FP},p)$, but for  brevity, we denote it as $\sigma(p)$ and $q(p)$.

It is good to notice that since \(q(p) \in [\delta_{\mathrm{FP}}, 1 - \delta_{\mathrm{FN}}]\), the variance term 
\(\sigma(p) = \sqrt{q(p)\bigl(1 - q(p)\bigr)}\) is maximized at \(q(p) = \tfrac{1}{2}\), 
provided \(\tfrac{1}{2}\) lies inside this interval. Solving \(q(p) = \tfrac{1}{2}\) for \(p\) yields
\[
p^\star=\text{clip}(\frac{\frac{1}{2}-\delta_{\text{FN}}}{J},0,1)
\]
such that the value of \(p\) that maximizes \(\sigma(p)\), assuming \(J = 1 - \delta_{\mathrm{FN}} - \delta_{\mathrm{FP}} > 0\). In the edge case where \(\tfrac{1}{2} \notin [\delta_{\mathrm{FP}}, 1 - \delta_{\mathrm{FN}}]\) (i.e., for extremely noisy graders),
the maximum of \(\sigma(p)\) occurs at the boundary: \(q = \delta_{\mathrm{FP}}\) or \(q = 1 - \delta_{\mathrm{FN}}\), 
whichever is closer to \(\tfrac{1}{2}\). Equivalently, \(p^\star\) clips to \(0\) or \(1\) in this regime (see Fig.~\ref{fig:variance_triptych}). 


Group-based policy typically updates normalized rewards within a prompt-specific group of \(G\) rollouts. While alternatives exist (e.g., leave-one-out baselines~\cite{kool2019buy} or centered-but-unstandardized variants~\cite{liu2025understanding}), we adopt a simple \(z\)-score normalization (as in GRPO~\cite{shao2024deepseekmath}):
\begin{equation}
\label{eq:zscore}
\tilde r \;=\; \frac{r - q(p)}{\sigma(p)}.
\end{equation}
Conditioning on the latent correctness, this yields
\[
\tilde r\mid\text{good}=
\begin{cases}
\dfrac{1-q}{\sigma}, & \text{w.p. } 1-\delta_{\mathrm{FN}},\\[4pt]
\dfrac{-q}{\sigma}, & \text{w.p. } \delta_{\mathrm{FN}},
\end{cases}
\qquad
\tilde r\mid\text{bad} =
\begin{cases}
\dfrac{1-q}{\sigma}, & \text{w.p. } \delta_{\mathrm{FP}},\\[4pt]
\dfrac{-q}{\sigma}, & \text{w.p. } 1-\delta_{\mathrm{FP}}.
\end{cases}
\]

Taking expectations gives the block-symmetric conditional means
\begin{equation}
\label{eq:cond-means-tilder}
\mathbb{E}[\tilde r\mid\text{good}] \;=\; \frac{J\,p}{\sigma(p)},
\qquad
\mathbb{E}[\tilde r\mid\text{bad}]  \;=\; -\,\frac{J\,(1-p)}{\sigma(p)},
\end{equation}
and global centering holds automatically:
\begin{equation}
\label{eq:global-centering}
\mathbb{E}[\tilde r]
=(1-p)\,\mathbb{E}[\tilde r\mid\text{good}] + p\,\mathbb{E}[\tilde r\mid\text{bad}]
=0,
\end{equation}
which is desirable for stable, scale-invariant updates. These expressions demonstrate that Youden’s index \(J\) governs the sign and magnitude of the expected normalized reward for good versus bad arms.

\begin{remark}
If one omits division by \(\sigma(p)\) (the ``centered-only’’ modification of GRPO~\cite{liu2025understanding}), then
\begin{equation}
\label{eq:centered-only}
\mathbb{E}[\tilde r\mid\text{good}] \;=\; J\,p,\qquad
\mathbb{E}[\tilde r\mid\text{bad}] \;=\; -\,J\,(1-p),\qquad
\mathbb{E}[\tilde r]=0.
\end{equation}
\end{remark}

\begin{remark}
Some works use a \(\{\pm 1\}\)-valued reward \(S:=2r-1\). Then \(\mathbb{E}[S]=2q-1\) and \(\mathrm{Var}(S)=4\,q(1-q)\).
Equations~\eqref{eq:zscore}–\eqref{eq:cond-means-tilder} map to this reward by the linear rescaling \(S=2r-1\); the resulting normalization differs only by a constant factor of \(2\).
\end{remark}

\begin{remark}
    For noise-free case, $J=1$, the expectation values take a simpler form
\begin{equation*}
\label{eq:cond-means-tilder-2}
\mathbb{E}[\tilde r\mid\text{good}] \;=\; \sqrt{\frac{p}{1-p}},
\qquad
\mathbb{E}[\tilde r\mid\text{bad}]  \;=\; \sqrt{\frac{1-p}{p}},\qquad \mathbb{E}[\tilde r\mid\text{good}]-\mathbb{E}[\tilde r\mid\text{bad}]=\frac{1}{\sqrt{p(1-p)}}.
\end{equation*}
\end{remark}

\newpage


\section{Mean Field Dynamics..}
\label{appendix:Group-Normalized RL with Noisy Reward}

In this section, we analyze the evolution of good and bad arms using a mean-field approximation of the multi-armed bandit (MAB) system (see Appendix~\ref{app:MAB}). Our derivation accounts for a general noisy environment where $J \in [-1,1]$. The noise-free scenario is treated as a specialization of this framework; specifically, by setting $J=1$, we recover the standard clean-reward dynamics without requiring further modification.

\subsection{Dynamics of the Bad Arms}
Consider a $(K{+}M)$-arm bandit comprising $K$ \emph{good} arms and $M$ \emph{bad} arms $\{b_1,\ldots,b_M\}$. Let the policy be defined as $\mathbf p=\mathrm{softmax}(\boldsymbol\theta)\in\Delta^{K+M-1}$ where:
\[
\mathbf p=(p_1,\ldots,p_K,\ p_{b_1},\ldots,p_{b_M}),
\qquad
p:=\sum_{m=1}^M p_{b_m}\in[0,1],
\qquad
\alpha:=1-p,
\]
and we define the normalized within-block coordinates:
\[
p_j=\alpha\,y_j\ \ (j\le K),\quad
y=(y_1,\ldots,y_K)\in\Delta^{K-1},
\qquad
p_{b_m}=p\,z_m\ \ (m\le M),\quad
z=(z_1,\ldots,z_M)\in\Delta^{M-1}.
\]
Equivalently, the probability vector factors as follows:
\[
\mathbf p
=
\begin{bmatrix}
\alpha\,y\\[2pt]
p\,z
\end{bmatrix},
\qquad
\|y\|_2^2=\sum_{j=1}^K y_j^2\in\Big[\tfrac1K,1\Big],
\qquad
\|z\|_2^2=\sum_{m=1}^M z_m^2\in\Big[\tfrac1M,1\Big].
\]

Let the conditional expected advantage reward be denoted by $A_i \ :=\ \mathbb E[\tilde r\mid I=i]$, with $\mathbf A=\big(A_1,\ldots,A_K,A_{b_1},\ldots,A_{b_M}\big)$ and the mean advantage $\bar A:=\langle \mathbf p,\mathbf A\rangle=\sum_i p_i A_i$. We assume \emph{block symmetry} within both blocks, a condition that arises naturally in the binary-reward scenario analyzed in Appendix~\ref{appendix:retrieval-noisy-rewards} (refer to \eqref{eq:cond-means-tilder}):
\begin{equation}\label{eq:ab-ag}
A_j=a_{\mathrm g}(p)\ \ (j\le K),\qquad
A_{b_m}=a_{\mathrm b}(p)\ \ (m\le M),\qquad
\Delta r(p):=a_{\mathrm b}(p)-a_{\mathrm g}(p).
\end{equation}
It follows that $\bar A=\alpha\,a_{\mathrm g}(p)+p\,a_{\mathrm b}(p)$, which implies $a_{\mathrm g}(p)-\bar A=-p\,\Delta r(p)$ and $a_{\mathrm b}(p)-\bar A=\alpha\,\Delta r(p)$.

\begin{proposition}[Expected directions in \({\boldsymbol{\theta}}\) and \(\mathbf{p}\) space]\label{prop:expectations}
Given \(\mathbf p=\mathrm{softmax}(\boldsymbol{\theta})\) and the softmax Jacobian \(\mathfrak{J}(\mathbf{p}):=\mathrm{Diag}(\mathbf{p})-\mathbf{p}\mathbf{p}^\top\), for a step size $\eta$ and group size $G$:
\[
\E[\Delta\boldsymbol{\theta}\mid \p] \;=\; \eta\,\mathfrak{J}(\p)\,\mathbf{A},
\qquad
\E[\Delta \p\mid \p] \;\approx\; \mathfrak{J}(\p)\,\E[\Delta\boldsymbol{\theta}\mid \p]
\;=\; \eta\,\mathfrak{J}(\p)^2 \mathbf{A},
\]
to the first order in \(\Delta \boldsymbol{\theta}\).
\end{proposition}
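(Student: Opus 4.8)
The plan is to reduce both identities to two standard facts about the categorical (softmax) parametrization: the explicit form of the score function, and the fact that the softmax Jacobian at $\boldsymbol{\theta}$ equals exactly $\mathfrak{J}(\mathbf{p})$.

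First I would establish the $\boldsymbol{\theta}$-space identity. For the mode policy $\pi_\theta(h_i\mid x)=\mathrm{softmax}(\boldsymbol{\theta})_i$, the score function is $\nabla_{\boldsymbol{\theta}}\log\pi_\theta(h_i\mid x)=e_i-\mathbf{p}$, where $e_i$ is the $i$-th standard basis vector. Substituting this into the GRPO update \eqref{eq:objective-function} and writing $y_g$ as the pull of arm $I_g\overset{\mathrm{iid}}{\sim}\mathrm{Categorical}(\mathbf{p})$ gives $\Delta\boldsymbol{\theta}=\eta\,\tfrac1G\sum_{g}\widehat{A}_g(e_{I_g}-\mathbf{p})$. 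Conditioning on the current policy $\mathbf{p}$ and replacing the empirical group-normalized advantage by its population conditional mean $A_i=\mathbb{E}[\tilde r\mid I=i]$ from \eqref{eq:cond-means-tilder} (equivalently, taking $\mathbf{A}$ to be defined through the population $z$-score \eqref{eq:zscore}), the expectation collapses to $\eta\sum_i p_iA_i(e_i-\mathbf{p})$; then using $\sum_i p_iA_ie_i=\mathrm{Diag}(\mathbf{p})\mathbf{A}$ and $\sum_i p_iA_i\,\mathbf{p}=(\mathbf{p}^\top\mathbf{A})\mathbf{p}$, this equals $\eta(\mathrm{Diag}(\mathbf{p})-\mathbf{p}\mathbf{p}^\top)\mathbf{A}=\eta\,\mathfrak{J}(\mathbf{p})\mathbf{A}$.

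For the $\mathbf{p}$-space identity I would use a single first-order Taylor expansion: the map $\boldsymbol{\theta}\mapsto\mathrm{softmax}(\boldsymbol{\theta})$ is smooth and its Jacobian at $\boldsymbol{\theta}$ is $\mathrm{Diag}(\mathbf{p})-\mathbf{p}\mathbf{p}^\top=\mathfrak{J}(\mathbf{p})$, so $\Delta\mathbf{p}=\mathfrak{J}(\mathbf{p})\,\Delta\boldsymbol{\theta}+O(\|\Delta\boldsymbol{\theta}\|^2)$. Since $\Delta\boldsymbol{\theta}=O(\eta)$ from \eqref{eq:objective-function}, taking conditional expectations and composing with the first identity yields $\mathbb{E}[\Delta\mathbf{p}\mid\mathbf{p}]=\mathfrak{J}(\mathbf{p})\,\mathbb{E}[\Delta\boldsymbol{\theta}\mid\mathbf{p}]+O(\eta^2)=\eta\,\mathfrak{J}(\mathbf{p})^2\mathbf{A}+O(\eta^2)$, which is the asserted first-order relation.

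The main obstacle is the mean-field replacement in the first step: $\widehat{A}_g$ depends on the whole group $(y_1,\dots,y_G)$ through $\bar r$ and $\sigma_r$, so $\mathbb{E}[\widehat{A}_g\mid I_g=i]$ is not exactly $A_i$ at finite $G$. I would resolve this either by defining $\mathbf{A}$ directly via the population-normalized advantage $\tilde r=(r-q(p))/\sigma(p)$, which makes the identity exact, or by expanding $\bar r,\sigma_r$ about their population values and checking that the correction contributes only at order $O(1/G)$ to the drift, with the $O(G^{-1/2})$ fluctuations feeding the diffusion term rather than the mean-field flow. Everything else is bookkeeping with the softmax Jacobian.
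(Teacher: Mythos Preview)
Your proposal is correct and follows essentially the same route as the paper: compute the score $\nabla_{\boldsymbol\theta}\log\pi_\theta(I)=e_I-\mathbf p$, take the conditional expectation $\sum_i p_i A_i(e_i-\mathbf p)=\mathfrak J(\mathbf p)\mathbf A$, and then push forward to $\mathbf p$-space via the softmax Jacobian (the paper defers this last step to Lemma~\ref{lem:softmax-tangent}, which is exactly your first-order Taylor expansion). Your discussion of the finite-$G$ mean-field replacement is a useful elaboration; the paper resolves it by your first option, defining $A_i:=\mathbb E[\tilde r\mid I=i]$ with the population $z$-score $\tilde r=(r-q(p))/\sigma(p)$ from the outset, so the identity is exact by construction.
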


\emph{Proof Sketch}. Let \(e_i\) denote the \(i\)-th standard basis vector. Define \(\pi_\theta(i)=p_i=\exp(\theta_i)/\sum_k \exp(\theta_k)\) as the softmax policy. For a realized arm \(I\), the gradient is:
\[
\nabla_{\boldsymbol\theta}\log \pi_\theta(I)=e_I-\mathbf p.
\]
The REINFORCE estimator (\cite{williams1992simple}) for \(\nabla_{\boldsymbol\theta}\E[r]\) is $g = \tilde r\,(e_I-\mathbf p)$. Taking the conditional expectation given \(\mathbf p\) yields:
\[
\E[g\mid \mathbf p]
=\sum_{i} p_i\,\E[\tilde r\mid I=i]\,(e_i-\mathbf p)
=\bigl(\mathrm{Diag}(\mathbf p)-\mathbf p\mathbf p^\top\bigr)\,\mathbf A
=\mathfrak{J}(\mathbf p)\,\mathbf A.
\]
This confirms the stated form and the first identity in Proposition~\ref{prop:expectations}. For the second part, refer to Lemma~\ref{lem:softmax-tangent}.

\begin{remark}[The Importance of Coupling Terms]
Retaining the full Jacobian, including the rank-one term $\mathbf{p}\mathbf{p}^{\top}$, is essential because it couples all arms through collision terms. Specifically, the total bad-mass drift depends on the collisions within both the good and bad blocks via $\|y\|_2^2$ and $\|z\|_2^2$ (see \eqref{aux3}). Omitting the $\mathbf p\mathbf p^\top$ term would spuriously decouple the blocks and result in incomplete mean-field dynamics.
\end{remark}

\begin{corollary}[First-order softmax pushforward]
\label{cor:pushforward}
For a small logit increment $\Delta\boldsymbol\theta$:
\[
\Delta \mathbf p \;=\; \mathfrak J(\mathbf p)\,\Delta\boldsymbol\theta
\;=\; \operatorname{Diag}(\mathbf p)\,\Delta\boldsymbol\theta \;-\; \mathbf p\,\mu,
\qquad
\mu:=\langle \mathbf p,\Delta\boldsymbol\theta\rangle=\sum_k p_k\,\Delta\theta_k,
\]
which implies $\Delta p_i = p_i\big(\Delta\theta_i-\mu\big)$.
\end{corollary}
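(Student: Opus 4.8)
The plan is to establish Corollary~\ref{cor:pushforward} by a direct computation of the Jacobian of the softmax map and then reading off the matrix and componentwise forms. Write $Z(\boldsymbol\theta):=\sum_{k}e^{\theta_k}$ so that $p_i=e^{\theta_i}/Z$. First I would differentiate entrywise: for $i=j$ one gets $\partial p_i/\partial\theta_i=(e^{\theta_i}Z-e^{2\theta_i})/Z^2=p_i-p_i^2$, and for $i\neq j$ one gets $\partial p_i/\partial\theta_j=-e^{\theta_i}e^{\theta_j}/Z^2=-p_i p_j$. Combining the two cases, $\partial p_i/\partial\theta_j=p_i\delta_{ij}-p_i p_j$, i.e.\ the Jacobian matrix of $\boldsymbol\theta\mapsto\mathrm{softmax}(\boldsymbol\theta)$ at $\boldsymbol\theta$ is exactly $\operatorname{Diag}(\mathbf p)-\mathbf p\mathbf p^\top=\mathfrak J(\mathbf p)$. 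Since softmax is $C^\infty$, the first-order Taylor expansion then gives $\Delta\mathbf p=\mathfrak J(\mathbf p)\,\Delta\boldsymbol\theta+O(\|\Delta\boldsymbol\theta\|^2)$, which is the asserted first-order pushforward.

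Next I would convert $\mathfrak J(\mathbf p)\Delta\boldsymbol\theta$ into the two equivalent forms in the statement. Expanding, $\mathfrak J(\mathbf p)\Delta\boldsymbol\theta=\operatorname{Diag}(\mathbf p)\Delta\boldsymbol\theta-\mathbf p(\mathbf p^\top\Delta\boldsymbol\theta)=\operatorname{Diag}(\mathbf p)\Delta\boldsymbol\theta-\mu\,\mathbf p$ with $\mu=\langle\mathbf p,\Delta\boldsymbol\theta\rangle$, which is the matrix form; reading off the $i$-th coordinate gives $\Delta p_i=p_i\Delta\theta_i-p_i\mu=p_i(\Delta\theta_i-\mu)$, the componentwise form. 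As an independent cross-check I would also derive the same identity from the log-parameterization: $\log p_i=\theta_i-\log Z$, so $\Delta\log p_i=\Delta\theta_i-\Delta\log Z$ with $\Delta\log Z=\Delta Z/Z+O(\|\Delta\boldsymbol\theta\|^2)=\sum_k p_k\Delta\theta_k+O(\|\Delta\boldsymbol\theta\|^2)=\mu+O(\|\Delta\boldsymbol\theta\|^2)$; and since $\Delta\log p_i=\Delta p_i/p_i$ to first order, this recovers $\Delta p_i=p_i(\Delta\theta_i-\mu)$. Two quick sanity checks confirm internal consistency: $\mathbf 1^\top\mathfrak J(\mathbf p)\Delta\boldsymbol\theta=\langle\mathbf p,\Delta\boldsymbol\theta\rangle-(\mathbf 1^\top\mathbf p)\langle\mathbf p,\Delta\boldsymbol\theta\rangle=0$, so $\Delta\mathbf p$ lies in the tangent space $T_{\mathbf p}\Delta^{K+M-1}$ (total mass is conserved to first order); and $\mathfrak J(\mathbf p)\mathbf 1=\mathbf p-\mathbf p(\mathbf p^\top\mathbf 1)=0$, matching the shift-invariance $\mathrm{softmax}(\boldsymbol\theta+c\mathbf 1)=\mathrm{softmax}(\boldsymbol\theta)$.

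There is no genuine obstacle here; this is a routine linearization, and it is precisely the ``$\Delta\mathbf p\approx\mathfrak J(\mathbf p)\Delta\boldsymbol\theta$'' step already invoked inside Proposition~\ref{prop:expectations}, now stated and justified in isolation. The only points needing a little care are bookkeeping ones: making explicit that ``first order'' means discarding the $O(\|\Delta\boldsymbol\theta\|^2)$ remainder (equivalently, using the Fréchet derivative of a smooth map rather than an ad hoc heuristic), and, if one subsequently composes this pushforward with an update $\Delta\boldsymbol\theta=\eta\,\mathfrak J(\mathbf p)\mathbf A$ as in Proposition~\ref{prop:expectations}, tracking that the substitution yields the $\eta\,\mathfrak J(\mathbf p)^2\mathbf A$ term up to an $O(\eta^2)$ error. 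Neither point affects the leading-order identity claimed in the corollary.
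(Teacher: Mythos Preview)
Your proof is correct and follows essentially the same approach as the paper: the corollary is stated without an explicit proof, relying instead on Lemma~\ref{lem:softmax-tangent}, which records the identical Jacobian computation $\partial p_i/\partial\theta_j=p_i(\delta_{ij}-p_j)$ and the resulting identity $\mathrm{d}\mathbf p=\mathfrak J(\mathbf p)\,\mathrm{d}\boldsymbol\theta$. Your write-up simply unpacks that lemma in place and adds the log-parameterization cross-check and tangent-space sanity checks, which are helpful but not required.
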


Applying Proposition~\ref{prop:expectations} and the relation for $\bar A$, we obtain the following expectations (where conditioning on $\mathbf p$ is suppressed for brevity):
\begin{align}
\label{eq:theta-good-j}
\mathbb E[\Delta\theta_j]
&=-\,\eta\,p(1-p)\,\Delta r(p)\;y_j,
\qquad j=1,\ldots,K,\\
\label{eq:theta-bad}
\mathbb E[\Delta\theta_{b_m}]
&=\eta\,p(1-p)\,\Delta r(p)\;z_m,
\qquad m=1,\ldots,M.
\end{align}
The expected step therefore follows the block-form direction:
\begin{equation}\label{aux1}
\E[\Delta\boldsymbol{\theta}]\;=\;\eta\,p(1-p)\,\Delta r(p)\;
\begin{bmatrix}
-\,y\\[2pt] z
\end{bmatrix}.
\end{equation}

Since $\mathfrak J(\mathbf p)\mathbf 1=0$, the update is centered:
\begin{equation}
\label{eq:theta-sum-zero}
\sum_{i}\mathbb E[\Delta\theta_i]=\eta\,\mathbf 1^\top \mathfrak J(\mathbf p)\mathbf A=0.
\end{equation}
Moreover, within each block the logit increment is collinear with the current within-block distribution:
\[
\E[\Delta\theta_j]-y_j\sum_{k=1}^K \E[\Delta\theta_k]=0,
\qquad
\E[\Delta\theta_{b_m}]-z_m\sum_{\ell=1}^M \E[\Delta\theta_{b_\ell}]=0.
\]
In other words, there is no arm-specific drift \emph{within} a block in logit space; arms move in lockstep
proportional to $y$ (good block) and $z$ (bad block).

Following \eqref{eq:cond-means-tilder}, the expected advantages relative to the noise level $J=\mathrm{TPR}-\mathrm{FPR}$ are expressed as:
\begin{equation}\label{eq:ag-ab-delta_r}
a_{\mathrm g}(p)=\frac{J\,p}{\sigma(p)},
\qquad
a_{\mathrm b}(p)=-\,\frac{J\,(1-p)}{\sigma(p)}, \qquad \Delta r(p)=-\dfrac{J}{\sigma(p)}.
\end{equation}

\paragraph{Total Bad-Mass Drift}
By Corollary~\ref{cor:pushforward}, the softmax-centering scalar $\mu$ becomes:
\begin{equation}
\label{aux2}
\mu
=\eta\,p(1-p)\,\Delta r(p)\,
\Big(p\,\|z\|_2^2-(1-p)\,\|y\|_2^2\Big).
\end{equation}
Summing the bad components provides the total bad-mass drift:
\begin{equation}\label{aux3}
\mathbb E[\Delta p]
=
-\eta\,\frac{J}{\sigma(p)}\,[p(1-p)]^2\,\Big(\|y\|_2^2+\|z\|_2^2\Big).
\end{equation}
In the case where $M=1$, then $z=(1)$ and $\|z\|_2^2=1$, which recovers the $(K{+}1)$ formula.

\paragraph{Within-Bad Dynamics in Normalized Coordinates}
Using the identity $\Delta z_m=\frac{1}{p}(\Delta p_{b_m}-z_m\,\Delta p)$ and substituting the first-order drift, we find:
\begin{equation}
\label{eq:Edzm}
\mathbb E[\Delta z_m]
=
-\eta\,\frac{J}{\sigma(p)}\,p(1-p)\;z_m\Big(z_m-\|z\|_2^2\Big),
\qquad m=1,\ldots,M.
\end{equation}
In vector form, this is expressed as $\mathbb E[\Delta z] = \eta\,p(1-p)\,\Delta r(p)\,\Big(z\odot z-\|z\|_2^2\,z\Big)$. Consequently, for an informative grader ($J>0$), the bad-block dynamics exhibit the opposite sign of the good-block collision field, tending to spread bad mass toward a uniform distribution on $\Delta^{M-1}$.

\subsection{Dynamics of the Good Arms}

Regarding the good arms, the combination of Corollary~\ref{cor:pushforward} with \eqref{eq:theta-good-j} and \eqref{aux2} provides the probability increments for the good block:
\begin{equation}
\label{eq:good-prob-vector}
\mathbb E[\Delta \mathbf p_{\mathrm{good}}]
\;=\;
-\,\eta\,p(1-p)^2\,\Delta r(p)\;
\Big( y\odot y \;+\; \big[p\,\|z\|_2^2-(1-p)\|y\|_2^2\big]\,y \Big).
\end{equation}
In componentwise form, substituting $p_j=(1-p)y_j$, we obtain:
\begin{align}
\label{eq:good-prob-j}
\mathbb E[\Delta p_j]
&=(1-p)\,y_j\Big(\mathbb E[\Delta\theta_j]-\mu\Big) \notag\\
&=-\,\eta\,p(1-p)^2\,\Delta r(p)\;y_j\Big(y_j+p\,\|z\|_2^2-(1-p)\,\|y\|_2^2\Big),
\qquad j=1,\ldots,K.
\end{align}

Summing \eqref{eq:good-prob-j} over all $j$ and applying the constraint $\sum_j y_j=1$ yields:
\[
\sum_{j=1}^K \mathbb E[\Delta p_j]
\;=\;-\,\eta\,[p(1-p)]^2\,\Delta r(p)\,\big(\|y\|_2^2+\|z\|_2^2\big)
\;=\;-\,\mathbb E[\Delta p].
\]
Consequently, the drift for the total good mass $\alpha:=1-p$ is given by:
\begin{equation}
\label{eq:alpha-drift}
\mathbb E[\Delta \alpha]\;=\;-\mathbb E[\Delta p]
\;=\;-\,\eta\,[p(1-p)]^2\,\Delta r(p)\,\big(\|y\|_2^2+\|z\|_2^2\big).
\end{equation}

By utilizing the relationship $y_j=p_j/\alpha$, we can apply the exact identity:
\begin{equation}
\label{eq:Dy-identity}
\Delta y_j=\frac{1}{\alpha}\Big(\Delta p_j-y_j\,\Delta \alpha\Big).
\end{equation}
Substituting \eqref{eq:good-prob-j} through \eqref{eq:alpha-drift} into \eqref{eq:Dy-identity} and simplifying leads to the within-good drift:
\begin{equation}
\label{eq:Edyj}
\mathbb E[\Delta y_j]
\;=\;-\,\eta\,p\,\alpha\,\Delta r(p)\;y_j\Big(y_j-\|y\|_2^2\Big),
\qquad j=1,\ldots,K.
\end{equation}
In vector form, this is expressed as:
\begin{equation}
\label{eq:Edy-vector}
\mathbb E[\Delta y]
\;=\;-\,\eta\,p(1-p)\,\Delta r(p)\,\Big(y\odot y-\|y\|_2^2\,y\Big).
\end{equation}
Notably, $\sum_j \mathbb E[\Delta y_j]=0$, which confirms that the simplex remains invariant as expected. The fixed points of \eqref{eq:Edy-vector} are located at the barycenter and the vertices. When $\Delta r(p)<0$, a condition signifying an informative grader that favors good arms over bad arms, the uniform point becomes unstable and the vertices act as attractors.

Substituting $\Delta r(p)=-J/\sigma(p)$ from \eqref{eq:ag-ab-delta_r} into \eqref{eq:Edyj} results in:
\[
\mathbb E[\Delta y_j]
\;=\;\eta\,\frac{J}{\sigma(p)}\,p(1-p)\;y_j\Big(y_j-\|y\|_2^2\Big).
\]
For $J>0$, arms where $y_j>\|y\|_2^2$ will grow while those where $y_j<\|y\|_2^2$ shrink, representing a deterministic sharpening within the good block.

\subsection{From Expectation-Based Updates to ODEs: The Small-Step Bridge}\label{appendix:small_step}

Consider an expectation-level logit update:
\[
\boldsymbol{\theta}^{(t+1)} \;=\; \boldsymbol{\theta}^{(t)} \;+\; \eta\, g\!\big(\p^{(t)}\big),
\qquad
\Delta\boldsymbol{\theta} \;=\; \eta\,g(\p),
\]
where $g$ represents the per-step expected gradient. Through the softmax mapping $\p = \mathrm{softmax}(\boldsymbol{\theta})$, a small logit update is defined as $\Delta\p \;=\;\J(\p)\,\Delta\theta$, with $\J(\p) \;=\; \mathrm{Diag}(\p)-\p\p^\top$. Substituting $\Delta\theta = \eta\,g(\p)$ yields the expected increment for the policy:
\[
\Delta\p
\;=\;
\eta\,\big(\mathrm{Diag}(\p)-\p\p^\top\big)\,g(\p)
\;+\; O(\eta^2).
\]

\paragraph{Option 1: Unit Time per Iteration}
We may treat the iteration index itself as continuous time. Let $t\in\mathbb{R}$ denote a continuous extension of the discrete counter, where a single algorithmic update corresponds to a unit time step $\Delta t = 1$. By defining $\p(t)\approx \p^{(t)}$, the relationship is:
\[
\frac{\p^{(t+1)}-\p^{(t)}}{\Delta t}
\;\approx\;
\frac{d\p}{dt}(t)
\;=\;
\dot\p(t).
\]
Aligning this with the discrete update $\p^{(t+1)}-\p^{(t)} = \Delta\p$ results in the following ordinary differential equation (ODE):
\begin{equation}
\label{eq:softmax-flow-eta}
\dot{\p}(t)
\;=\;
\eta\,\big(\mathrm{Diag}(\p(t))-\p(t)\p(t)^\top\big)\,g\big(\p(t)\big).
\end{equation}
The expectation-level GRPO update thus serves as a forward-Euler discretization of the continuous-time dynamics in \eqref{eq:softmax-flow-eta} with a unit step size.

This ODE provides an accurate proxy within the small-learning-rate regime. The local truncation error of the Euler step satisfies $\|\p^{+}-\p - \dot{\p}\| = O(\eta^2)$, and given that $\max_a \eta\,|g_a(\p)| \ll 1$, no coordinate of $\p$ shifts excessively in a single iteration. Geometrically, \eqref{eq:softmax-flow-eta} remains a natural-gradient (Shahshahani) flow:
\[
\dot{\p}
\;=\;
\eta\,\mathbf{G}(\p)\,\nabla_{\!\theta}\! \mathcal{L},
\qquad
\mathbf{G}(\p)
\;=\;
\mathrm{Diag}(\p)-\p\p^\top,
\]
where $\mathbf{G}(\p)$ represents the Fisher metric tensor on the simplex. The factor $\eta$ scales the velocity along this geometric flow. By approximating discrete differences with derivatives, \eqref{eq:Edy-vector} and \eqref{eq:softmax-flow-eta} transform into the coupled ODEs outlined in \eqref{eq:coupled_dynamics}:
 \begin{subequations}
\begin{align*}
    \dot y &= \kappa(p) \left( y \odot y - \|y\|_2^2 y \right), \\
    \dot z &= -\kappa(p) \left( z \odot z - \|z\|_2^2 z \right), \\
    \dot p &= -\eta \frac{J}{\sigma(p)} [p(1-p)]^2 \left( \|y\|_2^2 + \|z\|_2^2 \right),
\end{align*}
\end{subequations}
where we define the proportionality factor $\kappa(p) := \eta \frac{J}{\sigma(p)} p(1-p)$.

\paragraph{Option 2: Alternative Time Rescaling}
An alternative approach involves absorbing the learning rate directly into the time variable. By defining a rescaled time $\mathsf{t} = \eta t$, each discrete update advances $\mathsf{t}$ by $\Delta \mathsf{t} = \eta$. Using the chain rule for $\p(\mathsf{t}) \coloneqq \p(t)$, we find:
\[
\frac{d\p}{d\mathsf{t}}
\;=\;
\frac{1}{\eta}\,\frac{d\p}{dt}
\;=\;
\big(\mathrm{Diag}(\p)-\p\p^\top\big)\,g(\p),
\]
which simplifies \eqref{eq:softmax-flow-eta} to:
\begin{equation}
\label{eq:softmax-flow}
\frac{d\p}{d\mathsf{t}}
\;=\;
\big(\mathrm{Diag}(\p(\mathsf{t}))-\p(\mathsf{t})\p(\mathsf{t})^\top\big)\,g\big(\p(\mathsf{t})\big).
\end{equation}
This represents the standard gradient-flow limit. 

\begin{remark}
While the trajectories in policy space remain identical across both time parametrizations, this work utilizes the unit time per iteration notation to maintain the visibility of mean-field correction terms as they relate to $\eta$.
\end{remark}

\newpage

\section{Maximal Learnability}\label{sec:max-learnability}
\begin{figure}
    \centering
    \includegraphics[width=1\linewidth]{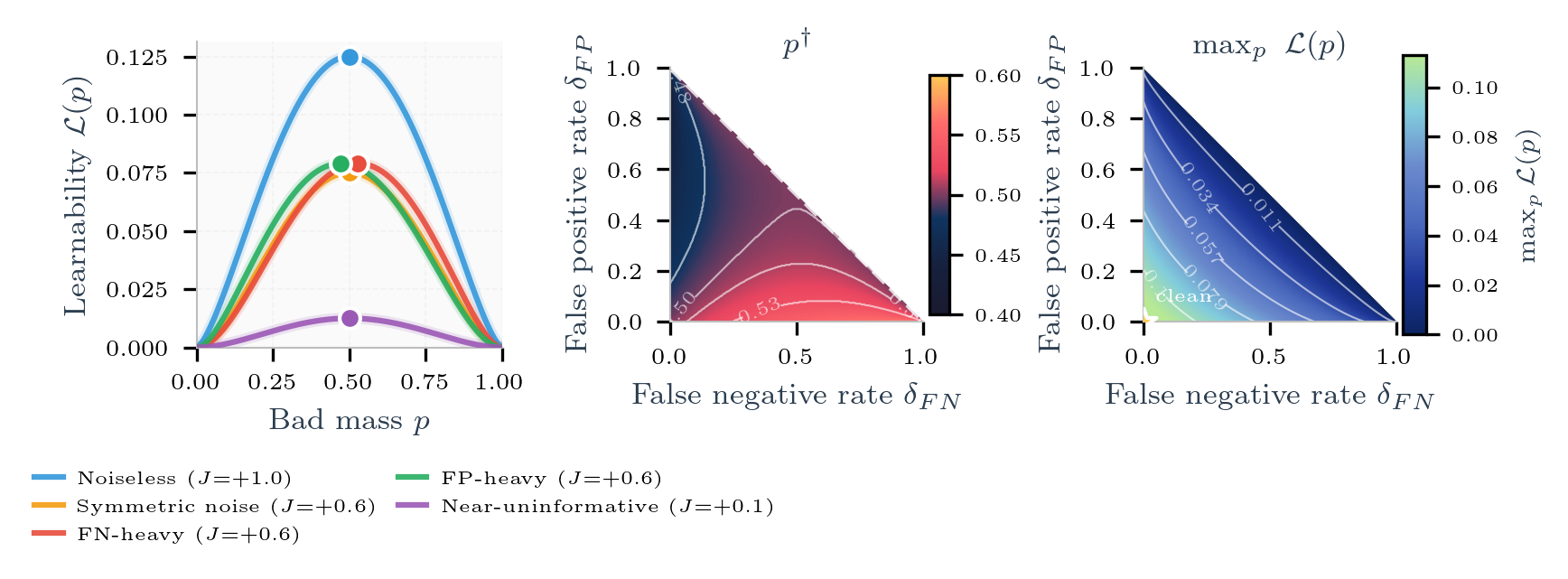}
    \caption{\textbf{Learnability-maximizing bad mass $p^\dagger$ under group $z$-scored rewards.}
We plot the \emph{instantaneous learnability speed}
$\mathcal{L}(p)=\frac{J}{\sigma(p)}[p(1-p)]^{2}$, which controls the magnitude of the one-step bad-mass drift
$|\Delta p|\propto \mathcal{L}(p)$ (up to a positive constant), where
$\sigma(p)=\sqrt{q(p)(1-q(p))}$, $q(p)=(1-\delta_{\mathrm{FN}})-Jp$, and $J=1-\delta_{\mathrm{FN}}-\delta_{\mathrm{FP}}$.
(A) Curves of $\mathcal{L}(p)$ versus $p$ for representative noise settings; markers indicate
$p^\dagger=\arg\max_{p\in[0,1]}\mathcal{L}(p)$.
(B) Heatmap of $p^\dagger(\delta_{\mathrm{FN}},\delta_{\mathrm{FP}})$ in the informative region $J>0$ (masked for $J\le 0$);
the dashed diagonal marks $J=0$ and contours indicate level sets of $p^\dagger$.
(C) Heatmap of the maximum instantaneous learnability $\max_{p}\mathcal{L}(p)$, showing how noise reduces the peak achievable drift.
Symmetric noise ($\delta_{\mathrm{FN}}=\delta_{\mathrm{FP}}$) preserves the midpoint optimum $p^\dagger=\tfrac12$, while asymmetric
noise shifts the maximizer away from $\tfrac12$ by reweighting the signal through $\sigma(p)^{-1}$.}
\label{fig:learnability_pdagger_triptych}
\end{figure}
We quantify a prompt's \emph{learnability} by the instantaneous rate at which GRPO reduces its latent bad-mode mass
\(p=\Pr(\text{bad}\mid x)\).
Under the block-symmetric mean-field approximation derived in
Appendix~\ref{appendix:Group-Normalized RL with Noisy Reward} (see \eqref{aux3}, the (unregularized) one-step drift of \(p\) takes the form)
\begin{equation}
\label{eq:learnability-drift}
|\Delta p|
\;\propto\;
\Delta(p)\,[p(1-p)]^{2},
\qquad
\Delta(p)
:=\E[\tilde r\mid \text{good}] - \E[\tilde r\mid \text{bad}],
\end{equation}
up to an overall positive step-size constant and smooth factors that vary slowly with \(p\).
Here \(\tilde r\) denotes the group-normalized reward.

\paragraph{Normalized separation under noisy rewards.}
With \(z\)-score normalization \eqref{eq:zscore}, the conditional means \eqref{eq:cond-means-tilder} imply a simple closed form for
the separation in normalized units:
\begin{equation}
\label{eq:Delta-gap}
\Delta(p)
\;=\;
\frac{J}{\sigma(p)},
\qquad
J=1-\delta_{\mathrm{FN}}-\delta_{\mathrm{FP}},
\qquad
\sigma(p)=\sqrt{q(p)\bigl(1-q(p)\bigr)},
\qquad
q(p)=(1-\delta_{\mathrm{FN}})-Jp.
\end{equation}
Consequently, the \emph{learnability speed} (i.e., the \(p\)-dependent component of the drift) is
\begin{equation}
\label{eq:learnability-speed}
\mathcal{L}(p;\delta_{\mathrm{FN}},\delta_{\mathrm{FP}})
\;:=\;
\frac{J}{\sigma(p)}\,[p(1-p)]^{2},
\qquad\text{so that}\qquad
|\Delta p|\;\propto\;\mathcal{L}(p;\delta_{\mathrm{FN}},\delta_{\mathrm{FP}}).
\end{equation}
Throughout this section we focus on the \emph{informative} regime \(J>0\) (the grader is better than random). When \(J=0\) the
signal vanishes, and when \(J<0\) it is anti-informative and must be corrected
(Remark~\ref{rem:J-negative}).

\paragraph{Noiseless case (\(J=1\)).}
When \(\delta_{\mathrm{FN}}=\delta_{\mathrm{FP}}=0\), we have \(q(p)=1-p\) and \(\sigma(p)=\sqrt{p(1-p)}\), hence
\begin{equation}
\label{eq:noiseless-L}
\mathcal{L}(p;0,0)
\;=\;
\frac{1}{\sqrt{p(1-p)}}\,[p(1-p)]^{2}
\;=\;
[p(1-p)]^{3/2}.
\end{equation}
This is maximized at \(p^\star=\tfrac12\).
Thus, the largest single-step reduction in bad mass occurs on ``medium-difficulty'' prompts where the model is roughly
\(50\)--\(50\) between good and bad solutions.
At the extremes \(p\to 0\) (almost always good) or \(p\to 1\) (almost always bad), the factor \(p(1-p)\) vanishes and learning
slows down sharply: additional GRPO steps make only marginal progress on highly saturated prompts.

\paragraph{Noisy grading: what changes.}
With noise, the learnability speed becomes
\begin{equation}
\label{eq:noisy-L}
\mathcal{L}(p;\delta_{\mathrm{FN}},\delta_{\mathrm{FP}})
\;=\;
\frac{J}{\sqrt{q(p)\bigl(1-q(p)\bigr)}}\,[p(1-p)]^{2},
\qquad
q(p)=(1-\delta_{\mathrm{FN}})-Jp.
\end{equation}
This expression highlights two distinct effects:
\begin{enumerate}
\item \textbf{Global shrinkage via \(J\).}
Increasing noise decreases \(J=1-\delta_{\mathrm{FN}}-\delta_{\mathrm{FP}}\), uniformly reducing learnability.
In the limit \(\delta_{\mathrm{FN}}+\delta_{\mathrm{FP}}\to 1\), the grader becomes uninformative and
\(\mathcal{L}\to 0\) for all \(p\).

\item \textbf{Reweighting via \(\sigma(p)^{-1}\).}
Group \(z\)-scoring divides by the reward standard deviation \(\sigma(p)\), so the effective learning signal is amplified when the
reward distribution is highly concentrated (small \(\sigma\)) and attenuated when it is maximally noisy (large \(\sigma\)).
Equivalently, the normalized separation is \(\Delta(p)=J/\sigma(p)\).
\end{enumerate}

\smallskip
\noindent\emph{Symmetric noise.}
If \(\delta_{\mathrm{FN}}=\delta_{\mathrm{FP}}=\delta\), then \(J=1-2\delta\) and
\(q(1-p)=1-q(p)\), implying \(\sigma(1-p)=\sigma(p)\).
Since \(p(1-p)\) is also symmetric, \(\mathcal{L}(1-p)=\mathcal{L}(p)\), and the maximizer remains at the symmetry point
\(p^\star=\tfrac12\).
Moreover, \(q(\tfrac12)=\tfrac12\) and \(\sigma(\tfrac12)=\tfrac12\), yielding the explicit peak value
\begin{equation}
\label{eq:symmetric-Lmax}
\mathcal{L}_{\max}(\delta,\delta)
\;=\;
\mathcal{L}\!\left(\tfrac12;\delta,\delta\right)
\;=\;
\frac{J}{1/2}\Big(\tfrac14\Big)^2
\;=\;
\frac{1-2\delta}{8}.
\end{equation}
Hence symmetric noise does \emph{not} shift the most-learnable difficulty, but it reduces the maximal attainable learning speed,
collapsing to zero as \(\delta\to\tfrac12\).

\smallskip
\noindent\emph{Asymmetric noise.}
When \(\delta_{\mathrm{FN}}\neq \delta_{\mathrm{FP}}\), the symmetry \(p\leftrightarrow 1-p\) is broken and the maximizer shifts to an
interior point \(p^\dagger\in(0,1)\) that balances the mixture factor \([p(1-p)]^2\) against the normalization term \(\sigma(p)\).
Differentiating \(\log\mathcal{L}\) yields the stationarity condition
\begin{equation}
\label{eq:pdagger-FOC}
p^\dagger\ \text{ solves }\ 
2\,\frac{1-2p}{p(1-p)}
\;+\;
\frac{J}{2}\,\frac{1-2q(p)}{q(p)\bigl(1-q(p)\bigr)}
\;=\;0,
\qquad
q(p)=(1-\delta_{\mathrm{FN}})-Jp,
\end{equation}
with boundary clipping if no interior maximizer exists.
Intuitively, the term \([p(1-p)]^2\) favors intermediate difficulty, while the factor \(1/\sigma(p)\) reweights the signal in a way
that can skew the optimum when false negatives and false positives are imbalanced. Equation~\eqref{eq:pdagger-FOC} admits an explicit solution but not a simple elementary one in general:
after substituting \(q(p)=(1-\delta_{\mathrm{FN}})-Jp\) and clearing denominators, the condition reduces to a cubic polynomial in
\(p\). Thus \(p^\dagger\) can be written in closed form via Cardano's formula, although the expression is cumbersome; in practice,
we select the real root in \(p\in(0,1)\) (or clip to \(\{0,1\}\) if the maximizer lies on the boundary).
A notable simplification occurs under symmetric noise \(\delta_{\mathrm{FN}}=\delta_{\mathrm{FP}}\), where the invariance
\(\mathcal{L}(1-p)=\mathcal{L}(p)\) forces \(p^\dagger=\tfrac12\). Fig.~\ref{fig:learnability_pdagger_triptych} shows $p\dagger$ for full range of noises in the learning phase.



\begin{remark}[Uninformative or adversarial graders]\label{rem:J-negative}
If \(J=0\) (i.e., \(\delta_{\mathrm{FN}}+\delta_{\mathrm{FP}}=1\)), then \(\Delta(p)=J/\sigma(p)=0\) and the mean update provides no
systematic signal to reduce bad mass. If \(J<0\), the grader is worse than random and \(\Delta(p)\) flips sign; equivalently,
swapping labels \(r\mapsto 1-r\) restores an effective \(J'>0\) and recovers the analysis above.
\end{remark}

\newpage 

\section{Lyapunov analysis and the role of $J$}

We consider a bandit configuration comprising $K$ good arms and $M$ bad arms, , as discussed in Appendix.~\ref{appendix:Group-Normalized RL with Noisy Reward}, defined by the probability vector:
\[
\p=(p_1,\ldots,p_K,\ p_{b_1},\ldots,p_{b_m})\in\Delta^{K+M-1},
\qquad
\sum_{j=1}^K p_j+\sum_{m=1}^M p_{b_m}=1.
\]
We define the total mass of the good and bad blocks respectively as:
\[
s(\p):=\sum_{j=1}^K p_j\in[0,1],\qquad
P_{\mathrm{bad}}(\p):=\sum_{m=1}^M p_{b_m}=1-s(\p),
\qquad
\mathfrak J(\p)=\mathrm{Diag}(\p)-\p\p^\top.
\]
Let $\mathbf 1\in\mathbb R^{K+M}$ denote the all-ones vector and let
\[
\mathbf 1_G:=(\underbrace{1,\ldots,1}_{K},\underbrace{0,\ldots,0}_{M})\in\mathbb R^{K+M}
\]
serve as the indicator vector for the good block.

For $s(\p)\in(0,1)$, it is convenient to introduce within-block normalized coordinates:
\[
y_j:=\frac{p_j}{s(\p)}\ (j\le K),\qquad
z_m:=\frac{p_{b_m}}{P_{\mathrm{bad}}(\p)}\ (m\le M).
\]
In this representation, $y\in\Delta^{K-1}$ and $z\in\Delta^{M-1}$, such that $\p=(s\,y,\ (1-s)\,z)$.

\paragraph{Block symmetry and GRPO parametrization.}
We assume a structure of block symmetry and state dependence defined by:
\[
A_j(\p)=a_{\mathrm g}\big(s(\p)\big)\quad(j\le K),\qquad
A_{b_m}(\p)=a_{\mathrm b}\big(s(\p)\big)\quad(m\le M).
\]
The resulting gap between good and bad arms is denoted as:
\[
\Delta(s):=a_{\mathrm g}(s)-a_{\mathrm b}(s).
\]
Under the GRPO specialization examined in this work, we set:
\[
a_{\mathrm g}(s)=\frac{J\,s}{\sigma(s)},\qquad
a_{\mathrm b}(s)=-\frac{J\,(1-s)}{\sigma(s)},\qquad
\sigma(s)>0.
\]
This formulation implies that:
\begin{equation}
\label{eq:gap}
\Delta(s)=\frac{J}{\sigma(s)}.
\end{equation}
The advantage vector can then be expressed as:
\[
\A(\p)=a_{\mathrm b}(s(\p))\,\mathbf 1+\Delta\big(s(\p)\big)\,\mathbf 1_G.
\]

\paragraph{GRPO mean-field flow.}
Our analysis focuses on the GRPO mean-field ordinary differential equation (ODE):
\begin{equation}
\label{eq:block-flow}
\dot \p \;=\; \eta\,\mathfrak J(\p)^2\,\A(\p),\qquad \eta>0.
\end{equation}


\begin{theorem}[Dichotomy by the sign of $J$; exchange of stability at $J=0$]
\label{thm:sign-J}
Assume the block-symmetric structure defined above, with $\Delta$ given by \eqref{eq:gap}.
Define a scalar potential $F:\Delta^{K+M-1}\to\mathbb R$ such that:
\begin{equation}
\label{eq:F-def}
F(\p) \;:=\; F\big(s(\p)\big),\qquad
F'(s)\;=\;\Delta(s).
\end{equation}
Given that $s(\p)\in[0,1]$ and $\Delta$ is integrable on $[0,1]$, $F$ remains bounded on $\Delta^{K+M-1}$.
For any constant $C$ satisfying $C \ge \sup_{\p} F(\p)$, we define the standard decreasing Lyapunov function:
\begin{equation}
\label{eq:V-def}
V(\p)\;:=\;C-F(\p)\;\ge\;0.
\end{equation}
Along any trajectory $\p(t)$ of \eqref{eq:block-flow} originating at $s(0)\in(0,1)$, the following properties hold:
\begin{enumerate}[label=(\roman*)]
\item \textbf{Lyapunov identity.} For all $t \ge 0$:
\begin{equation}
\label{eq:Vdot}
\frac{d}{dt}V\big(\p(t)\big)
\;=\;
-\eta\,\big\|\mathfrak J\big(\p(t)\big)\,\A\big(\p(t)\big)\big\|_2^2
\;\le\;0,
\end{equation}
where equality holds if and only if $\mathfrak J(\p(t))\,\A(\p(t))=0$. This is equivalent to:
\begin{equation}
\label{eq:Fdot}
\frac{d}{dt}F\big(\p(t)\big)
\;=\;
\eta\,\big\|\mathfrak J\big(\p(t)\big)\,\A\big(\p(t)\big)\big\|_2^2
\;\ge\;0.
\end{equation}

\item \textbf{Explicit field and the sign of $\dot s$.}
Let $s=s(\p)$ and $P_{\mathrm{bad}}:=1-s$. The components of the field are given by:
\[
\big[\mathfrak J(\p)\A(\p)\big]_j
= p_j\,P_{\mathrm{bad}}\;\Delta(s)\quad(j\le K),
\qquad
\big[\mathfrak J(\p)\A(\p)\big]_{b_m}
= -\,p_{b_m}\,s\;\Delta(s)\quad(m\le M).
\]
Consequently, for $J\neq 0$:
\begin{equation}
\label{eq:dots}
\dot s(t)
\;=\;
\frac{1}{\Delta\big(s(t)\big)}\,\frac{d}{dt}F\big(\p(t)\big)
\;=\;
\eta\,\Delta\big(s(t)\big)\Big(
P_{\mathrm{bad}}(t)^2\sum_{j=1}^K p_j(t)^2
\;+\;
s(t)^2\sum_{m=1}^M p_{b_m}(t)^2
\Big).
\end{equation}
In normalized coordinates, this simplifies to:
\begin{equation}
\label{eq:dots-yz}
\dot s
\;=\;
\eta\,\Delta(s)\,[s(1-s)]^2\Big(\|y\|_2^2+\|z\|_2^2\Big).
\end{equation}
Thus, for any $J \neq 0$, the sign of $\dot s(t)$ is identical to the sign of $J$.

\item \textbf{Global behavior and exchange of stability.}
\begin{itemize}
\item If $J>0$, $s(t)$ is strictly increasing such that $s(t)\uparrow 1$. In this case, every $\omega$--limit point lies in the good face $\Delta_G:=\{\,\p:\;p_{b_1}=\cdots=p_{b_M}=0\,\}$.
\item If $J<0$, $s(t)$ is strictly decreasing such that $s(t)\downarrow 0$. Here, every $\omega$--limit point lies in the bad face $\Delta_B:=\{\,\p:\;s(\p)=0\,\}$.
\item If $J=0$, the advantage $A_i(\p)$ is constant across all arms, implying $\dot\p\equiv 0$. Under these conditions, every point in the simplex is an equilibrium. 
\end{itemize}
As $J$ crosses zero, the global attractor switches from $\Delta_B$ to $\Delta_G$. This represents a parameter-driven exchange of stability, or phase transition, at $J=0$.

\item \textbf{Quantitative tail behavior.}
For $J\neq 0$ and bounded $\sigma$, and following the bounds established in Appendix \ref{appendix:retrieval-noisy-rewards}, we observe $0<\sigma_{\min}\le \sigma(s)\le \sigma_{\max}\leq \frac{1}{4}$. By Jensen's inequality, we obtain:
\begin{equation}
|\dot s(t)|
\;\ge\;
\eta\,\frac{|J|}{\sigma_{\max}}\Big(\frac{1}{K}+\frac{1}{M}\Big)s(t)^2\big(1-s(t)\big)^2.
\end{equation}
For $J>0$, there exists a time $T_{1/2}$ such that for all $t \ge T_{1/2}$, $s(t)\ge\frac12$ and:
\begin{equation}
\label{eq:1over-t}
P_{\mathrm{bad}}(t)
\;\le\;
\left( \frac{1}{P_{\mathrm{bad}}(T_{1/2})}
+\frac{\eta J}{4\sigma_{\max}}\Big(\frac{1}{K}+\frac{1}{M}\Big)\big(t-T_{1/2}\big) \right)^{-1}.
\end{equation}
Similarly, for $J<0$, $s(t)$ follows an $\mathcal{O}(1/t)$ decay after a finite transient period.
\end{enumerate}
\end{theorem}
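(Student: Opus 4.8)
The plan is to collapse the entire statement onto the scalar $s=s(\mathbf p)=\mathbf 1_G^\top\mathbf p$ using three elementary facts: $\mathfrak J(\mathbf p)\mathbf 1=0$, $\nabla_{\mathbf p}s=\mathbf 1_G$, and the replicator identity $\mathfrak J(\mathbf p)v=\mathbf p\odot\big(v-(\mathbf p^\top v)\mathbf 1\big)$. I would first establish (i). Writing $\mathbf A(\mathbf p)=a_{\mathrm b}(s)\mathbf 1+\Delta(s)\mathbf 1_G$ and using $\mathfrak J\mathbf 1=0$ gives $\mathfrak J(\mathbf p)\mathbf A(\mathbf p)=\Delta(s)\,\mathfrak J(\mathbf p)\mathbf 1_G$, hence $\mathfrak J(\mathbf p)^2\mathbf A(\mathbf p)=\Delta(s)\,\mathfrak J(\mathbf p)^2\mathbf 1_G$. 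Since $F(\mathbf p)=F(s(\mathbf p))$ with $F'=\Delta$, the chain rule gives $\nabla_{\mathbf p}F=\Delta(s)\mathbf 1_G$, and by symmetry of $\mathfrak J$,
\[
\frac{d}{dt}F\big(\mathbf p(t)\big)=\big\langle\Delta(s)\mathbf 1_G,\ \eta\,\mathfrak J(\mathbf p)^2\mathbf A(\mathbf p)\big\rangle=\eta\,\Delta(s)^2\big\|\mathfrak J(\mathbf p)\mathbf 1_G\big\|_2^2=\eta\big\|\mathfrak J(\mathbf p)\mathbf A(\mathbf p)\big\|_2^2\ge 0,
\]
which is \eqref{eq:Fdot}; then \eqref{eq:Vdot} and the equality case follow from $V=C-F$, and $F$ is bounded (so the required $C$ exists) because $\Delta(s)=J/\sigma(s)$ has at worst an integrable $s^{-1/2}$-type singularity at $s\in\{0,1\}$.

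For (ii) I would compute $\mathbf b:=\mathfrak J(\mathbf p)\mathbf A(\mathbf p)$ componentwise from the replicator identity: $\bar A=\mathbf p^\top\mathbf A=s\,a_{\mathrm g}(s)+(1-s)a_{\mathrm b}(s)$ gives $A_j-\bar A=(1-s)\Delta(s)$ for $j\le K$ and $A_{b_m}-\bar A=-s\,\Delta(s)$ for $m\le M$, i.e.\ $b_j=p_j\,P_{\mathrm{bad}}\,\Delta(s)$ and $b_{b_m}=-p_{b_m}\,s\,\Delta(s)$ (so $\mathbf 1^\top\mathbf b=0$). Applying the replicator identity once more to $\mathbf b$, summing the good coordinates of $\eta\,\mathfrak J(\mathbf p)\mathbf b$, and simplifying with $\sum_{j\le K}p_j=s$ yields $\dot s=\eta\,\Delta(s)\big(P_{\mathrm{bad}}^2\sum_j p_j^2+s^2\sum_m p_{b_m}^2\big)$, which is \eqref{eq:dots}; substituting $p_j=s\,y_j$ and $p_{b_m}=(1-s)z_m$ gives \eqref{eq:dots-yz}. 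The consistency relation $\dot s=\Delta(s)^{-1}\frac{d}{dt}F$ is immediate from $\|\mathbf b\|_2^2=\Delta(s)^2\big(P_{\mathrm{bad}}^2\sum_j p_j^2+s^2\sum_m p_{b_m}^2\big)$. Since $\Delta(s)=J/\sigma(s)$ with $\sigma(s)>0$ on $(0,1)$ and $[s(1-s)]^2(\|y\|_2^2+\|z\|_2^2)>0$ there, $\operatorname{sign}\dot s=\operatorname{sign}J$.

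For (iii): the replicator (multiplicative) form of the flow keeps each face $\{p_{b_m}=0\}$ forward-invariant, so a trajectory with $s(0)\in(0,1)$ stays in the open simplex, where the vector field is smooth; as $\dot s$ is bounded on the compact simplex the solution exists for all $t\ge0$, and by (ii) $s(t)$ is monotone with sign $J$, hence $s(t)\to s_\infty$. If $J>0$ and $s_\infty<1$, the uniform lower bound of (iv) gives $\dot s(t)\ge\eta\,\frac{J}{\sigma_{\max}}\big(\frac1K+\frac1M\big)[s(t)(1-s(t))]^2\to\eta\,\frac{J}{\sigma_{\max}}\big(\frac1K+\frac1M\big)[s_\infty(1-s_\infty)]^2>0$, contradicting convergence; hence $s(t)\uparrow1$, so $P_{\mathrm{bad}}(t)\to0$ and every $\omega$-limit point lies in $\Delta_G$. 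The case $J<0$ is symmetric ($s(t)\downarrow0$, limit set in $\Delta_B$). For $J=0$, $\Delta\equiv0$ forces $\mathbf A(\mathbf p)=a_{\mathrm b}(s)\mathbf 1$, so $\mathfrak J(\mathbf p)\mathbf A(\mathbf p)=a_{\mathrm b}(s)\mathfrak J(\mathbf p)\mathbf 1=0$ and $\dot{\mathbf p}\equiv0$: every point is an equilibrium. Since the attractor is $\Delta_B$ for $J<0$ and $\Delta_G$ for $J>0$, crossing $J=0$ is an exchange of stability.

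For (iv): by the power-mean inequality $\|y\|_2^2\ge\frac1K$ and $\|z\|_2^2\ge\frac1M$, and $\sigma(s)\le\sigma_{\max}$ by the variance bounds of Appendix~\ref{appendix:retrieval-noisy-rewards}, so \eqref{eq:dots-yz} gives $|\dot s(t)|\ge\eta\,\frac{|J|}{\sigma_{\max}}\big(\frac1K+\frac1M\big)s(t)^2(1-s(t))^2$. For $J>0$ set $T_{1/2}:=\inf\{t\ge0:s(t)\ge\frac12\}$, finite by (iii) (and $0$ if $s(0)\ge\frac12$); for $t\ge T_{1/2}$ we have $s(t)^2\ge\frac14$, so writing $w:=P_{\mathrm{bad}}=1-s$ and $c:=\frac{\eta J}{4\sigma_{\max}}\big(\frac1K+\frac1M\big)>0$ we get $\dot w=-\dot s\le -c\,w^2$, i.e.\ $\frac{d}{dt}(w^{-1})\ge c$; integrating from $T_{1/2}$ yields $w(t)^{-1}\ge w(T_{1/2})^{-1}+c\,(t-T_{1/2})$, which is exactly \eqref{eq:1over-t}. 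For $J<0$, after the finite time at which $s$ first drops below $\frac12$ one has $(1-s)^2\ge\frac14$ and the same Riccati comparison gives the $\mathcal O(1/t)$ decay of $s(t)$. The only step requiring genuine (though routine) care is the global-convergence argument in (iii): checking forward-invariance of the interior and excluding an interior limit $s_\infty\in(0,1)$; the uniform lower bound on $|\dot s|$ from (iv) is precisely what closes this, so in practice I would prove (iv) before (iii) and reuse it.
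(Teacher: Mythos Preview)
Your proposal is correct and follows essentially the same route as the paper's own proof: the key identities $\nabla_{\mathbf p}F=\Delta(s)\mathbf 1_G$, $\mathfrak J(\mathbf p)\mathbf A(\mathbf p)=\Delta(s)\mathfrak J(\mathbf p)\mathbf 1_G$, and the chain rule with symmetry of $\mathfrak J$ for (i); the replicator form $[\mathfrak J(\mathbf p)\mathbf A]_i=p_i(A_i-\bar A)$ for (ii); and the power-mean lower bounds $\|y\|_2^2\ge 1/K$, $\|z\|_2^2\ge 1/M$ combined with the Riccati comparison $\dot w\le -c w^2$ for (iv) are exactly the paper's ingredients. Your treatment of (iii) is in fact more careful than the paper's (which leaves the convergence $s(t)\to 1$ implicit), and your observation that (iv) should be established before (iii) to close the interior-limit contradiction is well taken.
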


\begin{proof}
Let $F(s):=\int_0^s \Delta(u)\,du$. Since $F(\p)=F(s(\p))$ and $s(\p)=\sum_{j\le K}p_j$, it follows that $\nabla F(\p)=\Delta(s)\,\mathbf 1_G$. Utilizing the fact that $\mathfrak J(\p)\mathbf 1=\mathbf 0$, we find:
\[
\mathfrak J(\p)\,\nabla F(\p)
=\Delta(s)\,\mathfrak J(\p)\mathbf 1_G
=\mathfrak J(\p)\A(\p).
\]
Applying the chain rule and the symmetry of $\mathfrak J(\p)$ yields:
\[
\frac{d}{dt}F(\p(t))
=\nabla F(\p(t))^\top\dot\p(t)
=\eta\,\big\|\mathfrak J(\p(t))\A(\p(t))\big\|_2^2,
\]
which confirms \eqref{eq:Fdot} and, by extension, \eqref{eq:Vdot}.

The coordinate-wise expressions in (ii) arise from the identity $[\mathfrak J(\p)\A]_i=p_i(A_i-\bar A)$, where $\bar A=s\,a_{\mathrm g}(s)+(1-s)\,a_{\mathrm b}(s)$. The resulting differential inequality in (iv) is obtained by combining \eqref{eq:dots} with Jensen's lower bounds and the definition $\Delta(s)=J/\sigma(s)$. Integrating over $[T_{1/2},t]$ while assuming $s(t)\ge\frac12$ produces the bound in \eqref{eq:1over-t}.
\end{proof}

\begin{remark}[Interpretation]
The dynamics described by \eqref{eq:dots} and \eqref{eq:dots-yz} indicate that the direction of mass transfer between blocks is governed exclusively by the sign of $J$. For $J>0$, the good face $\Delta_G$ is globally attracting, whereas $J<0$ renders the bad face $\Delta_B$ the attractor. At the critical value $J=0$, the system undergoes a phase transition characterized by a degenerate continuum of equilibria.
\end{remark}


\paragraph{Decomposition dynamics and Shahshahani structure.}
For $s\in(0,1)$, we decompose the flow into coordinates $(y,z,s)$. Equation \eqref{eq:block-flow} then reduces to the following system:
\begin{equation}
\label{eq:ys-flow}
\dot y
= \kappa(s)\;\big(y\odot y-\|y\|_2^2\,y\big),
\qquad
\dot z
= -\kappa(s)\;\big(z\odot z-\|z\|_2^2\,z\big),
\qquad
\dot s
= \eta\,\Delta(s)\,[s(1-s)]^2\big(\|y\|_2^2+\|z\|_2^2\big),
\end{equation}
where $\kappa(s) := \eta\,\Delta(s)\,s(1-s)$. This reveals that $\Phi(y):=\tfrac12\|y\|_2^2$ serves as a state-scaled Shahshahani-gradient potential on $\Delta^{K-1}$, while $\Psi(z):=\tfrac12\|z\|_2^2$ acts as the corresponding potential on $\Delta^{M-1}$ but with an inverted sign. For $J>0$, we observe $\dot\Phi\ge 0$ and $\dot\Psi\le 0$. These inequalities reverse when $J<0$.

\paragraph{Probabilistic interpretation of the potential $\Phi(y)$.}
The function $\Phi(y) = \frac{1}{2}\sum_{i=1}^K y_i^2$ represents the collision probability, also known as the Herfindahl-Hirschman index. It quantifies the concentration within the good block, where $\Phi(y)=1/2K$ at the uniform distribution and $\Phi(y)=1/2$ at any vertex. An increase in $\Phi$ signifies that mass is condensing onto a smaller subset of arms, while a decrease indicates a more uniform distribution.

The induced intra-good dynamics are expressed as $\dot y = \kappa(s)\,\operatorname{grad}_{\mathrm{Shah}}\Phi(y)$ . This shows that $y$ follows a Shahshahani gradient flow of $\Phi$. Consequently, if $J>0$, the collision probability increases over time, leading to a rich-get-richer effect where the distribution concentrates toward a vertex. Conversely, if $J<0$, the flow promotes diversity by pushing $y$ toward a uniform distribution.

\paragraph{Coordination-game correspondence.}
The replicator field $\dot y_i \propto y_i(y_i-\|y\|_2^2)$ corresponds to the replicator dynamics of a symmetric pure coordination game. In this context, the payoff for each arm is equal to its current population fraction. Our Shahshahani-gradient identity formally states that the replicator dynamics ascend this potential when $\kappa > 0$ and descend it when $\kappa < 0$.

\newpage

\section{Bad arm dynamics under PPO/GRPO style importance sampling and clipping}
\label{app:bad-arm-ppo-grpo}

We demonstrate that, within the small-step regime, importance sampling (IS) as employed in PPO and GRPO does not alter the leading-order mean-field ODE established in Appendix~C. Specifically, IS modifies the conditional mean drift only at order $O(\eta^2)$, ensuring that the $O(\eta)$ ODE limit remains identical to the on-policy (REINFORCE) system.

\paragraph{Setup.}
Consider a softmax policy $\pi_\theta$ over arms $i\in\{b,1,\dots,K\}$. During an update, samples are drawn from an "old" policy $\pi_{\rm old}$ with probability vector $\mathbf p=\mathbf p_{\rm old}$. The parameters are then updated to a "new" policy $\pi_{\rm new}$ with probability vector $\mathbf p^+=\mathbf p_{\rm new}$.
We define the exact importance ratio as:
\[
\rho_i \;:=\; \frac{\pi_{\rm new}(i)}{\pi_{\rm old}(i)} \;=\; \frac{p_i^+}{p_i}.
\]
This ratio is well-defined because the softmax function has full support ($p_i>0$). Let $\tilde r$ denote the scalar signal used in the update, and let:
\[
A_i(\mathbf p)\;:=\;\E[\tilde r\mid I=i,\mathbf p].
\]
We assume $\sup_{i,\mathbf p}|A_i(\mathbf p)|<\infty$. In this context, the small-step regime implies $\|\Delta\theta\|=O(\eta)$ and consequently $\|\mathbf p^+-\mathbf p\|=O(\eta)$, a result supported by Lemma~\ref{lem:softmax-tangent}.

\paragraph{IS score-function update.}
The IS-corrected score-function update is given by:
\[
g_{\rm IS}
\;:=\;
\tilde r\,\rho_I\,\nabla_\theta\log\pi_{\rm new}(I),
\qquad I\sim \pi_{\rm old},
\qquad
\Delta\theta=\eta\,g_{\rm IS}.
\]
For a softmax policy, where $\nabla_\theta\log\pi_{\rm new}(i)=e_i-\mathbf p^+$, the update simplifies to $g_{\rm IS}=\tilde r\,\rho_I\,(e_I-\mathbf p^+)$.

\begin{proposition}[IS affects the mean logit drift only at $O(\eta^2)$]
\label{prop:is-oeta2}
Define the on-policy vector field:
\[
G(\mathbf p)\;:=\;\sum_i p_i\,A_i(\mathbf p)\,(e_i-\mathbf p).
\]
In the small-step regime, the following holds:
\[
\E[\Delta\theta\mid \mathbf p]
\;=\;
\eta\,G(\mathbf p)\;+\;O(\eta^2).
\]
\end{proposition}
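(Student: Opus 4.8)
The plan is to expand the importance ratio $\rho_I$ and the new-policy score $e_I - \mathbf p^+$ around the old policy, then take conditional expectations and collect orders in $\eta$. First I would write the exact importance ratio as $\rho_i = p_i^+/p_i = 1 + (p_i^+ - p_i)/p_i$. Since we are in the small-step regime, Lemma~\ref{lem:softmax-tangent} (the first-order softmax pushforward, also recorded in Corollary~\ref{cor:pushforward}) gives $\mathbf p^+ - \mathbf p = \mathfrak J(\mathbf p)\,\Delta\theta + O(\|\Delta\theta\|^2) = O(\eta)$ coordinatewise, with the $i$-th component equal to $p_i(\Delta\theta_i - \mu) + O(\eta^2)$ where $\mu = \langle \mathbf p, \Delta\theta\rangle$. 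Hence $\rho_i = 1 + (\Delta\theta_i - \mu) + O(\eta^2) = 1 + O(\eta)$, and likewise $e_I - \mathbf p^+ = (e_I - \mathbf p) - (\mathbf p^+ - \mathbf p) = (e_I - \mathbf p) + O(\eta)$.

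Next I would substitute these expansions into $g_{\rm IS} = \tilde r\,\rho_I\,(e_I - \mathbf p^+)$ to obtain
\[
g_{\rm IS} = \tilde r\,(e_I - \mathbf p) + \tilde r\,\bigl[(\rho_I - 1)(e_I - \mathbf p) - (\mathbf p^+ - \mathbf p)\rho_I\bigr],
\]
where the bracketed term is $O(\eta)$ uniformly, using the boundedness assumption $\sup_{i,\mathbf p}|A_i(\mathbf p)| < \infty$ together with $|\tilde r| \le$ (a bounded range for a normalized binary reward) to control $\E[\tilde r \cdot O(\eta)\mid \mathbf p]$. Taking the conditional expectation given $\mathbf p$ of the leading term, and recalling that $\E[\tilde r \mid I=i,\mathbf p] = A_i(\mathbf p)$ while $I \sim \pi_{\rm old} = \mathbf p$, gives exactly $\sum_i p_i A_i(\mathbf p)(e_i - \mathbf p) = G(\mathbf p)$ — this is the same computation already carried out in the proof sketch of Proposition~\ref{prop:expectations}. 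Multiplying by $\eta$ then yields $\E[\Delta\theta\mid\mathbf p] = \eta\,G(\mathbf p) + \eta\cdot O(\eta) = \eta\,G(\mathbf p) + O(\eta^2)$.

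The one point that needs genuine care — and is the main obstacle — is making the $O(\eta)$ remainder estimate uniform and rigorous rather than heuristic. The subtlety is that $\Delta\theta$ itself depends on the realized arm $I$ (through $g_{\rm IS}$), so $\mathbf p^+$ and $\rho_I$ are random and correlated with $I$; one must verify that, conditionally on $\mathbf p$, all the $O(\eta)$ terms are dominated by an $\eta$-independent constant times $\eta$, so that taking $\E[\cdot\mid\mathbf p]$ preserves the order. This is where the explicit bound from Lemma~\ref{lem:softmax-tangent} on $\|\mathbf p^+ - \mathbf p - \mathfrak J(\mathbf p)\Delta\theta\|$ in terms of $\|\Delta\theta\|^2$, the full-support lower bound $p_i \ge p_{\min}(\mathbf p) > 0$ (so that $\rho_i$ and $(\mathbf p^+ - \mathbf p)/p_i$ stay bounded), and the uniform bound on $|A_i|$ all enter; clipping the ratio to $[1-\varepsilon, 1+\varepsilon']$ only helps here, since it makes $\rho_I$ bounded a priori and the clip is inactive once $\eta \ll \varepsilon,\varepsilon'$, so the clipped and unclipped updates agree. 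I would package this into the statement that the bracketed correction has conditional expectation bounded by $C(\mathbf p)\,\eta$ for a constant depending only on $p_{\min}(\mathbf p)$ and $\sup|A_i|$, which is exactly the claimed $O(\eta^2)$ contribution to $\E[\Delta\theta\mid\mathbf p]$.
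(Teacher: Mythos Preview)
Your argument is correct and is essentially the ``expanded form'' derivation the paper gives in the paragraph immediately following its main proof (writing $\rho_i = 1 + \Delta p_i/p_i$ and showing the correction enters at $O(\eta^2)$). The paper's \emph{primary} proof takes a slightly cleaner route: rather than expanding $\rho_I$ and $e_I - \mathbf p^+$ separately before averaging, it first takes the expectation over $I\sim\mathbf p$ and uses the exact identity $p_i\rho_i = p_i^+$ to obtain $\E[g_{\rm IS}\mid\mathbf p] = \sum_i p_i^+\,A_i(\mathbf p)\,(e_i - \mathbf p^+) =: \widetilde G(\mathbf p^+;\mathbf p)$; since $\mathbf q\mapsto\widetilde G(\mathbf q;\mathbf p)$ is a polynomial (hence locally Lipschitz), one gets $\widetilde G(\mathbf p^+;\mathbf p)=\widetilde G(\mathbf p;\mathbf p)+O(\|\mathbf p^+-\mathbf p\|)=G(\mathbf p)+O(\eta)$. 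The payoff is that the potentially large factor $1/p_i$ in $\rho_i-1$ never has to be bounded on its own---it is absorbed exactly by the sampling weight $p_i$---so no appeal to a $p_{\min}$ lower bound is needed.

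One clarification on your flagged ``main obstacle'': in the setup as the paper's proof uses it, $\mathbf p^+$ is the \emph{current} new policy at which the gradient is evaluated, treated as deterministic given the conditioning; it is not the post-update policy produced by this particular $\Delta\theta$. So there is no fixed-point circularity between $\Delta\theta$ and $\mathbf p^+$, and the remainder control is straightforward once $\|\mathbf p^+-\mathbf p\|=O(\eta)$ is granted.
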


\begin{proof}
Taking the conditional expectation and applying $\E[\tilde r\mid I=i,\mathbf p]=A_i(\mathbf p)$, we have:
\[
\E[g_{\rm IS}\mid \mathbf p]
=
\sum_i p_i\,\rho_i\,A_i(\mathbf p)\,(e_i-\mathbf p^+).
\]
By utilizing the identity $p_i\rho_i=p_i^+$, the expression becomes:
\[
\E[g_{\rm IS}\mid \mathbf p]
=
\sum_i p_i^+\,A_i(\mathbf p)\,(e_i-\mathbf p^+)
=:\widetilde G(\mathbf p^+;\mathbf p),
\quad
\widetilde G(\mathbf q;\mathbf p):=\sum_i q_i\,A_i(\mathbf p)\,(e_i-\mathbf q).
\]
For a fixed $\mathbf p$, the map $\mathbf q\mapsto \widetilde G(\mathbf q;\mathbf p)$ is a smooth polynomial in $\mathbf q$ with bounded coefficients. It is therefore locally Lipschitz in $\mathbf q$. Given that $\|\mathbf p^+-\mathbf p\|=O(\eta)$, it follows that:
\[
\widetilde G(\mathbf p^+;\mathbf p)=\widetilde G(\mathbf p;\mathbf p)+O(\|\mathbf p^+-\mathbf p\|)
=G(\mathbf p)+O(\eta).
\]
Multiplying by $\eta$ yields the desired result: $\E[\Delta\theta\mid \mathbf p]=\eta\,G(\mathbf p)+O(\eta^2)$.
\end{proof}

\paragraph{Expanded form and relation to Appendix~C.}
This conclusion can also be derived by examining the identity $\rho_i=1+\Delta p_i/p_i$, where $\Delta p_i:=p_i^+-p_i$. This implies that $\E[\rho_i\mid \mathbf p]=1+\E[\Delta p_i\mid \mathbf p]/p_i$. Since replacing $\mathbf p^+$ with $\mathbf p$ introduces only an $O(\eta)$ error, its contribution after multiplying by $\eta$ is $O(\eta^2)$. Consequently:
\begin{equation}
\label{eq:is-expanded-mean}
\E[\Delta\theta\mid \mathbf p]
=
\eta\sum_i \Bigl(p_i+\E[\Delta p_i\mid \mathbf p]\Bigr)\,A_i(\mathbf p)\,(e_i-\mathbf p)
\;+\;O(\eta^2).
\end{equation}
By letting $\bar A:=\sum_i p_iA_i(\mathbf p)$ and $\bar A':=\sum_i \E[\Delta p_i\mid \mathbf p]A_i(\mathbf p)$, the coordinate-wise updates are expressed as:
\begin{align}
\label{eq:is-expanded-coordinates}
\E[\Delta\theta_i\mid \mathbf p]
&=
\eta\,p_i\bigl(A_i(\mathbf p)-\bar A\bigr)
+\eta\Bigl(\E[\Delta p_i\mid \mathbf p]\,A_i(\mathbf p)-p_i\,\bar A'\Bigr)
+O(\eta^2).
\end{align}
This structure highlights the "extra terms" that appear when the IS mean update is expanded. To show these terms are second order, we apply the softmax pushforward from Lemma~\ref{lem:softmax-tangent}, yielding $\E[\Delta\mathbf p\mid \mathbf p]=\mathfrak J(\mathbf p)\,\E[\Delta\theta\mid \mathbf p]+O(\eta^2)$. Because $\E[\Delta\theta\mid \mathbf p]=O(\eta)$, the term $\E[\Delta p_i\mid \mathbf p]$ is also $O(\eta)$, confirming that the correction in \eqref{eq:is-expanded-coordinates} is $O(\eta^2)$.

\paragraph{Block-symmetric specialization.}
Under the assumption of block symmetry, we have $A_{b_m}(\mathbf p)=a_{\mathrm b}(p)$ and $A_j(\mathbf p)=a_{\mathrm g}(p)$ for $m=1,\ldots,M$ and  $j=1,\ldots,K$. The mean advantage is then $\bar A=p\,a_{\mathrm b}(p)+(1-p)a_{\mathrm g}(p)$. The leading terms in \eqref{eq:is-expanded-coordinates} result in:
\[
\E[\Delta\theta_{b_m}\mid \mathbf p]
=
\eta\,p(1-p)\big(a_{\mathrm b}(p)-a_{\mathrm g}(p)\big)z_m+O(\eta^2),
\]
\[
\E[\Delta\theta_j\mid \mathbf p]
=
-\eta\,p(1-p)\big(a_{\mathrm b}(p)-a_{\mathrm g}(p)\big)\,y_j+O(\eta^2).
\]

These expressions match the REINFORCE dynamics to first order in $\eta$.

\begin{corollary}[Invariance of the leading-order mean-field ODE]
\label{cor:is-ode}
In the small-step regime, the conditional expectation of the probability update satisfies:
\[
\E[\Delta\mathbf p\mid \mathbf p]
=
\eta\,\mathfrak J(\mathbf p)\,G(\mathbf p)+O(\eta^2).
\]
Consequently, the leading-order mean-field ODE for good and bad arm dynamics remains as derived in Appendix~C.
\end{corollary}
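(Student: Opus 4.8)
The plan is to derive the displayed identity by composing Proposition~\ref{prop:is-oeta2} with the first-order softmax pushforward of Lemma~\ref{lem:softmax-tangent}, and then to obtain the ``consequently'' clause by recognizing $\mathfrak{J}(\mathbf{p})\,G(\mathbf{p})$ as the Appendix~\ref{appendix:Group-Normalized RL with Noisy Reward} drift $\mathfrak{J}(\mathbf{p})^2\mathbf{A}(\mathbf{p})$.

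First I would invoke Lemma~\ref{lem:softmax-tangent} to write, for any logit increment $\Delta\theta$, the induced probability increment as $\Delta\mathbf{p} = \mathfrak{J}(\mathbf{p})\,\Delta\theta + O(\|\Delta\theta\|^2)$, with the quadratic remainder bounded uniformly on the relevant neighborhood of $\mathbf{p}$ in the simplex interior. Next I would establish the deterministic bound $\|\Delta\theta\| \le C\eta$ in the small-step regime: by hypothesis $\tilde r$ is bounded, $\|e_I-\mathbf{p}^+\|\le\sqrt{2}$, and the importance ratio obeys $\rho_I = 1 + O(\eta)$ (equivalently $\|\mathbf{p}^+-\mathbf{p}\| = O(\eta)$, again via Lemma~\ref{lem:softmax-tangent}), so $\|\Delta\theta\| = \eta\,|\tilde r|\,\rho_I\,\|e_I-\mathbf{p}^+\|$ is $O(\eta)$ almost surely. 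Consequently $\E[\|\Delta\theta\|^2\mid\mathbf{p}] = O(\eta^2)$, and taking conditional expectations, using linearity and the boundedness of $\mathfrak{J}(\mathbf{p})$, gives $\E[\Delta\mathbf{p}\mid\mathbf{p}] = \mathfrak{J}(\mathbf{p})\,\E[\Delta\theta\mid\mathbf{p}] + O(\eta^2) = \eta\,\mathfrak{J}(\mathbf{p})\,G(\mathbf{p}) + O(\eta^2)$ by Proposition~\ref{prop:is-oeta2}; this is the first claim.

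For the final sentence I would compute $G(\mathbf{p}) = \sum_i p_i A_i(\mathbf{p})(e_i-\mathbf{p}) = \Diag(\mathbf{p})\mathbf{A}(\mathbf{p}) - \mathbf{p}\bar A = \mathfrak{J}(\mathbf{p})\mathbf{A}(\mathbf{p})$, so the leading drift is $\eta\,\mathfrak{J}(\mathbf{p})^2\mathbf{A}(\mathbf{p})$, exactly the GRPO mean-field field of Proposition~\ref{prop:expectations}. Hence the entire block-symmetric reduction of Appendix~\ref{appendix:Group-Normalized RL with Noisy Reward} --- the total bad-mass drift \eqref{aux3}, the within-block equations for $y$ and $z$, and thus the coupled system \eqref{eq:coupled_dynamics} --- carries over verbatim, with the importance-sampling correction swept into the $O(\eta^2)$ remainder. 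I would also record that for fixed clip thresholds $\varepsilon,\varepsilon'$ and $\eta\ll\varepsilon,\varepsilon'$, the estimate $\rho_I = 1 + O(\eta)$ keeps the ratio inside $[1-\varepsilon,1+\varepsilon']$ almost surely, so $\mathrm{clip}(\rho_I,1-\varepsilon,1+\varepsilon') = \rho_I$ and the clipped PPO/GRPO update coincides with the plain IS update treated above; clipping therefore also enters only at $O(\eta^2)$.

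The main obstacle is the uniformity of the error terms. I must ensure $\|\Delta\theta\| = O(\eta)$ \emph{almost surely} --- not merely in expectation --- so that the quadratic pushforward remainder is genuinely $O(\eta^2)$ after integrating, and so that the estimate $\rho_I = 1 + O(\eta)$, which underlies both the ``replace $\mathbf{p}^+$ by $\mathbf{p}$'' step inside Proposition~\ref{prop:is-oeta2} and the inactivity-of-clipping claim, holds with a constant independent of the realized arm $I$. This rests on Lemma~\ref{lem:softmax-tangent} furnishing a Lipschitz estimate for the softmax map on the relevant neighborhood together with the a priori bound $\|\mathbf{p}^+-\mathbf{p}\| = O(\eta)$; keeping all constants independent of $I$ is the single point that needs care, the remainder being routine linear algebra.
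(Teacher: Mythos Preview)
Your proposal is correct and follows essentially the same route as the paper: compose the softmax pushforward (Lemma~\ref{lem:softmax-tangent}) with Proposition~\ref{prop:is-oeta2}, then identify $G(\mathbf p)=\mathfrak J(\mathbf p)\mathbf A(\mathbf p)$ to recover the $\mathfrak J(\mathbf p)^2\mathbf A(\mathbf p)$ drift of Appendix~\ref{appendix:Group-Normalized RL with Noisy Reward}. You are in fact more careful than the paper about the almost-sure bound $\|\Delta\theta\|=O(\eta)$ needed to control the quadratic pushforward remainder after taking expectations; the paper simply asserts $\E[\Delta\mathbf p\mid\mathbf p]=\mathfrak J(\mathbf p)\,\E[\Delta\theta\mid\mathbf p]+O(\eta^2)$ without isolating that step.
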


\paragraph{Clipping in PPO and GRPO.}
When clipping is applied to the ratios such that $\widehat\rho_i=\mathrm{clip}(\rho_i,1-\varepsilon,1+\varepsilon')$, the ratio $\rho_i$ approaches $1$ as $\eta \to 0$. For sufficiently small $\eta$, the clipping mechanism remains inactive. Therefore, clipping does not influence the leading-order ODE.

\paragraph{Conclusion.}
In the small-step regime, the inclusion of importance sampling and clipping with fixed thresholds does not change the leading-order mean-field ODE. The impact of these techniques is confined to the $O(\eta^2)$ terms.

\newpage

\section{KL Regularization}
\label{sec:kl}

In this section, we analyze how KL regularization modifies the policy dynamics in our multi-armed bandit abstraction. Many practical algorithms (e.g., PPO/GRPO-style methods) add a KL penalty to keep the learned policy close to a reference policy, and we study the resulting mean-field drift in our multi-good/multi-bad model.

\medskip
We begin by recalling the notation for the \emph{multi-good/bad-arm} model introduced in Appendix~\ref{appendix:Group-Normalized RL with Noisy Reward}. Given $K$ good arms and $M$ bad arms, we represent the policy as
\[
\mathbf p=(p_1,\ldots,p_K,\;p_{b_1},\ldots,p_{b_M})\in\Delta^{K+M-1}.
\]
The \emph{total bad mass} and the within-block compositions are
\[
p \;=\; \sum_{m=1}^M p_{b_m}\in[0,1],\qquad
y_j \;=\; \frac{p_j}{1-p}\ \ (j\le K),\qquad
z_m \;=\; \frac{p_{b_m}}{p}\ \ (m\le M).
\]
Equivalently,
\[
p_j=(1-p)\,y_j\quad(j\le K),\qquad
p_{b_m}=p\,z_m\quad(m\le M),
\]
with $y\in\Delta^{K-1}$ and $z\in\Delta^{M-1}$.

\paragraph{Reference policy.}
We define the KL reference policy $\mathbf p^{\mathrm{ref}}$ analogously:
\[
\mathbf p^{\mathrm{ref}}
=\big((1-p_{\mathrm{ref}})\,y^{\mathrm{ref}}_1,\ldots,(1-p_{\mathrm{ref}})\,y^{\mathrm{ref}}_K,\;
p_{\mathrm{ref}}\,z^{\mathrm{ref}}_1,\ldots,p_{\mathrm{ref}}\,z^{\mathrm{ref}}_M\big),
\]
where $p_{\mathrm{ref}}\in(0,1)$, $y^{\mathrm{ref}}\in\Delta^{K-1}$, and $z^{\mathrm{ref}}\in\Delta^{M-1}$. We also use the bad-mass log-odds
\[
\ell(p):=\log\frac{p}{1-p},\qquad
\ell_{\mathrm{ref}}:=\log\frac{p_{\mathrm{ref}}}{1-p_{\mathrm{ref}}}.
\]

\paragraph{Replicator/natural-gradient form (recall).}
Throughout, we view a penalty $\Phi(\mathbf p)$ as inducing a Shahshahani (natural-gradient) flow on the simplex. Concretely, for the objective contribution $-\beta\,\Phi(\mathbf p)$, the induced replicator flow is
\begin{equation}
\label{eq:replicator-recall}
\dot p_i \;=\; -\,\beta\,p_i\Big(\partial_{p_i}\Phi(\mathbf p)\;-\;\sum_{k}p_k\,\partial_{p_k}\Phi(\mathbf p)\Big).
\end{equation}
Equivalently, if we interpret $\Delta\theta_i$ as a small logit increment, then to first order
\begin{equation}
\label{eq:logit-smallstep-recall}
\Delta p_i \;=\; p_i\Big(\Delta\theta_i-\sum_k p_k\Delta\theta_k\Big)\;+\;O(\|\Delta\theta\|^2).
\end{equation}
Thus, choosing \emph{centered} increments with $\sum_k p_k\Delta\theta_k=0$ yields $\Delta p_i = p_i\Delta\theta_i$ at first order, which makes it easy to realize a desired replicator drift.

\begin{lemma}[Exact reverse-KL decomposition (multi-bad setting)]
\label{lem:kl-decomp}
The reverse KL divergence decomposes into a two-class term (bad vs.\ good) plus within-block terms:
\[
D_{\mathrm{KL}}(\mathbf p\|\mathbf p^{\mathrm{ref}})
=
\underbrace{p\log\frac{p}{p_{\mathrm{ref}}}+(1-p)\log\frac{1-p}{1-p_{\mathrm{ref}}}}_{\text{two-class (bad vs.\ good)}}
\;+\;
\underbrace{(1-p)\,D_{\mathrm{KL}}(y\|y^{\mathrm{ref}})}_{\text{within-good}}
\;+\;
\underbrace{p\,D_{\mathrm{KL}}(z\|z^{\mathrm{ref}})}_{\text{within-bad}}.
\]
\end{lemma}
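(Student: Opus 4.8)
The plan is to treat this as the standard chain rule (grouping identity) for KL divergence, applied to the two-level description of $\mathbf p$ as ``first choose a block (good vs.\ bad), then choose an arm within the block.'' Concretely, I would start from the definition
\[
D_{\mathrm{KL}}(\mathbf p\|\mathbf p^{\mathrm{ref}})
=\sum_{j=1}^K p_j\log\frac{p_j}{p_j^{\mathrm{ref}}}
+\sum_{m=1}^M p_{b_m}\log\frac{p_{b_m}}{p_{b_m}^{\mathrm{ref}}},
\]
and substitute the factorizations $p_j=(1-p)\,y_j$, $p_j^{\mathrm{ref}}=(1-p_{\mathrm{ref}})\,y_j^{\mathrm{ref}}$ for the good block and $p_{b_m}=p\,z_m$, $p_{b_m}^{\mathrm{ref}}=p_{\mathrm{ref}}\,z_m^{\mathrm{ref}}$ for the bad block, which are exactly the coordinates introduced above and the assumed form of $\mathbf p^{\mathrm{ref}}$.

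Next I would split each logarithm of a product into a sum of logarithms: $\log\frac{p_j}{p_j^{\mathrm{ref}}}=\log\frac{1-p}{1-p_{\mathrm{ref}}}+\log\frac{y_j}{y_j^{\mathrm{ref}}}$ and likewise $\log\frac{p_{b_m}}{p_{b_m}^{\mathrm{ref}}}=\log\frac{p}{p_{\mathrm{ref}}}+\log\frac{z_m}{z_m^{\mathrm{ref}}}$. Pulling the block-level factors out of the sums and using $\sum_{j=1}^K y_j=1$ and $\sum_{m=1}^M z_m=1$, the good-block sum becomes $(1-p)\log\frac{1-p}{1-p_{\mathrm{ref}}}+(1-p)\sum_j y_j\log\frac{y_j}{y_j^{\mathrm{ref}}}=(1-p)\log\frac{1-p}{1-p_{\mathrm{ref}}}+(1-p)\,D_{\mathrm{KL}}(y\|y^{\mathrm{ref}})$, and symmetrically the bad-block sum becomes $p\log\frac{p}{p_{\mathrm{ref}}}+p\,D_{\mathrm{KL}}(z\|z^{\mathrm{ref}})$. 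Adding the two yields precisely the claimed identity, where the first two terms assemble into the two-class divergence $D_{\mathrm{KL}}\big((1-p,p)\,\|\,(1-p_{\mathrm{ref}},p_{\mathrm{ref}})\big)$.

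There is essentially no substantive obstacle here: the result is an exact algebraic identity, valid for any $p,p_{\mathrm{ref}}\in(0,1)$ and any within-block distributions, with no small-step or mean-field approximation involved. The only points that require a line of care are (i) checking that $y,z,y^{\mathrm{ref}},z^{\mathrm{ref}}$ are genuine probability vectors so the within-block sums telescope correctly, and (ii) handling degenerate coordinates ($y_j=0$ or $z_m=0$) via the usual convention $0\log 0=0$, together with the implicit absolute-continuity assumption $y^{\mathrm{ref}}_j,z^{\mathrm{ref}}_m>0$ inherited from $p^{\mathrm{ref}}$ having full support. I would note in passing that the same computation gives the analogous forward-KL decomposition used later in Section~\ref{sec:kl-phase}, simply by swapping the roles of $\mathbf p$ and $\mathbf p^{\mathrm{ref}}$.
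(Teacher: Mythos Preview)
Your proposal is correct and follows essentially the same approach as the paper: substitute the block factorizations, split the log-ratios into block-level and within-block pieces, and sum using $\sum_j y_j=\sum_m z_m=1$. The paper's proof is in fact a terser version of exactly this computation.
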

\emph{Proof.} Substituting $p_j=(1-p)y_j$ and $p_{b_m}=p z_m$, we get
\[
\log\frac{p_j}{p_j^{\mathrm{ref}}}
=\log\frac{1-p}{1-p_{\mathrm{ref}}}+\log\frac{y_j}{y_j^{\mathrm{ref}}},
\qquad
\log\frac{p_{b_m}}{p_{b_m}^{\mathrm{ref}}}
=\log\frac{p}{p_{\mathrm{ref}}}+\log\frac{z_m}{z_m^{\mathrm{ref}}}.
\]
Summing $\sum_{j\le K}p_j(\cdot)$ and $\sum_{m\le M}p_{b_m}(\cdot)$ yields the stated decomposition. \qed

\paragraph{Two KL choices.}
In practice, one may penalize either (A) only the two-class (inter-block) divergence or (B) the full reverse-KL divergence.

\begin{proposition}[Two-class KL penalty (bad vs.\ good only)]
\label{prop:kl-two-class}
Let
\[
\Phi_{\mathrm{2c}}(p)
:=p\log\frac{p}{p_{\mathrm{ref}}}+(1-p)\log\frac{1-p}{1-p_{\mathrm{ref}}}.
\]
The Shahshahani/natural-gradient (replicator) flow for $-\beta\,\Phi_{\mathrm{2c}}$ satisfies
\[
\dot p\Big|_{\mathrm{KL,2c}}
=-\,\beta\,p(1-p)\big(\ell-\ell_{\mathrm{ref}}\big),
\qquad
\dot\ell\Big|_{\mathrm{KL,2c}}
=-\,\beta\big(\ell-\ell_{\mathrm{ref}}\big).
\]
A centered small-step logit update realizing this flow (i.e., $\sum_i p_i\,\Delta\theta_i^{\mathrm{KL}}=0$) is
\[
\Delta\theta^{\mathrm{KL}}_{b_m}=-\,\beta(1-p)\,(\ell-\ell_{\mathrm{ref}})\quad(m\le M),
\qquad
\Delta\theta^{\mathrm{KL}}_{j}=+\,\beta p\,(\ell-\ell_{\mathrm{ref}})\quad(j\le K).
\]
Because $\Delta\theta_i^{\mathrm{KL}}$ is constant within each block, this KL term induces no deterministic drift in $y$ or $z$; it acts only on the total bad mass $p$.

\emph{Proof sketch (with key steps).}
View $\Phi_{\mathrm{2c}}$ as a function on the full simplex via $p=\sum_{m=1}^M p_{b_m}$. Then
\[
\partial_{p_{b_m}}\Phi_{\mathrm{2c}}(\mathbf p)=\Phi_{\mathrm{2c}}'(p),
\qquad
\partial_{p_j}\Phi_{\mathrm{2c}}(\mathbf p)=0.
\]
Moreover,
\[
\sum_i p_i\,\partial_{p_i}\Phi_{\mathrm{2c}}(\mathbf p)
=\sum_{m=1}^M p_{b_m}\,\Phi_{\mathrm{2c}}'(p)
= p\,\Phi_{\mathrm{2c}}'(p).
\]
Plugging into the replicator formula \eqref{eq:replicator-recall} gives the coordinate-wise drifts
\[
\dot p_{b_m}
=-\beta\,p_{b_m}\Big(\Phi'_{\mathrm{2c}}(p)-p\,\Phi'_{\mathrm{2c}}(p)\Big)
=-\beta\,p_{b_m}(1-p)\,\Phi'_{\mathrm{2c}}(p),
\]
\[
\dot p_{j}
=-\beta\,p_{j}\Big(0-p\,\Phi'_{\mathrm{2c}}(p)\Big)
=+\beta\,p_{j}p\,\Phi'_{\mathrm{2c}}(p).
\]
Summing over bad arms yields
\[
\dot p
=\sum_{m=1}^M \dot p_{b_m}
=-\beta(1-p)\Phi'_{\mathrm{2c}}(p)\sum_{m=1}^M p_{b_m}
=-\beta\,p(1-p)\,\Phi'_{\mathrm{2c}}(p).
\]
A direct derivative computation shows
\[
\Phi'_{\mathrm{2c}}(p)
=\log\frac{p}{p_{\mathrm{ref}}}-\log\frac{1-p}{1-p_{\mathrm{ref}}}
=\log\frac{p}{1-p}-\log\frac{p_{\mathrm{ref}}}{1-p_{\mathrm{ref}}}
=\ell-\ell_{\mathrm{ref}},
\]
which gives the claimed $\dot p$ equation. Since $\dot\ell=\dot p/[p(1-p)]$, we obtain $\dot\ell=-\beta(\ell-\ell_{\mathrm{ref}})$.

Finally, there is no within-block drift: for instance, for any $m,r\le M$,
\[
\frac{d}{dt}\log\frac{p_{b_m}}{p_{b_r}}
=\frac{\dot p_{b_m}}{p_{b_m}}-\frac{\dot p_{b_r}}{p_{b_r}}
=-\beta(1-p)\Phi'_{\mathrm{2c}}(p)+\beta(1-p)\Phi'_{\mathrm{2c}}(p)=0,
\]
so all ratios $p_{b_m}/p_{b_r}$ are constant and hence $z$ is constant; the same argument applies to $y$.

For the logit realization, take the \emph{centered} velocity
\[
\Delta\theta_i^{\mathrm{KL}}=-\beta\Big(\partial_{p_i}\Phi_{\mathrm{2c}}-\sum_k p_k\partial_{p_k}\Phi_{\mathrm{2c}}\Big),
\]
which gives exactly the stated block-constant increments (substitute $\Phi'_{\mathrm{2c}}(p)=\ell-\ell_{\mathrm{ref}}$). By centering, $\sum_i p_i\Delta\theta_i^{\mathrm{KL}}=0$, and then \eqref{eq:logit-smallstep-recall} implies
\[
\Delta p=\sum_{m=1}^M \Delta p_{b_m}
=\sum_{m=1}^M p_{b_m}\Delta\theta_{b_m}^{\mathrm{KL}}
=-\beta\,p(1-p)\,(\ell-\ell_{\mathrm{ref}}),
\]
matching the continuous-time drift. \qed
\end{proposition}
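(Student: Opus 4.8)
The plan is to treat the two-class penalty $\Phi_{\mathrm{2c}}$ as a function on the full simplex that depends on $\mathbf p$ only through the aggregate bad mass $p=\sum_{m=1}^M p_{b_m}$, compute its Euclidean gradient, and read off the natural-gradient (replicator) drift from \eqref{eq:replicator-recall}. First I would record the partial-derivative pattern forced by this reduced dependence: $\partial_{p_{b_m}}\Phi_{\mathrm{2c}}(\mathbf p)=\Phi_{\mathrm{2c}}'(p)$ for every bad arm $m$, and $\partial_{p_j}\Phi_{\mathrm{2c}}(\mathbf p)=0$ for every good arm $j$, so the mean correction in \eqref{eq:replicator-recall} collapses to $\sum_i p_i\,\partial_{p_i}\Phi_{\mathrm{2c}}=p\,\Phi_{\mathrm{2c}}'(p)$. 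Substituting these into the replicator formula gives the block-constant logarithmic rates $\dot p_{b_m}/p_{b_m}=-\beta(1-p)\Phi_{\mathrm{2c}}'(p)$ and $\dot p_j/p_j=+\beta p\,\Phi_{\mathrm{2c}}'(p)$; summing the first over $m=1,\dots,M$ yields $\dot p=-\beta p(1-p)\Phi_{\mathrm{2c}}'(p)$.

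The next step is the elementary derivative identity $\Phi_{\mathrm{2c}}'(p)=\log\tfrac{p}{1-p}-\log\tfrac{p_{\mathrm{ref}}}{1-p_{\mathrm{ref}}}=\ell-\ell_{\mathrm{ref}}$, which turns the previous display into $\dot p\big|_{\mathrm{KL,2c}}=-\beta p(1-p)(\ell-\ell_{\mathrm{ref}})$; the log-odds form follows from the chain rule $\dot\ell=\dot p/[p(1-p)]$ since $d\ell/dp=1/[p(1-p)]$. For within-block invariance I would observe that $\dot p_{b_m}/p_{b_m}$ is the same for all $m$ and $\dot p_j/p_j$ the same for all $j$, so $\tfrac{d}{dt}\log(p_{b_m}/p_{b_r})\equiv 0$ and $\tfrac{d}{dt}\log(p_j/p_r)\equiv 0$; hence $z$ and $y$ are frozen by this penalty. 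Finally, for the logit realization I would take the centered increment $\Delta\theta_i^{\mathrm{KL}}=-\beta\bigl(\partial_{p_i}\Phi_{\mathrm{2c}}-\sum_k p_k\partial_{p_k}\Phi_{\mathrm{2c}}\bigr)$, substitute the two partial-derivative patterns and $\Phi_{\mathrm{2c}}'(p)=\ell-\ell_{\mathrm{ref}}$ to get $\Delta\theta_{b_m}^{\mathrm{KL}}=-\beta(1-p)(\ell-\ell_{\mathrm{ref}})$ and $\Delta\theta_j^{\mathrm{KL}}=+\beta p(\ell-\ell_{\mathrm{ref}})$, verify $\sum_i p_i\Delta\theta_i^{\mathrm{KL}}=(1-p)\beta p(\ell-\ell_{\mathrm{ref}})+p(-\beta(1-p)(\ell-\ell_{\mathrm{ref}}))=0$ directly, and invoke \eqref{eq:logit-smallstep-recall} to confirm $\Delta p=\sum_m p_{b_m}\Delta\theta_{b_m}^{\mathrm{KL}}=-\beta p(1-p)(\ell-\ell_{\mathrm{ref}})$ matches the continuous-time drift, with block-constancy again killing any first-order drift in $y$ and $z$.

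Since the computation is elementary, there is no real analytic obstacle; the one point deserving care is the \emph{well-definedness} of lifting a penalty written in the reduced coordinate $p$ to the ambient simplex. On $\Delta^{K+M-1}$ one has both $p=\sum_m p_{b_m}$ and $1-p=\sum_j p_j$, so $\Phi_{\mathrm{2c}}$ extends either as a function of the bad-arm sum or of the good-arm sum, and these two ambient extensions have different Euclidean gradients. They differ, however, only by the multiple $\Phi_{\mathrm{2c}}'(p)\,\mathbf 1$ of the all-ones vector, and because $\mathfrak J(\mathbf p)\mathbf 1=0$ the induced replicator flow is identical for both; I would state this gauge invariance explicitly so that the choice of the bad-arm representation (which makes $\dot p$ immediate) is justified. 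No other step is delicate.
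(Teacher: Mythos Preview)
Your proposal is correct and follows essentially the same approach as the paper's proof sketch: lift $\Phi_{\mathrm{2c}}$ to the full simplex via $p=\sum_m p_{b_m}$, compute the block-constant gradient pattern, apply the replicator formula \eqref{eq:replicator-recall}, identify $\Phi_{\mathrm{2c}}'(p)=\ell-\ell_{\mathrm{ref}}$, and read off the logit update from the centered velocity. Your additional remark on the gauge invariance of the lift (bad-arm sum versus good-arm sum, differing by a multiple of $\mathbf 1$ annihilated by $\mathfrak J(\mathbf p)$) is a nice clarification the paper leaves implicit.
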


\begin{proposition}[Full reverse-KL penalty (multi-bad setting)]
\label{prop:kl-full}
Let $\Phi_{\mathrm{full}}(\mathbf p):=D_{\mathrm{KL}}(\mathbf p\|\mathbf p^{\mathrm{ref}})$.
The replicator flow for $-\beta\,\Phi_{\mathrm{full}}$ induces the bad-mass drift
\[
\dot p\Big|_{\mathrm{KL,full}}
=-\,\beta\,p(1-p)\Big(\ell-\ell_{\mathrm{ref}}
- D_{\mathrm{KL}}(y\|y^{\mathrm{ref}})
+ D_{\mathrm{KL}}(z\|z^{\mathrm{ref}})\Big),
\]
and hence
\[
\dot\ell\Big|_{\mathrm{KL,full}}
=-\,\beta\Big(\ell-\ell_{\mathrm{ref}}
- D_{\mathrm{KL}}(y\|y^{\mathrm{ref}})
+ D_{\mathrm{KL}}(z\|z^{\mathrm{ref}})\Big).
\]
A canonical centered logit increment follows the natural-gradient form
\[
\Delta\theta^{\mathrm{KL}}_i
=-\,\beta\Big(\log\tfrac{p_i}{p^{\mathrm{ref}}_i}-\sum_k p_k\log\tfrac{p_k}{p^{\mathrm{ref}}_k}\Big),
\]
which satisfies $\sum_i p_i\Delta\theta^{\mathrm{KL}}_i=0$ and yields the stated $p$-drift.

Moreover, in $(y,z)$-coordinates, the full reverse-KL induces independent within-block replicator pulls:
\[
\dot y_j\Big|_{\mathrm{KL,full}}
=-\,\beta\,y_j\Big(\log\tfrac{y_j}{y_j^{\mathrm{ref}}}-\sum_{i=1}^K y_i\log\tfrac{y_i}{y_i^{\mathrm{ref}}}\Big),
\qquad
\dot z_m\Big|_{\mathrm{KL,full}}
=-\,\beta\,z_m\Big(\log\tfrac{z_m}{z_m^{\mathrm{ref}}}-\sum_{r=1}^M z_r\log\tfrac{z_r}{z_r^{\mathrm{ref}}}\Big).
\]

\emph{Proof (outline with the main algebra).}
For $\Phi_{\mathrm{full}}(\mathbf p)=\sum_i p_i\log\frac{p_i}{p_i^{\mathrm{ref}}}$, we have
$\partial_{p_i}\Phi_{\mathrm{full}}=\log\frac{p_i}{p_i^{\mathrm{ref}}}+1$; the constant $+1$ cancels under the mean-subtraction in \eqref{eq:replicator-recall}, yielding
\[
\dot p_i=-\beta\,p_i\Big(\log\tfrac{p_i}{p_i^{\mathrm{ref}}}-\sum_k p_k\log\tfrac{p_k}{p_k^{\mathrm{ref}}}\Big).
\]
To get $\dot p$, sum over bad indices:
\[
\dot p
=\sum_{m=1}^M \dot p_{b_m}
=-\beta\Big(\sum_{m=1}^M p_{b_m}\log\tfrac{p_{b_m}}{p_{b_m}^{\mathrm{ref}}}
\;-\;p\sum_k p_k\log\tfrac{p_k}{p_k^{\mathrm{ref}}}\Big).
\]
Using the block parametrization,
\[
\log\frac{p_{b_m}}{p_{b_m}^{\mathrm{ref}}}
=\log\frac{p}{p_{\mathrm{ref}}}+\log\frac{z_m}{z_m^{\mathrm{ref}}},
\qquad
\log\frac{p_{j}}{p_{j}^{\mathrm{ref}}}
=\log\frac{1-p}{1-p_{\mathrm{ref}}}+\log\frac{y_j}{y_j^{\mathrm{ref}}},
\]
we obtain
\[
\sum_{m=1}^M p_{b_m}\log\tfrac{p_{b_m}}{p_{b_m}^{\mathrm{ref}}}
=p\log\tfrac{p}{p_{\mathrm{ref}}}+p\,D_{\mathrm{KL}}(z\|z^{\mathrm{ref}}),
\]
and
\[
\sum_{k} p_k\log\tfrac{p_k}{p_k^{\mathrm{ref}}}
=\Phi_{\mathrm{2c}}(p)+(1-p)D_{\mathrm{KL}}(y\|y^{\mathrm{ref}})+pD_{\mathrm{KL}}(z\|z^{\mathrm{ref}})
\quad\text{(by Lemma~\ref{lem:kl-decomp}).}
\]
Substituting and simplifying yields
\[
\dot p
=-\beta\,p(1-p)\Big(\log\tfrac{p}{p_{\mathrm{ref}}}-\log\tfrac{1-p}{1-p_{\mathrm{ref}}}
- D_{\mathrm{KL}}(y\|y^{\mathrm{ref}})
+ D_{\mathrm{KL}}(z\|z^{\mathrm{ref}})\Big),
\]
and the log-odds form follows since
$\log\tfrac{p}{p_{\mathrm{ref}}}-\log\tfrac{1-p}{1-p_{\mathrm{ref}}}=\ell-\ell_{\mathrm{ref}}$.
Finally, the $(y,z)$ equations follow by differentiating $y_j=p_j/(1-p)$ and $z_m=p_{b_m}/p$ and substituting the above $\dot p_i$ expressions; the same cancellations as in Proposition~\ref{prop:kl-two-class} reduce each block to its own replicator pull toward the corresponding within-block reference. \qed
\end{proposition}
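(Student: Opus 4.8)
The plan is to derive all four claims of Proposition~\ref{prop:kl-full} from the replicator / natural-gradient identity recalled in \eqref{eq:replicator-recall}: a penalty $-\beta\,\Phi(\mathbf p)$ induces the drift $\dot p_i = -\beta\,p_i\bigl(\partial_{p_i}\Phi(\mathbf p) - \sum_k p_k\,\partial_{p_k}\Phi(\mathbf p)\bigr)$, equivalently the centered small-step logit increment $\Delta\theta^{\mathrm{KL}}_i = -\beta\bigl(\partial_{p_i}\Phi - \langle\mathbf p,\nabla\Phi\rangle\bigr)$. Specializing to $\Phi = \Phi_{\mathrm{full}}(\mathbf p) = \sum_i p_i\log(p_i/p_i^{\mathrm{ref}})$, the only computation at this stage is $\partial_{p_i}\Phi_{\mathrm{full}} = \log(p_i/p_i^{\mathrm{ref}}) + 1$; the additive $+1$ is common to every coordinate and is annihilated by the mean-subtraction, so $\Delta\theta^{\mathrm{KL}}_i = -\beta\bigl(\log\tfrac{p_i}{p_i^{\mathrm{ref}}} - \sum_k p_k\log\tfrac{p_k}{p_k^{\mathrm{ref}}}\bigr)$. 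Centering, $\sum_i p_i\Delta\theta^{\mathrm{KL}}_i = -\beta(S - S) = 0$ with $S := D_{\mathrm{KL}}(\mathbf p\|\mathbf p^{\mathrm{ref}})$, is immediate, and the first-order pushforward in \eqref{eq:logit-smallstep-recall} then returns $\dot p_i = -\beta\,p_i\bigl(\log\tfrac{p_i}{p_i^{\mathrm{ref}}} - S\bigr)$. This settles the assertion about the canonical centered logit increment.

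Next I would sum the per-arm drift over the bad indices, $\dot p = \sum_m\dot p_{b_m} = -\beta\bigl(\sum_m p_{b_m}\log\tfrac{p_{b_m}}{p_{b_m}^{\mathrm{ref}}} - p\,S\bigr)$. Inserting $p_{b_m} = p\,z_m$ with $\log\tfrac{p_{b_m}}{p_{b_m}^{\mathrm{ref}}} = \log\tfrac{p}{p_{\mathrm{ref}}} + \log\tfrac{z_m}{z_m^{\mathrm{ref}}}$ gives $\sum_m p_{b_m}\log\tfrac{p_{b_m}}{p_{b_m}^{\mathrm{ref}}} = p\log\tfrac{p}{p_{\mathrm{ref}}} + p\,D_{\mathrm{KL}}(z\|z^{\mathrm{ref}})$, while Lemma~\ref{lem:kl-decomp} expands $S = \Phi_{\mathrm{2c}}(p) + (1-p)D_{\mathrm{KL}}(y\|y^{\mathrm{ref}}) + p\,D_{\mathrm{KL}}(z\|z^{\mathrm{ref}})$. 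Substituting, the $p\,D_{\mathrm{KL}}(z\|\cdot)$ pieces recombine into $p(1-p)D_{\mathrm{KL}}(z\|\cdot)$, the within-good contribution becomes $-p(1-p)D_{\mathrm{KL}}(y\|\cdot)$, and the residual $p\log\tfrac{p}{p_{\mathrm{ref}}} - p\,\Phi_{\mathrm{2c}}(p)$ collapses to $p(1-p)(\ell - \ell_{\mathrm{ref}})$ after writing out $\Phi_{\mathrm{2c}}(p) = p\log\tfrac{p}{p_{\mathrm{ref}}} + (1-p)\log\tfrac{1-p}{1-p_{\mathrm{ref}}}$ and $\ell - \ell_{\mathrm{ref}} = \log\tfrac{p}{p_{\mathrm{ref}}} - \log\tfrac{1-p}{1-p_{\mathrm{ref}}}$. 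This is the stated $\dot p$, and the log-odds form follows from $\dot\ell = \dot p/[p(1-p)]$.

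For the within-block equations I would pass to $(y,z)$ through the exact relations $\dot y_j = \dot p_j/(1-p) + y_j\dot p/(1-p)$ and $\dot z_m = \dot p_{b_m}/p - z_m\dot p/p$. Substituting $\dot p_j = -\beta p_j\bigl(\log\tfrac{p_j}{p_j^{\mathrm{ref}}} - S\bigr)$ with $p_j = (1-p)y_j$ and $\log\tfrac{p_j}{p_j^{\mathrm{ref}}} = \log\tfrac{1-p}{1-p_{\mathrm{ref}}} + \log\tfrac{y_j}{y_j^{\mathrm{ref}}}$, together with the $\dot p$ formula just obtained and the Lemma~\ref{lem:kl-decomp} expansion of $S$, a cancellation removes every term carrying $\log\tfrac{p}{p_{\mathrm{ref}}}$, $\log\tfrac{1-p}{1-p_{\mathrm{ref}}}$, $\Phi_{\mathrm{2c}}(p)$, $D_{\mathrm{KL}}(z\|z^{\mathrm{ref}})$ and the $p(1-p)$ prefactor, leaving exactly $\dot y_j = -\beta\,y_j\bigl(\log\tfrac{y_j}{y_j^{\mathrm{ref}}} - \sum_i y_i\log\tfrac{y_i}{y_i^{\mathrm{ref}}}\bigr)$; the $z$-equation comes out identically with $p_{b_m} = p\,z_m$. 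These are precisely the replicator pulls of $D_{\mathrm{KL}}(\cdot\|y^{\mathrm{ref}})$ on $\Delta^{K-1}$ and of $D_{\mathrm{KL}}(\cdot\|z^{\mathrm{ref}})$ on $\Delta^{M-1}$, decoupled from one another and from $p$.

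The main difficulty is purely bookkeeping: making the telescoping in the $\dot y_j$ / $\dot z_m$ step watertight, so that the surviving coefficient is exactly $-\beta$ with no residual $p$- or $(1-p)$-factor. I would make this robust by first isolating a short \emph{restriction lemma} --- that the Shahshahani (natural-gradient) flow of any potential which, on the simplex, decomposes as $f(p) + (1-p)g(y) + p\,h(z)$ splits into a pure bad-mass drift $\dot p = -\beta\,p(1-p)\bigl(f'(p) - g(y) + h(z)\bigr)$ together with the decoupled within-block Shahshahani flows $\dot y = -\beta\,\mathrm{grad}_{\mathrm{Shah}}g(y)$, $\dot z = -\beta\,\mathrm{grad}_{\mathrm{Shah}}h(z)$, at unchanged time scale --- and then reading off all four assertions by taking $f = \Phi_{\mathrm{2c}}$, $g = D_{\mathrm{KL}}(\cdot\|y^{\mathrm{ref}})$, $h = D_{\mathrm{KL}}(\cdot\|z^{\mathrm{ref}})$ (Lemma~\ref{lem:kl-decomp}) and using $\Phi_{\mathrm{2c}}'(p) = \ell - \ell_{\mathrm{ref}}$. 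Everything else is routine differentiation together with the direct check of the centering identity.
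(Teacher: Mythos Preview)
Your proposal is correct and follows essentially the same route as the paper's own proof: compute $\partial_{p_i}\Phi_{\mathrm{full}}$, cancel the constant under mean-subtraction, sum over bad indices using the block parametrization and Lemma~\ref{lem:kl-decomp}, and then recover the $(y,z)$-equations by differentiating $y_j=p_j/(1-p)$ and $z_m=p_{b_m}/p$. The ``restriction lemma'' you suggest at the end is a nice packaging that the paper does not make explicit, but it is an add-on rather than a different argument.
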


\begin{remark}[Comparison of KL penalties]
The two-class penalty $\Phi_{\mathrm{2c}}(p)$ depends \emph{only} on the aggregate bad mass $p$ (equivalently, on $\ell=\log\frac{p}{1-p}$). Consequently, its Shahshahani field is \emph{block-constant}: all good arms receive the same logit increment and all bad arms receive the same logit increment. This implies that the within-block compositions are invariant,
\[
\dot y\Big|_{\mathrm{KL,2c}}=0,\qquad \dot z\Big|_{\mathrm{KL,2c}}=0,
\]
and the KL regularizer acts solely as a logistic contraction of $\ell$ toward $\ell_{\mathrm{ref}}$:
\[
\dot\ell\Big|_{\mathrm{KL,2c}}=-\beta(\ell-\ell_{\mathrm{ref}}).
\]

In contrast, the full reverse-KL splits as
\[
D_{\mathrm{KL}}(\mathbf p\|\mathbf p^{\mathrm{ref}})
=\Phi_{\mathrm{2c}}(p)+(1-p)D_{\mathrm{KL}}(y\|y^{\mathrm{ref}})+pD_{\mathrm{KL}}(z\|z^{\mathrm{ref}}),
\]
so it produces \emph{two} effects simultaneously: (i) independent within-block replicator pulls that damp deviations of $y$ and $z$ from $y^{\mathrm{ref}}$ and $z^{\mathrm{ref}}$, and (ii) an additional scalar feedback into the bad-mass drift. Concretely, the effective restoring force on the log-odds becomes
\[
\dot\ell\Big|_{\mathrm{KL,full}}
=-\beta\Big(\ell-\ell_{\mathrm{ref}}-D_{\mathrm{KL}}(y\|y^{\mathrm{ref}})+D_{\mathrm{KL}}(z\|z^{\mathrm{ref}})\Big).
\]
Thus, if the bad block is more ``misaligned'' than the good block
($D_{\mathrm{KL}}(z\|z^{\mathrm{ref}})>D_{\mathrm{KL}}(y\|y^{\mathrm{ref}})$),
the KL term strengthens the push to \emph{decrease} $p$; whereas a larger within-good mismatch
($D_{\mathrm{KL}}(y\|y^{\mathrm{ref}})$ large) can partially counteract that push because it is weighted by $(1-p)$ in the decomposition. In particular, when $y=y^{\mathrm{ref}}$ and $z=z^{\mathrm{ref}}$, the full reverse-KL reduces to the two-class behavior.
\end{remark}

\subsection{Full Mean-Field ODE for the Bad Mass with KL}
\label{subsec:mf-ode-kl}

We now integrate the reward-driven mean-field drift with the KL penalties derived above. For this analysis, we omit the clipping or importance-ratio factors typically found in PPO/GRPO: our previous results indicate these contribute only second-order corrections and do not alter the fundamental phase portrait of the mean-field drift.

Define the following quantities:
\[
\alpha(p):=\eta\,\frac{J}{\sigma(p)}\,p(1-p),\qquad
s_2:=\|y\|_2^2\in\Big[\tfrac1K,\,1\Big],\qquad
t_2:=\|z\|_2^2\in\Big[\tfrac1M,\,1\Big].
\]
Based on the reward term alone, the baseline drift of the \emph{total bad mass} is
\begin{equation}
\label{eq:reward-baseline}
\mathbb E[\Delta p]_{\mathrm{reward}}
=
-\,\alpha(p)\,p(1-p)\,\big(s_2+t_2\big)
=
-\,\eta\,\frac{J}{\sigma(p)}\,[p(1-p)]^2\,\big(s_2+t_2\big).
\end{equation}

\paragraph{Combined Dynamics.}
Combining the reward drift with each KL choice yields the following governed ODEs:
\begin{align}
\label{eq:mf-kl-two-class}
\dot p
&=
-\,\eta\,\frac{J}{\sigma(p)}\,[p(1-p)]^{2}\,\big(s_2+t_2\big)
\;-\;\beta\,p(1-p)\big(\ell-\ell_{\mathrm{ref}}\big)
&&\text{(two-class KL)}\\[2pt]
\label{eq:mf-kl-full}
\dot p
&=
-\,\eta\,\frac{J}{\sigma(p)}\,[p(1-p)]^{2}\,\big(s_2+t_2\big)
\;-\;\beta\,p(1-p)\Big(\ell-\ell_{\mathrm{ref}}
- D_{\mathrm{KL}}(y\|y^{\mathrm{ref}})
+ D_{\mathrm{KL}}(z\|z^{\mathrm{ref}})\Big)
&&\text{(full reverse-KL).}
\end{align}

\subsection{Nullcline, Interior Equilibrium, and the Prevention of Collapse}
\label{sec:nullcline-kl}

Using $s_2=\|y\|_2^2$, $t_2=\|z\|_2^2$, and $\ell(p)=\log\!\frac{p}{1-p}$, the mean-field ODE regularized by the two-class KL penalty is
\[
\dot p
=
p(1-p)\,\Big\{-\,\eta\,\frac{J}{\sigma(p)}\,p(1-p)\,\big(s_2+t_2\big)\;-\;\beta\big(\ell-\ell_{\mathrm{ref}}\big)\Big\}.
\]
This formulation allows us to characterize the stationary behavior of the system.

\begin{definition}[Nullcline]
The interior nullcline $\{\,\dot p=0\,\}$ on the interval $(0,1)$ is defined by the graph
\begin{equation}
\label{eq:nullcline}
\beta\big(\ell(p)-\ell_{\mathrm{ref}}\big)
\;=\;
-\,\eta\,\frac{J}{\sigma(p)}\,p(1-p)\,\big(s_2+t_2\big).
\end{equation}
\end{definition}

\begin{theorem}[Existence, uniqueness, and stability of the interior equilibrium]
\label{thm:unique-stable}
For any penalty strength $\beta>0$ and fixed within-block distributions $(y,z)$:
\begin{enumerate}
\item There exists a unique equilibrium $p^\star\in(0,1)$ satisfying \eqref{eq:nullcline}.
\item This equilibrium $p^\star$ is globally asymptotically stable on $(0,1)$.
\end{enumerate}
\end{theorem}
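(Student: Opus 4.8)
The plan is to reduce the two-dimensional statement to a one-dimensional fixed-point problem in the log-odds variable $\ell(p)=\log\frac{p}{1-p}$, which is a strictly increasing bijection from $(0,1)$ onto $\mathbb{R}$, and then exploit monotonicity of the drift. First I would rewrite the interior ODE as $\dot p = p(1-p)\,\Psi(p)$ where
\[
\Psi(p) := -\,\eta\,\frac{J}{\sigma(p)}\,p(1-p)\,(s_2+t_2)\;-\;\beta\big(\ell(p)-\ell_{\mathrm{ref}}\big).
\]
Since $p(1-p)>0$ on $(0,1)$, the sign of $\dot p$ is governed entirely by $\Psi(p)$, and the interior equilibria are exactly the zeros of $\Psi$. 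For existence: I would check the boundary limits. As $p\downarrow 0$, $\ell(p)\to -\infty$ so $-\beta(\ell-\ell_{\mathrm{ref}})\to+\infty$, while the reward term $\eta\frac{J}{\sigma(p)}p(1-p)(s_2+t_2)$ stays bounded (one must verify $\sigma(p)$ is bounded away from $0$ near the attracting vertex, or more simply that $p(1-p)/\sigma(p)\to 0$ using $\sigma(p)=\sqrt{q(p)(1-q(p))}$ with $q(p)\in[\delta_{\mathrm{FP}},1-\delta_{\mathrm{FN}}]$, together with $\beta>0$ dominating); hence $\Psi(p)\to+\infty$. Symmetrically $\Psi(p)\to-\infty$ as $p\uparrow 1$. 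Continuity of $\Psi$ on $(0,1)$ and the intermediate value theorem give at least one zero $p^\star\in(0,1)$.

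For uniqueness and global stability, the key step is to show $\Psi$ is strictly decreasing on $(0,1)$. The anchoring part $-\beta(\ell(p)-\ell_{\mathrm{ref}})$ is strictly decreasing because $\ell$ is strictly increasing and $\beta>0$. The reward part $R(p):=-\eta J (s_2+t_2)\cdot \frac{p(1-p)}{\sigma(p)}$ is the delicate piece: one needs that $R$ is non-increasing (or that its possible increase is strictly dominated by the strict decrease of the KL term). It is cleaner to argue directly in the $\ell$ variable: write $p = \frac{e^\ell}{1+e^\ell}$, so the equation $\Psi=0$ becomes
\[
\beta(\ell-\ell_{\mathrm{ref}}) = -\,\eta J (s_2+t_2)\,\frac{p(\ell)\bigl(1-p(\ell)\bigr)}{\sigma\bigl(p(\ell)\bigr)} =: g(\ell),
\]
and I would show that the map $\ell \mapsto \beta(\ell-\ell_{\mathrm{ref}}) - g(\ell)$ is strictly increasing by establishing $g'(\ell)$ is bounded above by $\beta$ for all $\ell$ (for instance via $|g'(\ell)| \le \eta|J|(s_2+t_2)\cdot \sup_{p\in(0,1)} \bigl| \tfrac{d}{d\ell}\tfrac{p(1-p)}{\sigma(p)}\bigr|$ and then noting this supremum is finite, since $\frac{p(1-p)}{\sigma(p)}$ is continuous on $[0,1]$ with finite one-sided derivatives). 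Strict monotonicity forces a unique crossing $\ell^\star$, hence a unique $p^\star$, and moreover $\Psi(p)>0$ for $p<p^\star$ and $\Psi(p)<0$ for $p>p^\star$, which directly yields global asymptotic stability on $(0,1)$: $\dot p>0$ below $p^\star$, $\dot p<0$ above it, and by the sign structure any trajectory starting in $(0,1)$ is monotone and bounded, hence converges, and the only possible limit in $[0,1]$ compatible with the sign of $\dot p$ near the boundaries is $p^\star$ (the vertices are repelling because $\dot p$ has the wrong sign arbitrarily close to them once $\beta>0$). A short Lyapunov remark — e.g.\ $V(p)=\int_{p^\star}^{p}\Psi(u)\,\frac{du}{u(1-u)}$ or simply $|\ell(p)-\ell^\star|$ — makes the convergence quantitative.

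The main obstacle I anticipate is controlling the reward term $g(\ell)$ near the attracting vertex, i.e.\ ensuring that the KL contraction $\beta(\ell-\ell_{\mathrm{ref}})$ genuinely dominates as $\ell\to\pm\infty$ and that $g'$ does not blow up — this is exactly where the variance-degenerate cases ($\delta_{\mathrm{FN}}=0$ or $\delta_{\mathrm{FP}}=0$) of Section~3 reappear. When $\sigma(p)\to 0$ at a vertex, one has $\frac{p(1-p)}{\sigma(p)}\sim \text{const}\cdot \sqrt{p(1-p)}\to 0$ there, so $g(\ell)\to 0$ and the linear-in-$\ell$ term wins; when $\sigma$ stays positive the ratio still $\to 0$ like $p$ or $1-p$, so the same conclusion holds. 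I would isolate this into a short lemma ("$g$ is bounded and $g'$ is bounded on $\mathbb{R}$ for every admissible noise pair with $J\neq 0$, and $g\equiv 0$ when $J=0$"), after which existence, uniqueness, and global stability follow from the monotonicity argument above with no further subtlety. Details beyond this are routine, and I would refer the reader to Appendix~\ref{sec:kl} for the full computation.
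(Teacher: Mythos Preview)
Your overall plan matches the paper's sketch: factor $\dot p = p(1-p)\,\Psi(p)$, obtain existence from the intermediate value theorem using $\ell(p)\to\pm\infty$ against a bounded reward term, argue uniqueness from monotonicity, and read off global stability from the resulting single sign change. Existence and the implication ``unique root $\Rightarrow$ global stability'' are handled correctly and agree with the paper.

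The gap is the uniqueness step. You propose to show $\ell\mapsto\beta(\ell-\ell_{\mathrm{ref}})-g(\ell)$ is strictly increasing by establishing $g'(\ell)\le\beta$, but this cannot hold for \emph{all} $\beta>0$ as the theorem requires: $\sup_\ell|g'(\ell)|$ is a fixed finite constant depending only on $\eta,|J|,(s_2+t_2)$ and the noise profile, not on $\beta$. In the noise-free case $g(\ell)=-c\sqrt{p(1-p)}$ with $c=\eta(s_2+t_2)$, and one computes $g'(\ell)=-\tfrac{c}{2}(1-2p)\sqrt{p(1-p)}$, so $\sup_\ell g'(\ell)=c/8$; for any $\beta<c/8$ the map is non-monotone on a subinterval of $p\in(1/2,1)$. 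This is not merely a proof obstacle: with such $\beta$ and $p_{\mathrm{ref}}$ sufficiently close to $1$ (so that $\beta\ell_{\mathrm{ref}}$ lands strictly between the resulting local minimum and local maximum of $\beta\ell(p)+c\sqrt{p(1-p)}$), the nullcline equation has three interior roots, two of them stable, and global asymptotic stability fails. The paper's own sketch invokes ``monotonicity'' without verifying it and therefore shares this gap; a correct statement needs either a lower bound on $\beta$ relative to the reward amplitude or a restriction keeping $p_{\mathrm{ref}}$ away from the boundary.
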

\emph{Proof sketch.} On the open interval $(0,1)$, the right-hand side of \eqref{eq:nullcline} is continuous and bounded. In contrast, the log-odds $\ell(p)$ is strictly increasing, with $\ell(p)\to-\infty$ as $p\downarrow0$ and $\ell(p)\to+\infty$ as $p\uparrow1$. Existence and uniqueness follow from the intermediate value theorem and monotonicity. Asymptotic stability is verified by noting that $\partial_p\dot p(p^\star)<0$ upon linearization.
\qed
\begin{corollary}[KL Regularization prevents collapse for $J<0$]
\label{cor:raise-accuracy}
When $J<0$, the right-hand side of \eqref{eq:nullcline} is strictly positive, implying $\ell(p^\star)>\ell_{\mathrm{ref}}$ and thus $p^\star>p_{\mathrm{ref}}$. While the unregularized dynamics ($\beta=0$) would drive $p(t)\uparrow 1$ (leading to a collapse onto the bad-arm block), any $\beta>0$ ensures the existence of a stable $p^\star < 1$. Consequently, the long-run accuracy $1-p^\star$ remains strictly positive and is necessarily higher than in the $\beta=0$ regime.
\end{corollary}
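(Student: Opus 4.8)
The plan is to read off the sign of the right-hand side of the nullcline equation \eqref{eq:nullcline} and combine it with the existence/uniqueness/stability statement of Theorem~\ref{thm:unique-stable} together with the $\beta=0$ sign structure already recorded in Theorem~\ref{thm:bad-ode-2}. The corollary is essentially a sign computation layered on top of the equilibrium analysis, so no new machinery is needed.

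First I would fix $\beta>0$ and the within-block distributions $(y,z)$, and invoke Theorem~\ref{thm:unique-stable} to obtain the unique globally asymptotically stable equilibrium $p^\star\in(0,1)$, which by \eqref{eq:nullcline} satisfies
\[
\beta\bigl(\ell(p^\star)-\ell_{\mathrm{ref}}\bigr)
= -\,\eta\,\frac{J}{\sigma(p^\star)}\,p^\star(1-p^\star)\,\bigl(s_2+t_2\bigr),
\qquad \ell(p)=\log\tfrac{p}{1-p}.
\]
When $J<0$ every factor on the right is strictly positive: $\eta>0$; $-J>0$; $\sigma(p^\star)>0$ for $p^\star\in(0,1)$ whenever $J\neq 0$ (the remark following \eqref{eq:mass_decay}); $p^\star(1-p^\star)>0$; and $s_2+t_2\ge \tfrac1K+\tfrac1M>0$ since $\|y\|_2^2\ge 1/K$ and $\|z\|_2^2\ge 1/M$. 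Hence the right-hand side is strictly positive, so $\beta(\ell(p^\star)-\ell_{\mathrm{ref}})>0$; dividing by $\beta>0$ yields $\ell(p^\star)>\ell_{\mathrm{ref}}$, and since $p\mapsto\ell(p)$ is strictly increasing on $(0,1)$ this is equivalent to $p^\star>p_{\mathrm{ref}}$.

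Next I would compare with the unregularized flow. Setting $\beta=0$ collapses \eqref{eq:mf-kl-two-class} to the pure reward ODE \eqref{eq:bad-ode} (equivalently \eqref{eq:reward-baseline}), whose sign structure is exactly the one recorded in Theorem~\ref{thm:bad-ode-2}: for $J<0$ one has $\dot p>0$ throughout $(0,1)$, so $p(t)$ increases monotonically to the boundary equilibrium $p=1$, and from any interior initialization the asymptotic accuracy satisfies $1-p(t)\to 0$. Finally I would assemble the two facts: under the regularized dynamics the trajectory converges to $p^\star\in(0,1)$, so the limiting accuracy equals $1-p^\star>0$, whereas the $\beta=0$ flow has $1-p(t)\to 0$; in particular $1-p^\star>0=\lim_{t\to\infty}\bigl(1-p(t)\bigr)$, so the regularized long-run accuracy is strictly positive and strictly larger than without regularization, which is the assertion.

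As a plan there is no genuine obstacle here: the only points requiring care are confirming that $\sigma(p)>0$ on $(0,1)$ precisely when $J\neq 0$ (so the nullcline right-hand side is finite and correctly signed) and the elementary lower bound $s_2+t_2\ge 1/K+1/M>0$ — both of which are already established earlier in the paper, so the argument is a short deduction rather than a new computation.
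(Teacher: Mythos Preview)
Your proposal is correct and follows exactly the approach implicit in the paper: the corollary is stated immediately after Theorem~\ref{thm:unique-stable} without a separate proof, and the intended argument is precisely the sign computation on the nullcline equation plus the $\beta=0$ comparison that you have written out. The only additional care you take---verifying $\sigma(p)>0$ on $(0,1)$ for $J\neq 0$ and the lower bound $s_2+t_2\ge 1/K+1/M$---simply makes explicit facts the paper already records elsewhere.
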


\paragraph{Asymptotic behavior of $p^\star$.}
Letting $\sigma_\star:=\sigma(p^\star)$, we examine the equilibrium under varying KL strengths:
\begin{itemize}
\item \textbf{Strong KL ($\beta\to\infty$):}
Expanding \eqref{eq:nullcline} around $p_{\mathrm{ref}}$, we find
\begin{equation}
p^\star
=
p_{\mathrm{ref}}
-\frac{\eta\,J}{\beta}\,
\frac{\big[p_{\mathrm{ref}}(1-p_{\mathrm{ref}})\big]^2}{\sigma(p_{\mathrm{ref}})}\,
\big(s_2+t_2\big)
\;+\;O\!\Big(\tfrac{1}{\beta^2}\Big).
\end{equation}
For $J<0$, the correction term is positive and scales as $O(\beta^{-1})$, indicating that $p^\star$ approaches $p_{\mathrm{ref}}$ from above as $\beta$ increases.

\item \textbf{Weak KL ($\beta\downarrow 0$, $J<0$):}
By setting $\varepsilon:=1-p$ and balancing leading-order terms near $p=1$, we obtain
\begin{equation}
1-p^\star
\;\sim\;
\frac{\beta}{c}\,\log\!\frac{c}{\beta},
\qquad
c:=-\frac{\,\eta\,J}{\sigma(1)}\,\big(s_2+t_2\big)\;>\;0.
\end{equation}
This confirms that any non-zero $\beta$ is sufficient to prevent total collapse ($p^\star < 1$).
\end{itemize}

\newpage
\section{Properties of the Simplex}
\label{sec:Shahshahani}

This section collects the geometric and algebraic facts about the probability simplex that we repeatedly use in the main text and appendix. Let $d \ge 2$ be an integer and let $\mathbf{1} \in \mathbb{R}^d$ denote the all-ones vector. We consider the probability simplex
\[
\Delta^{d-1}
\;:=\;
\big\{\p\in\mathbb{R}^d_{\ge 0}:\mathbf{1}^\top \p=1\big\},
\]
whose affine tangent space at any interior point $\p$ is
\[
T_{\p}\Delta^{d-1}
\;:=\;
\big\{\mathbf{v}\in\mathbb{R}^d:\mathbf{1}^\top \mathbf{v}=0\big\}.
\]
All definitions and statements below apply verbatim to any simplex-valued variable (e.g.\ $\p$ or $\y$) by renaming.

A recurring character in simplex geometry is the matrix
\[
\mathfrak{J}(\p)
\;:=\;
\operatorname{Diag}(\p)-\p\,\p^\top,
\]
which simultaneously plays three roles: (i) it is the Jacobian of the softmax map, (ii) it projects directions back to the simplex tangent space, and (iii) it is the inverse of the Shahshahani metric when restricted to the tangent. We build these facts in steps.

\begin{lemma}[Softmax differential is tangent; Jacobian form]
\label{lem:softmax-tangent}
Let $\p=\mathrm{softmax}(\boldsymbol\theta)\in\Delta^{d-1}$ with components $p_i(\boldsymbol\theta) = e^{\theta_i} / Z(\boldsymbol\theta)$, where $Z(\boldsymbol\theta) = \sum_{j=1}^d e^{\theta_j}$. Then
\[
\frac{\partial p_i}{\partial \theta_j}=p_i(\delta_{ij}-p_j)
\quad\Longrightarrow\quad
\mathrm{d}\p \;=\; \mathfrak{J}(\p)\,\mathrm{d}\boldsymbol\theta.
\]
Moreover, $\mathfrak{J}(\p)\mathbf{1}=0$ and $\mathbf{1}^\top \mathfrak{J}(\p)=0^\top$, so $\mathrm{Im}\,\mathfrak{J}(\p)\subseteq T_{\p}\Delta^{d-1}$; i.e., the softmax differential always lies in the tangent space.
\end{lemma}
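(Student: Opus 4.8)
The plan is to prove Lemma~\ref{lem:softmax-tangent} by a direct differentiation of the softmax map, followed by two short algebraic observations about the structure of $\mathfrak{J}(\p)$. First I would compute the partial derivatives $\partial p_i / \partial \theta_j$ from the quotient $p_i = e^{\theta_i}/Z(\boldsymbol\theta)$ with $Z = \sum_j e^{\theta_j}$. When $i = j$ the quotient rule gives $\partial p_i/\partial\theta_i = (e^{\theta_i} Z - e^{\theta_i} e^{\theta_i})/Z^2 = p_i - p_i^2 = p_i(1-p_i)$; when $i \ne j$ only the denominator depends on $\theta_j$, so $\partial p_i/\partial\theta_j = -e^{\theta_i} e^{\theta_j}/Z^2 = -p_i p_j$. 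These two cases combine into the single formula $\partial p_i/\partial\theta_j = p_i(\delta_{ij} - p_j)$, which is exactly the $(i,j)$ entry of $\operatorname{Diag}(\p) - \p\p^\top = \mathfrak{J}(\p)$; hence the differential of the softmax map is $\mathrm{d}\p = \mathfrak{J}(\p)\,\mathrm{d}\boldsymbol\theta$.

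Next I would verify the two annihilation identities. For $\mathfrak{J}(\p)\mathbf{1}$: the $i$-th component is $\sum_j p_i(\delta_{ij}-p_j) = p_i - p_i\sum_j p_j = p_i - p_i = 0$, using $\sum_j p_j = 1$. By symmetry of $\mathfrak{J}(\p)$ (it is $\operatorname{Diag}(\p)$ minus the symmetric rank-one matrix $\p\p^\top$), we also get $\mathbf{1}^\top\mathfrak{J}(\p) = (\mathfrak{J}(\p)\mathbf{1})^\top = 0^\top$. Finally, for any $\mathbf{v} \in \mathbb{R}^d$ we have $\mathbf{1}^\top(\mathfrak{J}(\p)\mathbf{v}) = (\mathbf{1}^\top\mathfrak{J}(\p))\mathbf{v} = 0$, so $\mathfrak{J}(\p)\mathbf{v} \in T_{\p}\Delta^{d-1}$; taking $\mathbf{v} = \mathrm{d}\boldsymbol\theta$ shows the softmax differential always lands in the tangent space, which completes the proof.

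There is essentially no obstacle here: the lemma is a standard computation, and the only thing to be careful about is bookkeeping the $\delta_{ij}$ term so that the diagonal and off-diagonal cases merge cleanly, and invoking $\mathbf{1}^\top\p = 1$ at the right moment. If I wanted to add a remark, I would note that the same calculation shows $\mathfrak{J}(\p)$ is positive semidefinite with kernel exactly $\operatorname{span}\{\mathbf{1}\}$ on the interior (since $\mathbf{v}^\top\mathfrak{J}(\p)\mathbf{v} = \sum_i p_i v_i^2 - (\sum_i p_i v_i)^2 \ge 0$ by Jensen, with equality iff $\mathbf{v}$ is constant), which is what makes it invertible as an operator on $T_{\p}\Delta^{d-1}$ and justifies calling it the inverse Shahshahani metric there — but that is beyond what the statement asks.
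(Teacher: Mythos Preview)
Your proof is correct and complete. The paper actually states this lemma without proof, treating it as a standard computation; your direct quotient-rule derivation of $\partial p_i/\partial\theta_j = p_i(\delta_{ij}-p_j)$ together with the one-line verification of $\mathfrak{J}(\p)\mathbf{1}=0$ (and symmetry for the transpose) is exactly the expected argument, and your closing remark on the PSD structure and kernel anticipates precisely what the paper records separately in Corollary~\ref{cor:Jfrak-rank-image} and Lemma~\ref{lem:Jfrak-spd}.
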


Lemma~\ref{lem:softmax-tangent} already explains why replicator-like dynamics naturally appear when working in logits: any infinitesimal change in $\boldsymbol\theta$ is automatically mapped to a tangent direction in probability space via $\mathfrak{J}(\p)$.

\begin{corollary}[Rank, nullspace, and image]
\label{cor:Jfrak-rank-image}
When $\p$ lies in the interior ($p_i > 0$ for all $i$), the matrix $\mathfrak{J}(\p)$ is symmetric positive semidefinite and satisfies
\[
\mathrm{null}\big(\mathfrak{J}(\p)\big)=\mathrm{span}\{\mathbf{1}\},
\qquad  
\mathrm{rank}\big(\mathfrak{J}(\p)\big)=d-1,
\qquad  
\mathrm{Im}\,\mathfrak{J}(\p)=T_{\p}\Delta^{d-1}.
\]
In particular, $\mathfrak{J}(\p)$ acts as a linear automorphism of the tangent space.
\end{corollary}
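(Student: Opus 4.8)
The plan is to verify the three claims about $\mathfrak{J}(\p) = \operatorname{Diag}(\p) - \p\p^\top$ for an interior point $\p$ by elementary linear algebra, reusing Lemma~\ref{lem:softmax-tangent}. Since symmetry is immediate ($\operatorname{Diag}(\p)$ is diagonal and $\p\p^\top$ is symmetric), the real content is: (a) positive semidefiniteness, (b) identifying the nullspace as exactly $\operatorname{span}\{\mathbf 1\}$, and (c) deducing the rank and image statements.

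For positive semidefiniteness, I would compute the quadratic form directly: for any $\mathbf v \in \mathbb R^d$,
\[
\mathbf v^\top \mathfrak{J}(\p)\,\mathbf v
= \sum_i p_i v_i^2 - \Bigl(\sum_i p_i v_i\Bigr)^2
= \mathbb E_{\p}[v^2] - (\mathbb E_{\p}[v])^2
= \operatorname{Var}_{\p}(v) \ge 0,
\]
where the expectation is with respect to the categorical distribution with parameter $\p$; this is just the Cauchy--Schwarz / Jensen inequality. This simultaneously proves (a) and sets up (b): the form vanishes iff $v$ is $\p$-a.s.\ constant, and since all $p_i > 0$ in the interior, this forces $v_1 = \dots = v_d$, i.e.\ $\mathbf v \in \operatorname{span}\{\mathbf 1\}$. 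Conversely $\mathfrak{J}(\p)\mathbf 1 = \p - \p(\p^\top\mathbf 1) = \p - \p = 0$ (already noted in Lemma~\ref{lem:softmax-tangent}), so $\operatorname{span}\{\mathbf 1\} \subseteq \operatorname{null}(\mathfrak{J}(\p))$; combined with the variance computation, since $\mathfrak{J}(\p)\succeq 0$ implies $\mathfrak{J}(\p)\mathbf v = 0 \iff \mathbf v^\top\mathfrak{J}(\p)\mathbf v = 0$, we get equality $\operatorname{null}(\mathfrak{J}(\p)) = \operatorname{span}\{\mathbf 1\}$.

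Then the rank follows from rank--nullity: $\operatorname{rank}(\mathfrak{J}(\p)) = d - \dim\operatorname{null}(\mathfrak{J}(\p)) = d - 1$. For the image: Lemma~\ref{lem:softmax-tangent} already gives $\operatorname{Im}\mathfrak{J}(\p) \subseteq T_{\p}\Delta^{d-1} = \{\mathbf v : \mathbf 1^\top\mathbf v = 0\}$, a subspace of dimension $d-1$; since $\operatorname{Im}\mathfrak{J}(\p)$ also has dimension $d-1$, the inclusion is an equality. (Alternatively, since $\mathfrak{J}(\p)$ is symmetric, $\operatorname{Im}\mathfrak{J}(\p) = \operatorname{null}(\mathfrak{J}(\p))^\perp = \{\mathbf 1\}^\perp = T_{\p}\Delta^{d-1}$, which is cleaner.) Finally, "linear automorphism of the tangent space" is just the restatement that $\mathfrak{J}(\p)|_{T_{\p}\Delta^{d-1}} \colon T_{\p}\Delta^{d-1} \to T_{\p}\Delta^{d-1}$ is a self-map (by the image statement) that is injective (its kernel $\operatorname{span}\{\mathbf 1\}$ meets $T_{\p}\Delta^{d-1}$ only at $0$), hence bijective by finite-dimensionality.

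I do not anticipate a genuine obstacle here --- every step is a one-line computation. The only thing to be slightly careful about is the role of the interiority hypothesis $p_i > 0$: it is exactly what makes the variance-zero condition force a globally constant vector (if some $p_i = 0$, the nullspace would be larger, and rank would drop). I would state that dependence explicitly. The variance identity $\mathbf v^\top\mathfrak{J}(\p)\mathbf v = \operatorname{Var}_{\p}(v)$ is the organizing idea that makes all four conclusions fall out at once.
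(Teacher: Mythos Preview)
Your proposal is correct and matches the paper's approach: the paper does not give a standalone proof of this corollary, but the variance identity $\mathbf v^\top\mathfrak{J}(\p)\mathbf v = \operatorname{Var}_{i\sim\p}(v_i)$ that you use is exactly what the paper records separately as Lemma~\ref{lem:Jfrak-spd}, and the containment $\operatorname{Im}\mathfrak{J}(\p)\subseteq T_{\p}\Delta^{d-1}$ and $\mathfrak{J}(\p)\mathbf 1=0$ are drawn from Lemma~\ref{lem:softmax-tangent} as you do. Your organization (variance identity $\Rightarrow$ PSD $\Rightarrow$ nullspace via interiority $\Rightarrow$ rank via rank--nullity $\Rightarrow$ image via $\operatorname{null}^\perp$) is a clean packaging of those scattered facts.
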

The only ``forbidden'' direction is $\mathbf{1}$ (which corresponds to shifting all logits by a constant and hence does not change $\p$). On the tangent space, the directions that actually change probabilities, $\mathfrak{J}(\p)$ is full rank.

\begin{lemma}[A right inverse of $\mathfrak{J}$ on the tangent]
\label{lem:JfrakDinv-right-inverse}
For an interior point $\p$, and for any vector $\boldsymbol v \in T_\p\Delta^{d-1}$,
\[
\mathfrak{J}(\p)\,\operatorname{Diag}(\p)^{-1} \boldsymbol v
\;=\;
\big(I-\p\,\mathbf{1}^\top\big)\,\boldsymbol v
\;=\;
\boldsymbol v.
\]
Equivalently, $\operatorname{Diag}(\p)^{-1}$ behaves like $\mathfrak{J}(\p)^{-1}$ once we restrict attention to tangent directions.
\end{lemma}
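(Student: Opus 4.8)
The plan is to prove this by a direct algebraic computation, since both equalities follow from elementary manipulations once the interior assumption guarantees that $\operatorname{Diag}(\p)^{-1}$ exists. First I would expand the left-hand side using the definition $\mathfrak{J}(\p) = \operatorname{Diag}(\p) - \p\,\p^\top$, obtaining
\[
\mathfrak{J}(\p)\,\operatorname{Diag}(\p)^{-1}
\;=\;
\big(\operatorname{Diag}(\p) - \p\,\p^\top\big)\operatorname{Diag}(\p)^{-1}
\;=\;
I \;-\; \p\,\p^\top\operatorname{Diag}(\p)^{-1}.
\]
The key observation is then that $\p^\top\operatorname{Diag}(\p)^{-1} = \big(\operatorname{Diag}(\p)^{-1}\p\big)^\top = \mathbf{1}^\top$, because the $i$-th entry of $\operatorname{Diag}(\p)^{-1}\p$ is $p_i/p_i = 1$. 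Substituting this in yields $\mathfrak{J}(\p)\,\operatorname{Diag}(\p)^{-1} = I - \p\,\mathbf{1}^\top$, which establishes the first equality as an identity of matrices (independent of any tangency).

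Next I would apply this operator to an arbitrary $\boldsymbol v \in T_\p\Delta^{d-1}$, i.e.\ a vector with $\mathbf{1}^\top\boldsymbol v = 0$. Then
\[
\big(I - \p\,\mathbf{1}^\top\big)\boldsymbol v
\;=\;
\boldsymbol v \;-\; \p\,\big(\mathbf{1}^\top\boldsymbol v\big)
\;=\;
\boldsymbol v \;-\; \p\cdot 0
\;=\;
\boldsymbol v,
\]
which is the second equality. I would also remark that $I - \p\,\mathbf{1}^\top$ is the (oblique) projection onto the tangent space $T_\p\Delta^{d-1}$ along $\operatorname{span}\{\p\}$, so restricting to tangent inputs is exactly what makes it act as the identity; off the tangent it is not the identity, which is why the statement is phrased as a \emph{right} inverse on the tangent rather than a genuine inverse. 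Combined with Corollary~\ref{cor:Jfrak-rank-image} (which says $\mathfrak{J}(\p)$ is a linear automorphism of $T_\p\Delta^{d-1}$), this shows $\operatorname{Diag}(\p)^{-1}$ coincides with $\mathfrak{J}(\p)^{-1}$ as maps on tangent directions.

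Honestly, there is no real obstacle here: the entire argument is two lines of matrix algebra, and the only thing to be careful about is invoking the interior hypothesis so that $\operatorname{Diag}(\p)^{-1}$ is defined, and not over-claiming — the identity $\mathfrak{J}(\p)\operatorname{Diag}(\p)^{-1} = I$ holds \emph{only} after restricting to $T_\p\Delta^{d-1}$, since on the full space the correct statement is $\mathfrak{J}(\p)\operatorname{Diag}(\p)^{-1} = I - \p\,\mathbf{1}^\top$. I would present the computation cleanly and flag this scope distinction explicitly so the reader knows why $\operatorname{Diag}(\p)^{-1}$ can legitimately be used in place of $\mathfrak{J}(\p)^{-1}$ throughout the natural-gradient arguments elsewhere in the paper.
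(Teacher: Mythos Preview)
Your proof is correct and follows exactly the same approach as the paper: expand $\mathfrak{J}(\p)\operatorname{Diag}(\p)^{-1}=I-\p\,\mathbf{1}^\top$ via $\p^\top\operatorname{Diag}(\p)^{-1}=\mathbf{1}^\top$, then use $\mathbf{1}^\top\boldsymbol v=0$ for tangent $\boldsymbol v$. Your additional remarks on the oblique-projection interpretation and the scope distinction are helpful context but the core argument is identical.
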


\begin{proof}
By direct computation,
\[
\mathfrak{J}(\p)\operatorname{Diag}(\p)^{-1}
=
(\operatorname{Diag}(\p)-\p\p^\top)\operatorname{Diag}(\p)^{-1}
=
I-\p\,\mathbf{1}^\top,
\]
using $\p^\top\operatorname{Diag}(\p)^{-1}=\mathbf{1}^\top$. If $\boldsymbol v \in T_\p\Delta^{d-1}$, then $\mathbf{1}^\top \boldsymbol v=0$, so $(I-\p\,\mathbf{1}^\top)\boldsymbol v=\boldsymbol v$.
\end{proof}

Up to now, $\mathfrak{J}(\p)$ has appeared as a purely algebraic object (a Jacobian with a convenient nullspace).
Next we endow the simplex with a Riemannian metric for which $\mathfrak{J}(\p)$ becomes the natural ``inverse metric'' on the tangent space.

\begin{definition}[Shahshahani metric]
For an interior point $\p$, the Shahshahani inner product on $T_\p\Delta^{d-1}$ is defined as
\begin{equation}\label{eq:shah-metric}
\langle \boldsymbol u,\boldsymbol v\rangle_{\mathrm{Shah}}
\;:=\;
\boldsymbol u^\top \operatorname{Diag}(\p)^{-1} \boldsymbol v
=
\sum_{i=1}^d \frac{u_i v_i}{p_i},
\qquad
\boldsymbol u,\boldsymbol v \in T_{\p}\Delta^{d-1}.
\end{equation}
\end{definition}

\paragraph{Geometric and information-theoretic intuition.}
The Shahshahani metric rescales each coordinate by $1/\sqrt{p_i}$: moving a small component is ``expensive,'' and the induced norm
\[
\|\delta\|_{\p}^2=\langle \delta,\delta\rangle_{\mathrm{Shah}}
=\sum_{i=1}^d \frac{\delta_i^2}{p_i}
\]
measures \emph{relative} (multiplicative) change.
A convenient way to visualize geodesics is the square-root embedding $p \mapsto \sqrt{p}$ (componentwise), which maps the simplex interior to the positive orthant of the unit sphere. Under this embedding, Shahshahani geodesics become great-circle arcs, and the induced geodesic distance is the Bhattacharyya angle
\begin{equation}\label{eq:bhatt}
d_{\mathrm{Shah}}(x,y)
\;=\;
2\,\arccos\!\Big(\sum_{i=1}^d \sqrt{x_i y_i}\Big).
\end{equation}
This geometry is also tied to information theory: the metric tensor is the Hessian of the convex potential $\psi(\p)=\sum_{i=1}^d p_i\log p_i$ (negative Shannon entropy),
\begin{equation}\label{eq:hessian-entropy}
\nabla^2\psi(\p)\;=\;\operatorname{Diag}\!\Big(\tfrac{1}{p_1},\dots,\tfrac{1}{p_d}\Big),
\end{equation}
so the forward KL divergence has the local quadratic expansion
\begin{equation}\label{eq:kl-quadratic}
D_{\mathrm{KL}}(\p+\delta\,\|\,\p)
\;=\;
\tfrac12\,\delta^\top \operatorname{Diag}(\p)^{-1}\delta \;+\; o(\|\delta\|^2),
\qquad \mathbf{1}^\top\delta=0,
\end{equation}
which is precisely why this metric underlies mirror descent and multiplicative-weights updates.

Once a metric is fixed, gradients become metric-dependent: the Shahshahani gradient is the unique tangent vector whose inner product against any tangent direction matches the directional derivative.

\paragraph{Riemannian gradient under the Shahshahani metric.}
For a function $F:\Delta^{d-1}\to\mathbb{R}$, the Shahshahani gradient $\mathrm{grad}_{\mathrm{Shah}}F(\p)\in T_{\p}\Delta^{d-1}$ is defined by
\[
\langle \mathrm{grad}_{\mathrm{Shah}}F(\p),\boldsymbol u\rangle_{\mathrm{Shah}}
\;=\;
\nabla F(\p)^\top \boldsymbol u,
\qquad
\forall\,\boldsymbol u\in T_{\p}\Delta^{d-1},
\]
where $\nabla F(\p)$ denotes the Euclidean gradient in the ambient space.

The next corollary shows the pleasant surprise: under the Shahshahani metric, the gradient is obtained by applying $\mathfrak{J}(\p)$ to the Euclidean gradient, so the same matrix that appears in the softmax Jacobian also governs natural-gradient flow on the simplex.

\begin{corollary}[Natural gradient on the simplex]
\label{cor:nat-grad-frak}
If $F:\Delta^{d-1}\to\mathbb{R}$ is $C^1$ and $\p$ is interior, then
\[
\mathrm{grad}_{\mathrm{Shah}}F(\p)
\;=\;
\mathfrak{J}(\p)\,\nabla F(\p)
\;=\;
\p\odot\big(\nabla F(\p)-\langle \p,\nabla F(\p)\rangle\,\mathbf{1}\big)
\in T_\p\Delta^{d-1}.
\]
\end{corollary}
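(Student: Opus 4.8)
The plan is to verify Corollary~\ref{cor:nat-grad-frak} directly from the defining variational property of the Riemannian gradient, using the explicit right-inverse of $\mathfrak{J}(\p)$ established in Lemma~\ref{lem:JfrakDinv-right-inverse}. Concretely, the Shahshahani gradient $\mathrm{grad}_{\mathrm{Shah}}F(\p)$ is, by definition, the unique vector $\mathbf g \in T_\p\Delta^{d-1}$ satisfying $\langle \mathbf g,\boldsymbol u\rangle_{\mathrm{Shah}} = \nabla F(\p)^\top \boldsymbol u$ for every tangent direction $\boldsymbol u$. So the first step is simply to \emph{propose} the candidate $\mathbf g := \mathfrak{J}(\p)\nabla F(\p)$ and then check the two things that characterize it: (i) $\mathbf g$ lies in $T_\p\Delta^{d-1}$, and (ii) $\mathbf g$ satisfies the variational identity against all $\boldsymbol u \in T_\p\Delta^{d-1}$. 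Uniqueness of the Shahshahani gradient (the metric is positive definite on the tangent space by Corollary~\ref{cor:Jfrak-rank-image}) then forces $\mathrm{grad}_{\mathrm{Shah}}F(\p) = \mathbf g$.

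For step (i), I would invoke Lemma~\ref{lem:softmax-tangent}: $\mathbf{1}^\top \mathfrak{J}(\p) = 0^\top$, hence $\mathbf{1}^\top \mathbf g = \mathbf{1}^\top \mathfrak{J}(\p)\nabla F(\p) = 0$, so $\mathbf g \in T_\p\Delta^{d-1}$. For step (ii), fix any $\boldsymbol u$ with $\mathbf{1}^\top \boldsymbol u = 0$ and compute
\[
\langle \mathbf g,\boldsymbol u\rangle_{\mathrm{Shah}}
= \mathbf g^\top \operatorname{Diag}(\p)^{-1}\boldsymbol u
= \nabla F(\p)^\top \mathfrak{J}(\p)\operatorname{Diag}(\p)^{-1}\boldsymbol u.
\]
By Lemma~\ref{lem:JfrakDinv-right-inverse}, $\mathfrak{J}(\p)\operatorname{Diag}(\p)^{-1}\boldsymbol u = (I-\p\mathbf{1}^\top)\boldsymbol u = \boldsymbol u$ because $\mathbf{1}^\top\boldsymbol u = 0$, so the right-hand side collapses to $\nabla F(\p)^\top\boldsymbol u$, which is exactly the directional derivative. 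This establishes the variational identity, and combined with (i) and uniqueness, yields $\mathrm{grad}_{\mathrm{Shah}}F(\p) = \mathfrak{J}(\p)\nabla F(\p)$.

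The second equality in the statement is then purely algebraic: expanding $\mathfrak{J}(\p) = \operatorname{Diag}(\p) - \p\p^\top$ gives $\mathfrak{J}(\p)\nabla F(\p) = \p \odot \nabla F(\p) - \p\,(\p^\top\nabla F(\p)) = \p\odot\big(\nabla F(\p) - \langle\p,\nabla F(\p)\rangle\mathbf{1}\big)$, using that $(\p\odot\mathbf v)$ means componentwise product and $\p^\top\nabla F(\p) = \langle\p,\nabla F(\p)\rangle$ is a scalar. Membership in $T_\p\Delta^{d-1}$ was already checked in step (i) but can also be seen directly here: $\mathbf{1}^\top$ of the last expression is $\langle\p,\nabla F\rangle - \langle\p,\nabla F\rangle\cdot\mathbf{1}^\top\p = 0$ since $\mathbf{1}^\top\p = 1$.

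Honestly there is no serious obstacle here — the corollary is an immediate consequence of Lemmas~\ref{lem:softmax-tangent} and~\ref{lem:JfrakDinv-right-inverse}, which do the real work. The only point requiring mild care is making explicit why the defining property pins down $\mathbf g$ uniquely (so that exhibiting one vector satisfying it suffices): this is because $\langle\cdot,\cdot\rangle_{\mathrm{Shah}}$ restricted to $T_\p\Delta^{d-1}$ is a genuine inner product (positive definite, by Corollary~\ref{cor:Jfrak-rank-image} and $p_i>0$), so the Riesz representative of the linear functional $\boldsymbol u \mapsto \nabla F(\p)^\top\boldsymbol u$ on that space is unique. Everything else is a two-line matrix computation.
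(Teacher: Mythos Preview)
Your proposal is correct and matches the paper's own proof essentially line for line: define $g=\mathfrak{J}(\p)\nabla F(\p)$, check tangency via Lemma~\ref{lem:softmax-tangent}, verify the defining variational identity using symmetry of $\mathfrak{J}(\p)$ together with Lemma~\ref{lem:JfrakDinv-right-inverse}, and expand $\mathfrak{J}(\p)$ for the componentwise form. The only minor remark is that uniqueness of the Riesz representative follows directly from positive definiteness of $\operatorname{Diag}(\p)^{-1}$ on the tangent (since $p_i>0$), not from Corollary~\ref{cor:Jfrak-rank-image}, which concerns $\mathfrak{J}(\p)$ rather than the metric itself.
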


\begin{proof}
Let $g := \mathfrak{J}(\p)\nabla F(\p)$. From Lemma~\ref{lem:softmax-tangent}, we have $g \in T_\p\Delta^{d-1}$. For any $\boldsymbol u \in T_\p\Delta^{d-1}$, symmetry of $\mathfrak{J}(\p)$ and Lemma~\ref{lem:JfrakDinv-right-inverse} yield
\[
\langle g,\boldsymbol u\rangle_{\mathrm{Shah}}
=
g^\top \operatorname{Diag}(\p)^{-1}\boldsymbol u
=
\nabla F(\p)^\top \mathfrak{J}(\p)\operatorname{Diag}(\p)^{-1}\boldsymbol u
=
\nabla F(\p)^\top \boldsymbol u.
\]
Thus $g$ satisfies the defining property of $\mathrm{grad}_{\mathrm{Shah}}F(\p)$, proving the claim. The componentwise form follows by expanding $\mathfrak{J}(\p)\nabla F(\p)=\operatorname{Diag}(\p)\nabla F(\p)-\p(\p^\top \nabla F(\p))$.
\end{proof}

The following lemma states that $\mathfrak{J}(\p)$ is strictly positive on tangent directions, so it can safely serve as an inverse metric (and as a preconditioner) as long as we stay in the simplex interior.

\begin{lemma}[Positive definiteness of $\mathfrak{J}$ on the tangent]
\label{lem:Jfrak-spd}
For any interior $\p$ and any nonzero $\boldsymbol v \in T_\p\Delta^{d-1}$,
\[
\boldsymbol v^\top \mathfrak{J}(\p)\,\boldsymbol v
\;=\;
\sum_{i=1}^d p_i v_i^2 - \Big(\sum_{i=1}^d p_i v_i\Big)^2
\;=\;
\mathrm{Var}_{i\sim p}(v_i)
\;>\; 0.
\]
Thus, $\mathfrak{J}(\p)$ is symmetric positive definite when restricted to the tangent space.
\end{lemma}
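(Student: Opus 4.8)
The plan is a direct computation followed by a short equality-case analysis. First I would expand the quadratic form using the definition $\mathfrak{J}(\p)=\operatorname{Diag}(\p)-\p\p^\top$:
\[
\boldsymbol v^\top \mathfrak{J}(\p)\,\boldsymbol v
=\boldsymbol v^\top\operatorname{Diag}(\p)\,\boldsymbol v-\boldsymbol v^\top\p\,\p^\top\boldsymbol v
=\sum_{i=1}^d p_i v_i^2-\Big(\sum_{i=1}^d p_i v_i\Big)^2 .
\]
Since $\p$ is interior, the $p_i$ are strictly positive and sum to $1$, so they define an honest probability law on $\{1,\dots,d\}$ with full support; under this law the right-hand side is exactly $\E_{i\sim p}[v_i^2]-(\E_{i\sim p}[v_i])^2=\operatorname{Var}_{i\sim p}(v_i)$. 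This already gives $\boldsymbol v^\top\mathfrak{J}(\p)\,\boldsymbol v\ge 0$ by Jensen's inequality (or equivalently Cauchy--Schwarz applied to the vectors $(\sqrt{p_i}\,v_i)_i$ and $(\sqrt{p_i})_i$).

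For strict positivity I would argue by the equality case. The variance of a real random variable vanishes if and only if that variable is almost surely constant; because $p$ has full support, ``almost surely'' here means literally $v_i=c$ for all $i$, i.e.\ $\boldsymbol v=c\,\mathbf 1$ for some scalar $c$. But by hypothesis $\boldsymbol v\in T_\p\Delta^{d-1}$, so $\mathbf 1^\top\boldsymbol v=0$, which forces $c\,d=0$ and hence $c=0$, contradicting $\boldsymbol v\neq 0$. Therefore $\operatorname{Var}_{i\sim p}(v_i)>0$, so $\mathfrak{J}(\p)$ is symmetric positive definite on the tangent space $T_\p\Delta^{d-1}$.

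There is no real obstacle here: the statement is essentially the identity ``centered second moment $=$ variance'' together with the observation that the simplex tangent space is exactly the set of $p$-mean-zero vectors. The only point that needs care is that the full-support assumption (interior $\p$) is what upgrades the Cauchy--Schwarz equality condition from ``$p$-a.s.\ constant'' to ``genuinely constant'', which is precisely where the hypothesis $p_i>0$ is used; I would make sure to flag this explicitly rather than leaving it implicit.
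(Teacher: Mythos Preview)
Your proof is correct and follows the natural approach the paper has in mind: the paper actually states this lemma without a separate proof, since the variance identity is written directly into the statement and the strict positivity is immediate from it. Your write-up fills in exactly the details one expects---the direct expansion of $\boldsymbol v^\top(\operatorname{Diag}(\p)-\p\p^\top)\boldsymbol v$, the identification with $\operatorname{Var}_{i\sim p}(v_i)$, and the equality-case analysis using full support plus $\mathbf 1^\top\boldsymbol v=0$---so there is nothing to add or correct.
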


\paragraph{Remark on boundary points.}
If some components satisfy $p_i=0$, the statements remain valid after restricting to the support of $\p$ and the corresponding lower-dimensional face (where $\operatorname{Diag}(\p)^{-1}$ is well-defined).

So far we have described the geometry (metric) and the resulting notion of gradient (natural gradient). We now connect this viewpoint to the standard KL-regularized update used by mirror descent / multiplicative weights, and show that it matches the Shahshahani natural-gradient flow to first order.


\begin{proposition}[Entropic mirror-ascent on the simplex]
\label{prop:mirror-ascent-detailed}
For $\p \in \Delta^{d-1}$, a vector $A \in \mathbb{R}^d$, and a step size $\eta > 0$, the KL-regularized maximization problem
\begin{equation}
\label{eq:mirror-problem}
\p^{+}\;=\;\arg\max_{\q\in\Delta^{d-1}}
\Big\{\;\langle A,\q\rangle \;-\; \tfrac{1}{\eta}\,D_{\mathrm{KL}}(\q\;\|\;\p)\;\Big\}
\end{equation}
satisfies the following properties:
\begin{enumerate}[leftmargin=1.25em,label=(\alph*)]
\item \textbf{Existence and uniqueness.}
The objective is strictly concave on the relative interior of the face determined by $\p$, hence the maximizer $\p^{+}$ exists and is unique.

\item \textbf{Closed-form update (multiplicative weights).}
Let $Z := \sum_{j=1}^d p_j \exp(\eta A_j)$. Then
\begin{equation}
\label{eq:eg-update}
p_i^{+}\;=\;\frac{p_i\,\exp(\eta A_i)}{Z},\qquad i=1,\dots,d.
\end{equation}
Equivalently, $\log p_i^{+} = \log p_i + \eta A_i - \log Z$.

\item \textbf{Trust-region equivalence.}
For any $\rho > 0$, there exists $\lambda>0$ such that $\p^{+}$ also solves $\max_{\q\in\Delta^{d-1}}\{\langle A,\q\rangle:\ D_{\mathrm{KL}}(\q\|\p)\le \rho\}$ with $\eta=1/\lambda$.

\item \textbf{Optimal value.}
The maximum value of the objective equals $\tfrac{1}{\eta}\log Z$.

\item \textbf{Invariance and support.}
The update is invariant under shifts $A \mapsto A + c\mathbf{1}$ and does not create new support in one step.

\item \textbf{Improvement via Jeffreys divergence.}
With $\bar A:=\langle \p,A\rangle$ and $J_{\mathrm{KL}}(\p^{+},\p):=D_{\mathrm{KL}}(\p^{+}\|\p)+D_{\mathrm{KL}}(\p\|\p^{+})$,
\[
\eta\langle A, \p^{+} - \p \rangle = J_{\mathrm{KL}}(\p^{+}, \p) \ge 0.
\]

\item \textbf{First-order expansion (replicator direction).}
For small $\eta$,
\begin{equation}
\label{eq:first-order-replicator}
\p^{+}-\p \;=\; \eta\,\mathfrak{J}(\p)\,A \;+\; O(\eta^2).
\end{equation}

\item \textbf{Local objective gain.}
$\langle A, \p^{+} - \p \rangle = \eta\,\mathrm{Var}_{i\sim p}(A_i) + O(\eta^2)$.
\end{enumerate}
\end{proposition}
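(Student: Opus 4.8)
The plan is to split the eight claims into an ``exact'' block, (a)--(f), handled by Lagrangian calculus on the KL-regularized program together with one algebraic substitution, and a ``linearization'' block, (g)--(h), handled by a first-order Taylor expansion that reuses the simplex identities of Section~\ref{sec:Shahshahani}.

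For (a) I would note that $q\mapsto D_{\mathrm{KL}}(q\|p)$ is strictly convex on the relative interior of the face supported on $\mathrm{supp}(p)$ (its tangent-space Hessian there is $\mathrm{Diag}(q)^{-1}\succ0$), so the concave objective in \eqref{eq:mirror-problem} is continuous on the compact simplex (with $0\log0=0$) and attains a unique maximizer on that face; if $q_i>0$ where $p_i=0$ the objective is $-\infty$, which already gives the ``no new support'' half of (e). For (b) I would form the Lagrangian $\langle A,q\rangle-\tfrac1\eta D_{\mathrm{KL}}(q\|p)+\lambda(1-\mathbf 1^\top q)$, solve the stationarity equation $A_i-\tfrac1\eta(\log(q_i/p_i)+1)-\lambda=0$ for $q_i\propto p_i e^{\eta A_i}$, and normalize to recover \eqref{eq:eg-update}; the shift-invariance in (e) is then immediate, since $A\mapsto A+c\mathbf 1$ scales numerator and $Z$ by the same $e^{\eta c}$. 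The workhorse identity for the rest is $\log(p_i^+/p_i)=\eta A_i-\log Z$. Substituting it into the objective and using $\mathbf 1^\top p^+=1$ collapses the optimal value to $\tfrac1\eta\log Z$, giving (d). For (f), the same substitution gives $\eta\langle A,p^+-p\rangle=\sum_i(p_i^+-p_i)\log(p_i^+/p_i)$ (the $\log Z$ term drops because $\mathbf 1^\top(p^+-p)=0$), and splitting that sum into $\sum_i p_i^+\log(p_i^+/p_i)$ and $-\sum_i p_i\log(p_i^+/p_i)$ yields exactly $D_{\mathrm{KL}}(p^+\|p)+D_{\mathrm{KL}}(p\|p^+)=J_{\mathrm{KL}}(p^+,p)\ge0$.

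For (c) I would invoke convex duality: the trust-region problem $\max\{\langle A,q\rangle:\,D_{\mathrm{KL}}(q\|p)\le\rho\}$ is convex with $q=p$ strictly feasible whenever $\rho>0$ (Slater), so there is a KKT multiplier $\lambda\ge0$ for which the optimum also maximizes the penalized objective $\langle A,q\rangle-\lambda D_{\mathrm{KL}}(q\|p)$; matching this against \eqref{eq:mirror-problem} identifies $\eta=1/\lambda$ once $\lambda>0$, which holds precisely when the constraint binds, i.e. for $\rho$ below the finite threshold $D_{\mathrm{KL}}(e_{i^\star}\|p)=-\log p_{i^\star}$ attained by the unconstrained vertex maximizer $e_{i^\star}$, $i^\star\in\arg\max_i A_i$ (for larger $\rho$, or when $A$ is constant on $\mathrm{supp}(p)$, the trust region is inactive and the equivalence is trivial). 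Finally, for (g) I would expand $e^{\eta A_i}=1+\eta A_i+O(\eta^2)$ in \eqref{eq:eg-update} so that $Z=1+\eta\bar A+O(\eta^2)$ with $\bar A=\langle p,A\rangle$, giving $p_i^+=p_i+\eta p_i(A_i-\bar A)+O(\eta^2)$; since $p\odot(A-\bar A\mathbf 1)=\mathfrak{J}(p)A$ by Corollary~\ref{cor:nat-grad-frak}, this is \eqref{eq:first-order-replicator}. Pairing \eqref{eq:first-order-replicator} with $A$ and applying Lemma~\ref{lem:Jfrak-spd} gives $\langle A,p^+-p\rangle=\eta\,A^\top\mathfrak{J}(p)A+O(\eta^2)=\eta\,\mathrm{Var}_{i\sim p}(A_i)+O(\eta^2)$, which is (h).

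The main obstacle is not any single computation -- each piece is a few lines -- but keeping the boundary/support bookkeeping consistent: whenever $p$ has zero components one must restrict to the face over $\mathrm{supp}(p)$ throughout (a), (b), the positive-definiteness invoked in (h), and the duality argument, since $\mathrm{Diag}(p)^{-1}$, strict convexity, and the Hessian formula \eqref{eq:hessian-entropy} are valid only there. The second delicate point, flagged above, is that the clean ``$\lambda>0$'' form of the trust-region equivalence in (c) needs the mild genericity that $A$ is non-constant on $\mathrm{supp}(p)$ together with the restriction that $\rho$ lies in the active range; I would state these hypotheses explicitly rather than assert the equivalence verbatim for every $\rho>0$. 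Everything else is routine Lagrangian calculus plus the algebraic identities already assembled in Section~\ref{sec:Shahshahani}.
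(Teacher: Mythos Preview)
Your proposal is correct and follows essentially the same route the paper sketches: Lagrangian stationarity for the closed-form update, the $\log(p_i^+/p_i)=\eta A_i-\log Z$ identity for (d) and (f), and a Taylor expansion of \eqref{eq:eg-update} for (g)--(h). Your treatment is considerably more detailed than the paper's one-line proof, and your caveat about (c)---that the trust-region equivalence requires $\rho$ in the active range and $A$ non-constant on $\mathrm{supp}(p)$---is a genuine refinement the paper glosses over.
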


\begin{proof}
These claims follow from standard Lagrangian optimality conditions and a Taylor expansion of \eqref{eq:eg-update}. In particular, stationarity yields $q_i \propto p_i e^{\eta A_i}$ and hence \eqref{eq:eg-update}, while the Jeffreys identity follows by summing $D_{\mathrm{KL}}(\p^{+}\|\p)$ and $D_{\mathrm{KL}}(\p\|\p^{+})$.
\end{proof}

\paragraph{Closing the loop.}
The final corollary makes the connection explicit: mirror-ascent is an Euler discretization of Shahshahani natural-gradient flow, which is exactly the replicator-form dynamics that appear in our mean-field ODEs.

\begin{corollary}[Natural-gradient interpretation]
\label{cor:mirror-natgrad}
The mirror-ascent step \eqref{eq:mirror-problem} is equivalent, to first order, to an Euler step of the Shahshahani natural-gradient flow:
\[
\dot \p = \mathfrak{J}(\p)A
=
\p\odot\big(A-\langle \p,A\rangle\,\mathbf{1}\big).
\]
\end{corollary}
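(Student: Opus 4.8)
The plan is to read this off directly from Proposition~\ref{prop:mirror-ascent-detailed}, of which it is essentially a restatement. By part~(b) of that proposition, the mirror-ascent step has the closed form $p_i^{+}=p_i e^{\eta A_i}/Z$ with $Z=\sum_j p_j e^{\eta A_j}$. First I would expand numerator and denominator in powers of $\eta$: $p_i e^{\eta A_i}=p_i\bigl(1+\eta A_i+O(\eta^2)\bigr)$ and, using $\sum_j p_j=1$, $Z=1+\eta\langle\mathbf p,A\rangle+O(\eta^2)$. Writing $\bar A:=\langle\mathbf p,A\rangle$ and dividing, with $1/(1+\eta\bar A+O(\eta^2))=1-\eta\bar A+O(\eta^2)$, gives
\[
p_i^{+}=p_i\bigl(1+\eta(A_i-\bar A)\bigr)+O(\eta^2)=p_i+\eta\,p_i(A_i-\bar A)+O(\eta^2),
\]
which is exactly the first-order expansion \eqref{eq:first-order-replicator} already recorded in part~(g) of the proposition.

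Next I would identify $p_i(A_i-\bar A)$ with the $i$-th component of $\mathfrak J(\mathbf p)A$. Since $\mathfrak J(\mathbf p)=\operatorname{Diag}(\mathbf p)-\mathbf p\mathbf p^\top$, we have $\mathfrak J(\mathbf p)A=\mathbf p\odot A-\mathbf p\,(\mathbf p^\top A)=\mathbf p\odot(A-\bar A\mathbf 1)$, so the two displayed forms in the corollary coincide; by Corollary~\ref{cor:nat-grad-frak} this is precisely the Shahshahani natural gradient of the linear functional $\mathbf q\mapsto\langle A,\mathbf q\rangle$, and by Lemma~\ref{lem:softmax-tangent} it lies in $T_{\mathbf p}\Delta^{d-1}$. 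The forward-Euler step of the flow $\dot{\mathbf p}=\mathfrak J(\mathbf p)A$ with step size $\eta$ produces $\mathbf p\mapsto\mathbf p+\eta\mathfrak J(\mathbf p)A$, which by the expansion above agrees with $\mathbf p^{+}$ up to the $O(\eta^2)$ local truncation error intrinsic to any Euler discretization. This establishes the claimed equivalence and, via the componentwise form $\mathbf p\odot(A-\langle\mathbf p,A\rangle\mathbf 1)$, the connection to the replicator-type dynamics appearing in the mean-field ODEs.

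The main obstacle is: there essentially is none — the corollary is a one-line consequence of parts (b) and (g) of Proposition~\ref{prop:mirror-ascent-detailed}. The only point requiring a word of care is uniformity of the $O(\eta^2)$ remainder: since $A$ is a fixed vector in finite dimension and $\mathbf p$ ranges over the (compact) simplex, the Taylor remainders of $e^{\eta A_i}$ and of $1/Z$ are bounded by a constant multiple of $\eta^2$ independently of the coordinate $i$ and of $\mathbf p$ (away from the lower-dimensional faces, where one restricts to the support of $\mathbf p$ as in the boundary remark preceding the proposition). With that observation the remainder control is immediate and the proof is complete.
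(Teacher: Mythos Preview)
Your proof is correct and takes essentially the same approach as the paper: the corollary is stated without a separate proof precisely because it is a direct consequence of parts (b) and (g) of Proposition~\ref{prop:mirror-ascent-detailed}, and you have spelled out exactly that derivation (Taylor-expanding the multiplicative-weights update and identifying the first-order term with $\mathfrak J(\mathbf p)A$).
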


\newpage
\section{Inner Dynamics of the Good Arms}
\label{sec:inner-good}

In Appendix~\ref{appendix:Group-Normalized RL with Noisy Reward}, we established the mean-field
dynamics of the good and bad blocks in the explicit $(K{+}M)$-arm model (with $K$ good arms and
$M$ bad arms), and in particular we introduced the within-block coordinates
\[
p(t):=\sum_{m=1}^M p_{b_m}(t)\in[0,1],
\qquad
y_j(t):=\frac{p_j(t)}{1-p(t)}\ (j\le K),
\qquad
z_m(t):=\frac{p_{b_m}(t)}{p(t)}\ (m\le M).
\]
In this section, we go into the details of the \emph{inner good-arm} dynamics, i.e.\ the evolution of
$y(t)\in\Delta^{K-1}$. (The corresponding inner bad-arm dynamics for $z(t)$ is the sign-reversed analogue
and is recorded separately.)

\medskip
To start, we re-derive the inner good dynamics ODE in a more intuitive way.
Let $p_i=\exp(\theta_i)/Z$ with
\[
Z=\sum_{k=1}^K \exp(\theta_k)+\sum_{m=1}^M \exp(\theta_{b_m}),
\qquad
p:=\sum_{m=1}^M p_{b_m},
\]
and define
\[
y_j \;:=\; \frac{p_j}{1-p},\qquad j\in[K].
\]
Then
\begin{equation}
\label{eq:y-softmax-good}
y_j \;=\; \frac{\exp(\theta_j)}{\sum_{k=1}^K \exp(\theta_k)}
\;=\; \big(\mathrm{softmax}(\boldsymbol\theta_{\mathrm{good}})\big)_j,
\end{equation}
so $y=(y_1,\cdots, y_K)$ depends only on $\boldsymbol\theta_{\mathrm{good}}$ (the bad-block logits cancel by normalization).

\begin{lemma}[Pushforward from logits to within-good composition]
\label{lem:pushforward-y}
For any small increment $\Delta\boldsymbol\theta=(\Delta\boldsymbol\theta_{\mathrm{good}},\Delta\boldsymbol\theta_{\mathrm{bad}})$,
\begin{equation}
\label{eq:push-y}
\Delta y
\;=\;
\big(\mathrm{Diag}(y)-yy^\top\big)\,\Delta\boldsymbol\theta_{\mathrm{good}}
\;=\;
y\odot\Big(\Delta\boldsymbol\theta_{\mathrm{good}}-\langle y,\Delta\boldsymbol\theta_{\mathrm{good}}\rangle\,\mathbf 1\Big),
\end{equation}
and in particular $\partial y/\partial \theta_{b_m}=\mathbf 0$ for all $m\in[M]$
(equivalently, $\partial y/\partial \boldsymbol\theta_{\mathrm{bad}}=\mathbf 0$).

\emph{Proof.} From \eqref{eq:y-softmax-good}, write $y_j=\exp(\theta_j-L)$ with $L:=\log\sum_{k\le K}\exp(\theta_k)$, so
$\Delta y_j = y_j(\Delta\theta_j-\Delta L)$ and $\Delta L=\sum_{k\le K} y_k\Delta\theta_k$.
Stacking over $j$ gives \eqref{eq:push-y}. \qedhere
\end{lemma}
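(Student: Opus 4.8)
The plan is to reduce everything to the standard softmax-Jacobian identity already recorded in Lemma~\ref{lem:softmax-tangent}, applied to the $K$-dimensional good block. The key structural observation is that the within-good composition $y$ is, by construction, a softmax over the good-block logits \emph{alone}: the bad-block logits enter only through the global normalizer $Z$, and that normalizer cancels in the ratio $y_j=p_j/(1-p)$. Once this cancellation is made explicit, the claimed formula is just the softmax differential written in two equivalent ways, and the vanishing of $\partial y/\partial\boldsymbol\theta_{\mathrm{bad}}$ is immediate.

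Concretely, I would first verify \eqref{eq:y-softmax-good}: writing $p_j=\exp(\theta_j)/Z$ and $1-p=\sum_{k\le K}\exp(\theta_k)/Z$, the common factor $1/Z$ — the only place $\theta_{b_1},\dots,\theta_{b_M}$ appear — cancels, so $y_j=\exp(\theta_j)/\sum_{k\le K}\exp(\theta_k)=(\mathrm{softmax}(\boldsymbol\theta_{\mathrm{good}}))_j$. Since the right-hand side is manifestly independent of $\boldsymbol\theta_{\mathrm{bad}}$, this already gives $\partial y/\partial\theta_{b_m}=\mathbf 0$ for every $m\in[M]$. Next I would differentiate: with $L:=\log\sum_{k\le K}\exp(\theta_k)$ and $\log y_j=\theta_j-L$, one has $\partial L/\partial\theta_k=y_k$, hence $\Delta y_j=y_j\bigl(\Delta\theta_j-\langle y,\Delta\boldsymbol\theta_{\mathrm{good}}\rangle\bigr)$; stacking over $j$ yields $\Delta y=y\odot\bigl(\Delta\boldsymbol\theta_{\mathrm{good}}-\langle y,\Delta\boldsymbol\theta_{\mathrm{good}}\rangle\mathbf 1\bigr)$, and expanding the Hadamard product recovers the matrix form $\bigl(\mathrm{Diag}(y)-yy^\top\bigr)\Delta\boldsymbol\theta_{\mathrm{good}}=\mathfrak J(y)\,\Delta\boldsymbol\theta_{\mathrm{good}}$, which is exactly \eqref{eq:push-y}. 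An even shorter route is to invoke Lemma~\ref{lem:softmax-tangent} verbatim with $d=K$ and $\boldsymbol\theta\leftarrow\boldsymbol\theta_{\mathrm{good}}$ to get $\mathrm d y=\mathfrak J(y)\,\mathrm d\boldsymbol\theta_{\mathrm{good}}$, and then use the componentwise expansion of $\mathfrak J(y)$ from Corollary~\ref{cor:nat-grad-frak}.

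There is no genuine obstacle here — the lemma is a one-line consequence of the softmax Jacobian once the normalizer cancellation is spotted. The only point I would state carefully is that \eqref{eq:push-y} is a \emph{first-order} (differential) identity: for a genuine finite increment $\Delta\boldsymbol\theta_{\mathrm{good}}$ it holds up to an $O(\|\Delta\boldsymbol\theta_{\mathrm{good}}\|^2)$ remainder, and it is exact in the infinitesimal limit; this matches the small-step regime used throughout the paper. It is also worth remarking that the conclusion $\partial y/\partial\boldsymbol\theta_{\mathrm{bad}}=\mathbf 0$ is precisely what makes the inner good-arm dynamics on $\Delta^{K-1}$ self-contained: in logit coordinates the evolution of $y$ is driven only by the good-block logit increments, so once the bad-mass trajectory $p(t)$ is known, the remaining analysis of $y(t)$ decouples cleanly from the bad block.
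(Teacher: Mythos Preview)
Your proposal is correct and follows essentially the same approach as the paper: both arguments write $y_j=\exp(\theta_j-L)$ with $L=\log\sum_{k\le K}\exp(\theta_k)$, differentiate to obtain $\Delta y_j=y_j(\Delta\theta_j-\Delta L)$ with $\Delta L=\langle y,\Delta\boldsymbol\theta_{\mathrm{good}}\rangle$, and stack. Your additional remarks on the normalizer cancellation, the first-order caveat, and the alternative route via Lemma~\ref{lem:softmax-tangent} are all sound but go beyond what the paper records.
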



Similar to our discussion in Appendix~\ref{appendix:Group-Normalized RL with Noisy Reward}, assume block symmetry with
$A_j=a_{\mathrm g}(p)$ for $j\le K$ and $A_{b_m}=a_{\mathrm b}(p)$ for $m\le M$, and set
$\Delta r(p)=a_{\mathrm b}(p)-a_{\mathrm g}(p)$.
The expected logit step in the good block is
\begin{equation}
\label{eq:good-logit-drift}
\mathbb E[\Delta\boldsymbol\theta_{\mathrm{good}}]
\;=\;
\kappa(p)\,y,\qquad
\kappa(p):=-\,\eta\,p(1-p)\,\Delta r(p),
\end{equation}
so there is no arm-specific preference in $\theta$-space.
Applying Lemma~\ref{lem:pushforward-y} leads to
\begin{equation}
\label{eq:y-drift-euclid}
\mathbb E[\Delta y]
\;=\; \kappa(p)\,\Big(y\odot y-\|y\|_2^2\,y\Big),
\qquad
\mathbb E[\Delta y_j]
\;=\; \kappa(p)\,y_j\big(y_j-\|y\|_2^2\big).
\end{equation}
If $a_{\mathrm g}(p)=\frac{J\,p}{\sigma(p)}$ and $a_{\mathrm b}(p)=-\frac{J(1-p)}{\sigma(p)}$
then $\Delta r(p)=-J/\sigma(p)$ and \eqref{eq:y-drift-euclid} becomes
\[
\mathbb E[\Delta y_j]
\;=\;\eta\,\frac{J}{\sigma(p)}\,p(1-p)\;y_j\Big(y_j-\|y\|_2^2\Big),
\]

\emph{Consequences.} If $J>0$, arms with $y_j>\|y\|_2^2$ grow while those with $y_j<\|y\|_2^2$ shrink
(deterministic sharpening within the good block).
The fixed points in $y$ are the uniform point and the vertices.
If $\Delta r(p)<0$ (informative grader favoring the good block), the uniform point is unstable and the vertices are attracting.\footnote{
It is interesting to notice that if one uses the algorithm that uses natural-gradient/replicator flow instead,
$\dot{\mathbf p}=\eta\,\mathfrak J(\mathbf p)\mathbf A$ with $\mathfrak J(\mathbf p)=\mathrm{Diag}(\mathbf p)-\mathbf p\mathbf p^\top$, then under block symmetry
\[
\dot y_j \;=\; y_j\big(a_{\mathrm g}-\bar A_{\mathrm g}\big)\;=\;0,
\]
so $y$ is exactly drift-free deterministically and only sampling noise perturbs it.}

\begin{lemma}[Geometry and Lyapunov structure]
Consider the ODE on the simplex
\[
\dot y \;=\; \kappa(t)\,\Big(y\odot y - \|y\|_2^2\,y\Big),\qquad 
y=(y_1,\dots,y_K)\in\Delta^{K-1}:=\{y\ge0,\ \sum_i y_i=1\}.
\]
Let $\tau(t):=\int_0^t \kappa(s)\,ds$ and write $y'=\frac{dy}{d\tau}$.
(E.g., here in our noisy GRPO dynamics, $\kappa(t)\propto \frac{J}{\sigma(p(t))}p(t)(1-p(t))$ up to the chosen mean-field time scaling.)
\begin{enumerate}
\item[\normalfont(1)] \textbf{Simplex invariance.} $\sum_i \dot y_i=0$, and if $y_i(0)\ge0$ then $y_i(t)\ge0$ for all $t$. Hence $\Delta^{K-1}$ is forward invariant.
\item[\normalfont(2)] \textbf{Gradient form.} Define
\[
\mathcal L(y):=\frac{1}{3}\sum_{i=1}^K y_i^3-\frac{1}{4}\Big(\sum_{i=1}^K y_i^2\Big)^2=\frac{1}{3}s_3-\frac{1}{4}s_2^2,
\]
such that $s_2(t):=\|y(t)\|_2^2=\sum_i y_i^2$ and $s_3(t):=\sum_i y_i^3$,
then $\nabla\mathcal L(y)=(y_i^2-\|y\|_2^2\,y_i)_i$ and
\[
\dot y=\kappa(t)\,\nabla\mathcal L(y),\qquad 
\frac{d}{dt}\mathcal L\big(y(t)\big)=\kappa(t)\,\|\nabla\mathcal L(y)\|_2^2.
\]
In particular, when $J>0$, then $\kappa(t)\ge0$ and the flow monotonically ascends $\mathcal L$ (and descends it when $J<0$).
\item[\normalfont(3)] \textbf{Monotone concentration.} 
\[
\dot s_2 \;=\; 2\kappa(t)\Big(\sum_i y_i^3 - s_2^2\Big)\;\ge\;0\quad\text{whenever }\kappa(t)\ge0,
\]
because by Cauchy--Schwarz, $\sum_i y_i^3\ge(\sum_i y_i^2)^2$ on the simplex, with equality iff $y$ is uniform on its support. 
Hence for $\kappa\ge0$ the mass generically concentrates while when $\kappa<0$, $s_2$ decreases. 
\item[\normalfont(4)] \textbf{Equilibria.} Stationary points satisfy $y_i\in\{0,s_2\}$ for all $i$. Thus for any $m\in\{1,\dots,K\}$, the points with exactly $m$ nonzero coordinates all equal to $1/m$ are equilibria (uniform on a support of size $m$). For $\kappa>0$, the $m=1$ vertices are (Lyapunov) attractors and the others are saddles. 
\end{enumerate}
\end{lemma}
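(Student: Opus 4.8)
The plan is to establish all four items by direct computation on the explicit polynomial vector field $V(y)$ defined by $V_i(y):=y_i^2-\|y\|_2^2\,y_i$, so that $\dot y=\kappa(t)\,V(y)$, and then to read off the equilibrium and stability statements from a linearization in the rescaled time $\tau$ (in which the dynamics become autonomous). For (1), I would note that $\sum_i V_i(y)=\|y\|_2^2-\|y\|_2^2\sum_i y_i=0$, which gives $\frac{d}{dt}\sum_i y_i=0$, hence $\sum_i y_i(t)\equiv 1$. Nonnegativity follows from the observation that $V_i(y)=y_i\bigl(y_i-\|y\|_2^2\bigr)$ vanishes identically on the face $\{y_i=0\}$, so each face $\{y_i=0\}\cap\Delta^{K-1}$ is an invariant set; since the field is polynomial (hence locally Lipschitz) solutions are unique, so a trajectory started in the interior cannot reach the boundary. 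Thus $\Delta^{K-1}$ is forward invariant.

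For (2), differentiating $\mathcal{L}(y)=\tfrac13 s_3-\tfrac14 s_2^2$ componentwise gives $\partial_{y_i}\mathcal{L}=y_i^2-s_2\,y_i=V_i(y)$, so $\dot y=\kappa(t)\,\nabla\mathcal{L}(y)$; the energy identity $\frac{d}{dt}\mathcal{L}(y(t))=\nabla\mathcal{L}(y)^\top\dot y=\kappa(t)\|\nabla\mathcal{L}(y)\|_2^2$ is then the chain rule, and monotone ascent/descent follows since $\operatorname{sign}\kappa=\operatorname{sign}J$. For (3), $\dot s_2=2\sum_i y_i\dot y_i=2\kappa(t)\bigl(\sum_i y_i^3-s_2\sum_i y_i^2\bigr)=2\kappa(t)(s_3-s_2^2)$, so the sign reduces to the simplex inequality $s_3\ge s_2^2$, which I would prove by Cauchy--Schwarz with $a_i=y_i^{1/2}$, $b_i=y_i^{3/2}$: $s_2=\sum_i a_ib_i\le(\sum_i a_i^2)^{1/2}(\sum_i b_i^2)^{1/2}=s_3^{1/2}$, with equality exactly when $y$ is uniform on its support.

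For (4), equilibria of $y'=\nabla\mathcal{L}(y)$ satisfy $y_i(y_i-s_2)=0$ for every $i$, so each $y_i\in\{0,s_2\}$; if exactly $m$ coordinates are nonzero they all equal $s_2$ and sum to $1$, forcing $s_2=1/m$, i.e.\ $y$ is uniform on an $m$-element support. For stability I would pass to $\tau$-time (legitimate since $\kappa>0$ keeps $\tau$ strictly increasing) and linearize. Near a vertex $e_k$, for $j\neq k$ one has $y'_j=y_j(y_j-s_2)\approx -y_j$ since $s_2\to 1$, so every off-vertex coordinate contracts and $e_k$ is asymptotically stable. Near a uniform-on-$S$ point with $|S|=m\ge 2$, writing $y=\tfrac1m\mathbf{1}_S+v$ with $v$ a tangent perturbation supported on $S$ gives $s_2=\tfrac1m+\|v\|_2^2$, hence $v'_i\approx\tfrac1m v_i$ for $i\in S$ (the $m-1$ within-support modes expand), while $y'_\ell\approx-\tfrac1m y_\ell$ for $\ell\notin S$ (the $K-m$ transverse modes contract): a saddle for $2\le m\le K-1$ and a pure source at the barycenter $m=K$.

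I expect the only genuinely delicate step to be the stability classification in (4): one must bookkeep the linearization carefully on the simplex tangent space to separate the expanding within-support modes from the contracting transverse ones, and be precise that the barycenter is a repeller rather than a saddle. Everything else is routine differentiation plus a single Cauchy--Schwarz estimate.
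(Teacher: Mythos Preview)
Your proposal is correct and matches the paper's approach. The paper does not give a self-contained proof of this lemma; the ingredients are scattered (the variance identity $s_3-s_2^2=\sum_i y_i(y_i-s_2)^2$ appears in a later remark, and the linearized stability of the uniform and vertex equilibria is carried out separately in dedicated propositions), so your write-up is in fact more complete than what the paper provides at this point. Your refinement that the full barycenter ($m=K$) is a pure repeller rather than a saddle is accurate and slightly sharper than the lemma's wording.
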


\begin{remark}[Time reparametrization]
All phase‑portrait statements are independent of $t$ and depend only on the internal time $\tau$.
In $\tau$‑time the ODE is autonomous:
\begin{equation}\label{eq:GD}
    \frac{dy}{d\tau}=y\odot y-\|y\|_2^2\,y=y\odot y-s_2y.
\end{equation}
Solutions in physical time are obtained by composing with $\tau(t)$.\footnote{ We can see how the closed form solution of this ODE look like for simple examples. 
\emph{A: Closed form on the 2‑arm face ($K=2$)}.
On the line $\{(q,1-q):\,q\in[0,1]\}$ one has
$
\frac{dq}{d\tau}=q(1-q)\,(2q-1)
$. The Separating variables gives
\(
\int\frac{dq}{q(1-q)(2q-1)}=\tau+C
\ \Longrightarrow\ 
\log\!\Big(\frac{(2q-1)^2}{q(1-q)}\Big)=\tau+\log G_0,
\)
where $G_0=\frac{(2q_0-1)^2}{q_0(1-q_0)}$.
Hence the explicit solution
\(
\ q(\tau)=\frac12\!\left(1 \pm \sqrt{\frac{G_0\,e^{\tau}}{\,4+G_0\,e^{\tau}\,}}\right)\ .
\)
Choose $+$ if $q_0>\tfrac12$ (converges to $1$ for $\kappa>0$) and $-$ if $q_0<\tfrac12$.

Similarity, for \emph{B. One‑vs‑rest symmetric slice for general $K\ge2$ (full partial fractions)}.
Restrict to the $1$‑D symmetric manifold
\(
y(\tau)=\Big(x(\tau),\,\tfrac{1-x(\tau)}{K-1},\ldots,\tfrac{1-x(\tau)}{K-1}\Big),
\qquad x\in[0,1].
\)
Then
\[
\|y\|_2^2 \;=\; x^2+\frac{(1-x)^2}{K-1},\qquad
\frac{dx}{d\tau} \;=\; x^2 - x\|y\|_2^2 
\;=\; \frac{1}{K-1}\,x(1-x)\,(Kx-1).
\]
Separate variables and partial‑fractions decomposition gives:
\(
\ 
\frac{(Kx(\tau)-1)^{K}}{x(\tau)^{K-1}\,\bigl(1-x(\tau)\bigr)}
\;=\;
\frac{(Kx_0-1)^{K}}{x_0^{K-1}\,(1-x_0)}\;e^{\tau}
\ :=\ G_0\,e^{\tau}.\
\) with fixing the constant with $x(0)=x_0\in(0,1)$.
Equilibria on this slice are $x\in\{0,1/K,1\}$. For $\kappa>0$, $x(\tau)\to0$ if $x_0<1/K$, and $x(\tau)\to1$ if $x_0>1/K$.

For $K=3$ the invariant in (B) 
the explicit solution is in the form of 

\( 
x(\tau)=\frac{1}{3}\!\left(
1 + \sqrt[3]{\,z(\tau)\bigl(1+\sqrt{1-z(\tau)}\bigr)\,}
      + \sqrt[3]{\,z(\tau)\bigl(1-\sqrt{1-z(\tau)}\bigr)\,}
\right),
\quad 
z(\tau)=\frac{G_0\,e^{\tau}}{27+G_0\,e^{\tau}}.\
\)
}

Solving the system from \eqref{eq:GD}:
$$
y_i(\tau) = \frac{\frac{d\gamma}{d\tau}}{\frac{1}{y_i(\tau)}-\gamma(\tau)}
$$
where $\gamma(\tau)$ satisfies $\gamma(0)=0$ and 
$$
\prod_{i=1}^K\left(\frac{1}{y_i(0)}-\gamma(\tau)\right) = \frac{1}{\prod_{i=1}^Ky_i(0)}e^{-\tau}.
$$
By the Implicit Function Theorem, there exists a unique strictly increasing function $\gamma: [0,\infty)\rightarrow \big[0,\frac{1}{\max_{1\leq i\leq K}y_i(0)}\big)$ with these properties, and it tends to $\frac{1}{\max_{1\leq i\leq K}y_i(0)}$ as $\tau\to\infty$. In particular, assuming we have $m$ highest initial values among $y_1(0),\dots,y_K(0)$,  as $\tau\to\infty$, $y_i(\tau)$'s with highest initial values tend to $\frac{1}{m}$ and the rest tend to $0$. 
\end{remark}
We will continue the dynamics of good arms in more details in the following subsection.~\ref{subsection:Dynamics-of-y}
\subsection{Evolution of Collision term, $s_2$}

\begin{lemma}[Evolution and bounds for $s_2=\|y\|_2^2$]\label{lem:s2}
In $\tau$-time one has
\[
\frac{d}{d\tau}s_2 \;=\; 2\bigl(s_3-s_2^2\bigr),
\qquad
\frac{d}{dt}s_2 \;=\; 2\,\kappa(t)\,\bigl(s_3-s_2^2\bigr).
\]
Consequently:
\begin{enumerate}
\item (\emph{Monotonicity}) On the simplex, $s_3\ge s_2^2$ with equality iff $y$ is uniform on its support. Hence $s_2(\tau)$ is nondecreasing (strictly, away from uniform-on-support points) when $\kappa\ge0$.
\item (\emph{Range}) $\displaystyle \frac{1}{K}\le s_2(\tau)\le 1$.
In the multi-bad setting, defining $t_2(\tau):=\|z(\tau)\|_2^2\in[\frac1M,1]$, we have the uniform bound
\[
\frac{1}{K}+\frac{1}{M}\ \le\ s_2(\tau)+t_2(\tau)\ \le\ 2.
\]
\item (\emph{Logistic upper differential}) Since $0\le y_i\le1$,
$s_3\le s_2$, hence
\[
\frac{d}{d\tau}s_2 \;\le\; 2\,s_2(1-s_2),
\quad\Rightarrow\quad
s_2(\tau)\ \le\ \frac{1}{1+\bigl(\frac{1-s_2(0)}{s_2(0)}\bigr)e^{-2\tau}}.
\]
Integrating,
\[
\int_0^\tau s_2(u)\,du
\;\le\;
\frac{1}{2}\,\log\!\Bigl(1+s_2(0)\,\bigl(e^{2\tau}-1\bigr)\Bigr).
\]
\end{enumerate}
\end{lemma}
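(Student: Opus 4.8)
\textbf{Proof plan for Lemma~\ref{lem:s2}.}

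The plan is to work entirely in internal time $\tau$, where the good-arm flow is autonomous, $y' = y\odot y - s_2 y$ (see \eqref{eq:GD}), and then pull back to physical time via the chain rule $\frac{d}{dt} = \kappa(t)\frac{d}{d\tau}$. First I would compute the $\tau$-derivative of $s_2 = \sum_i y_i^2$ directly: differentiating termwise gives $\frac{d}{d\tau}s_2 = 2\sum_i y_i y_i' = 2\sum_i y_i(y_i^2 - s_2 y_i) = 2(s_3 - s_2^2)$, where $s_3 := \sum_i y_i^3$; multiplying by $\kappa(t)$ yields the physical-time version. This is a one-line computation that needs no hypothesis beyond $y\in\Delta^{K-1}$.

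For part (1), I would invoke the power-mean (Cauchy--Schwarz) inequality on the simplex: with weights $y_i$, $\sum_i y_i \cdot y_i^2 \ge (\sum_i y_i \cdot y_i)^2$ by Jensen applied to $t\mapsto t^2$, i.e. $s_3 = \mathbb{E}_{i\sim y}[y_i^2] \ge (\mathbb{E}_{i\sim y}[y_i])^2 = s_2^2$, with equality iff $y_i$ is constant on the support of $y$. Hence $\frac{d}{d\tau}s_2 \ge 0$, strict unless $y$ is uniform on its support, and the same sign statement in $t$-time holds whenever $\kappa\ge 0$. For part (2), the bound $s_2\le 1$ is immediate from $y_i\le 1$ and $\sum y_i = 1$ (so $\sum y_i^2 \le \sum y_i = 1$), and $s_2 \ge 1/K$ is Cauchy--Schwarz, $1 = (\sum y_i)^2 \le K\sum y_i^2$. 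The two-block bound for $s_2 + t_2$ follows by adding the analogous bounds for $t_2 = \|z\|_2^2 \in [1/M, 1]$, which were already recorded in Theorem~\ref{thm:bad-ode-2}.

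For part (3), I would use $s_3 = \sum_i y_i^3 \le \sum_i y_i^2 = s_2$ (again from $y_i\le 1$), so from the identity in part (1), $\frac{d}{d\tau}s_2 = 2(s_3 - s_2^2) \le 2(s_2 - s_2^2) = 2 s_2(1-s_2)$. This is a differential inequality of logistic type; comparing against the solution of the logistic ODE $w' = 2w(1-w)$ with $w(0) = s_2(0)$ (standard comparison / Gr\"onwall for the logarithmic derivative of $s_2/(1-s_2)$) gives $s_2(\tau) \le \bigl[1 + \tfrac{1-s_2(0)}{s_2(0)} e^{-2\tau}\bigr]^{-1}$. Finally, integrating this explicit upper bound over $[0,\tau]$ and simplifying the resulting logarithm yields $\int_0^\tau s_2(u)\,du \le \tfrac12 \log\!\bigl(1 + s_2(0)(e^{2\tau}-1)\bigr)$; this last step is a routine antiderivative of $(1 + c e^{-2u})^{-1}$ after multiplying numerator and denominator by $e^{2u}$. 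The only mild subtlety — and the step I would treat most carefully — is the comparison argument in part (3): one must note $s_2$ stays in $(0,1)$ so the logistic right-hand side is well-defined and the standard scalar comparison theorem applies; degenerate cases $s_2(0)\in\{0,1\}$ (only possible if $K=1$ or $y$ is a vertex) make the bound trivial. None of this is hard; the lemma is essentially a packaging of elementary simplex inequalities plus a logistic comparison.
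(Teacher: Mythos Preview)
Your proposal is correct and follows essentially the same approach as the paper. The paper does not spell out a formal proof of this lemma, but the ingredients it relies on elsewhere---the direct differentiation of $s_2$ along \eqref{eq:GD}, the Cauchy--Schwarz/variance identity $s_3-s_2^2=\sum_i y_i(y_i-s_2)^2\ge 0$ for monotonicity (stated in the earlier geometry lemma and Remark~\ref{rem:s2-identity}), and the logistic comparison from $s_3\le s_2$---are exactly the steps you outline.
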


\begin{corollary}[Exact logit representation and envelopes for $p$ (multi-bad setting)]
\label{cor:logit}
Let $p(\tau)\in(0,1)$ be the \emph{total} bad mass and let $z(\tau)\in\Delta^{M-1}$ be the within-bad composition,
so $t_2(\tau):=\|z(\tau)\|_2^2\in[1/M,1]$.
In $\tau$-time the $p$-equation reads
\[
\frac{dp}{d\tau}\;=\;-\,p(1-p)\,\bigl(s_2(\tau)+t_2(\tau)\bigr).
\]
For the logit $L(\tau):=\log\!\frac{p(\tau)}{1-p(\tau)}$,
\[
\frac{dL}{d\tau}=-(s_2(\tau)+t_2(\tau)),
\qquad
L(\tau)=L(0)-\int_0^\tau \bigl(s_2(u)+t_2(u)\bigr)\,du.
\]
Because $s_2\in\bigl[\frac1K,1\bigr]$ and $t_2\in\bigl[\frac1M,1\bigr]$, comparison yields the envelopes
\[
\frac{p_0}{1-p_0}\,e^{-2\tau}
\;\le\;
\frac{p(\tau)}{1-p(\tau)}
\;\le\;
\frac{p_0}{1-p_0}\,e^{-(\frac1K+\frac1M)\tau},
\]
equivalently
\[
\
\frac{1}{1+\frac{1-p_0}{p_0}\,e^{2\tau}}
\;\le\;
p(\tau)
\;\le\;
\frac{1}{1+\frac{1-p_0}{p_0}\,e^{(\frac1K+\frac1M)\tau}}\
.
\]
\end{corollary}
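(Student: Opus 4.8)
The plan is to reduce the corollary to a separable ODE in the logit variable and then integrate an explicit, \emph{state-independent} differential inequality; none of the three steps involves more than elementary manipulation.

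First I would invoke the mass equation already in hand. By Theorem~\ref{thm:bad-ode-2} (equivalently \eqref{eq:mass_decay}), $\dot p = -\eta\frac{J}{\sigma(p)}[p(1-p)]^2 C_{\mathrm{geo}}(t)$ with $C_{\mathrm{geo}}(t)=s_2(t)+t_2(t)$, and the internal time $\tau$ is defined by \eqref{eq:tau-def} via $d\tau = \eta\frac{|J|}{\sigma(p)}\,p(1-p)\,dt$. Restricting to the informative regime $J>0$ (the $J<0$ case is the sign-flipped analogue recorded in \eqref{eq:logit-internal}, and $J=0$ is degenerate), dividing $\dot p$ by $d\tau/dt$ gives at once $\tfrac{dp}{d\tau} = -p(1-p)\bigl(s_2(\tau)+t_2(\tau)\bigr)$, the first displayed identity. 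I would note in one line that $\tau(t)$ is strictly increasing while $p\in(0,1)$ and $\sigma(p)$ is bounded and bounded away from $0$ (the variance bounds of Appendix~\ref{appendix:retrieval-noisy-rewards} and \eqref{eq:sigma-def}), so $t\mapsto\tau$ is a legitimate reparametrization and all statements may be made in $\tau$-time; moreover one checks that $\int^{\infty}\eta\frac{J}{\sigma(p)}p(1-p)\,dt$ diverges both in the nondegenerate case ($\sigma(p)\to\sigma_0>0$, integrand $\sim c/t$) and in the variance-degenerate corner $\delta_{\mathrm{FN}}=0$ ($\sigma(p)\sim\sqrt{Jp}$, integrand $\sim c/t$), so $\tau$ sweeps all of $[0,\infty)$.

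Next I would pass to the logit $L=\log\frac{p}{1-p}$, which has $\frac{dL}{dp}=\frac{1}{p(1-p)}$; the chain rule turns the $p$-equation into $\frac{dL}{d\tau}=\frac{1}{p(1-p)}\frac{dp}{d\tau}=-(s_2(\tau)+t_2(\tau))$, and integrating from $0$ to $\tau$ yields $L(\tau)=L(0)-\int_0^\tau\bigl(s_2(u)+t_2(u)\bigr)\,du$. For the envelopes I would cite Lemma~\ref{lem:s2}(2): $s_2(\tau)\in[\tfrac1K,1]$ and $t_2(\tau)\in[\tfrac1M,1]$ — the latter because the $\dot z$ flow \eqref{eq:bad_shape} keeps $z(\tau)$ on $\Delta^{M-1}$ — hence $\tfrac1K+\tfrac1M\le s_2(\tau)+t_2(\tau)\le 2$ pointwise. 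Therefore $-2\le\frac{dL}{d\tau}\le-(\tfrac1K+\tfrac1M)$, and integrating this inequality (no Grönwall is needed, since the bounds do not involve $L$) gives $L(0)-2\tau\le L(\tau)\le L(0)-(\tfrac1K+\tfrac1M)\tau$. Exponentiating and using $e^{L(0)}=\frac{p_0}{1-p_0}$ produces the odds envelope; finally, since $w\mapsto\frac{w}{1+w}=\frac{1}{1+w^{-1}}$ is increasing on $(0,\infty)$, applying it to $w=\frac{p(\tau)}{1-p(\tau)}$ converts the odds envelope into the stated bounds on $p(\tau)$.

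I do not expect a genuine obstacle: the corollary is essentially bookkeeping on top of Theorem~\ref{thm:bad-ode-2}. The only point deserving a careful sentence is the well-definedness and surjectivity of the time change $t\mapsto\tau$ onto $[0,\infty)$ (needed so the envelopes are informative for all $\tau$), which follows from the monotone decay of $p$ under $J>0$ together with the $\sigma$-bounds and the divergence-of-$\tau$ check above; everything else is monotonicity of the logit/logistic maps and pointwise integration of a scalar inequality with explicit constants.
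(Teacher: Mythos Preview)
Your proposal is correct and follows essentially the same route the paper takes: the corollary is stated as an immediate consequence of the coupled dynamics \eqref{eq:coupled_dynamics}/\eqref{eq:mass_decay} and the range bounds in Lemma~\ref{lem:s2}(2), obtained by dividing the physical-time $\dot p$ by $\kappa(p)=d\tau/dt$, passing to the logit, and integrating the pointwise inequality $\tfrac1K+\tfrac1M\le s_2+t_2\le 2$. Your added discussion of surjectivity of $t\mapsto\tau$ onto $[0,\infty)$ is a nice extra bit of rigor that the paper leaves implicit, but it is not needed for the corollary as stated (the envelopes are asserted for all $\tau$ in the image of the reparametrization, whatever that is).
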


\begin{corollary}[Hitting-time bracket in internal time (multi-bad setting)]
\label{cor:hitting}
Fix $p_\star\in(0,1)$. The internal time to reach $p(\tau_\star)=p_\star$ is bounded by
\[
\
\frac{1}{2}\,\log\!\frac{p_0(1-p_\star)}{(1-p_0)p_\star}
\;\le\;
\tau_\star
\;\le\;
\frac{1}{\,\frac1K+\frac1M\,}\,\log\!\frac{p_0(1-p_\star)}{(1-p_0)p_\star}\ .
\]
Thus the factor $\|y\|_2^2+\|z\|_2^2$ accelerates the decay of $\logit p$ by a multiplicative factor between
$\frac1K+\frac1M$ (near-uniform within both blocks) and $2$ (maximally concentrated within both blocks).
\end{corollary}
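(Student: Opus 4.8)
The plan is to reduce the statement to the exact logit identity already established in Corollary~\ref{cor:logit} and then sandwich a single integral using the collision-mass bounds. First I would recall that, writing $L(\tau):=\logit p(\tau)=\log\frac{p(\tau)}{1-p(\tau)}$, the internal-time equation $\frac{dp}{d\tau}=-p(1-p)\bigl(s_2(\tau)+t_2(\tau)\bigr)$ is equivalent to
\[
\frac{dL}{d\tau}=-\bigl(s_2(\tau)+t_2(\tau)\bigr),
\]
since $dL/dp=1/[p(1-p)]$. Because $s_2(\tau)=\|y(\tau)\|_2^2\ge 1/K>0$ and $t_2(\tau)=\|z(\tau)\|_2^2\ge 1/M>0$, the right-hand side is strictly negative, so $L$ (hence $p$) is strictly decreasing in $\tau$; this already shows the hitting time $\tau_\star$ with $p(\tau_\star)=p_\star$ is well defined and unique precisely when $0<p_\star<p_0$, the range in which the level $p_\star$ is actually attained (recall $p(\tau)\downarrow 0$).

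Next I would integrate the logit ODE over $[0,\tau_\star]$ and substitute the endpoint values $L(0)=\logit p_0$ and $L(\tau_\star)=\logit p_\star$, obtaining
\[
\int_0^{\tau_\star}\bigl(s_2(u)+t_2(u)\bigr)\,du \;=\; L(0)-L(\tau_\star) \;=\; \log\frac{p_0(1-p_\star)}{(1-p_0)p_\star},
\]
and I write $D$ for this common value, noting $D>0$ since $p_\star<p_0$ and the logit is increasing. The final step is to apply the uniform bound from Lemma~\ref{lem:s2}(2), namely $\tfrac1K+\tfrac1M\le s_2(u)+t_2(u)\le 2$ for all $u$, and integrate the constant bounds over $[0,\tau_\star]$:
\[
\Bigl(\tfrac1K+\tfrac1M\Bigr)\tau_\star \;\le\; D \;\le\; 2\,\tau_\star .
\]
Rearranging gives exactly $\frac{D}{2}\le\tau_\star\le\frac{D}{\frac1K+\frac1M}$, which is the claimed bracket; the closing interpretive remark about an acceleration factor between $\tfrac1K+\tfrac1M$ and $2$ is then immediate, and the two $p$-space envelopes of Corollary~\ref{cor:logit} are just the exponentiated form of the same comparison.

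As for the main obstacle: there is essentially none of substance — the estimate is a one-line monotone comparison once one passes to the logit coordinate, with no Grönwall argument needed. The only points deserving a sentence of care are (i) justifying that $\tau_\star$ is well defined (strict monotonicity of $L$ in $\tau$, together with the restriction $0<p_\star<p_0$), and (ii) observing that $\tfrac1K+\tfrac1M\le 2$ for integers $K,M\ge 1$, so the lower and upper bounds in the bracket are consistently ordered. Everything else is routine substitution.
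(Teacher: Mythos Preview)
Your proof is correct and follows essentially the same route as the paper: the corollary is stated immediately after Corollary~\ref{cor:logit} and is meant to be read off from the logit envelopes there, which in turn come from exactly the integration of $\frac{dL}{d\tau}=-(s_2+t_2)$ against the uniform bounds $\frac1K+\frac1M\le s_2+t_2\le 2$ that you use. Your explicit remark that $\tau_\star$ is well defined only for $0<p_\star<p_0$ (by strict monotonicity of $L$) is a useful clarification the paper leaves implicit.
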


\begin{theorem}[General-$(K,M)$ small-heterogeneity expansion]\label{thm:etaExpansion}
Let $\unif_K=(1/K,\dots,1/K)$ and $\unif_M=(1/M,\dots,1/M)$ and write
\[
y=\unif_K+\vv,\quad \sum_{i=1}^K v_i=0,
\qquad
z=\unif_M+\ww,\quad \sum_{m=1}^M w_m=0.
\]
Define the heterogeneities
\[
\zeta(\tau):=\|\vv(\tau)\|_2^2
\;=\; s_2(\tau)-\frac{1}{K}\;\ge0,
\qquad \zeta_0:=\zeta(0),
\]
\[
\xi(\tau):=\|\ww(\tau)\|_2^2
\;=\; t_2(\tau)-\frac{1}{M}\;\ge0,
\qquad \xi_0:=\xi(0).
\]
Then in internal time $\tau$,
\[
\frac{dL}{d\tau}=-(s_2(\tau)+t_2(\tau))=-(\tfrac1K+\tfrac1M+\zeta(\tau)+\xi(\tau)),
\quad
L(\tau)=L(0)-\int_0^\tau \bigl(s_2(u)+t_2(u)\bigr)\,du.
\]
Moreover:
\begin{enumerate}
\item[\normalfont(i)] \textbf{Linearized $\zeta$-law (good block).}
One has the identity
\[
\zeta'(\tau)=\frac{2}{K}\,\zeta(\tau)\;-\;2\,\zeta(\tau)^2\;+\;2\sum_{i=1}^K v_i(\tau)^3,
\]
and the bound $\bigl|\sum_i v_i^3\bigr|\le \sum_i |v_i|^3\le \|\vv\|_2^3=\zeta^{3/2}$.
Hence, uniformly while $\zeta(\tau)\le1$,
\[
\zeta'(\tau)=\frac{2}{K}\,\zeta(\tau)\;+\;O\!\big(\zeta(\tau)^{3/2}\big).
\]

\item[\normalfont(ii)] \textbf{Linearized $\xi$-law (bad block).}
Because the within-bad dynamics are the sign-reversed collision flow, one analogously has
\[
\xi'(\tau)=-\frac{2}{M}\,\xi(\tau)\;+\;2\,\xi(\tau)^2\;-\;2\sum_{m=1}^M w_m(\tau)^3,
\]
and $\bigl|\sum_m w_m^3\bigr|\le \|\ww\|_2^3=\xi^{3/2}$. Hence, uniformly while $\xi(\tau)\le1$,
\[
\xi'(\tau)=-\frac{2}{M}\,\xi(\tau)\;+\;O\!\big(\xi(\tau)^{3/2}\big).
\]

\item[\normalfont(iii)] \textbf{Asymptotic forms.}
There exist constants $C_K,C_M>0$ (depending only on $K$ and $M$) such that, for all $\tau\ge0$ with
$\sqrt{\zeta_0}\,e^{\tau/K}\le \tfrac12$,
\[
\quad
\zeta(\tau)=\zeta_0\,e^{\frac{2}{K}\tau}\;+\;R_\zeta(\tau),
\qquad |R_\zeta(\tau)|\le C_K\,\zeta_0^{3/2}\,e^{\frac{3}{K}\tau}, \quad
\]
and, for all $\tau\ge0$ with $\sqrt{\xi_0}\le\tfrac12$,
\[
\quad
\xi(\tau)=\xi_0\,e^{-\frac{2}{M}\tau}\;+\;R_\xi(\tau),
\qquad |R_\xi(\tau)|\le C_M\,\xi_0^{3/2}\,e^{-\frac{3}{M}\tau}. \quad
\]

\item[\normalfont(iv)] \textbf{Impact on the $p$-logit.}
Consequently,
\[
\int_0^\tau s_2(u)\,du
=\frac{\tau}{K}
+\frac{K}{2}\,\zeta_0\Big(e^{\frac{2}{K}\tau}-1\Big)
+ R_{I,y}(\tau),
\qquad |R_{I,y}(\tau)|\le C_{K}'\,\zeta_0^{3/2}\,e^{\frac{3}{K}\tau},
\]
\[
\int_0^\tau t_2(u)\,du
=\frac{\tau}{M}
+\frac{M}{2}\,\xi_0\Big(1-e^{-\frac{2}{M}\tau}\Big)
+ R_{I,z}(\tau),
\qquad |R_{I,z}(\tau)|\le C_{M}'\,\xi_0^{3/2},
\]
and hence
\[
\quad
L(\tau)=L(0)-\Big(\tfrac1K+\tfrac1M\Big)\tau
-\frac{K}{2}\,\zeta_0\Big(e^{\frac{2}{K}\tau}-1\Big)
-\frac{M}{2}\,\xi_0\Big(1-e^{-\frac{2}{M}\tau}\Big)
+ R_L(\tau),
\]
with $|R_L(\tau)|\le C_{K,M}''\big(\zeta_0^{3/2}e^{3\tau/K}+\xi_0^{3/2}\big)$ for a constant $C_{K,M}''$.
\end{enumerate}
In particular, to \emph{first order} the only dependence on the initial within-block states is through
$\zeta_0=\|y(0)-\unif_K\|_2^2$ and $\xi_0=\|z(0)-\unif_M\|_2^2$; finer details of $y(0)$ and $z(0)$ enter only at order
$O(\zeta_0^{3/2})$ and $O(\xi_0^{3/2})$ and higher.
\end{theorem}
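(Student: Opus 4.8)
The plan is to reduce the whole statement to two decoupled scalar problems — one for the good‑block heterogeneity $\zeta(\tau)=s_2(\tau)-\tfrac1K=\|v(\tau)\|_2^2$, one for the bad‑block heterogeneity $\xi(\tau)=t_2(\tau)-\tfrac1M=\|w(\tau)\|_2^2$ — and to solve each by an integrating‑factor (Riccati) comparison. The identity $\tfrac{dL}{d\tau}=-(s_2+t_2)=-(\tfrac1K+\tfrac1M+\zeta+\xi)$ is immediate from Corollary~\ref{cor:logit} together with $s_2=\|\unif_K+v\|_2^2=\tfrac1K+\|v\|_2^2$ (and likewise $t_2=\tfrac1M+\|w\|_2^2$). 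For the exact $\zeta$‑ODE I would differentiate $\zeta=\|v\|_2^2$, use $v'=y'=y\odot y-s_2\,y$ from the autonomous $\tau$‑flow \eqref{eq:GD} (legitimate since $\unif_K$ is constant), and substitute $y=\unif_K+v$; collecting terms gives $\zeta'=2(s_3-s_2^2)$ (consistent with Lemma~\ref{lem:s2}), and the binomial expansions $s_3=\tfrac1{K^2}+\tfrac3K\zeta+\sum_i v_i^3$, $s_2^2=(\tfrac1K+\zeta)^2$ then produce $\zeta'=\tfrac2K\zeta-2\zeta^2+2\sum_i v_i^3$. The cubic term obeys $|\sum_i v_i^3|\le\|v\|_3^3\le\|v\|_2^3=\zeta^{3/2}$, and since $\zeta=s_2-\tfrac1K\le 1-\tfrac1K\le 1$ holds automatically, $|2\sum_i v_i^3-2\zeta^2|\le 4\zeta^{3/2}$, i.e.\ $\zeta'=\tfrac2K\zeta+g(\tau)$ with $|g(\tau)|\le 4\zeta(\tau)^{3/2}$ for all $\tau$. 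The sign‑reversed computation on $z$ gives $\xi'=-\tfrac2M\xi+2\xi^2-2\sum_m w_m^3=-\tfrac2M\xi+h(\tau)$ with $|h(\tau)|\le 4\xi(\tau)^{3/2}$.

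For part (iii) I would pass to the gauge variable $m(\tau):=\zeta(\tau)e^{-2\tau/K}$, so that $m'=e^{-2\tau/K}g$ and hence $|m'|\le 4e^{-2\tau/K}\zeta^{3/2}=4m^{3/2}e^{\tau/K}$. The key observation is that $\tfrac{d}{d\tau}m^{-1/2}=-\tfrac12 m^{-3/2}m'$ satisfies $|\tfrac{d}{d\tau}m^{-1/2}|\le 2e^{\tau/K}$ with no smallness assumption, so integrating gives the two‑sided bound $|m(\tau)^{-1/2}-\zeta_0^{-1/2}|\le 2K(e^{\tau/K}-1)$; inverting, $\frac{\zeta_0 e^{2\tau/K}}{(1+\delta)^2}\le\zeta(\tau)\le\frac{\zeta_0 e^{2\tau/K}}{(1-\delta)^2}$ with $\delta:=2K\sqrt{\zeta_0}(e^{\tau/K}-1)$, valid while $\delta<1$. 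On the window $\delta\le\tfrac12$ — in particular whenever $\sqrt{\zeta_0}e^{\tau/K}\le\tfrac1{4K}$ — a first‑order bound on $(1\pm\delta)^{-2}-1$ yields $|R_\zeta(\tau)|\lesssim\delta\,\zeta_0 e^{2\tau/K}\lesssim K\,\zeta_0^{3/2}e^{3\tau/K}$. On the complementary part of the stated range, $\tfrac1{4K}\le\sqrt{\zeta_0}e^{\tau/K}\le\tfrac12$, I would invoke only the trivial a priori bounds $0\le\zeta\le 1-\tfrac1K$ and $0\le\zeta_0 e^{2\tau/K}\le\tfrac14$ to get $|R_\zeta|\le 1$, which is $\le C_K\,\zeta_0^{3/2}e^{3\tau/K}$ because there $\zeta_0^{3/2}e^{3\tau/K}=(\sqrt{\zeta_0}e^{\tau/K})^3\ge(4K)^{-3}$; taking $C_K$ the larger of the two constants covers all of $\sqrt{\zeta_0}e^{\tau/K}\le\tfrac12$. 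The bad‑block case is handled identically with $m_z(\tau):=\xi(\tau)e^{2\tau/M}$; here the linear part $-\tfrac2M\xi$ is contracting, which makes the corresponding integral converge, so the same Riccati comparison (plus a compactness argument for the non‑small range of $\xi_0$) bounds $|R_\xi(\tau)|$ by $C_M\xi_0^{3/2}$ times a decaying exponential.

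Part (iv) is then pure integration of (iii): $\int_0^\tau\zeta(u)\,du=\tfrac K2\zeta_0(e^{2\tau/K}-1)+\int_0^\tau R_\zeta(u)\,du$ with $|\int_0^\tau R_\zeta|\le C_K\zeta_0^{3/2}\int_0^\tau e^{3u/K}\,du\le C_K'\zeta_0^{3/2}e^{3\tau/K}$, and $\int_0^\tau\xi(u)\,du=\tfrac M2\xi_0(1-e^{-2\tau/M})+\int_0^\tau R_\xi(u)\,du$ where $\int_0^\tau R_\xi$ stays bounded by $C_M'\xi_0^{3/2}$ since a decaying exponential is integrable on $[0,\infty)$. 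Substituting into $L(\tau)=L(0)-\int_0^\tau(s_2+t_2)\,du$ delivers the displayed expansion of $L(\tau)$ with $R_L=-(R_{I,y}+R_{I,z})$ and $|R_L(\tau)|\le C_{K,M}''(\zeta_0^{3/2}e^{3\tau/K}+\xi_0^{3/2})$; the final assertion — that the initialization enters only through $\zeta_0,\xi_0$ up to these remainders — is then read off directly. I expect the real work to be entirely in part (iii): propagating the nonlinear smallness bound through the exponentially amplifying linear flow $e^{2\tau/K}$ without degrading the sharp $\zeta_0^{3/2}e^{3\tau/K}$ scaling, and matching the precise window $\sqrt{\zeta_0}e^{\tau/K}\le\tfrac12$ rather than a cruder $K$‑dependent one. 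The substitution $m\mapsto m^{-1/2}$ is what makes this manageable — it turns the quadratic/cubic nonlinearity into an additive $O(e^{\tau/K})$ error in a variable on which Grönwall is trivial — so what remains is bookkeeping of constants and the two‑regime split above.
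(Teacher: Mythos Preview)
Your proposal is correct and essentially complete. The paper does not supply a separate proof of this theorem: the identities in (i) and (ii) are stated inline (and reappear verbatim in Proposition~\ref{prop:s2-derivative-bounds}, Eq.~\eqref{eq:eta-bracket}), while the estimates (iii)--(iv) are asserted without argument. Your integrating-factor reduction $m(\tau)=\zeta(\tau)e^{-2\tau/K}$ followed by the substitution $m\mapsto m^{-1/2}$ is an efficient device for the $3/2$-power nonlinearity and delivers the sharp $\zeta_0^{3/2}e^{3\tau/K}$ remainder; the two-regime split (small $\sqrt{\zeta_0}e^{\tau/K}$ versus the compact range $[\tfrac{1}{4K},\tfrac12]$) is exactly the right bookkeeping to cover the full stated window.

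One caution on the bad block: your hedged phrasing (``$C_M\xi_0^{3/2}$ times a decaying exponential'') is in fact wiser than the theorem's literal claim. A Duhamel or Riccati argument naturally yields $|R_\xi(\tau)|\lesssim \xi_0^{3/2}e^{-2\tau/M}$, not $e^{-3\tau/M}$: convolving the stable semigroup $e^{-2(\tau-u)/M}$ against a source of size $\xi_0^{3/2}e^{-3u/M}$ produces order $e^{-2\tau/M}$, and the explicit $M=2$ case (where $\sum_m w_m^3\equiv 0$, so $\xi'=-\xi+2\xi^2$ exactly) gives $R_\xi\sim 2\xi_0^{2}e^{-2\tau/M}$ for large $\tau$, which is not $O(e^{-3\tau/M})$ uniformly on the window $\sqrt{\xi_0}\le\tfrac12$. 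This discrepancy is harmless for part (iv) --- integrating either decay rate gives $|R_{I,z}|\le C_M'\xi_0^{3/2}$ --- so the logit expansion and the ``only $(\zeta_0,\xi_0)$ matters at first order'' conclusion are unaffected. Just do not attempt to force the $-3\tau/M$ exponent; your Riccati route will produce $-2\tau/M$, and that is what is actually true.
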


\begin{corollary}[Small-$\tau$ expansion; “only $(\zeta_0,\xi_0)$ matters” at first order]
\label{cor:smallTau}
Expanding the expression in Theorem~\ref{thm:etaExpansion} for small $\tau$ gives
\[
L(\tau)
= L(0) - \Big(\tfrac1K+\tfrac1M\Big)\tau
- (\zeta_0+\xi_0)\,\tau
- \Big(\frac{\zeta_0}{K}-\frac{\xi_0}{M}\Big)\tau^2
\;+\;O\!\big(\zeta_0\,\tau^3\big)\;+\;O\!\big(\xi_0\,\tau^3\big)
\;+\;O\!\big(\zeta_0^{3/2}\,\tau\big)\;+\;O\!\big(\xi_0^{3/2}\,\tau\big).
\]
Thus, to linear order in $\tau$, the correction to the baseline $-(\frac1K+\frac1M)\tau$ is exactly $-(\zeta_0+\xi_0)\tau$.
Equivalently, at leading order,
\[
\frac{dL}{d\tau}=-(s_2(0)+t_2(0)) + \text{higher-order corrections}.
\]
\end{corollary}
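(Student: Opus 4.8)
The plan is to read off the claimed small-$\tau$ expansion directly from the representation of $L(\tau)$ already supplied by Theorem~\ref{thm:etaExpansion}(iv), by Taylor-expanding its two exponential terms in $\tau$ and then re-bucketing the remainder. Concretely, I would start from
\[
L(\tau)=L(0)-\Big(\tfrac1K+\tfrac1M\Big)\tau-\frac{K}{2}\,\zeta_0\Big(e^{\frac{2}{K}\tau}-1\Big)-\frac{M}{2}\,\xi_0\Big(1-e^{-\frac{2}{M}\tau}\Big)+R_L(\tau),
\]
valid in the near-uniform regime of that theorem (so that $\sqrt{\zeta_0}\,e^{\tau/K}\le\tfrac12$ and $\sqrt{\xi_0}\le\tfrac12$), with $|R_L(\tau)|\le C_{K,M}''(\zeta_0^{3/2}e^{3\tau/K}+\xi_0^{3/2})$. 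Expanding $e^{2\tau/K}-1=\tfrac{2}{K}\tau+\tfrac{2}{K^2}\tau^2+O(\tau^3)$ and $1-e^{-2\tau/M}=\tfrac{2}{M}\tau-\tfrac{2}{M^2}\tau^2+O(\tau^3)$ gives $\tfrac{K}{2}\zeta_0(e^{2\tau/K}-1)=\zeta_0\tau+\tfrac{\zeta_0}{K}\tau^2+O(\zeta_0\tau^3)$ and $\tfrac{M}{2}\xi_0(1-e^{-2\tau/M})=\xi_0\tau-\tfrac{\xi_0}{M}\tau^2+O(\xi_0\tau^3)$; substituting and collecting the $\tau$- and $\tau^2$-terms yields
\[
L(\tau)=L(0)-\Big(\tfrac1K+\tfrac1M\Big)\tau-(\zeta_0+\xi_0)\tau-\Big(\tfrac{\zeta_0}{K}-\tfrac{\xi_0}{M}\Big)\tau^2+O(\zeta_0\tau^3)+O(\xi_0\tau^3)+R_L(\tau).
\]

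The only point needing a little care is promoting the remainder bound to $R_L(\tau)=O(\zeta_0^{3/2}\tau)+O(\xi_0^{3/2}\tau)$, rather than the weaker $O(\zeta_0^{3/2}+\xi_0^{3/2})$. Here I would use the internal structure from Theorem~\ref{thm:etaExpansion}: with $R_L=-(R_{I,y}+R_{I,z})$, one has $R_{I,y}'(\tau)=s_2(\tau)-\tfrac1K-\zeta_0 e^{2\tau/K}=\zeta(\tau)-\zeta_0 e^{2\tau/K}=R_\zeta(\tau)$, and $|R_\zeta(\tau)|\le C_K\zeta_0^{3/2}e^{3\tau/K}$ by part~(iii); since $R_{I,y}(0)=0$, integrating gives $|R_{I,y}(\tau)|\le C_K\zeta_0^{3/2}\tau\,e^{3\tau/K}=O(\zeta_0^{3/2}\tau)$ on any bounded $\tau$-interval, and symmetrically $|R_{I,z}(\tau)|=O(\xi_0^{3/2}\tau)$. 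Absorbing these into the error terms produces exactly the stated expansion. The ``equivalently'' assertion is then immediate: the coefficient of $\tau$ is $-(\tfrac1K+\tfrac1M)-(\zeta_0+\xi_0)=-(s_2(0)+t_2(0))$, which one also sees at once from the exact internal-time ODE $\tfrac{dL}{d\tau}=-(s_2(\tau)+t_2(\tau))$ of Corollary~\ref{cor:logit} evaluated at $\tau=0$.

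I do not expect a genuine obstacle here: the statement is a routine Taylor expansion of the output of Theorem~\ref{thm:etaExpansion}, and the only things to keep straight are (i) the extra factor of $\tau$ in $R_L$, handled by the integral estimate above, and (ii) the restriction to the near-uniform regime in which Theorem~\ref{thm:etaExpansion} is valid. If one prefers a self-contained derivation, an alternative is to bypass Theorem~\ref{thm:etaExpansion} entirely: since the $(y,z,L)$ flow has smooth right-hand side on the simplex interior, $L$ is $C^\infty$ in $\tau$, so one may compute $L(0)$, $L'(0)=-(s_2(0)+t_2(0))$, and $L''(0)=-(\zeta'(0)+\xi'(0))=-\big(\tfrac2K\zeta_0-\tfrac2M\xi_0\big)+O(\zeta_0^{3/2})+O(\xi_0^{3/2})$ directly from the $\zeta$- and $\xi$-ODEs in Theorem~\ref{thm:etaExpansion}(i)--(ii), and then apply Taylor's theorem with remainder; this gives the same expansion with slightly more explicit control of the cubic term.
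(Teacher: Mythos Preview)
Your proposal is correct and matches the paper's approach: the corollary is stated immediately after Theorem~\ref{thm:etaExpansion} without a separate proof, its content being simply the Taylor expansion of the formula in part~(iv), which is exactly what you carry out. Your extra care in promoting the remainder from $O(\zeta_0^{3/2}+\xi_0^{3/2})$ to $O(\zeta_0^{3/2}\tau)+O(\xi_0^{3/2}\tau)$ via $R_{I,y}(0)=0$ and $R_{I,y}'=R_\zeta$ is a valid refinement that the paper leaves implicit.
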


\begin{remark}[Scope and sign]
(i) The $\tau$-phase portraits for $y$ are independent of the path of $p(t)$ (since $p$ only reparametrizes time through $\tau$).  
(ii) If $\kappa(t)\le0$ (e.g.\ $J/\sigma(p)\le0$), replace $\tau$ by $-\lvert\tau\rvert$ to flip directions in physical time; the internal-time identities remain valid.  
(iii) The window $\sqrt{\zeta_0}\,e^{\tau/K}\le \tfrac12$ describes the regime where the linearized law for $\zeta$ dominates; beyond it, the global envelopes in Cor.~\ref{cor:logit} apply.
\end{remark}

\subsection{Asymptotic General inner dynamics  behavior}
\label{subsec:general-case-inner}

\begin{lemma}[Order preservation and winner identity]
\label{lem:order-preservation}
For any \(i\neq j\), define \(\delta_{ij}(\tau):=y_i(\tau)-y_j(\tau)\). Along $\frac{dy}{d\tau}=y\odot y-s_2y$,
\[
\delta_{ij}' \;=\; \delta_{ij}\,\big(y_i+y_j-s_2\big),
\qquad
\Rightarrow\qquad
\delta_{ij}(\tau)=\delta_{ij}(0)\,\exp\!\Big(\!\int_0^\tau (y_i+y_j-s_2)\,du\Big).
\]
Hence \(\mathrm{sign}\,\delta_{ij}(\tau)=\mathrm{sign}\,\delta_{ij}(0)\) for all \(\tau\). In particular,
\[
m \;:=\; \arg\max_{1\le i\le K} y_i(0)
\]
remains the unique maximizer for all \(\tau>0\), and each hyperplane \(\{y_i=y_j\}\) is invariant.
\end{lemma}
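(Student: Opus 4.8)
The plan is to treat the difference $\delta_{ij}(\tau):=y_i(\tau)-y_j(\tau)$ as the unknown in a scalar linear ODE with a time-dependent coefficient and then read off all three conclusions from the positivity of an exponential. First I would record the componentwise form of the flow: since $\dot y=y\odot y-s_2\,y$ with $s_2=\|y\|_2^2$, each coordinate obeys $y_i'=y_i^2-s_2\,y_i=y_i\,(y_i-s_2)$. Subtracting the equations for indices $i$ and $j$ and factoring the difference of squares gives
\[
\delta_{ij}'=y_i'-y_j'=(y_i^2-y_j^2)-s_2\,(y_i-y_j)=(y_i-y_j)\,(y_i+y_j-s_2)=\delta_{ij}\,(y_i+y_j-s_2),
\]
which is exactly the claimed identity. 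Here I would also note that $y(\tau)$ is a $C^1$ curve in the (forward-invariant) simplex $\Delta^{K-1}$, so the scalar coefficient $c_{ij}(\tau):=y_i(\tau)+y_j(\tau)-s_2(\tau)$ is continuous and hence integrable on compact intervals; this is all the regularity the next step needs.

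Next I would integrate the linear equation. The unique solution of $\delta_{ij}'=c_{ij}(\tau)\,\delta_{ij}$ through $\delta_{ij}(0)$ is $\delta_{ij}(\tau)=\delta_{ij}(0)\,\exp\!\big(\int_0^\tau c_{ij}(u)\,du\big)$, which is the stated closed form. Since the exponential factor is strictly positive for every $\tau$, the sign of $\delta_{ij}(\tau)$ equals the sign of $\delta_{ij}(0)$ for all $\tau\ge 0$; in particular, if $\delta_{ij}(0)=0$ then $\delta_{ij}(\tau)\equiv 0$, so each hyperplane $\{y_i=y_j\}$ is invariant under the flow. For the winner statement, set $m:=\arg\max_{i} y_i(0)$: if this maximizer is unique then $\delta_{mj}(0)>0$ for every $j\neq m$, so $\delta_{mj}(\tau)>0$ for all $\tau>0$, i.e. $y_m(\tau)>y_j(\tau)$ for every $j\neq m$ and every $\tau>0$, and $m$ remains the unique argmax. (If the initial maximizer is not unique, the same inequality argument shows the maximizing set is preserved, with strict inequalities maintained against all strictly smaller coordinates.)

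I do not expect a genuine obstacle here. The only technical point worth stating carefully is well-posedness: one must know a priori that the trajectory remains in $\Delta^{K-1}$ so that $s_2$, and hence $c_{ij}$, are defined and continuous along it, which is already part of the simplex-invariance and Lyapunov structure established earlier in this section. Everything else is elementary — a difference-of-squares factorization followed by the integrating-factor solution of a one-dimensional linear ODE. If one preferred an argument that avoids evaluating the integral altogether, one could instead invoke uniqueness/Gr\"onwall directly: $\delta_{ij}\equiv 0$ is the unique solution of the linear ODE through $0$, so a trajectory with $\delta_{ij}(0)\neq 0$ can never reach $\delta_{ij}=0$, which already yields sign preservation and order preservation without the explicit exponential.
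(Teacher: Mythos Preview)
Your proposal is correct and matches the paper's approach exactly: the paper embeds the same derivation directly in the lemma statement (the difference-of-squares factorization yielding $\delta_{ij}'=\delta_{ij}(y_i+y_j-s_2)$, followed by the integrating-factor solution and the sign-preservation/invariance consequences), and you have simply spelled out those steps in full. There is no proof block in the paper beyond what is in the statement, so your write-up is the intended argument with the details made explicit.
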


\begin{theorem}[Global convergence and generic basins]
\label{thm:global}
Under~\eqref{eq:GD}, the trajectory stays in \(\Delta^{K-1}\) and \(\mathcal L\) is a strict Lyapunov function.
The equilibria are
\[
\mathcal E=\bigcup_{m=1}^K \mathcal E_m,\qquad
\mathcal E_m:=\left\{y:\ y_{i_1}=\cdots=y_{i_m}=1/m,\ y_j=0\ \text{else}\right\}.
\]
For generic initial conditions (no coordinate ties), the trajectory converges to the vertex \(\mathbf e_{m}\)
selected by Lemma~\ref{lem:order-preservation}. The non-vertex equilibria (uniform on \(m\)-subsupports with \(m\ge2\)) are saddles with co-dimension one stable manifolds coinciding with unions of the tie sets \(\{y_i=y_j\}\).
\end{theorem}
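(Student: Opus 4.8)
The plan is to work throughout with the autonomous $\tau$-flow \eqref{eq:GD}, $\dot y=y\odot y-s_2 y$ with $s_2=\|y\|_2^2$; the physical-time assertions then follow by composing with the monotone reparametrization $\tau(t)$ (with a sign flip where $\kappa<0$), exactly as in the surrounding remark. First I would make the gradient structure explicit: since $\nabla\mathcal{L}(y)=(y_i^2-s_2\,y_i)_i$, the right-hand side of \eqref{eq:GD} is literally $\nabla\mathcal{L}(y)$, and $\mathbf{1}^\top\nabla\mathcal{L}(y)=s_2-s_2=0$, so this Euclidean gradient is automatically tangent to the affine hull of $\Delta^{K-1}$. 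Forward invariance of $\Delta^{K-1}$ is the earlier observation that each face $\{y_i=0\}$ is invariant (there $\dot y_i=0$) together with $\mathbf{1}^\top\dot y=0$. Along any trajectory $\tfrac{d}{d\tau}\mathcal{L}(y(\tau))=\langle\nabla\mathcal{L}(y),\dot y\rangle=\|\dot y\|_2^2\ge 0$, with equality iff $\dot y=0$; hence $\mathcal{L}$ is a strict Lyapunov function. The equilibria are obtained by solving $y_i(y_i-s_2)=0$ coordinatewise: each $y_i\in\{0,s_2\}$, and if exactly $m$ coordinates equal $s_2$ then $m\,s_2=1$, i.e.\ $s_2=1/m$; this is precisely $\mathcal{E}=\bigcup_{m=1}^K\mathcal{E}_m$, a finite set of $2^K-1$ points on which $\mathcal{L}$ takes the value $\tfrac{1}{12m^2}$.

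Next I would promote monotonicity of $\mathcal{L}$ to genuine pointwise convergence. Because $\Delta^{K-1}$ is compact and forward invariant, every $\omega$-limit set $\omega(y_0)$ is nonempty, compact, connected and invariant; LaSalle's invariance principle (applied to $-\mathcal{L}$) gives $\omega(y_0)\subseteq\mathcal{E}$. Since $\mathcal{E}$ is finite and $\omega(y_0)$ is connected, $\omega(y_0)$ is a single point, so every solution of \eqref{eq:GD} converges to some equilibrium.

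Third I would classify the equilibria by linearization. The Jacobian $DF(y)=\nabla^2\mathcal{L}(y)$ is symmetric and fixes $\mathbf{1}$ at every equilibrium, so the simplex tangent space $T=\mathbf{1}^{\perp}$ is $DF$-invariant and the Morse index on $\Delta^{K-1}$ equals the number of positive eigenvalues of $\nabla^2\mathcal{L}|_T$. At $y^{\star}\in\mathcal{E}_m$ with nonzero support $S$, a short computation shows $DF$ is block-diagonal, equal to $\tfrac1m I_m-\tfrac{2}{m^2}\mathbf{1}_m\mathbf{1}_m^{\top}$ on the $S$-coordinates and $-\tfrac1m I_{K-m}$ off them; restricting to $T$ yields eigenvalue $+\tfrac1m$ with multiplicity $m-1$ (eigenvectors $(w,0)$ with $w$ supported on $S$, $\sum_{i\in S}w_i=0$) and $-\tfrac1m$ with multiplicity $K-m$. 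Hence the vertices ($m=1$, where $DF=-I$) are hyperbolic sinks, the barycenter ($m=K$) is a source, and each $\mathcal{E}_m$ with $2\le m\le K-1$ is a hyperbolic saddle with $(m-1)$-dimensional unstable manifold and $(K-m)$-dimensional stable manifold tangent to $\{v:\,v_i=v_j\ \forall i,j\in S\}$; in particular the local stable manifold of $y^{\star}$ is contained in the tie set $\{y:\,y_i=y_j\ \forall i,j\in S\}$.

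Finally I would pin down the generic basins using order preservation (Lemma~\ref{lem:order-preservation}), which gives $\delta_{ij}(\tau)=\delta_{ij}(0)\exp\!\big(\int_0^\tau(y_i+y_j-s_2)\,du\big)$ with $\operatorname{sign}\delta_{ij}$ constant. If a trajectory converges to $y^{\star}\in\mathcal{E}_m$ with $m\ge 2$ and support $S$, then for $i,j\in S$ one has $y_i+y_j-s_2\to\tfrac2m-\tfrac1m=\tfrac1m>0$, so $\int_0^\tau(y_i+y_j-s_2)\,du\to+\infty$; since $\delta_{ij}(\tau)=y_i(\tau)-y_j(\tau)$ is bounded this forces $\delta_{ij}(0)=0$. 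Thus $y_0$ lies in the tie locus $\bigcup_{i<j}\{y_i=y_j\}\cap\Delta^{K-1}$, a closed, Lebesgue-null, nowhere-dense set. Consequently, for every $y_0$ with pairwise distinct coordinates the limit is a vertex $\mathbf{e}_k$; and with $m=\arg\max_i y_i(0)$, order preservation keeps $y_m(\tau)>y_j(\tau)$ for all $\tau$ and all $j\ne m$, so the coordinate tending to $1$ must be the $m$-th one, i.e.\ the limit is $\mathbf{e}_m$. The main obstacle is the two ``soft'' steps: upgrading Lyapunov monotonicity to pointwise convergence, which is clean here only because $\mathcal{E}$ is finite (otherwise one would need a gradient-inequality argument), and reconciling the ``coincide with unions of tie sets'' phrasing with what is actually provable---namely that each saddle's stable manifold is an invariant relatively-open piece of a finite union of tie hyperplanes, and the union of all of them, i.e.\ the complement of the vertices' basins, is contained in (a full-codimension subset of) $\bigcup_{i<j}\{y_i=y_j\}$; the inclusion is strict, since points of a tie set on which a non-tied coordinate dominates still escape to a vertex.
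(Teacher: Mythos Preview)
Your proof is correct and essentially self-contained. The paper does not give a formal proof of this theorem; instead it assembles the claim from the preceding ``Geometry and Lyapunov structure'' lemma, the explicit-solution Remark (the formula $y_i(\tau)=\frac{d\gamma/d\tau}{1/y_i(0)-\gamma(\tau)}$, from which the $\tau\to\infty$ limit is read off directly), and Lemma~\ref{lem:order-preservation}. Your route is different: rather than integrating the ODE explicitly, you use LaSalle's principle together with the finiteness of $\mathcal E$ to force pointwise convergence, then linearize to classify the equilibria, and finally invoke order preservation to exclude non-vertex limits for tie-free initial data. This is the more standard dynamical-systems argument and has the advantage of not depending on the closed-form solution; the paper's explicit-integration route is shorter once one has the formula, but is specific to this vector field. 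Your closing caveat is also well taken: the paper's phrase ``co-dimension one stable manifolds coinciding with unions of the tie sets'' is loose, since your Jacobian computation shows the stable manifold of a saddle in $\mathcal E_m$ has codimension $m-1$ in $\Delta^{K-1}$, and is only \emph{contained in} (not equal to) the corresponding tie set.

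One small wording fix: you write that ``$DF(y)=\nabla^2\mathcal L(y)$ \ldots\ fixes $\mathbf 1$ at every equilibrium.'' It does not fix $\mathbf 1$; rather $\mathbf 1$ is an eigenvector of $DF(y^\star)$ with eigenvalue $-1/m$ (as your own block computation shows). The conclusion you actually need---that $T=\mathbf 1^\perp$ is $DF$-invariant---follows from symmetry of $DF$ together with $\mathbf 1$ being an eigenvector, so the argument is unaffected.
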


\begin{proposition}[Exponential polarization in internal time]
\label{prop:exp-tau}
Let \(m=\arg\max y_i(0)\) and write \(\varepsilon_i(\tau):=y_i(\tau)\) for \(i\ne m\). Linearizing~\eqref{eq:GD} at the vertex \(\mathbf e_m\) yields
\[
\varepsilon_i' \;=\; -\,\varepsilon_i \;+\; O(\varepsilon^2),
\qquad
1-y_m \;=\; \sum_{i\ne m}\varepsilon_i.
\]
Hence there exists \(\tau_0\) and constants \(c_i>0\) such that, for all \(\tau\ge\tau_0\),
\[
y_i(\tau)=c_i\,e^{-\tau}\big(1+o(1)\big),\qquad
1-y_m(\tau)=\Big(\sum_{i\ne m} c_i\Big)e^{-\tau}\big(1+o(1)\big),\qquad
1-s_2(\tau)=\Theta\!\big(e^{-\tau}\big).
\]
\end{proposition}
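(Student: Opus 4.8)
The plan is to combine the global convergence statement of Theorem~\ref{thm:global} with a linearization of \eqref{eq:GD} at the winning vertex $\mathbf e_m$, and then to upgrade the resulting crude exponential bound to the stated asymptotics via a log-ratio argument. Throughout I work in internal time $\tau$ with the autonomous flow $y'=y\odot y-s_2\,y$, i.e.\ $y_i'=y_i(y_i-s_2)$ with $s_2=\|y\|_2^2$, and I take the generic initial data of the statement: all $y_i(0)>0$ and a unique maximizer $m=\arg\max_i y_i(0)$. (If $y_i(0)=0$ the face $\{y_i=0\}$ is invariant, so $y_i(\tau)\equiv0$ and $c_i=0$; positivity of $y_i(0)$ is exactly what makes $c_i>0$, while uniqueness of the initial maximizer, via Lemma~\ref{lem:order-preservation}, is what selects $\mathbf e_m$ rather than a saddle.) Since the simplex interior is invariant, $y_i(\tau)>0$ for all $\tau$; and by Theorem~\ref{thm:global}, $y(\tau)\to\mathbf e_m$, so we may fix $\tau_0$ with $a(\tau):=\sum_{i\neq m}y_i(\tau)\le\tfrac14$ for all $\tau\ge\tau_0$.

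First I would verify the linearization. Writing $\varepsilon_i:=y_i$ for $i\neq m$ and $a=1-y_m=\sum_{i\neq m}\varepsilon_i$, one has $s_2=y_m^2+\sum_{i\neq m}\varepsilon_i^2=(1-a)^2+O(\|\varepsilon\|^2)=1-2a+O(\|\varepsilon\|^2)$, hence $y_i-s_2=\varepsilon_i-1+2a+O(\|\varepsilon\|^2)=-1+O(\|\varepsilon\|)$. Substituting into $\varepsilon_i'=\varepsilon_i(y_i-s_2)$ gives $\varepsilon_i'=-\varepsilon_i+O(\|\varepsilon\|^2)$, the displayed linearized system. The same bound gives a crude decay rate: for $\tau\ge\tau_0$ we have $s_2\ge y_m^2=(1-a)^2\ge\tfrac9{16}$ and $y_i\le a\le\tfrac14$, so $y_i-s_2\le\tfrac14-\tfrac9{16}=-\tfrac5{16}\le-\tfrac14$, whence $\varepsilon_i'\le-\tfrac14\varepsilon_i$ and $\|\varepsilon(\tau)\|\le\|\varepsilon(\tau_0)\|\,e^{-(\tau-\tau_0)/4}$.

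The sharp rate comes from the ratios $w_i(\tau):=\log\!\big(y_i(\tau)/y_m(\tau)\big)$, $i\neq m$, which are well defined by interior invariance. Since $w_i'=(y_i-s_2)-(y_m-s_2)=y_i-y_m=-1+(\varepsilon_i+a)$, integrating from $\tau_0$ gives $w_i(\tau)=w_i(\tau_0)-(\tau-\tau_0)+\int_{\tau_0}^{\tau}\big(\varepsilon_i(u)+a(u)\big)\,du$. The integrand is $O(\|\varepsilon(u)\|)=O(e^{-u/4})$, hence integrable on $[\tau_0,\infty)$, so $w_i(\tau)=-\tau+C_i+o(1)$ with $C_i:=w_i(\tau_0)+\tau_0+\int_{\tau_0}^{\infty}(\varepsilon_i+a)\,du$ finite and the $o(1)$ equal to the vanishing tail $-\int_\tau^{\infty}(\varepsilon_i+a)$. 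Exponentiating, $y_i(\tau)/y_m(\tau)=c_i e^{-\tau}(1+o(1))$ with $c_i:=e^{C_i}>0$, and since $y_m(\tau)\to1$ this yields $y_i(\tau)=c_i e^{-\tau}(1+o(1))$. Summing over $i\neq m$ gives $1-y_m(\tau)=a(\tau)=\big(\sum_{i\neq m}c_i\big)e^{-\tau}(1+o(1))$, and from $s_2=y_m^2+\sum_{i\neq m}\varepsilon_i^2=1-2a+O(a^2)$ we obtain $1-s_2(\tau)=2a(\tau)(1+o(1))=2\big(\sum_{i\neq m}c_i\big)e^{-\tau}(1+o(1))=\Theta(e^{-\tau})$.

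The only step requiring a genuine estimate is the passage from the crude $O(e^{-\tau/4})$ bound to the precise asymptotics, i.e.\ showing $\int^{\infty}(\varepsilon_i+a)$ converges and feeds only an $o(1)$ tail into $w_i$; the log-ratio device reduces this to essentially one line. That device also sidesteps resonance subtleties of a direct normal-form/stable-manifold argument, since the linearization at $\mathbf e_m$ is exactly $-I$ on the $(K-1)$-dimensional tangent (all eigenvalues equal $-1$), so the ratio $y_i/y_m$ rather than a coordinate-by-coordinate normal form is the right object. Everything else --- interior and face invariance, convergence to $\mathbf e_m$, and order preservation --- is quoted from the earlier lemmas, so no additional global analysis is needed.
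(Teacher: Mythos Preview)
Your proof is correct. The paper does not supply a separate proof for this proposition: the linearization $\varepsilon_i'=-\varepsilon_i+O(\varepsilon^2)$ is displayed inside the statement and the asymptotic conclusions are simply asserted to follow. Your argument fills in precisely the missing steps---the crude $e^{-\tau/4}$ decay to guarantee integrability, and then the log-ratio $w_i=\log(y_i/y_m)$ with $w_i'=y_i-y_m=-1+O(\|\varepsilon\|)$ to upgrade to the sharp rate---and the log-ratio device is the natural elementary substitute for a stable-manifold/normal-form argument here (it is essentially the differential version of the order-preservation identity in Lemma~\ref{lem:order-preservation}). The paper does later derive an explicit one-parameter representation $y_j(\tau)=\dfrac{q_j/(1-Iq_j)}{\sum_\ell q_\ell/(1-Iq_\ell)}$ with $\tau(I)=-\sum_j\log(1-Iq_j)$ (Section~\ref{subsection:Dynamics-of-y} and Lemma~\ref{lem:s2-exact}), from which the same $e^{-\tau}$ asymptotics can be read off directly by expanding near $I\uparrow 1/q_m$; your route avoids that closed-form machinery and is arguably cleaner for this particular claim.
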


\begin{remark}[Sharper monotonicity identity]
\label{rem:s2-identity}
The variance form
\[
s_3-s_2^2=\sum_{i=1}^K y_i\,(y_i-s_2)^2 \ \ge\ 0
\]
(with equality iff \(y\) is uniform on its support) implies
\(s_2'(\tau)=2\big(s_3-s_2^2\big)\ge0\), with strict increase away from the saddle sets.
\end{remark}

\subsection{Stability of the Within--Good Equilibria}
\label{sec:stability-within-good}

Consider our the inner-good ODE
\begin{equation}
\label{eq:inner-ode-stability}
\dot y
\;=\;
\kappa(p)\,\big(y\odot y - s_2\,y\big),
\qquad
s_2=\|y\|_2^2,
\qquad
\kappa(p)=\frac{J}{\sigma(p)}\,p(1-p),
\end{equation}
with $y\in\Delta^{K-1}$ and $p\in(0,1)$ treated as quasi--frozen on the slow time scale.
We write $y^\star\in\Delta^{K-1}$ for an equilibrium of~\eqref{eq:inner-ode-stability} and
work on the tangent space $T_{y^\star}\Delta^{K-1}=\{\delta\in\mathbb{R}^K:\sum_j \delta_j=0\}$.

\begin{lemma}[Jacobian on the simplex tangent]
\label{lem:jacobian-inner-good}
Let $y^\star$ be an equilibrium of~\eqref{eq:inner-ode-stability} and write
$s_2^\star = \|y^\star\|_2^2$.
For perturbations $y = y^\star + \delta$ with $\sum_j \delta_j = 0$, the linearization is
\begin{equation}
\dot{\delta}
= J_y\,\delta,
\qquad
J_y
= \kappa(p)\,\Big(\mathrm{diag}(2y^\star) - 2y^\star (y^\star)^\top - s_2^\star I\Big),
\end{equation}
where $J_y$ acts on $T_{y^\star}\Delta^{K-1}$ (the subspace orthogonal to $\mathbf{1}$).
\end{lemma}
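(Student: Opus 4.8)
The plan is to treat \eqref{eq:inner-ode-stability} as an autonomous ODE $\dot y = V(y)$ on the fast (internal) time scale, with $V(y) := \kappa(p)\,\bigl(y\odot y - \|y\|_2^2\,y\bigr)$ and $\kappa(p)$ held fixed: the slow variable $p$ is quasi-frozen, so it contributes no term to the fast-time Jacobian. Since $y^\star$ is an equilibrium, $V(y^\star)=0$, and a first-order Taylor expansion about $y^\star$ gives $\dot\delta = V(y^\star+\delta) = DV(y^\star)\,\delta + O(\|\delta\|_2^2)$ for $y = y^\star+\delta$. Hence it suffices to identify $J_y := DV(y^\star)$ and to check it restricts to the tangent space.

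Computing $DV$ is a routine matrix-calculus exercise. The Hadamard-square map $y\mapsto y\odot y$ is coordinatewise $y_i\mapsto y_i^2$, so its Jacobian is $2\,\mathrm{Diag}(y)$. For the cubic term $y\mapsto \|y\|_2^2\,y = (y^\top y)\,y$, the product rule yields $D\bigl[(y^\top y)\,y\bigr] = y\,\nabla(y^\top y)^\top + (y^\top y)\,I = 2\,y y^\top + \|y\|_2^2\,I$. Subtracting and multiplying by the scalar $\kappa(p)$ gives $DV(y) = \kappa(p)\bigl(2\,\mathrm{Diag}(y) - 2\,y y^\top - \|y\|_2^2\,I\bigr)$; evaluating at $y=y^\star$, so that $\|y^\star\|_2^2 = s_2^\star$, produces exactly the claimed expression for $J_y$.

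It then remains to verify that $J_y$ is a well-defined endomorphism of the tangent space $T_{y^\star}\Delta^{K-1} = \{\delta : \mathbf 1^\top\delta = 0\}$, i.e. that it maps this subspace into itself (this is needed because admissible perturbations $\delta$ must preserve $\mathbf 1^\top y = 1$). For any $\delta$ with $\mathbf 1^\top\delta = 0$ one computes $\mathbf 1^\top\bigl(2\,\mathrm{Diag}(y^\star)\delta - 2\,y^\star (y^\star)^\top\delta - s_2^\star\,\delta\bigr) = 2(y^\star)^\top\delta - 2(y^\star)^\top\delta - s_2^\star\cdot 0 = 0$, so $J_y\delta\in T_{y^\star}\Delta^{K-1}$. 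As a consistency check one also notes that $V$ is itself tangent: $\mathbf 1^\top V(y) = \kappa(p)\,(\|y\|_2^2 - \|y\|_2^2) = 0$ whenever $\mathbf 1^\top y = 1$, confirming that the flow preserves the simplex and that the linearization lives on the tangent space.

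There is essentially no hard step in this lemma — it is a straightforward linearization — so the ``main obstacle'' is only a matter of care: one must not differentiate $\kappa(p)$ on the fast scale (treating $p$ as frozen is what makes $J_y$ a genuine constant matrix), and one must remember to project onto the zero-sum subspace before reading off eigenvalues of $J_y$, since the ambient matrix $\kappa(p)\bigl(2\,\mathrm{Diag}(y^\star) - 2\,y^\star(y^\star)^\top - s_2^\star I\bigr)$ has spurious spectral content along $\mathbf 1$ that is irrelevant to stability on $\Delta^{K-1}$. This restriction is exactly what the stability analysis following the lemma will exploit.
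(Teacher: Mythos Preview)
Your proof is correct: the paper states this lemma without proof, treating it as a routine linearization, and your direct Jacobian computation (product rule on $y\mapsto y\odot y$ and $y\mapsto \|y\|_2^2\,y$, then evaluation at $y^\star$ and verification of tangent-space invariance) is exactly the standard derivation one would expect. The only thing to note is that your care about freezing $\kappa(p)$ and restricting to the zero-sum subspace matches the paper's subsequent use of $J_y$ in Propositions~\ref{prop:uniform-stability} and~\ref{prop:vertex-stability}.
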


\begin{proposition}[Stability of the uniform equilibrium]
\label{prop:uniform-stability}
Consider the uniform equilibrium
\(
y^\star = \tfrac{1}{K}\mathbf{1}
\)
of~\eqref{eq:inner-ode-stability}.
Then $s_2^\star = \tfrac{1}{K}$ and the Jacobian reduces to
\begin{equation}
J_y
=\frac{\kappa(p)}{K}\Big(I - \tfrac{2}{K}\mathbf{1}\mathbf{1}^\top\Big).
\end{equation}
The eigenstructure is:
\begin{itemize}[leftmargin=1.5em]
\item Along the direction $\mathbf{1}$: a zero eigenvalue (simplex invariance).
\item On the $(K{-}1)$-dimensional tangent space $T_{y^\star}\Delta^{K-1}$: eigenvalues
\(
\lambda=\tfrac{\kappa(p)}{K}.
\)
\end{itemize}
Hence:
\[
\begin{cases}
\kappa(p)>0\ (\text{i.e., }J>0): & y^\star\ \text{is linearly unstable (repelling);}\\[2pt]
\kappa(p)<0\ (\text{i.e., }J<0): & y^\star\ \text{is asymptotically stable.}
\end{cases}
\]
\end{proposition}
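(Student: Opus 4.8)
The plan is to specialize the general tangent-space Jacobian from Lemma~\ref{lem:jacobian-inner-good} to the barycenter $y^\star=\tfrac1K\mathbf 1$ and then read off its spectrum. First I would record the elementary identities at $y^\star$: $s_2^\star=\|y^\star\|_2^2=K\cdot K^{-2}=1/K$, $\mathrm{diag}(2y^\star)=\tfrac2K I$, and $2y^\star(y^\star)^\top=\tfrac{2}{K^2}\mathbf 1\mathbf 1^\top$. Substituting into $J_y=\kappa(p)\bigl(\mathrm{diag}(2y^\star)-2y^\star(y^\star)^\top-s_2^\star I\bigr)$ collapses the diagonal pieces via $\tfrac2K-\tfrac1K=\tfrac1K$, yielding $J_y=\tfrac{\kappa(p)}{K}\bigl(I-\tfrac2K\mathbf 1\mathbf 1^\top\bigr)$, which is the claimed reduced form.

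For the eigenstructure I would use the orthogonal splitting $\mathbb R^K=\mathrm{span}(\mathbf 1)\oplus\mathbf 1^\perp$ and note that the relevant tangent space at the interior point $y^\star$ is exactly $T_{y^\star}\Delta^{K-1}=\mathbf 1^\perp$. On $\mathbf 1^\perp$ the rank-one term annihilates (since $\mathbf 1^\top\delta=0$), so $J_y\delta=\tfrac{\kappa(p)}{K}\delta$; hence $J_y$ restricted to the $(K-1)$-dimensional tangent space equals the scalar operator $\tfrac{\kappa(p)}{K}I$, giving a single eigenvalue $\lambda=\kappa(p)/K$ of multiplicity $K-1$. The direction $\mathbf 1$ is transverse to the simplex and corresponds to the conserved quantity $\mathbf 1^\top y\equiv1$, verified from $\mathbf 1^\top\dot y=s_2-s_2\cdot1=0$; it carries no dynamics and is naturally assigned eigenvalue $0$ — this is the ``simplex invariance'' entry. (Equivalently, one may form the projected operator $P J_y P$ with $P=I-\tfrac1K\mathbf 1\mathbf 1^\top$ and observe it has kernel $\mathrm{span}(\mathbf 1)$ and acts as $\tfrac{\kappa(p)}{K}I$ on $\mathbf 1^\perp$.)

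The stability conclusion then follows from the sign of $\kappa(p)$. For $p\in(0,1)$ one has $p(1-p)>0$ and $\sigma(p)>0$ (by the reward-variance analysis of Appendix~\ref{appendix:retrieval-noisy-rewards}, $\sigma(p)$ is strictly positive except in the degenerate $J=0$ case), so $\operatorname{sign}\kappa(p)=\operatorname{sign}J$. When $J>0$ all $K-1$ tangent eigenvalues are strictly positive, so $y^\star$ is a linearly unstable (repelling) equilibrium of the reduced flow; when $J<0$ they are strictly negative, and since the equilibrium is hyperbolic relative to the simplex, linearization (Lyapunov's indirect method / Hartman--Grobman applied to the flow restricted to $\Delta^{K-1}$) yields local asymptotic stability. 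I would also remark that this is consistent with the monotonicity identity $\dot s_2=2\kappa(p)\,(s_3-s_2^2)\ge 0$ for $\kappa(p)>0$, which pushes the collision mass away from its minimizer (the uniform point) and toward the vertices.

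The main obstacle is conceptual rather than computational: one must be careful that the ``eigenvalue along $\mathbf 1$'' is not literally an eigenvalue of the ambient $K\times K$ matrix $J_y$ — a direct computation gives $J_y\mathbf 1=-\tfrac{\kappa(p)}{K}\mathbf 1$ — but rather encodes the fact that the genuine object is the linearization of the \emph{constrained} flow on $T_{y^\star}\Delta^{K-1}$, for which the normal mode is inert. Making this identification precise, and noting that hyperbolicity of the restricted system is exactly what licenses passing from the sign of the spectrum to genuine nonlinear stability/instability, is the only step requiring care; the rest is a one-line substitution into the Jacobian formula.
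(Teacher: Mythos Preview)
Your proof is correct and follows exactly the approach implicit in the paper: specialize the Jacobian formula from Lemma~\ref{lem:jacobian-inner-good} at the barycenter, read off the spectrum on the tangent space, and conclude via $\operatorname{sign}\kappa(p)=\operatorname{sign}J$. Your explicit remark that the ambient matrix satisfies $J_y\mathbf 1=-\tfrac{\kappa(p)}{K}\mathbf 1$ (so the ``zero eigenvalue along $\mathbf 1$'' is really the constrained-flow statement, not an eigenvalue of the displayed $K\times K$ matrix) is a useful clarification that the paper leaves implicit.
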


\begin{proposition}[Stability of the pure--arm equilibria]
\label{prop:vertex-stability}
Consider a vertex / pure--arm equilibrium $y^\star = e_j$ of~\eqref{eq:inner-ode-stability}, for some $j\in\{1,\dots,K\}$.
Then $s_2^\star=1$ and
\begin{equation}
J_y
= \kappa(p)\,\big(\mathrm{diag}(2e_j) - 2e_j e_j^\top - I\big).
\end{equation}
In coordinates this yields
\[
\dot\delta_j = 0, 
\qquad
\dot\delta_i = -\kappa(p)\,\delta_i\quad(i\ne j),
\]
so that perturbations orthogonal to $e_j$ decay or grow according to the sign of $\kappa(p)$.
In particular:
\[
\begin{cases}
\kappa(p)>0: & y^\star=e_j\ \text{is locally asymptotically stable (winner--take--all);}\\[2pt]
\kappa(p)<0: & y^\star=e_j\ \text{is unstable (flow returns toward mixed states).}
\end{cases}
\]
\end{proposition}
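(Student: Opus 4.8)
The plan is to establish Proposition~\ref{prop:vertex-stability} by Lyapunov's indirect method: compute the Jacobian of the frozen inner--good ODE \eqref{eq:inner-ode-stability} at the vertex $y^\star=e_j$, read off its spectrum on the simplex tangent space $T_{e_j}\Delta^{K-1}$, and then upgrade the linear picture to a rigorous local statement using a monotone Lyapunov function that also copes with the fact that $e_j$ sits on the boundary of $\Delta^{K-1}$.

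First I would apply Lemma~\ref{lem:jacobian-inner-good}. For $y=y^\star+\delta$ with $\sum_k\delta_k=0$ it gives $\dot\delta=J_y\delta$ with $J_y=\kappa(p)\bigl(\mathrm{diag}(2y^\star)-2y^\star(y^\star)^\top-s_2^\star I\bigr)$. Substituting $y^\star=e_j$ (hence $s_2^\star=\|e_j\|_2^2=1$), the first two terms coincide---both equal $2\,e_je_j^\top$---so they cancel and $J_y=-\kappa(p)\,I$. Restricted to $T_{e_j}\Delta^{K-1}$ this is $-\kappa(p)$ times the identity, so the tangential spectrum is the single eigenvalue $\lambda=-\kappa(p)$ with multiplicity $K-1$; equivalently, in coordinates $\dot\delta_i=-\kappa(p)\,\delta_i+O(\|\delta\|^2)$ for $i\ne j$, with $\delta_j=-\sum_{i\ne j}\delta_i$ slaved to the rest (so $y_j$ stays at its fixed value to first order). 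Since $\kappa(p)=\tfrac{J}{\sigma(p)}\,p(1-p)$ with $\sigma(p)>0$ and $p(1-p)>0$ on $(0,1)$, we have $\sign\kappa(p)=\sign J$, and the fixed point is hyperbolic whenever $J\ne0$.

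To convert this into a statement valid on $\Delta^{K-1}$ despite $e_j$ being a corner, I would argue directly with the candidate $W(y):=1-y_j=\sum_{i\ne j}y_i\ge 0$, which vanishes exactly at $e_j$. Using $\dot y_i=\kappa(p)\,y_i(y_i-s_2)$ and the invariance of each coordinate face $\{y_i=0\}$ (a consequence of the multiplicative/replicator structure, so trajectories remain in $\Delta^{K-1}$), one gets $\dot W=\kappa(p)\sum_{i\ne j}y_i(y_i-s_2)$. In a neighborhood of $e_j$ every $i\ne j$ has $y_i$ small and $s_2$ close to $1$, so $y_i-s_2<0$ whenever $y\ne e_j$; thus $\sum_{i\ne j}y_i(y_i-s_2)\le 0$ with equality only at $e_j$. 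Consequently $\dot W<0$ near $e_j$ when $\kappa(p)>0$, so $e_j$ is locally asymptotically stable (the winner--take--all regime for $J>0$), while $\dot W>0$ when $\kappa(p)<0$, so $W$ strictly increases and $e_j$ is unstable, with nearby orbits driven toward more mixed states---consistent with the stability/instability dichotomy of the uniform equilibrium in Proposition~\ref{prop:uniform-stability}. The borderline $J=0$ makes $\kappa(p)\equiv 0$, the field vanishes, and every point is an equilibrium.

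The main obstacle is precisely this corner issue: since $e_j$ lies on the boundary of $\Delta^{K-1}$ rather than in its interior, one cannot directly invoke the smooth stable--manifold/linearization theorem, so the argument must instead exploit face invariance to confine the dynamics and certify convergence (resp. divergence) via the monotone function $W$. Everything else---the Jacobian evaluation at $e_j$ and the sign bookkeeping through $\kappa(p)$---is routine, and the already-established order preservation and Lyapunov structure of the inner--good flow (Lemma~\ref{lem:order-preservation}, Theorem~\ref{thm:global}) can be cited to promote the local statement for $J>0$ to the global winner--take--all convergence if a stronger claim is wanted.
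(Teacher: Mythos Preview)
Your linearization is correct: at $y^\star=e_j$ one has $\mathrm{diag}(2e_j)=2e_je_j^\top$, so the Jacobian collapses to $J_y=-\kappa(p)I$ and every tangent direction carries eigenvalue $-\kappa(p)$. (The proposition's displayed form is just the unsimplified instance of Lemma~\ref{lem:jacobian-inner-good}; the paper does not carry out this cancellation, nor does it supply a separate proof of the proposition beyond the Jacobian formula.) One caveat: your parenthetical ``so $y_j$ stays at its fixed value to first order'' is not accurate---from $J_y=-\kappa(p)I$ one gets $\dot\delta_j=-\kappa(p)\delta_j$ as well; the constraint $\delta_j=-\sum_{i\ne j}\delta_i$ only makes that coordinate redundant, not stationary. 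This slip does not affect the stability conclusion.

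Your Lyapunov step with $W(y)=1-y_j$ is a genuine addition and is the right move for a vertex on the boundary of $\Delta^{K-1}$, where the standard Hartman--Grobman machinery is delicate. The paper, to the extent it argues stability at all, does so elsewhere (Proposition~\ref{prop:eq-stability}) via the Shahshahani--gradient structure with the global potential $\Phi(y)=\tfrac12\|y\|_2^2$: for $\kappa>0$ the flow ascends $\Phi$, whose maximizers on $\Delta^{K-1}$ are the vertices. Your $W$ is more local but more elementary and directly certifies (in)stability at a specific $e_j$ without invoking the Riemannian picture; the paper's $\Phi$ instead gives the global phase portrait (all vertices attract simultaneously, uniform is the unique minimizer). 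Either route suffices here.
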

Since
\(
\kappa(p)=\tfrac{J}{\sigma(p)}p(1-p)
\)
and $p(1-p)>0$ for $p\in(0,1)$, the sign of the Youden index $J$ dictates the symmetry breaking based:
\begin{corollary}[Stability summary for within--good equilibria]
\label{cor:within-good-stability-summary}
For the inner dynamics~\eqref{eq:inner-ode-stability}, the stability types of the canonical
equilibria are summarized in Table~\ref{tab:within-good-stability}.
\begin{table}[h]
    \centering
    \small
    \begin{tabular}{lcc}
    \toprule
    Equilibrium type & Condition on $J$ & Stability type \\
    \midrule
    Uniform $y_j=1/K$ & $J>0$ & Unstable (diversity collapse) \\
    Uniform $y_j=1/K$ & $J<0$ & Stable (diversity preserved) \\
    Vertex $y=e_j$ & $J>0$ & Stable (specialization) \\
    Vertex $y=e_j$ & $J<0$ & Unstable (reverts to mixture) \\
    \bottomrule
    \end{tabular}
    \caption{Stability of the uniform and pure--arm equilibria for the within--good ODE~\eqref{eq:inner-ode-stability}.}
    \label{tab:within-good-stability}
\end{table}
\end{corollary}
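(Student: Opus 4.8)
The plan is to derive Corollary~\ref{cor:within-good-stability-summary} directly from Propositions~\ref{prop:uniform-stability} and~\ref{prop:vertex-stability}, which already compute the tangential spectra of the two canonical equilibria of~\eqref{eq:inner-ode-stability}. The only additional ingredient is a sign translation: I would first observe that on the admissible interior range $p\in(0,1)$ one has $p(1-p)>0$, and that the variance identity $\sigma(p)^2=q(p)\bigl(1-q(p)\bigr)$ from Appendix~\ref{appendix:retrieval-noisy-rewards} (with $q(p)\in(\delta_{\mathrm{FP}},\,1-\delta_{\mathrm{FN}})$) forces $\sigma(p)>0$ whenever $J\neq 0$. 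Hence $\operatorname{sign}\kappa(p)=\operatorname{sign}J$ for every interior $p$, so the quasi-frozen linear stability of the within-good dynamics depends on the noise model only through $\operatorname{sign}J$.

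With this in hand, I would read off the four table entries. For $y^\star=\tfrac1K\mathbf 1$, Proposition~\ref{prop:uniform-stability} shows the Jacobian restricted to $T_{y^\star}\Delta^{K-1}$ equals $\tfrac{\kappa(p)}{K}$ times the identity, so every tangential eigenvalue is $\kappa(p)/K$: positive (repelling) iff $J>0$ and negative (attracting) iff $J<0$, which gives the rows ``diversity collapse'' and ``diversity preserved.'' For $y^\star=e_j$, Proposition~\ref{prop:vertex-stability} gives $\dot\delta_i=-\kappa(p)\,\delta_i$ for $i\neq j$ and $\dot\delta_j=0$, so on the constraint set $\sum_i\delta_i=0$ all nonzero eigenvalues equal $-\kappa(p)$: negative (attracting) iff $J>0$ and positive iff $J<0$, giving the rows ``specialization'' and ``reverts to mixture.'' Assembling the four cases reproduces Table~\ref{tab:within-good-stability} line by line.

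This is essentially bookkeeping, so there is no substantive obstacle; the one point that deserves care is the persistent zero eigenvalue along $\mathbf 1$. I would clarify that it reflects only the conservation $\sum_j\dot y_j=0$ and does not create a center direction, so ``stable''/``unstable'' in the table must be read as spectral stability on the invariant simplex $\Delta^{K-1}$ (equivalently, on $\mathbf 1^\perp$); under that reading Propositions~\ref{prop:uniform-stability}--\ref{prop:vertex-stability} transfer verbatim. A secondary caveat, worth stating but not proving inside this corollary, is that treating $p$ as frozen is exact only on the slow manifold; a fully rigorous version would appeal to the time-scale separation already used to decouple shape from mass in~\eqref{eq:coupled_dynamics}.
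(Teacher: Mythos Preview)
Your proposal is correct and matches the paper's approach exactly: the corollary in the paper has no standalone proof, being stated as a direct summary of Propositions~\ref{prop:uniform-stability} and~\ref{prop:vertex-stability} together with the one-line sign observation (written just before the corollary) that $\kappa(p)=\tfrac{J}{\sigma(p)}p(1-p)$ has the sign of $J$ for $p\in(0,1)$. Your added remarks on the neutral $\mathbf{1}$-direction and the quasi-frozen $p$ assumption are sound and slightly more careful than what the paper makes explicit.
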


\begin{remark}[Role of the Youden index $J$]
\label{rem:role-of-J}
We can see the effect of noise as
\begin{itemize}[leftmargin=1.5em]
\item $J>0$ (reward alignment): the uniform mixture is destabilized, pure--arm vertices become attractors, and the good arms polarize; diversity collapses.
\item $J<0$ (reward inversion): the uniform mixture is stabilized while vertices are repelling; diversity is maintained.
\end{itemize}
\end{remark}

\subsection{Coupling back to physical time}
\label{subsec:coupling-time}

Let \(p(t)\in(0,1)\) be the \emph{total} bad mass and define \(\kappa(t)=\frac{J}{\sigma(p(t))}\,p(t)\bigl(1-p(t)\bigr)\).
Then \(d\tau/dt=\kappa(t)\) and (in the multi-bad model) the coupled equations read
\[
\dot y=\kappa(t)\big(y\odot y-s_2\,y\big),\qquad
\dot p
=-\,\frac{J}{\sigma(p)}\,[p(1-p)]^{2}\,\big(s_2+t_2\big),
\qquad
s_2=\|y\|_2^2,\ \ t_2=\|z\|_2^2.
\]
In internal time,
\[
\frac{dp}{d\tau}=-\,p(1-p)\,\bigl(s_2(\tau)+t_2(\tau)\bigr).
\]

Along the generic $J>0$ branch we have $s_2(\tau)\to 1$ (good-block polarization) and
$t_2(\tau)\to 1/M$ (bad-block mixing), hence
\[
\frac{dp}{d\tau}=-(1+\tfrac1M)\,p\,(1+o(1)),
\qquad
p(\tau)=C\,e^{-(1+1/M)\tau}\,(1+o(1))\quad(\tau\to\infty),
\]
for some \(C>0\) determined by the initial condition (e.g.\ via the logit identity).

Next, since
\[
\frac{dt}{d\tau}=\frac{1}{\kappa(t)}
=\frac{\sigma\big(p(\tau)\big)}{J\,p(\tau)\,(1-p(\tau))}
=\frac{\sigma\big(p(\tau)\big)}{J\,p(\tau)}\,(1+o(1)),
\]
the physical-time asymptotics are governed by the local behavior of \(\sigma(p)\) near \(p=0\).

\begin{theorem}[Physical-time rates via the local law of \(\sigma(p)\) (multi-bad setting)]
\label{thm:physical-trichotomy}
Assume \(\sigma(p)\sim \sigma_0\,p^\gamma\) as \(p\downarrow0\) with \(\sigma_0>0\) and \(\gamma\in\mathbb R\).
Let \(m=\arg\max_i y_i(0)\) (unique) and suppose we are on the $J>0$ branch so that $p(t)\downarrow0$.
Write \(a:=1+\frac{1}{M}\) (the asymptotic internal-time slope of $\logit p$).
Then, generically, as \(t\to\infty\) or to a finite absorption time \(t_\infty\),
\[
\begin{array}{ll}
\text{\bf(A) } \gamma<1: &
p(t)\ \asymp\ t^{-1/(1-\gamma)},\qquad
1-y_m(t),\ y_i(t)\ (i\ne m)\ \asymp\ t^{-1/[a(1-\gamma)]};\\[6pt]
\text{\bf(B) } \gamma=1: &
p(t)\ \asymp\ e^{-(aJ/\sigma_0)t},\qquad
1-y_m(t),\ y_i(t)\ (i\ne m)\ \asymp\ e^{-(J/\sigma_0)t};\\[6pt]
\text{\bf(C) } \gamma>1: &
p(t)\ \asymp\ (t_\infty-t)^{1/(\gamma-1)},\qquad
1-y_m(t),\ y_i(t)\ (i\ne m)\ \asymp\ (t_\infty-t)^{1/[a(\gamma-1)]}.
\end{array}
\]
All implicit constants depend only on \((y(0),z(0),p(0),J,\sigma_0,\gamma)\).
The power-law exponents for $p(t)$ are universal (independent of $K$ and $M$), while the $y$-rates
depend on $M$ through $a=1+1/M$ (reducing to the one-bad-arm case when $M=1$).
\end{theorem}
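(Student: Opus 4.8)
\medskip
\noindent\textbf{Proof proposal.}
The plan is to transfer the internal-time asymptotics of $p$, $y$, and $s_2$ (already available on the generic $J>0$ branch) into physical time via the time change $dt/d\tau=\kappa(t)^{-1}=\sigma\big(p(\tau)\big)\big/\big(J\,p(\tau)(1-p(\tau))\big)$, and then read off the exponents by a short computation in each of the three sign regimes of $\gamma-1$.

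\emph{Step 1: internal-time behaviour.} Assume the generic (no-tie) initial condition, so $m=\arg\max_i y_i(0)$ is unique. Lemma~\ref{lem:order-preservation}, Theorem~\ref{thm:global}, and Proposition~\ref{prop:exp-tau} give $y(\tau)\to e_m$ with $y_i(\tau)=\Theta(e^{-\tau})$ for $i\ne m$, $1-y_m(\tau)=\Theta(e^{-\tau})$, and $1-s_2(\tau)=\Theta(e^{-\tau})$, while the sign-reversed collision flow for $z$ (cf.\ \eqref{eq:bad_shape} and the bad-block part of Theorem~\ref{thm:etaExpansion}) gives $t_2(\tau)\to 1/M$ with $t_2(\tau)-1/M=O(e^{-c\tau})$ for some $c>0$. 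Writing $a:=1+\tfrac1M$, the exact logit identity $\tfrac{d}{d\tau}\log\tfrac{p}{1-p}=-(s_2(\tau)+t_2(\tau))$ of Corollary~\ref{cor:logit} integrates: since $s_2+t_2-a$ is integrable on $[0,\infty)$, the constant $c_\infty:=\int_0^\infty\!\big(s_2+t_2-a\big)\,d\tau$ is finite and $p(\tau)=C\,e^{-a\tau}(1+o(1))$ with $C:=\tfrac{p_0}{1-p_0}e^{-c_\infty}>0$ and $1-p(\tau)\to 1$. Crucially the exponent $a$ is exact; only the prefactor carries the $o(1)$.

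\emph{Step 2: change of variables.} Substituting $\sigma(p)\sim\sigma_0 p^\gamma$ and $p(\tau)\sim C e^{-a\tau}$ into $dt/d\tau$ yields $\dfrac{dt}{d\tau}=D\,e^{-b\tau}(1+o(1))$ with $b:=a(\gamma-1)$ exact and $D:=\sigma_0 C^{\gamma-1}/J>0$. Now split on the sign of $b$ and invert. If $\gamma=1$ then $t(\tau)\sim D\tau$, so $\tau\sim t/D=Jt/\sigma_0$; feeding this into $p\asymp e^{-a\tau}$ and $1-y_m,\,y_i,\,1-s_2\asymp e^{-\tau}$ gives $p(t)\asymp e^{-(aJ/\sigma_0)t}$ and the $e^{-(J/\sigma_0)t}$ rates — case (B). If $\gamma<1$ then $b<0$, $|b|=a(1-\gamma)$, $t(\tau)\sim\tfrac{D}{|b|}e^{|b|\tau}$, hence $e^{\tau}\asymp t^{1/|b|}$; then $p(t)\asymp e^{-a\tau}\asymp t^{-a/|b|}=t^{-1/(1-\gamma)}$ (the $M$-dependence in $a$ cancels against $|b|$), while $1-y_m(t),\,y_i(t)\asymp t^{-1/[a(1-\gamma)]}$ (and $1-s_2(t)$ at the same rate) — case (A). If $\gamma>1$ then $b>0$, $\int_0^\infty(dt/d\tau)\,d\tau<\infty$, so $t_\infty:=t(0)+\int_0^\infty(dt/d\tau)\,d\tau<\infty$ and $t_\infty-t(\tau)\sim\tfrac{D}{b}e^{-b\tau}$; inverting, $e^{-\tau}\asymp(t_\infty-t)^{1/b}$, so $p(t)\asymp(t_\infty-t)^{a/b}=(t_\infty-t)^{1/(\gamma-1)}$ and $1-y_m(t),\,y_i(t)\asymp(t_\infty-t)^{1/[a(\gamma-1)]}$ — case (C). This reproduces the stated trichotomy, with all implicit constants depending only on $(y(0),z(0),p(0),J,\sigma_0,\gamma)$, and with $t_\infty<\infty$ precisely when $\gamma>1$.

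\emph{Main obstacle.} The only genuinely delicate point is propagating the $(1+o(1))$ factors through integration and inversion without corrupting the exponents. The clean route is an $\varepsilon$-sandwich: fix $\varepsilon>0$, choose $\tau_0$ so that for $\tau\ge\tau_0$ one has $s_2+t_2\ge a-\varepsilon$, $t_2-\tfrac1M\le\varepsilon$, and $(1-\varepsilon)\sigma_0 p(\tau)^\gamma\le\sigma(p(\tau))\le(1+\varepsilon)\sigma_0 p(\tau)^\gamma$; then $t(\tau)$ is trapped between the solutions of $dt/d\tau=(1\pm\varepsilon)D\,e^{-b\tau}$ (same $b$, perturbed prefactor), each integrated and inverted in closed form, and one lets $\varepsilon\downarrow 0$. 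Because the conclusion is stated only up to multiplicative constants ($\asymp$), the $O(1)$ ambiguity in $\tau$ in case (A), the $\varepsilon$-slack in $D$, and the exact value of $C$ are all immaterial; the structural inputs actually used are just (i) $p(\tau)\asymp e^{-a\tau}$ with $a=1+1/M$, (ii) the $e^{-\tau}$ rates for $y$ and $1-s_2$, and (iii) $\sigma(p)\asymp p^\gamma$ near $0$.
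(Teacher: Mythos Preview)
Your proposal is correct and follows essentially the same approach as the paper: the paper sets up exactly the same reduction immediately before the theorem statement (internal-time exponential for $p(\tau)$ with slope $a=1+1/M$, then $dt/d\tau=\sigma(p(\tau))/(Jp(\tau))(1+o(1))$), and leaves the three-case integration/inversion implicit, which you have carried out carefully. Your $\varepsilon$-sandwich remark on propagating the $(1+o(1))$ factors is a welcome addition that the paper does not spell out.
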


\begin{corollary}[Who wins, and how fast?]
\label{cor:winner-and-rate}
For a general initial condition \(y(0)\in\Delta^{K-1}\) with a unique maximizer \(m\), the winning arm is \(m\).
In internal time, the non-winners decay as \(e^{-\tau}\); in physical time, the rates follow
Theorem~\ref{thm:physical-trichotomy}.
\end{corollary}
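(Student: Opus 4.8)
The plan is to obtain this corollary by assembling results already established for the autonomous internal-time flow $\tfrac{dy}{d\tau}=y\odot y-s_2\,y$: Lemma~\ref{lem:order-preservation} gives the identity of the surviving arm via sign preservation, Theorem~\ref{thm:global} together with Proposition~\ref{prop:exp-tau} gives global convergence to the winning vertex and the $e^{-\tau}$ polarization rate, and Theorem~\ref{thm:physical-trichotomy} translates that internal-time rate into the physical-time trichotomy through the time change $d\tau/dt=\kappa(t)$. No new computation is needed; the proof is bookkeeping, so the main effort is to confirm that ``$y(0)$ has a unique maximizer'' is precisely the genericity hypothesis that selects the vertex $\mathbf e_m$, and that $1-y_m$ inherits its decay exponent from that of the bad mass $p$.

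First I would set $m:=\arg\max_{1\le i\le K}y_i(0)$, well-defined by hypothesis. By Lemma~\ref{lem:order-preservation}, the sign of $\delta_{mi}(\tau)=y_m(\tau)-y_i(\tau)$ is conserved, so the strict ordering $y_m>y_i$ ($i\ne m$) persists for all $\tau\ge 0$. Combined with the fact (Theorem~\ref{thm:global}) that $\mathcal L=\tfrac13 s_3-\tfrac14 s_2^2$ is a strict Lyapunov function on the compact invariant set $\Delta^{K-1}$ whose equilibria are $\mathcal E=\bigcup_{m'}\mathcal E_{m'}$, this forces $y(\tau)\to\mathbf e_m$: the $\omega$-limit set lies in $\mathcal E$, and no non-vertex equilibrium can belong to it, since every such point has all nonzero coordinates equal and would make the strictly monotone exponential factor $\exp\!\big(\int_0^\tau(y_m+y_i-s_2)\,du\big)$ in the $\delta_{mi}$ formula grow without bound near it, colliding with the bound $\delta_{mi}\le 1$. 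This reproduces the ``generic basin'' conclusion of Theorem~\ref{thm:global} with ``unique maximizer'' as the explicit genericity condition, giving the ``who wins'' claim. Proposition~\ref{prop:exp-tau}, applied at the vertex $\mathbf e_m$, then upgrades this to a rate: the linearization $\varepsilon_i'=-\varepsilon_i+O(\varepsilon^2)$ with $\varepsilon_i:=y_i$ ($i\ne m$) yields, for $\tau\ge\tau_0$, $y_i(\tau)=c_i e^{-\tau}(1+o(1))$ with $c_i>0$, hence $1-y_m(\tau)=\big(\sum_{i\ne m}c_i\big)e^{-\tau}(1+o(1))$ and $1-s_2(\tau)=\Theta(e^{-\tau})$ --- the claimed internal-time rate.

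Finally, for the physical-time rates I would invoke Theorem~\ref{thm:physical-trichotomy} directly. That theorem is already conditioned on the $J>0$ branch (so that $p(t)\downarrow 0$, by the dichotomy of Theorem~\ref{thm:sign-J}) and on a local power law $\sigma(p)\sim\sigma_0 p^\gamma$ near $p=0$; it converts $dt/d\tau=1/\kappa(t)=\sigma(p(\tau))/\big(J\,p(\tau)(1-p(\tau))\big)$, together with $p(\tau)\asymp e^{-a\tau}$ ($a=1+1/M$) and $1-y_m(\tau)\asymp e^{-\tau}$, into the three regimes: (A) $\gamma<1$, non-winners $\asymp t^{-1/[a(1-\gamma)]}$; (B) $\gamma=1$, non-winners $\asymp e^{-(J/\sigma_0)t}$; (C) $\gamma>1$, non-winners $\asymp(t_\infty-t)^{1/[a(\gamma-1)]}$, with $1-y_m$ and each $y_i$ ($i\ne m$) carrying the same exponent. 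Quoting these expressions closes the proof. I expect the only genuinely delicate point to be the genericity bookkeeping in the first step: one must argue that a strict maximizer at $\tau=0$, via sign preservation on the pairs $(m,i)$, already excludes every higher saddle $\mathcal E_{m'}$ with $m'\ge 2$ --- so that ``unique maximizer'' is exactly the hypothesis under which the generic conclusion of Theorem~\ref{thm:global} holds --- after which everything reduces to citation.
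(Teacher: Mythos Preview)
Your proposal is correct and follows essentially the same approach as the paper, which treats this corollary as immediate from Lemma~\ref{lem:order-preservation}, Theorem~\ref{thm:global}, Proposition~\ref{prop:exp-tau}, and Theorem~\ref{thm:physical-trichotomy} without giving a separate proof. Your extra care in verifying that ``unique maximizer'' (rather than the stronger ``no coordinate ties'' hypothesis of Theorem~\ref{thm:global}) suffices to exclude the non-vertex saddles is correct and worth keeping; the paper handles the same point implicitly via the explicit solution formula in the remark following the Geometry/Lyapunov lemma, which directly shows that a unique initial maximizer forces convergence to the corresponding vertex.
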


\begin{corollary}[Sharp \(\tau\)-envelopes for \(p\)]
\label{cor:tau-envelopes}
Using the logit \(L(\tau)=\log\frac{p(\tau)}{1-p(\tau)}\),
\[
\frac{dL}{d\tau}=-(s_2(\tau)+t_2(\tau)).
\]
On the generic $J>0$ branch, $s_2(\tau)\uparrow1$ and $t_2(\tau)\downarrow 1/M$ (with exponentially decaying gaps),
so
\[
L(\tau)=L(0)-\Bigl(1+\frac{1}{M}\Bigr)\tau+O(1),\qquad p(\tau)=\Theta\!\big(e^{-(1+1/M)\tau}\big).
\]
Thus physical-time rates reduce to integrating \(dt/d\tau\sim\sigma(p(\tau))/(Jp(\tau))\), i.e.\ to the local exponent \(\gamma\).
\end{corollary}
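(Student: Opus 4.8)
The plan is to prove the four assertions in turn: the exact logit identity, the monotone exponential approach of $s_2,t_2$ to $1$ and $\tfrac1M$, the resulting asymptotics of $L(\tau)$ and $p(\tau)$, and the reduction of physical-time rates to $\int \sigma(p)/p$. Throughout I work with $\tau\in[0,\infty)$ as the independent variable: the $y$- and $z$-flows in $\tau$ are the autonomous collision flow \eqref{eq:GD} and its sign-reversed analogue, the $p$-flow is $dp/d\tau=-p(1-p)\bigl(s_2(\tau)+t_2(\tau)\bigr)$ (this is \eqref{eq:logit-internal} on the $J>0$ branch, obtained by dividing \eqref{eq:bad-ode} by $d\tau/dt=\eta|J|\sigma(p)^{-1}p(1-p)$), and all solutions stay in the compact within-block simplices and in $(0,1)$, so nothing escapes in finite $\tau$. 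The first identity is then immediate: since $L=\log\frac{p}{1-p}$ satisfies $dL/dp=\bigl(p(1-p)\bigr)^{-1}$, the chain rule gives $dL/d\tau=-(s_2(\tau)+t_2(\tau))$ whenever $p(\tau)\in(0,1)$, which holds on the whole $J>0$ trajectory by the monotone envelopes of Corollary~\ref{cor:logit}.

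The substantive step is to control the two nonnegative gaps $g(\tau):=1-s_2(\tau)$ and $h(\tau):=t_2(\tau)-\tfrac1M$. For the good block, Lemma~\ref{lem:s2} gives $s_2'(\tau)=2(s_3-s_2^2)\ge 0$, so $g$ is nonincreasing; on the generic branch ($y(0)$ with a unique maximizer $m=\argmax_i y_i(0)$) Lemma~\ref{lem:order-preservation} and Theorem~\ref{thm:global} show the flow converges to the vertex $\mathbf e_m$, and Proposition~\ref{prop:exp-tau} sharpens this to $g(\tau)=1-s_2(\tau)=\Theta(e^{-\tau})$. For the bad block, the sign-reversed collision flow makes $t_2$ nonincreasing (the same Cauchy--Schwarz identity $\sum_m z_m^3\ge t_2^2$ with a minus sign), and since any coordinate of small mass has strictly positive drift, no face is ever reached, so the monotone limit is the barycenter $\mathbf 1/M$; linearizing there (as in Theorem~\ref{thm:etaExpansion}(ii)--(iii), valid once the flow enters a neighborhood of the barycenter) gives $h(\tau)=t_2(\tau)-\tfrac1M=\Theta(e^{-2\tau/M})$, or $h\equiv 0$ if $z(0)$ is already uniform. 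In both cases the gap is nonnegative, bounded on all of $[0,\infty)$ by monotonicity, and eventually exponentially small, so $c:=\int_0^\infty\bigl(g(u)-h(u)\bigr)\,du$ converges absolutely.

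Writing $dL/d\tau=-(1+\tfrac1M)+\bigl(g(\tau)-h(\tau)\bigr)$ and integrating over $[0,\tau]$ then yields $L(\tau)=L(0)-(1+\tfrac1M)\tau+c+o(1)$; in particular $L(\tau)+(1+\tfrac1M)\tau\to L(0)+c$, which is the stated $O(1)$ correction (indeed a convergent one). Since $L(\tau)\to-\infty$, $p(\tau)=e^{L(\tau)}/(1+e^{L(\tau)})\sim e^{L(0)+c}\,e^{-(1+1/M)\tau}$, i.e.\ $p(\tau)=\Theta(e^{-(1+1/M)\tau})$. Finally, inverting $d\tau/dt=\eta|J|\sigma(p)^{-1}p(1-p)$ gives $dt/d\tau=\sigma(p(\tau))/\bigl(\eta J\,p(\tau)(1-p(\tau))\bigr)$, and because $p(\tau)\to 0$ forces $1-p(\tau)\to 1$, this is asymptotically $\sigma(p(\tau))/(Jp(\tau))$ up to the absorbed constant $\eta$; substituting a local law $\sigma(p)\sim\sigma_0 p^\gamma$ turns the physical-time integral into $\int p^{\gamma-1}\,d\tau$, which is precisely the trichotomy of Theorem~\ref{thm:physical-trichotomy}. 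The one place needing care is the middle paragraph: for general within-block initial data one must first invoke a \emph{global} convergence statement (to reach the linear regime) and only then the \emph{local} exponential rate, checking that the pre-linear transient contributes only finitely to $\int_0^\infty(\text{gap})\,d\tau$; monotonicity of both $g$ and $h$ is exactly what makes this bookkeeping immediate, since it precludes any lingering near the saddle equilibria of the good-block flow.
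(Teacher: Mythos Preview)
Your proposal is correct and follows essentially the approach the paper sketches just before the corollary (in the ``Coupling back to physical time'' paragraph): establish the logit identity from the $p$-equation in internal time, use Proposition~\ref{prop:exp-tau} for $1-s_2(\tau)=\Theta(e^{-\tau})$ and the sign-reversed bad-block analysis for $t_2(\tau)-1/M=\Theta(e^{-2\tau/M})$, then integrate to get the $(1+1/M)\tau$ slope with a convergent remainder. One small quibble: your closing sentence overstates what monotonicity buys---it does not by itself preclude slow passage near good-block saddles; what actually makes the transient integral finite is simply that Theorem~\ref{thm:global} guarantees the linear regime is entered in \emph{finite} internal time $\tau_0$, and $g,h$ are bounded on $[0,\tau_0]$.
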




\subsection{Dynamics of $y$- }\label{subsection:Dynamics-of-y}
Let $q:=y(0)\in\Delta^{K-1}$ be the initial composition. Reparametrize time by
\[
\tau(t)=\int_0^t \kappa(s)\,ds,\qquad
\kappa(t)=\frac{J}{\sigma(p(t))}\,p(t)\bigl(1-p(t)\bigr).
\]
In $\tau$-time the inner flow is autonomous:
\[
\frac{dy_j}{d\tau}=y_j\bigl(y_j-s_2\bigr),\qquad s_2=\sum_i y_i^2.
\]
Introduce $u_j:=1/y_j$. Then $u_j' - s_2\,u_j=-1$, whose solution is \footnote{
 Writing it in the standard form
$
u_j'(\tau) + a(\tau)\,u_j(\tau) \;=\; b(\tau),
$
we have
$
a(\tau) = -s_2(\tau), 
b(\tau) = -1.
$.
For a linear ODE \(u_j' + a(\tau)u_j = b(\tau)\), the integrating factor is
$
\mu(\tau)
\;=\;
\exp\!\Bigl(\int_0^\tau a(r)\,dr\Bigr).
$
Using \(a(\tau) = -s_2(\tau)\) and the notation
$
S(\tau) \;:=\; \int_0^\tau s_2(r)\,dr,
$
we obtain
$
\mu(\tau)
\;=\;
\exp\!\Bigl(\int_0^\tau -s_2(r)\,dr\Bigr)
\;=\;
e^{-S(\tau)}.
$
Multiplying the ODE by \(\mu(\tau)\) gives
\begin{equation}
e^{-S(\tau)} u_j'(\tau) - s_2(\tau) e^{-S(\tau)} u_j(\tau)
\;=\;
-\,e^{-S(\tau)}.
\end{equation}
By the product rule and the definition of \(S\),
\begin{equation}
\frac{d}{d\tau}\bigl(e^{-S(\tau)} u_j(\tau)\bigr)
\;=\;
e^{-S(\tau)} u_j'(\tau) - s_2(\tau) e^{-S(\tau)} u_j(\tau),
\end{equation}
so the left-hand side becomes an exact derivative and the equation reduces to
\begin{equation}
\frac{d}{d\tau}\bigl(e^{-S(\tau)} u_j(\tau)\bigr)
\;=\;
-\,e^{-S(\tau)}.
\end{equation}

Integrating from \(0\) to \(\tau\) yields
\begin{equation}
e^{-S(\tau)} u_j(\tau) - e^{-S(0)} u_j(0)
\;=\;
- \int_0^\tau e^{-S(s)}\,ds.
\end{equation}
Since \(S(0)=0\), we have \(e^{-S(0)}=1\). Writing the initial composition as \(y_j(0)=q_j\), we have \(u_j(0)=1/q_j\). Thus

\begin{equation}
e^{-S(\tau)} u_j(\tau)
\;=\;
\frac{1}{q_j} - \int_0^\tau e^{-S(s)}\,ds.
\end{equation}
Defining
$
I(\tau) \;:=\; \int_0^\tau e^{-S(s)}\,ds,
$
and multiplying both sides by \(e^{S(\tau)}\), we obtain the explicit solution
\begin{equation}
u_j(\tau)
\;=\;
e^{S(\tau)}\Bigl(\frac{1}{q_j} - I(\tau)\Bigr),
\end{equation}
}
\[
u_j(\tau)=e^{S(\tau)}\!\Bigl(\frac{1}{q_j}-I(\tau)\Bigr),
\quad
S(\tau):=\int_0^\tau s_2(r)\,dr,
\quad
I(\tau):=\int_0^\tau e^{-S(s)}\,ds=\int_0^\tau e^{-\int_0^\tau \sum_i y_i(r)^2\,dr}\,ds.
\]
notice that based on the  simplex constraint $\sum_{j=1}^K y_j(\tau)=1$, we can eliminate the common factor $e^{-S(\tau)}$: \begin{equation}
1
\;=\;
\sum_{\ell=1}^K y_\ell(\tau)
\;=\;
e^{-S(\tau)}\sum_{\ell=1}^K \frac{1}{\frac{1}{q_\ell}-I(\tau)}.
\end{equation}
Hence
\begin{equation}
e^{-S(\tau)}
\;=\;
\biggl[\sum_{\ell=1}^K \frac{1}{\frac{1}{q_\ell}-I(\tau)}\biggr]^{-1}.
\end{equation}
Substituting this back into the expression for $y_j(\tau)$ gives
\begin{equation}\label{eq:y_j_intetm_I}   
y_j(\tau)=\frac{e^{-S(\tau)}}{\frac{1}{q_j}-I(\tau)}
=\frac{\dfrac{q_j}{1-I(\tau)\,q_j}}{\sum_{\ell=1}^K \dfrac{q_\ell}{1-I(\tau)\,q_\ell}},\qquad
I\in[0,y(0)_{\max}).
\end{equation}


\subsection{Evolution of the collision term $s_2$ }
\label{subsec:s2-exact-bounds}

Throughout, let $q:=y(0)\in\Delta^{K-1}$ be the initial within–good composition and
$I(\tau)$ the scalar from Eq.~\eqref{eq:y_j_intetm_I}. For $r\in\{1,2,3\}$ define the
moment sums
\[
\mathsf M_r(I)\;:=\;\sum_{j=1}^K \frac{q_j^{\,r}}{(1-I q_j)^{r}}
\qquad\big(\mathsf M_1>0\ \text{on }I\in[0,1/q_\ast)\big),
\quad q_\ast:=\max_j q_j.
\]

\begin{lemma}[Exact moment formulas and internal-time map]
\label{lem:s2-exact}
Under the change of variable $I=I(\tau)$ from Eq.~\eqref{eq:y_j_intetm_I},
\begin{align}
y_j(\tau(I))
&= \frac{\displaystyle \frac{q_j}{1-I q_j}}{\displaystyle \mathsf M_1(I)},\\[4pt]
s_2(\tau(I))=\|y\|_2^2
&= \frac{\mathsf M_2(I)}{\mathsf M_1(I)^2},
\qquad
s_3(\tau(I))=\sum_j y_j^3
= \frac{\mathsf M_3(I)}{\mathsf M_1(I)^3},\\[4pt]
\tau(I)
&= \int_0^I \mathsf M_1(z)\,dz
= -\sum_{j=1}^K \log\bigl(1-I q_j\bigr),\\[4pt]
S(\tau):=\int_0^\tau s_2(u)\,du
&= \int_0^{I(\tau)} \frac{\mathsf M_2(z)}{\mathsf M_1(z)}\,dz.
\end{align}
\end{lemma}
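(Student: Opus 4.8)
The plan is to prove every assertion by a single change of variables between the internal time $\tau$ and the auxiliary scalar $I=I(\tau)$ introduced in \S\ref{subsection:Dynamics-of-y}, building on the closed-form solution already derived there. First, recall that eliminating the common factor $e^{-S(\tau)}$ via the simplex constraint $\sum_\ell y_\ell(\tau)=1$ produced the explicit solution \eqref{eq:y_j_intetm_I}, namely $y_j(\tau)=\bigl(q_j/(1-Iq_j)\bigr)\big/\sum_\ell q_\ell/(1-Iq_\ell)$ for $I=I(\tau)\in[0,1/q_\ast)$; abbreviating the denominator as $\mathsf{M}_1(I)$, this is exactly the first displayed identity. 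Raising $y_j$ to the $r$-th power and summing over $j$ then gives $\sum_j y_j^{\,r}=\mathsf{M}_r(I)/\mathsf{M}_1(I)^{r}$ for each $r$; specializing to $r=2$ and $r=3$ yields the stated formulas for $s_2=\|y\|_2^2$ and $s_3=\sum_j y_j^3$ with no further computation (the case $r=1$ is just the normalization $\sum_j y_j=1$, automatically consistent).

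Next I would establish the internal-time map, which is the only genuinely new identity. Differentiating $I(\tau)=\int_0^\tau e^{-S(s)}\,ds$ gives $I'(\tau)=e^{-S(\tau)}$, and the elimination step in \S\ref{subsection:Dynamics-of-y} (i.e. $1=\sum_\ell y_\ell=e^{-S(\tau)}\sum_\ell(1/q_\ell-I)^{-1}$) shows $e^{-S(\tau)}=1/\mathsf{M}_1(I(\tau))$. Hence $dI/d\tau=1/\mathsf{M}_1(I)$, equivalently $d\tau/dI=\mathsf{M}_1(I)$, and integrating from the initial data $(\tau,I)=(0,0)$ yields $\tau(I)=\int_0^I\mathsf{M}_1(z)\,dz$. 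Substituting $\mathsf{M}_1(z)=\sum_j q_j/(1-zq_j)$ and integrating term by term gives the antiderivative $-\log(1-zq_j)$ for each $j$, hence $\tau(I)=-\sum_j\log(1-Iq_j)$. Since $\mathsf{M}_1>0$ on $[0,1/q_\ast)$, the map $I\mapsto\tau(I)$ is strictly increasing, and $\tau(I)\to\infty$ as $I\uparrow 1/q_\ast$, so $I\colon[0,\infty)\to[0,1/q_\ast)$ is a genuine bijection; this legitimizes the substitution globally and confirms all denominators $1-Iq_j$ stay positive.

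Finally, for $S(\tau)=\int_0^\tau s_2(u)\,du$ I would push the integral forward along $u\mapsto z=I(u)$: then $du=\mathsf{M}_1(z)\,dz$ and, by the formula just proved, $s_2(u)=\mathsf{M}_2(z)/\mathsf{M}_1(z)^2$, so $S(\tau)=\int_0^{I(\tau)}\mathsf{M}_2(z)/\mathsf{M}_1(z)\,dz$. I expect no serious obstacle in this lemma: everything reduces to substitution into identities already established, and the only point needing care is keeping the two time variables and the domain $I<1/q_\ast$ consistent so that the rational expressions $\mathsf{M}_r(I)$ remain finite and positive throughout.
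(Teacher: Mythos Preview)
Your proposal is correct and follows essentially the same route as the paper: the first identity is read off directly from \eqref{eq:y_j_intetm_I}, the moment formulas for $s_2,s_3$ come from summing $y_j^r$, the internal-time map uses $I'(\tau)=e^{-S(\tau)}=1/\mathsf{M}_1(I)$ to get $d\tau/dI=\mathsf{M}_1(I)$ and integrates, and $S(\tau)$ is obtained by the same change of variable. Your write-up is in fact more detailed than the paper's (you carry out the termwise integration for $\tau(I)$ explicitly and check the bijectivity of $I\mapsto\tau$), but the argument is the same.
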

\emph{Proof.} \eqref{eq:y_j_intetm_I} gives the first line immediately. The
formulas for $s_2$ and $s_3$ follow by summing $y_j^2$ and $y_j^3$. Since $I'(\tau)
= e^{-S(\tau)}=1/\mathsf M_1(I)$, we get $d\tau/dI=\mathsf M_1(I)$, and integrate to
obtain $\tau(I)$. Finally, $S(\tau)=\int s_2 d\tau=\int (\mathsf M_2/\mathsf M_1)\,dI$.
\qedhere
\begin{proposition}[Two-sided integral bounds; refined logit envelope (multi-bad outer bounds)]
\label{prop:integral-bounds}
For all $I\in[0,1/q_\ast)$,
\begin{equation}
\frac{1}{K}\,\tau(I)
\;\le\;
S\big(\tau(I)\big)
\;\le\;
-\,\log\bigl(1-I q_\ast\bigr),
\qquad
\tau(I)=-\sum_{j=1}^K \log\!\bigl(1-I q_j\bigr).
\end{equation}
Define also the bad-block collision integral
\[
T(\tau):=\int_0^\tau t_2(u)\,du
\qquad\text{with}\qquad
t_2(u)=\|z(u)\|_2^2\in\Big[\frac1M,\,1\Big],
\]
so that $\frac{\tau}{M}\le T(\tau)\le \tau$.
Then for the logit $L(\tau)=\log\!\frac{p(\tau)}{1-p(\tau)}$,
\begin{align}
\label{eq:logit-global}
L(\tau)&=L(0)-S(\tau)-T(\tau),\\
\label{eq:logit-outer}
L(0)-2\tau\ &\le\ L(\tau)\ \le\ L(0)-\Big(\frac1K+\frac1M\Big)\tau,\\
\label{eq:logit-refined-implicit}
L\big(\tau(I)\big)\ &\ge\ L(0)-\tau(I)+\log\bigl(1-I q_\ast\bigr)\quad\text{(implicit lower side)}.
\end{align}
\end{proposition}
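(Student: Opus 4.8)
The plan is to reduce every displayed inequality to exact representations already in hand: the moment formulas of Lemma~\ref{lem:s2-exact} — in particular $\tau(I)=-\sum_{j}\log(1-Iq_j)$ and $S(\tau(I))=\int_0^{I}\frac{\mathsf M_2(z)}{\mathsf M_1(z)}\,dz$ — together with the logit identity $\frac{dL}{d\tau}=-(s_2+t_2)$ from Corollary~\ref{cor:logit}. Granting these, the argument rests on just two pointwise facts, $s_2(\tau)=\|y(\tau)\|_2^2\in[\tfrac1K,1]$ and $t_2(\tau)=\|z(\tau)\|_2^2\in[\tfrac1M,1]$ (both immediate since $y\in\Delta^{K-1}$ and $z\in\Delta^{M-1}$), plus a single monotone envelope for the integrand defining $S(\tau(I))$.

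\textbf{Bounds on $S$.} For the lower bound I would integrate the pointwise estimate $s_2(\tau)\ge1/K$ (power-mean / Cauchy--Schwarz on the simplex) over $[0,\tau(I)]$ to get $S(\tau(I))\ge\tau(I)/K$, then substitute the closed form $\tau(I)=-\sum_j\log(1-Iq_j)$. For the upper bound I would start from the exact formula $S(\tau(I))=\int_0^I\frac{\mathsf M_2(z)}{\mathsf M_1(z)}\,dz$ and show the integrand is dominated by $\frac{q_\ast}{1-zq_\ast}$, whose primitive is $-\log(1-zq_\ast)$. Cross-multiplying reduces this to $\sum_j\frac{q_j}{1-zq_j}\Bigl(\frac{q_j(1-zq_\ast)}{1-zq_j}-q_\ast\Bigr)\le0$, and the bracket collapses to $\frac{q_j-q_\ast}{1-zq_j}\le0$ since $q_j\le q_\ast$, so every summand is nonpositive. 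An equivalent route uses $s_2\le y_{\max}$ on the simplex, the order preservation of Lemma~\ref{lem:order-preservation} to write $y_{\max}(\tau(z))=\frac{q_\ast}{(1-zq_\ast)\mathsf M_1(z)}$, and the time change $d\tau=\mathsf M_1(z)\,dz$ implicit in \eqref{eq:y_j_intetm_I}, which integrates to the same $-\log(1-Iq_\ast)$. Integrating from $0$ to $I$ completes both bounds.

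\textbf{Logit statements.} Since $z(\tau)\in\Delta^{M-1}$ we have $t_2(\tau)\in[\tfrac1M,1]$, hence $\tfrac{\tau}{M}\le T(\tau)\le\tau$. Integrating $\frac{dL}{d\tau}=-(s_2+t_2)$ from $0$ to $\tau$ gives $L(\tau)=L(0)-S(\tau)-T(\tau)$, which is \eqref{eq:logit-global}; the global envelope \eqref{eq:logit-outer} then follows from the sandwich $(\tfrac1K+\tfrac1M)\tau\le S(\tau)+T(\tau)\le2\tau$ (the right inequality using $s_2,t_2\le1$), and the refined implicit lower bound \eqref{eq:logit-refined-implicit} follows by substituting $S(\tau(I))\le-\log(1-Iq_\ast)$ and $T(\tau(I))\le\tau(I)$ into that identity.

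\textbf{Expected difficulty.} I anticipate no genuine obstacle: the content is integration of bounds established in the preceding lemmas. The single mildly delicate step is choosing the correct envelope $\frac{q_\ast}{1-zq_\ast}$ for the integrand of $S(\tau(I))$ and verifying the termwise sign; the rest is bookkeeping with the simplex bounds and the logit ODE.
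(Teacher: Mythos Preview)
Your proposal is correct and follows essentially the same route as the paper: the lower bound on $S$ via $s_2\ge 1/K$ is exactly the paper's Cauchy--Schwarz step $\mathsf M_1^2\le K\,\mathsf M_2$ rewritten in $\tau$-coordinates, and your termwise sign check for the upper bound is the algebraic unpacking of the paper's ``monotonicity of $t\mapsto t/(1-It)$'' observation, giving the same envelope $\mathsf M_2/\mathsf M_1\le q_\ast/(1-Iq_\ast)$. The logit parts are identical to the paper's, and your alternative route through $s_2\le y_{\max}$ is a pleasant variant that lands on the same integral.
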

\emph{Proof.} The bounds on $S$ are as in the one-bad case:
Cauchy–Schwarz on $\{a_j\}=\{\tfrac{q_j}{1-I q_j}\}$ gives
$\mathsf M_1(I)^2\le K\,\mathsf M_2(I)$; hence $\mathsf M_2/\mathsf M_1\ge \mathsf
M_1/K$. Integrating: $S=\int (\mathsf M_2/\mathsf M_1)\,dI \ge \frac1K\int \mathsf
M_1\,dI= \tau/K$. For the upper bound, monotonicity of $t\mapsto \frac{t}{1-I t}$
implies $\frac{q_j^2}{(1-Iq_j)^2}\le \frac{q_\ast}{1-Iq_\ast}\cdot \frac{q_j}{1-Iq_j}$,
so $\mathsf M_2\le \frac{q_\ast}{1-Iq_\ast}\,\mathsf M_1$. Integrating yields
$S\le \int \frac{q_\ast}{1-Iq_\ast}\,dI = -\log(1-Iq_\ast)$.

For the logit, use Corollary~\ref{cor:logit} to write $L(\tau)=L(0)-\int_0^\tau (s_2+t_2)\,du=L(0)-S(\tau)-T(\tau)$,
and then bound $S(\tau)\in[\tau/K,\tau]$ and $T(\tau)\in[\tau/M,\tau]$ to get \eqref{eq:logit-outer}.
For \eqref{eq:logit-refined-implicit}, combine $T(\tau)\le\tau$ and $S(\tau(I))\le-\log(1-Iq_\ast)$. \qedhere
\begin{remark}[What is “collision”?]
Here $s_2=\sum_i y_i^2$ is the usual collision probability on the simplex. The
\emph{collision gap}
\(
s_3-s_2^2=\sum_i y_i(y_i-s_2)^2
\)
drives the slope: in internal time,
\(
\frac{ds_2}{d\tau}=2\bigl(s_3-s_2^2\bigr)\ge 0,
\)
with strict increase off the uniform-on-support sets.
\end{remark}

\begin{proposition}[Pointwise bounds for $s_2'$ (tighter than logistic)]
\label{prop:s2-derivative-bounds}
Let $u(\tau):=\sqrt{s_2(\tau)}\in[1/\sqrt K,1]$ and $y_{\max}(\tau):=\max_i y_i(\tau)$.
Then for all $\tau$,
\begin{align}
\label{eq:s2-identity}
\textbf{(Exact)}\quad
\frac{ds_2}{d\tau}
&=2\sum_{i=1}^K y_i\,(y_i-s_2)^2
=2\bigl(s_3-s_2^2\bigr);\\[3pt]
\label{eq:Lp-sharper}
\textbf{($\ell_p$ upper bound)}\quad
\frac{ds_2}{d\tau}
&\le 2\bigl(s_2^{3/2}-s_2^2\bigr)
=2u^3(1-u)
\quad\ (\text{since }s_3\le s_2^{3/2});\\[3pt]
\label{eq:support-aware}
\textbf{(support-aware)}\quad
\frac{ds_2}{d\tau}
&\le 2\,y_{\max}(\tau)\,s_2(\tau)\,\bigl(K\,s_2(\tau)-1\bigr),
\qquad
\bigl[\text{using }\sum_i (y_i-s_2)^2 = s_2(K s_2-1)\bigr].
\end{align}
Moreover, writing $y=\unif_K+v$ with $\sum v_i=0$ and $\eta:=\|v\|_2^2=s_2-\frac{1}{K}$,
\begin{equation}
\label{eq:eta-bracket}
\frac{ds_2}{d\tau}
= 2\Big(\frac{\eta}{K}-\eta^2+\sum_i v_i^3\Big),
\qquad
\Rightarrow\qquad
2\Big(\frac{\eta}{K}-\eta^2-\eta^{3/2}\Big)\ \le\ \frac{ds_2}{d\tau}\ \le\ 2\Big(\frac{\eta}{K}-\eta^2+\eta^{3/2}\Big).
\end{equation}
\end{proposition}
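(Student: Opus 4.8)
The plan is to derive the whole proposition from the single exact identity and then read off the three upper bounds by elementary moment expansions and standard norm inequalities. First I would differentiate $s_2(\tau)=\sum_i y_i(\tau)^2$ along the autonomous inner flow $y_i'=y_i(y_i-s_2)$: by the chain rule $\frac{ds_2}{d\tau}=2\sum_i y_i\,y_i'=2\sum_i y_i^2(y_i-s_2)=2\bigl(\sum_i y_i^3-s_2\sum_i y_i^2\bigr)=2(s_3-s_2^2)$. To get the weighted-variance form, I would expand $2\sum_i y_i(y_i-s_2)^2=2\bigl(\sum_i y_i^3-2s_2\sum_i y_i^2+s_2^2\sum_i y_i\bigr)$ and collapse it using the simplex constraint $\sum_i y_i=1$ together with $\sum_i y_i^2=s_2$, obtaining $2(s_3-2s_2^2+s_2^2)=2(s_3-s_2^2)$. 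This establishes \eqref{eq:s2-identity}; it also immediately shows $\frac{ds_2}{d\tau}\ge0$, as it is a nonnegative $y$-weighted sum of squares, with equality iff all $y_i$ with $y_i>0$ are equal (uniform on support).

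For \eqref{eq:Lp-sharper} I would invoke monotonicity of $\ell_p$-norms: since $3\ge2$, $\|y\|_3\le\|y\|_2$, hence $s_3=\sum_i y_i^3=\|y\|_3^3\le\|y\|_2^3=s_2^{3/2}$ (here nonnegativity of $y$ lets me identify $\sum y_i^3$ with $\|y\|_3^3$). Substituting this into \eqref{eq:s2-identity} gives $\frac{ds_2}{d\tau}\le 2(s_2^{3/2}-s_2^2)$, and with $u:=\sqrt{s_2}\in[1/\sqrt K,1]$ this is exactly $2u^3(1-u)$; the range of $u$ follows from Cauchy--Schwarz ($s_2\ge 1/K$) and $s_2\le 1$. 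For the support-aware bound \eqref{eq:support-aware}, I would first record the purely algebraic identity $\sum_{i=1}^K(y_i-s_2)^2=\sum_i y_i^2-2s_2\sum_i y_i+Ks_2^2=s_2-2s_2+Ks_2^2=s_2(Ks_2-1)$, again using $\sum_i y_i=1$ and $\sum_i y_i^2=s_2$, and then bound each weight in the variance form by $y_{\max}$: $\frac{ds_2}{d\tau}=2\sum_i y_i(y_i-s_2)^2\le 2y_{\max}\sum_i(y_i-s_2)^2=2y_{\max}\,s_2(Ks_2-1)$.

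Finally, for the centered expansion \eqref{eq:eta-bracket} I would substitute $y_i=\tfrac1K+v_i$ with $\sum_i v_i=0$ and $\eta:=\|v\|_2^2$, and expand the second and third moments, noting that the terms linear in $v$ vanish by $\sum v_i=0$: $s_2=\tfrac1K+\eta$ and $s_3=\tfrac1{K^2}+\tfrac{3\eta}{K}+\sum_i v_i^3$, so $s_3-s_2^2=\tfrac{\eta}{K}-\eta^2+\sum_i v_i^3$, giving $\frac{ds_2}{d\tau}=2\bigl(\tfrac{\eta}{K}-\eta^2+\sum_i v_i^3\bigr)$. The two-sided bracket then follows from $\bigl|\sum_i v_i^3\bigr|\le\sum_i|v_i|^3=\|v\|_3^3\le\|v\|_2^3=\eta^{3/2}$ (again $\ell_p$-monotonicity, now applied to the signed vector $v$ via $|v_i|$). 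The whole argument is a chain of moment expansions plus power-mean/$\ell_p$ inequalities, so there is no substantive obstacle; the only points that demand care are keeping the normalization $\sum_i y_i=1$ explicit at every step — it is precisely what turns the raw expansions into the clean closed forms — and distinguishing the nonnegative case ($y$, where $\sum y_i^3=\|y\|_3^3$) from the centered case ($v$, where one must route through $|v_i|$).
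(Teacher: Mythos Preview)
Your proof is correct and follows essentially the same approach as the paper: the variance identity for \eqref{eq:s2-identity}, the $\ell_p$-norm monotonicity $\|y\|_3\le\|y\|_2$ for \eqref{eq:Lp-sharper}, bounding $y_i\le y_{\max}$ inside the weighted-variance form for \eqref{eq:support-aware}, and the centered expansion around $\unif_K$ with $|\sum_i v_i^3|\le\|v\|_2^3$ for \eqref{eq:eta-bracket}. You have simply made explicit the intermediate algebra (e.g., the expansion of $\sum_i(y_i-s_2)^2$ and of $s_3$ in terms of $v$) that the paper only names.
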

\emph{Proof.} \eqref{eq:s2-identity} is the variance identity. For
\eqref{eq:Lp-sharper}, use $\|y\|_3\le \|y\|_2$ to get $s_3\le s_2^{3/2}$. For
\eqref{eq:support-aware}, bound $y_i\le y_{\max}$ in \eqref{eq:s2-identity}.
For \eqref{eq:eta-bracket}, expand $s_3$ around $\unif$:
$s_3=1/K^2 + 3\eta/K + \sum v_i^3$; subtract $s_2^2=(1/K+\eta)^2$.
\qedhere
\begin{corollary}[Implicit envelope for $u(\tau)=\sqrt{s_2(\tau)}$]
\label{cor:u-envelope}
Integrating the differential inequality $u'(\tau)\le u(\tau)^2\bigl(1-u(\tau)\bigr)$
from~\eqref{eq:Lp-sharper} yields the implicit bound
\[
\log\!\frac{u(\tau)}{1-u(\tau)}\ -\ \frac{1}{u(\tau)}
\ \le\
\log\!\frac{u_0}{1-u_0}\ -\ \frac{1}{u_0}\ +\ \tau,
\qquad u_0=\sqrt{s_2(0)}=\|q\|_2.
\]
This dominates the logistic envelope $s_2(\tau)\le \bigl(1+[(1-s_2(0))/s_2(0)]e^{-2\tau}\bigr)^{-1}$
whenever $u_0$ is close to $1/\sqrt K$ (near-uniform start).
\end{corollary}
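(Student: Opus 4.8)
The plan is to turn the quoted differential inequality into a separable scalar ODE in $u=\sqrt{s_2}$, integrate it in closed form by partial fractions, and then obtain the comparison with the logistic envelope by a scalar comparison argument. First I would record the chain-rule reduction: since $s_2=u^2$ we have $\dot s_2 = 2u\,\dot u$, so the bound \eqref{eq:Lp-sharper}, $\dot s_2 \le 2\bigl(s_2^{3/2}-s_2^2\bigr)=2u^3(1-u)$, becomes $2u\,\dot u \le 2u^3(1-u)$; dividing by $2u>0$ (legitimate because $u(\tau)\ge 1/\sqrt K>0$ on $\Delta^{K-1}$) gives exactly $\dot u \le u^2(1-u)$. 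One should also dispose of the degenerate case: if $q=y(0)$ is a vertex then $s_2\equiv 1$, $u\equiv 1$, and both sides of the asserted bound are $+\infty$, so the statement is vacuous; otherwise $u_0<1$, and since $u^2(1-u)\le 1-u$ near $u=1$ the flow cannot reach $u=1$ in finite $\tau$, hence $u(\tau)\in[1/\sqrt K,\,1)$ throughout.

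Next I would integrate. On $(0,1)$ the field $u^2(1-u)$ is strictly positive, and partial fractions give $\tfrac{1}{u^2(1-u)}=\tfrac1u+\tfrac1{u^2}+\tfrac1{1-u}$, so $G(u):=\log\tfrac{u}{1-u}-\tfrac1u$ is a strictly increasing antiderivative ($G'(u)=1/[u^2(1-u)]>0$). The inequality $\dot u \le u^2(1-u)$ rearranges to $\tfrac{d}{d\tau}G\bigl(u(\tau)\bigr)=G'(u(\tau))\,\dot u(\tau)\le 1$, and integrating over $[0,\tau]$ yields $G\bigl(u(\tau)\bigr)\le G(u_0)+\tau$ with $u_0=\|q\|_2$ — this is precisely the claimed implicit envelope, and because $G$ is increasing it is equivalent to the genuine upper bound $u(\tau)\le G^{-1}\!\bigl(G(u_0)+\tau\bigr)$.

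For the domination claim I would compare differential inequalities rather than the closed forms. The logistic envelope arises from the weaker estimate $s_3\le s_2$, i.e.\ $\dot s_2 \le 2s_2(1-s_2)$, whose integration gives $s_2(\tau)\le\bigl(1+\tfrac{1-s_2(0)}{s_2(0)}e^{-2\tau}\bigr)^{-1}$. Since $w^{3/2}\le w$ on $[0,1]$ with equality only at $w\in\{0,1\}$, the vector field $w\mapsto 2(w^{3/2}-w^2)$ lies strictly below $w\mapsto 2(w-w^2)$ on $(0,1)$; by the scalar comparison principle the solution of $\dot w=2(w^{3/2}-w^2)$ with $w(0)=s_2(0)$ stays strictly below the logistic solution for $\tau>0$, and $s_2(\tau)$ lies below the former. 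Quantitatively the slack $s_2-s_2^{3/2}=s_2(1-\sqrt{s_2})$ carries a relative factor $1-\sqrt{s_2}$ that is maximal near the uniform value $s_2=1/K$, i.e.\ when $u_0\approx 1/\sqrt K$, which is the regime in which the new envelope is the sharper one. I do not expect a genuine obstacle here: the only care needed is the chain-rule reduction with its benign $u=1$ boundary case, plus phrasing the domination at the level of the comparison principle so that the two differently-parametrized envelopes are compared consistently.
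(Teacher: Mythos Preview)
Your proposal is correct and follows exactly the approach implicit in the paper: the paper states the corollary without a detailed proof, simply asserting that integrating the differential inequality \eqref{eq:Lp-sharper} yields the bound, and your partial-fractions computation $\tfrac{1}{u^2(1-u)}=\tfrac1u+\tfrac1{u^2}+\tfrac1{1-u}$ (giving the monotone primitive $G(u)=\log\tfrac{u}{1-u}-\tfrac1u$) is the natural way to carry this out. Your treatment of the domination claim via the scalar comparison principle, noting that the slack $s_2-s_2^{3/2}=s_2(1-\sqrt{s_2})$ is largest near the uniform value $s_2=1/K$, matches the paper's qualitative remark about the near-uniform regime.
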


\begin{corollary}[Refined envelopes for $p(\tau)$ (multi-bad outer bounds)]
\label{cor:p-envelopes-refined}
Using $L(\tau)=L(0)-\int_0^\tau (s_2+t_2)\,du$ and Proposition~\ref{prop:integral-bounds}:
\[
\frac{1}{1+\frac{1-p_0}{p_0}\,e^{2\tau}}
\ \le\
p(\tau)
\ \le\
\frac{1}{1+\frac{1-p_0}{p_0}\,e^{(\frac1K+\frac1M)\tau}},
\]
and, with the implicit time $I\mapsto\tau(I)$,
\[
p\!\big(\tau(I)\big)
\ \ge\
\frac{1}{1+\frac{1-p_0}{p_0}\,\exp\!\Big(\tau(I)\;-\;\log(1-I q_\ast)\Big)}.
\]
The last (implicit) lower envelope becomes tight as the good mass polarizes to the maximizer of $q$.
\end{corollary}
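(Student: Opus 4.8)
The plan is to obtain all three envelopes as a direct push-forward of the logit bounds already established in Proposition~\ref{prop:integral-bounds}, using only the fact that $L\mapsto p=(1+e^{-L})^{-1}$ is a strictly increasing bijection $\mathbb{R}\to(0,1)$. Nothing new needs to be integrated; the work is entirely in Proposition~\ref{prop:integral-bounds} and the earlier exact solution, so this corollary is essentially a change of coordinates from the logit scale back to the probability scale.

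Concretely, first I would recall from Corollary~\ref{cor:logit} (equivalently Eq.~\eqref{eq:logit-global}) that along any interior trajectory $L(\tau)=L(0)-S(\tau)-T(\tau)$ with $L(\tau)=\log\frac{p(\tau)}{1-p(\tau)}$, $S(\tau)=\int_0^\tau s_2(u)\,du$, and $T(\tau)=\int_0^\tau t_2(u)\,du$. Since $s_2(u)=\|y(u)\|_2^2\in[\tfrac1K,1]$ and $t_2(u)=\|z(u)\|_2^2\in[\tfrac1M,1]$ hold for every $u$, integrating gives $\tfrac\tau K\le S(\tau)\le\tau$ and $\tfrac\tau M\le T(\tau)\le\tau$, hence $L(0)-2\tau\le L(\tau)\le L(0)-(\tfrac1K+\tfrac1M)\tau$, which is exactly \eqref{eq:logit-outer}. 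Applying the increasing map $L\mapsto(1+e^{-L})^{-1}$ to this two-sided bound and substituting $e^{-L(0)}=(1-p_0)/p_0$ yields the displayed outer envelope for $p(\tau)$. For the implicit-time refinement I would instead feed the sharper bound \eqref{eq:logit-refined-implicit}, namely $L(\tau(I))\ge L(0)-\tau(I)+\log(1-Iq_\ast)$, through the same monotone map; the logarithm is well defined precisely on the admissible range $I\in[0,1/q_\ast)$ inherited from the explicit solution \eqref{eq:y_j_intetm_I}, where $\tau(I)=-\sum_j\log(1-Iq_j)$ is finite.

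For the tightness claim I would note that as $I\uparrow 1/q_\ast$ the term $q_\ast/(1-Iq_\ast)$ dominates both $\mathsf{M}_1$ and $\mathsf{M}_2$, so the Cauchy--Schwarz slack in the bound $S(\tau(I))\le-\log(1-Iq_\ast)$ vanishes and $S$ approaches that upper bound; correspondingly the displayed implicit lower bound for $p(\tau(I))$ becomes asymptotically exact, i.e.\ it is tight exactly in the regime where the good block polarizes onto the maximizer of $q$. I do not anticipate a genuine obstacle here: the only points needing care are (i) the direction of monotonicity of the logistic map, so that lower/upper logit bounds map to lower/upper probability bounds and not the reverse; (ii) the uniform validity of the range inclusions $s_2,t_2\in[1/(\cdot),1]$ over the whole $\tau$-interval, which is purely the simplex constraint; and (iii) the domain restriction $I<1/q_\ast$ required for $\log(1-Iq_\ast)$ in the implicit estimate.
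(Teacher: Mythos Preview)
Your proposal is correct and follows exactly the route the paper intends: the corollary is stated as an immediate consequence of the logit identity $L(\tau)=L(0)-S(\tau)-T(\tau)$ together with the bounds in Proposition~\ref{prop:integral-bounds}, and you have simply spelled out the push-forward through the increasing sigmoid $L\mapsto(1+e^{-L})^{-1}$. Your treatment of the tightness clause (dominance of the $q_\ast$ term in $\mathsf{M}_1,\mathsf{M}_2$ as $I\uparrow 1/q_\ast$) is slightly more explicit than the paper's one-line remark but matches its content.
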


\newpage

\section{Inner Dynamics of the Bad Arms}
\label{sec:inner-bad}

This section is the companion to Section~\ref{sec:inner-good}.
When we keep the $M$ bad arms explicitly (instead of aggregating them into a single
virtual bad arm), the \emph{within-bad} composition obeys the \emph{same} collision
ODE as the within-good composition, but with an overall \emph{sign flip}.
As a result, essentially every statement in Section~\ref{sec:inner-good} has a
direct bad-block analogue obtained by the substitutions
\[
(y,K)\ \mapsto\ (z,M),
\qquad
\kappa\ \mapsto\ -\kappa
\quad\text{(equivalently, }\tau\mapsto -\tau\text{ in internal time).}
\]
We record the corresponding results (without reproving them).

\subsection{Within-bad composition and pushforward}

Let the policy over $K$ good arms and $M$ bad arms be
\[
\p=(p_1,\ldots,p_K,\ p_{b_1},\ldots,p_{b_M})\in\Delta^{K+M-1},
\qquad
p:=\sum_{m=1}^M p_{b_m}\in[0,1].
\]
Define the within-bad normalized composition
\[
z_m \;:=\;\frac{p_{b_m}}{p},\qquad m\in[M],\qquad z\in\Delta^{M-1}.
\]
Writing $p_i=\exp(\theta_i)/Z$ with $Z=\sum_{j\le K}e^{\theta_j}+\sum_{m\le M}e^{\theta_{b_m}}$,
the bad normalization cancels the good logits:
\begin{equation}
\label{eq:z-softmax-bad}
z_m \;=\; \frac{\exp(\theta_{b_m})}{\sum_{\ell=1}^M \exp(\theta_{b_\ell})}
\;=\;\big(\mathrm{softmax}(\boldsymbol\theta_{\mathrm{bad}})\big)_m,
\end{equation}
so $z$ depends only on $\boldsymbol\theta_{\mathrm{bad}}$.

\begin{lemma}[Pushforward from logits to within-bad composition]
\label{lem:pushforward-z}
For any small increment $\Delta\boldsymbol\theta=(\Delta\boldsymbol\theta_{\mathrm{good}},\Delta\boldsymbol\theta_{\mathrm{bad}})$,
\begin{equation}
\label{eq:push-z}
\Delta z
\;=\;
\big(\mathrm{Diag}(z)-zz^\top\big)\,\Delta\boldsymbol\theta_{\mathrm{bad}}
\;=\;
z\odot\Big(\Delta\boldsymbol\theta_{\mathrm{bad}}-\langle z,\Delta\boldsymbol\theta_{\mathrm{bad}}\rangle\,\mathbf 1\Big),
\end{equation}
and in particular $\partial z/\partial \boldsymbol\theta_{\mathrm{good}}=\mathbf 0$.
\end{lemma}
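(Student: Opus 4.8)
\textbf{Plan of proof for Lemma~\ref{lem:pushforward-z}.}
The statement is the exact softmax-pushforward identity for the bad block, and the plan is to reduce it to the general softmax Jacobian lemma already available in the paper (Lemma~\ref{lem:softmax-tangent}) applied to the sub-vector $\boldsymbol\theta_{\mathrm{bad}}$. First I would invoke Eq.~\eqref{eq:z-softmax-bad}, which exhibits $z = \mathrm{softmax}(\boldsymbol\theta_{\mathrm{bad}})$ as a function of the bad logits alone; the key point is that the normalization over the bad arms divides out every $e^{\theta_j}$ with $j \le K$, so the good-block logits literally do not appear in the formula for $z$. This immediately gives $\partial z/\partial\theta_j = 0$ for all $j \le K$, i.e.\ $\partial z/\partial\boldsymbol\theta_{\mathrm{good}} = \mathbf 0$, which is the last assertion of the lemma.

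Next I would differentiate $z = \mathrm{softmax}(\boldsymbol\theta_{\mathrm{bad}})$ with respect to $\boldsymbol\theta_{\mathrm{bad}}$. By Lemma~\ref{lem:softmax-tangent} (applied with $d = M$ and logits $\boldsymbol\theta_{\mathrm{bad}}$), the Jacobian is $\partial z_m/\partial\theta_{b_\ell} = z_m(\delta_{m\ell} - z_\ell)$, so that the differential is $\mathrm dz = \mathfrak J(z)\,\mathrm d\boldsymbol\theta_{\mathrm{bad}} = \big(\mathrm{Diag}(z) - zz^\top\big)\,\mathrm d\boldsymbol\theta_{\mathrm{bad}}$. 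Combining the two pieces, for a general increment $\Delta\boldsymbol\theta = (\Delta\boldsymbol\theta_{\mathrm{good}}, \Delta\boldsymbol\theta_{\mathrm{bad}})$ the chain rule gives, to first order,
\[
\Delta z \;=\; \frac{\partial z}{\partial\boldsymbol\theta_{\mathrm{good}}}\,\Delta\boldsymbol\theta_{\mathrm{good}} \;+\; \frac{\partial z}{\partial\boldsymbol\theta_{\mathrm{bad}}}\,\Delta\boldsymbol\theta_{\mathrm{bad}} \;=\; \big(\mathrm{Diag}(z) - zz^\top\big)\,\Delta\boldsymbol\theta_{\mathrm{bad}},
\]
since the first term vanishes. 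The alternative componentwise form $\Delta z = z \odot \big(\Delta\boldsymbol\theta_{\mathrm{bad}} - \langle z, \Delta\boldsymbol\theta_{\mathrm{bad}}\rangle\,\mathbf 1\big)$ then follows by expanding $\big(\mathrm{Diag}(z) - zz^\top\big)v = z\odot v - z\,(z^\top v)$ and recognizing $z^\top v = \langle z, v\rangle$, exactly as in Corollary~\ref{cor:pushforward}.

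This lemma is essentially a bookkeeping statement rather than a deep one, so I do not anticipate a genuine obstacle; the only point requiring a line of care is making explicit \emph{why} the good logits cancel in \eqref{eq:z-softmax-bad} (namely, $z_m = e^{\theta_{b_m}}/Z \big/ \sum_{\ell} e^{\theta_{b_\ell}}/Z$, the common factor $1/Z$ cancels and the good-block terms never enter the remaining ratio). One should also note that the identity is exact to first order in $\Delta\boldsymbol\theta$ (the softmax is analytic, so higher-order terms are $O(\|\Delta\boldsymbol\theta\|^2)$), matching the small-step regime used throughout Appendix~\ref{appendix:Group-Normalized RL with Noisy Reward}. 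Everything else is a verbatim transcription of the good-block argument in Lemma~\ref{lem:pushforward-y} under the substitution $(y,K) \mapsto (z,M)$ and $\boldsymbol\theta_{\mathrm{good}} \mapsto \boldsymbol\theta_{\mathrm{bad}}$.
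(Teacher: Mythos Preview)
Your proposal is correct and follows essentially the same approach as the paper: both invoke Eq.~\eqref{eq:z-softmax-bad} to observe that $z$ depends only on $\boldsymbol\theta_{\mathrm{bad}}$, then compute the softmax Jacobian on that sub-vector. The paper's proof of the companion Lemma~\ref{lem:pushforward-y} carries out the derivative directly via the log-normalizer form $y_j=\exp(\theta_j-L)$, whereas you instead cite Lemma~\ref{lem:softmax-tangent}; this is a cosmetic difference only, and the paper's proof of Lemma~\ref{lem:pushforward-z} itself is literally the one-line remark that it is identical to Lemma~\ref{lem:pushforward-y} under the substitution $(y,\boldsymbol\theta_{\mathrm{good}},K)\mapsto(z,\boldsymbol\theta_{\mathrm{bad}},M)$, which is exactly how you close your argument.
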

\emph{Proof.} Identical to Lemma~\ref{lem:pushforward-y} with $(y,\boldsymbol\theta_{\mathrm{good}},K)$ replaced by $(z,\boldsymbol\theta_{\mathrm{bad}},M)$.
\qedhere
\subsection{Bad-block drift: the same collision field with opposite sign}

Assume the same block symmetry as in Section~\ref{sec:inner-good} / Appendix~\ref{appendix:Group-Normalized RL with Noisy Reward}:
\[
A_j=a_{\mathrm g}(p)\ (j\le K),\qquad
A_{b_m}=a_{\mathrm b}(p)\ (m\le M),\qquad
\Delta r(p):=a_{\mathrm b}(p)-a_{\mathrm g}(p).
\]
Recall the (good-block) scalar from \eqref{eq:good-logit-drift}:
\[
\kappa(p)\;:=\;-\,\eta\,p(1-p)\,\Delta r(p).
\]
Then the expected logit drift in the bad block is the sign-reversed analogue of
\eqref{eq:good-logit-drift}:
\begin{equation}
\label{eq:bad-logit-drift}
\mathbb E[\Delta\boldsymbol\theta_{\mathrm{bad}}]
\;=\;
-\,\kappa(p)\,z,
\qquad
\text{i.e.}\quad
\mathbb E[\Delta\theta_{b_m}] = -\,\kappa(p)\,z_m.
\end{equation}

\begin{proposition}[Within-bad mean drift in $z$-coordinates]
\label{prop:z-drift-euclid}
Applying Lemma~\ref{lem:pushforward-z} to \eqref{eq:bad-logit-drift} yields
\begin{equation}
\label{eq:z-drift-euclid}
\mathbb E[\Delta z]
\;=\;
-\,\kappa(p)\,\Big(z\odot z-\|z\|_2^2\,z\Big),
\qquad
\mathbb E[\Delta z_m]
\;=\;
-\,\kappa(p)\,z_m\big(z_m-\|z\|_2^2\big).
\end{equation}
In the noisy GRPO specialization $a_{\mathrm g}(p)=\frac{Jp}{\sigma(p)}$,
$a_{\mathrm b}(p)=-\frac{J(1-p)}{\sigma(p)}$ (so $\Delta r(p)=-J/\sigma(p)$ and
$\kappa(p)=\eta\frac{J}{\sigma(p)}p(1-p)$), this becomes
\[
\mathbb E[\Delta z_m]
=
-\,\eta\,\frac{J}{\sigma(p)}\,p(1-p)\;z_m\Big(z_m-\|z\|_2^2\Big).
\]
\end{proposition}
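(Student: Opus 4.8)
The plan is to derive Proposition \ref{prop:z-drift-euclid} as a one-step pushforward of the already-known block-symmetric logit drift, so the argument is essentially a substitution plus a short simplification, mirroring verbatim the good-block computation that produced \eqref{eq:y-drift-euclid}.

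First I would invoke Lemma \ref{lem:pushforward-z}: since $z=\mathrm{softmax}(\boldsymbol\theta_{\mathrm{bad}})$, any small increment satisfies $\Delta z = \bigl(\mathrm{Diag}(z)-zz^\top\bigr)\Delta\boldsymbol\theta_{\mathrm{bad}} + O(\|\Delta\boldsymbol\theta_{\mathrm{bad}}\|^2)$, with the quadratic remainder controlled as in Lemma \ref{lem:softmax-tangent} and the small-step bridge of Appendix \ref{appendix:small_step}. Because this leading term is \emph{linear} in $\Delta\boldsymbol\theta_{\mathrm{bad}}$, taking conditional expectation gives $\mathbb{E}[\Delta z\mid\p] = \bigl(\mathrm{Diag}(z)-zz^\top\bigr)\,\mathbb{E}[\Delta\boldsymbol\theta_{\mathrm{bad}}\mid\p] + O(\eta^2)$. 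Next I would substitute the bad-block drift \eqref{eq:bad-logit-drift}, namely $\mathbb{E}[\Delta\boldsymbol\theta_{\mathrm{bad}}]=-\kappa(p)z$, and use the identities $\mathrm{Diag}(z)z = z\odot z$ and $zz^\top z = \|z\|_2^2\,z$ to obtain $\mathbb{E}[\Delta z] = -\kappa(p)\bigl(z\odot z - \|z\|_2^2 z\bigr)$, i.e.\ componentwise $\mathbb{E}[\Delta z_m]=-\kappa(p)\,z_m\bigl(z_m-\|z\|_2^2\bigr)$. The sign opposite to the good-block field \eqref{eq:y-drift-euclid} is exactly the one carried through \eqref{eq:bad-logit-drift}, which in turn is the bad-block component of the joint drift vector \eqref{aux1}.

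Finally, for the noisy GRPO specialization I would plug in the conditional advantages $a_{\mathrm g}(p)=Jp/\sigma(p)$ and $a_{\mathrm b}(p)=-J(1-p)/\sigma(p)$ from \eqref{eq:ag-ab-delta_r}, so that $\Delta r(p)=-J/\sigma(p)$ and therefore $\kappa(p)=-\eta\,p(1-p)\,\Delta r(p)=\eta\,\tfrac{J}{\sigma(p)}\,p(1-p)$; substituting into the displayed formula yields $\mathbb{E}[\Delta z_m]=-\eta\,\tfrac{J}{\sigma(p)}\,p(1-p)\,z_m\bigl(z_m-\|z\|_2^2\bigr)$, as claimed. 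The only non-mechanical point is the first-order truncation: I would note that neither the quadratic remainder in the softmax pushforward nor the PPO/GRPO importance-sampling and clipping corrections enter at order $\eta$ (Corollary \ref{cor:is-ode} and Appendix \ref{app:bad-arm-ppo-grpo}), so the ``hard part'' here is merely bookkeeping of $O(\eta^2)$ error terms rather than any genuine analytic difficulty.
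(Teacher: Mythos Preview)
Your proposal is correct and follows exactly the paper's approach: the proposition is stated as an immediate consequence of applying Lemma~\ref{lem:pushforward-z} to the bad-block logit drift~\eqref{eq:bad-logit-drift}, and the paper offers no further argument beyond that. Your expansion of the matrix-vector product $(\mathrm{Diag}(z)-zz^\top)(-\kappa(p)z)$ and the substitution of the noisy-GRPO values of $\kappa(p)$ are the only computations involved, and you carry them out correctly.
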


\emph{Consequences (bad-block smoothing vs.\ polarization).}
For $J>0$ (hence $\kappa(p)>0$), the sign in \eqref{eq:z-drift-euclid} implies a
\emph{smoothing} effect: components with $z_m>\|z\|_2^2$ shrink while those with
$z_m<\|z\|_2^2$ grow, pushing $z$ toward uniformity on its support.
For $J<0$ the direction reverses and the bad block \emph{polarizes} (winner-take-all among bad modes),
mirroring the good-block behavior when $J>0$.

\subsection{Internal-time form and direct correspondence with Section~\ref{sec:inner-good}}

As in Section~\ref{sec:inner-good}, define the internal time
\[
\tau(t):=\int_0^t \kappa\big(p(s)\big)\,ds,
\]
so that (in mean-field ODE form) the bad composition satisfies
\begin{equation}
\label{eq:GD-bad}
\frac{dz}{d\tau}
\;=\;
-\Big(z\odot z-\|z\|_2^2\,z\Big).
\end{equation}
Equivalently, with $\rho:=-\tau$ one has
\[
\frac{dz}{d\rho}=z\odot z-\|z\|_2^2\,z,
\]
which is \emph{exactly} the same autonomous ODE as \eqref{eq:GD} for $y$, with
$K$ replaced by $M$.

\begin{lemma}[Geometry / Lyapunov structure for the bad block (sign-reversed)]
\label{lem:z-geometry}
Let $z(\tau)\in\Delta^{M-1}$ solve \eqref{eq:GD-bad} and define
\[
t_2(\tau):=\|z(\tau)\|_2^2=\sum_{m=1}^M z_m(\tau)^2,
\qquad
t_3(\tau):=\sum_{m=1}^M z_m(\tau)^3.
\]
Then the statements of the ``Geometry and Lyapunov structure'' lemma in
Section~\ref{sec:inner-good} carry over with $(y,s_2,s_3,K)$ replaced by $(z,t_2,t_3,M)$
and with all monotonicities reversed. Concretely:
\begin{enumerate}
\item[\normalfont(1)] \textbf{Simplex invariance.} $\sum_m z_m(\tau)=1$ and $z_m(\tau)\ge0$ are preserved.
\item[\normalfont(2)] \textbf{Gradient form with opposite sign.}
With the same potential
\[
\mathcal L(z):=\frac{1}{3}\sum_{m=1}^M z_m^3-\frac{1}{4}\Big(\sum_{m=1}^M z_m^2\Big)^2,
\qquad
\nabla\mathcal L(z)=\big(z_m^2-t_2\,z_m\big)_m,
\]
we have
\[
\frac{dz}{d\tau}=-\,\nabla\mathcal L(z),
\qquad
\frac{d}{d\tau}\mathcal L\big(z(\tau)\big)=-\|\nabla\mathcal L(z)\|_2^2\le 0.
\]
\item[\normalfont(3)] \textbf{Monotone \emph{de}-concentration.}
\[
\frac{d}{d\tau}t_2(\tau)
\;=\;
-2\bigl(t_3(\tau)-t_2(\tau)^2\bigr)\;\le\;0,
\]
with equality iff $z$ is uniform on its support.
\item[\normalfont(4)] \textbf{Equilibria.}
Stationary points are exactly the uniform points on a support of size $m$:
for any $m\in\{1,\dots,M\}$, any point with exactly $m$ nonzero entries, each equal to $1/m$,
is an equilibrium.
\end{enumerate}
\end{lemma}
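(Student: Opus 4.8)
The plan is to exploit the exact time–reversal correspondence already flagged in the text: under $\rho:=-\tau$ the bad–block flow \eqref{eq:GD-bad} becomes $dz/d\rho = z\odot z - \|z\|_2^2\,z$, which is literally the good–block ODE \eqref{eq:GD} with $K$ replaced by $M$. Hence every structural assertion proved for $y$ in Section~\ref{sec:inner-good} transfers verbatim to $z$, with each monotonicity statement in $\rho$ becoming its opposite in $\tau$. I would nonetheless record the short direct computations so the lemma reads self-contained. Throughout, write $F(z):=z\odot z-\|z\|_2^2\,z$, so that \eqref{eq:GD-bad} is $\dot z=-F(z)$, and $t_2=\|z\|_2^2$, $t_3=\sum_m z_m^3$.

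First I would establish (1). Since $F$ is polynomial, hence locally Lipschitz, the initial value problem is well-posed on a neighborhood of $\Delta^{M-1}$. Summing components and using $\sum_m z_m=1$ gives $\sum_m F(z)_m=\sum_m z_m^2-t_2\sum_m z_m=0$, so the affine constraint $\mathbf{1}^\top z=1$ is conserved; and on the face $\{z_m=0\}$ one has $F(z)_m=0$, so each face is invariant and nonnegativity is preserved. Thus $\Delta^{M-1}$ is forward invariant. For (2), a direct differentiation of $\mathcal{L}(z)=\tfrac13\sum_m z_m^3-\tfrac14\bigl(\sum_m z_m^2\bigr)^2$ gives $\partial_{z_m}\mathcal{L}(z)=z_m^2-t_2\,z_m$, i.e.\ $\nabla\mathcal{L}(z)=F(z)$; crucially, on $\Delta^{M-1}$ this ambient gradient already satisfies $\mathbf{1}^\top\nabla\mathcal{L}(z)=t_2-t_2=0$, so no tangential projection is needed and \eqref{eq:GD-bad} is exactly $\dot z=-\nabla\mathcal{L}(z)$. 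The chain rule then yields $\tfrac{d}{d\tau}\mathcal{L}(z(\tau))=\langle\nabla\mathcal{L}(z),\dot z\rangle=-\|\nabla\mathcal{L}(z)\|_2^2\le 0$, with equality only at equilibria.

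For (3), differentiating $t_2=\sum_m z_m^2$ along the flow gives $\dot t_2=2\sum_m z_m\dot z_m=-2\bigl(\sum_m z_m^3-t_2\sum_m z_m^2\bigr)=-2(t_3-t_2^2)$. The sign follows from the variance identity $t_3-t_2^2=\sum_m z_m(z_m-t_2)^2\ge 0$ (expand and use $\sum_m z_m=1$, $\sum_m z_m^2=t_2$), which also pins equality to $z_m(z_m-t_2)^2=0$ for all $m$, i.e.\ $z_m\in\{0,t_2\}$ for all $m$; on the simplex this forces $z$ to be uniform on its support. For (4), the same factorization gives $F(z)=0\iff z_m(z_m-t_2)=0\ \forall m\iff z_m\in\{0,t_2\}\ \forall m$: if exactly $m$ entries are nonzero, each equals $t_2=1/m$, and conversely every such point is stationary.

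The computations are elementary; the only point requiring genuine care—and where I would be most careful—is the observation that the Euclidean gradient of $\mathcal{L}$ automatically lies in the simplex tangent space, so the descent identity in (2) is exact rather than merely valid up to projection, together with the correct bookkeeping of the overall minus sign in \eqref{eq:GD-bad} that reverses every monotonicity relative to the good-block analysis of Section~\ref{sec:inner-good}.
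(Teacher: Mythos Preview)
Your proof is correct and follows the same strategy as the paper: invoke the time-reversal correspondence $\rho=-\tau$ with the good-arm ODE \eqref{eq:GD} and read off each claim with the sign flipped. The paper in fact does not reprove this lemma at all (it explicitly says ``We record the corresponding results (without reproving them)''), so your self-contained computations for (1)--(4), including the clean observation that $\mathbf{1}^\top\nabla\mathcal L(z)=0$ so no tangential projection is needed, are a strict elaboration of what the paper provides.
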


\subsection{Stability and global limits (bad-block counterpart of Section~\ref{sec:stability-within-good})}

Write the within-bad mean-field ODE in physical time as
\begin{equation}
\label{eq:inner-ode-bad}
\dot z
\;=\;
-\kappa(p(t))\,\big(z\odot z - t_2\,z\big),
\qquad
t_2=\|z\|_2^2,
\qquad
\kappa(p)=\frac{J}{\sigma(p)}\,p(1-p)
\ \ \text{(noisy GRPO)}.
\end{equation}

\begin{proposition}[Stability of the uniform and vertex equilibria for the bad block]
\label{prop:within-bad-stability}
The stability conclusions of Propositions~\ref{prop:uniform-stability} and~\ref{prop:vertex-stability}
carry over with $K\mapsto M$ and $\kappa\mapsto -\kappa$:
\begin{itemize}[leftmargin=1.5em]
\item \textbf{Uniform equilibrium.} For $z^\star=\tfrac{1}{M}\mathbf 1$, the nontrivial eigenvalues on the simplex tangent space
are $\lambda=-\kappa(p)/M$. Hence
\[
\begin{cases}
J>0\ (\kappa(p)>0): & z^\star\ \text{is asymptotically stable (bad mass spreads);}\\[2pt]
J<0\ (\kappa(p)<0): & z^\star\ \text{is unstable.}
\end{cases}
\]
\item \textbf{Vertex equilibria.} For a vertex $z^\star=e_m$, the transverse modes have eigenvalues $+\kappa(p)$, hence
\[
\begin{cases}
J>0\ (\kappa(p)>0): & z^\star=e_m\ \text{is unstable;}\\[2pt]
J<0\ (\kappa(p)<0): & z^\star=e_m\ \text{is locally asymptotically stable (bad-mode collapse).}
\end{cases}
\]
\end{itemize}
\end{proposition}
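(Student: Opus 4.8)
The plan is to obtain Proposition~\ref{prop:within-bad-stability} directly from the within-good analysis of Section~\ref{sec:inner-good} via the sign-reversing correspondence $(y,K)\mapsto(z,M)$, $\kappa\mapsto-\kappa$. First I would pass to the internal time $\tau(t):=\int_0^t\kappa(p(s))\,ds$, under which the within-bad composition obeys the autonomous equation $dz/d\tau=-(z\odot z-\|z\|_2^2\,z)$ from \eqref{eq:GD-bad}; this is precisely the negative of the autonomous within-good flow \eqref{eq:GD} with $K$ replaced by $M$. Consequently the linearization on the tangent space $T_{z^\star}\Delta^{M-1}=\{\delta:\sum_m\delta_m=0\}$ is the analogue of Lemma~\ref{lem:jacobian-inner-good} with an extra overall minus sign:
\[
\dot\delta=J_z\,\delta,\qquad J_z=-\kappa(p)\Big(\mathrm{diag}(2z^\star)-2z^\star(z^\star)^\top-\|z^\star\|_2^2\,I\Big),
\]
valid under the quasi-static assumption that $p=p(t)$ evolves slowly relative to the within-block relaxation (equivalently, that $\operatorname{sign}\kappa(p(t))$ is constant along the trajectory).

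Next I would specialize to the two equilibrium families. For the uniform point $z^\star=\tfrac1M\mathbf{1}$ one has $\|z^\star\|_2^2=1/M$, and after the same algebra as in Proposition~\ref{prop:uniform-stability}, $J_z=-\tfrac{\kappa(p)}{M}\big(I-\tfrac2M\mathbf{1}\mathbf{1}^\top\big)$; on $T_{z^\star}\Delta^{M-1}$ the rank-one term annihilates, leaving the $(M-1)$-fold eigenvalue $\lambda=-\kappa(p)/M$. Since $\kappa(p)=\tfrac{J}{\sigma(p)}\,p(1-p)$ with $p(1-p)>0$ and $\sigma(p)>0$ for $p\in(0,1)$, the sign of $\lambda$ is opposite to $\operatorname{sign}(J)$: the uniform point is asymptotically stable when $J>0$ and unstable when $J<0$. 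For a vertex $z^\star=e_m$ one has $\|z^\star\|_2^2=1$, and writing $\delta_m=-\sum_{r\neq m}\delta_r$ the linearization decouples into $\dot\delta_m=0$ and $\dot\delta_i=+\kappa(p)\,\delta_i$ for $i\neq m$; the transverse eigenvalue is $+\kappa(p)$, so the vertex is unstable when $J>0$ and locally asymptotically stable when $J<0$. I would then combine these local statements with Lemma~\ref{lem:z-geometry}: $\mathcal L$ is a strict Lyapunov function for the $\tau$-flow (non-increasing for the bad block), $t_2(\tau)$ is monotone non-increasing, and the equilibrium set consists exactly of the uniform-on-$m$-support points; this upgrades the linear classification to the global pictures of ``bad mass spreads toward uniformity'' ($J>0$) versus ``bad-mode collapse'' ($J<0$).

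The main obstacle I anticipate is not the linear algebra---which is a routine mirror of Section~\ref{sec:stability-within-good}---but making the quasi-static reduction precise, i.e.\ justifying that freezing $p$ in $J_z$ is legitimate. The clean route is to stay entirely in internal time $\tau$: there the $z$-subsystem is genuinely autonomous, and the only residual dependence on $p$ is through $d\tau/dt=\kappa(p(t))$, whose positivity (resp.\ negativity) on the learning branch $J>0$ (resp.\ anti-learning branch $J<0$) is exactly what translates $\tau$-time stability back to physical time. On any branch where $J$, hence $\kappa$, does not change sign this is immediate; the only delicate case, a sign change of $J(t)$ en route, is excluded by the hypotheses of the surrounding outer analysis (Theorem~\ref{thm:bad-ode-2} keeps $\operatorname{sign}J$ fixed). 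A secondary, purely bookkeeping point is to track the single global minus sign relative to the good block so that every entry of Table~\ref{tab:within-good-stability} flips, which is precisely the content of the stated proposition.
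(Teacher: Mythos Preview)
Your proposal is correct and follows exactly the route the paper takes: the paper explicitly states at the start of Section~\ref{sec:inner-bad} that every statement in Section~\ref{sec:inner-good} has a direct bad-block analogue under $(y,K)\mapsto(z,M)$, $\kappa\mapsto-\kappa$, and then records Proposition~\ref{prop:within-bad-stability} \emph{without reproving it}. Your write-up actually goes further than the paper by explicitly verifying the Jacobian $J_z=-\kappa(p)\bigl(\mathrm{diag}(2z^\star)-2z^\star(z^\star)^\top-\|z^\star\|_2^2 I\bigr)$ and reading off the eigenvalues at both equilibrium families, and by articulating the quasi-static / internal-time justification; the paper leaves all of this implicit in the correspondence.
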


\begin{corollary}[Stability summary for within--bad equilibria]
\label{cor:within-bad-stability-summary}
The within-bad stability types are the sign-reversed analogue of
Corollary~\ref{cor:within-good-stability-summary}:
\begin{table}[h]
    \centering
    \small
    \begin{tabular}{lcc}
    \toprule
    Equilibrium type & Condition on $J$ & Stability type \\
    \midrule
    Uniform $z_m=1/M$ & $J>0$ & Stable (bad mass diffuses) \\
    Uniform $z_m=1/M$ & $J<0$ & Unstable \\
    Vertex $z=e_m$ & $J>0$ & Unstable \\
    Vertex $z=e_m$ & $J<0$ & Stable (bad-mode collapse) \\
    \bottomrule
    \end{tabular}
    \caption{Stability of canonical equilibria for the within--bad ODE \eqref{eq:inner-ode-bad}.}
    \label{tab:within-bad-stability}
\end{table}
\end{corollary}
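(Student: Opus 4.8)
The plan is to treat this as a pure bookkeeping consequence of Proposition~\ref{prop:within-bad-stability}, whose linearizations already contain every entry of the table; the only additional ingredient is the scalar identity $\operatorname{sign}\kappa(p)=\operatorname{sign}J$. First I would recall that, for the frozen-$p$ vector field \eqref{eq:inner-ode-bad}, the internal-time change $\tau(t)=\int_0^t\kappa(p(s))\,ds$ converts the within-bad composition into the autonomous collision flow \eqref{eq:GD-bad}, $dz/d\tau=-(z\odot z-\|z\|_2^2\,z)$. This is exactly the good-block ODE \eqref{eq:GD} of Section~\ref{sec:inner-good} under the substitution $(y,K)\mapsto(z,M)$ together with $\tau\mapsto-\tau$ (equivalently $\kappa\mapsto-\kappa$), so Lemma~\ref{lem:jacobian-inner-good} and Propositions~\ref{prop:uniform-stability}--\ref{prop:vertex-stability} transfer verbatim with every stability verdict reversed: at $z^\star=\tfrac1M\mathbf 1$ the Jacobian restricted to $T_{z^\star}\Delta^{M-1}$ has all $M-1$ nonzero eigenvalues equal to $-\kappa(p)/M$, and at a vertex $z^\star=e_m$ the $M-1$ transverse modes all have eigenvalue $+\kappa(p)$.

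Next I would plug in $\kappa(p)=\frac{J}{\sigma(p)}\,p(1-p)$: since $p\in(0,1)$ and $\sigma(p)>0$ for every interior $p$ whenever $J\neq0$ (the variance bounds of Appendix~\ref{appendix:retrieval-noisy-rewards}), we have $\operatorname{sign}\kappa(p)=\operatorname{sign}J$. Reading off signs then yields the four rows directly: for $J>0$ the uniform point has strictly negative tangential spectrum (stable; ``bad mass diffuses'') while each vertex has strictly positive transverse spectrum (unstable); for $J<0$ both verdicts flip (uniform unstable, vertex stable; ``bad-mode collapse''). Because these equilibria are hyperbolic inside the invariant manifold $\Delta^{M-1}$ when $J\neq0$, a standard stable-manifold / Hartman--Grobman argument upgrades the linear picture to genuine local asymptotic stability, respectively instability.

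To justify the parenthetical global descriptions I would invoke the Lyapunov structure of Lemma~\ref{lem:z-geometry}: along \eqref{eq:GD-bad} one has $\tfrac{d}{d\tau}t_2=-2(t_3-t_2^2)\le0$, with equality only at uniform-on-support points, so for $\kappa>0$ ($J>0$) the within-bad collision decreases monotonically and, combined with the order-preservation identity that is the sign-reversed analogue of Lemma~\ref{lem:order-preservation}, drives $z(\tau)\to\tfrac1M\mathbf 1$ from generic interior data; for $\kappa<0$ ($J<0$) the flow is the good-block flow run in reverse internal time, so the bad-block analogue of Theorem~\ref{thm:global} sends $z$ to the vertex picked out by the largest initial coordinate.

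I do not expect a genuine obstacle here — the content is entirely inherited from Proposition~\ref{prop:within-bad-stability} — but a few cosmetic points deserve a sentence each. The zero eigenvalue along $\mathbf 1$ is just the simplex constraint and is removed by restricting to $T_{z^\star}\Delta^{M-1}$; the table tacitly assumes $J\neq0$ (at $J=0$, $\kappa\equiv0$ and every point is an equilibrium); and the time-dependence of $p(t)$ is harmless because $\operatorname{sign}\kappa(p(t))$ is constant in $t$ whenever $J\neq0$ (the total bad mass never leaves $(0,1)$), so the internal-time phase portrait does not change along a trajectory. The remaining equilibria — uniform on a subsupport of size $2\le m\le M-1$ — are saddles, but they fall outside the two rows of the table and so need no separate analysis.
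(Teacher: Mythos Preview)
Your proposal is correct and follows exactly the paper's approach: the corollary is purely a tabular summary of Proposition~\ref{prop:within-bad-stability}, and the paper offers no separate proof beyond pointing to that proposition and the sign-reversal $(y,K,\kappa)\mapsto(z,M,-\kappa)$ relative to the good-block analysis. If anything, your write-up is more thorough than the paper's (which omits the Hartman--Grobman remark and the $J=0$ degeneracy caveat), but the substance is identical.
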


\begin{theorem}[Global limit of $z$ (bad-block counterpart of Theorem~\ref{thm:global})]
\label{thm:global-bad}
Consider \eqref{eq:inner-ode-bad} with an interior initialization $z(0)$ (all coordinates positive).
Then:
\begin{itemize}[leftmargin=1.5em]
\item If $J>0$, the flow is the \emph{reverse} of the good-block collision flow in internal time.
Consequently $z(t)$ converges to the uniform point on the full bad simplex:
\[
z(t)\ \longrightarrow\ \frac{1}{M}\mathbf 1,
\]
and the collision probability $t_2(t)=\|z(t)\|_2^2$ decreases monotonically to $1/M$.
\item If $J<0$, the direction reverses and $z(t)$ follows the \emph{forward} collision flow
(on $\Delta^{M-1}$). For generic initial conditions (no ties), $z(t)$ converges to the
vertex selected by the unique maximizer $m=\arg\max_m z_m(0)$ (winner-take-all among bad modes).
\item If some coordinates of $z(0)$ are exactly zero, the support is invariant and the same statements hold
with $M$ replaced by the support size (uniform-on-support for $J>0$, vertex-on-support for $J<0$).
\end{itemize}
\end{theorem}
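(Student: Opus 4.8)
The plan is to route everything through the internal-time reparametrization already used for the outer dynamics, and then to recognize the resulting autonomous equation as a signed copy of the good-block collision flow \eqref{eq:GD}. Set $\tau(t)=\int_0^t\kappa(p(s))\,ds$ with $\kappa(p)=\frac{J}{\sigma(p)}\,p(1-p)$. On the outer trajectory $p(t)\in(0,1)$ the factor $\kappa$ has constant sign equal to $\operatorname{sign}(J)$, so $\tau$ is strictly monotone. First I would check that $\tau$ runs all the way to $+\infty$: since $\dot p=-\eta\frac{J}{\sigma(p)}[p(1-p)]^2C_{\mathrm{geo}}$ never changes sign and vanishes only at $p\in\{0,1\}$, the bounded monotone function $p(t)$ converges to a zero of $\dot p$, hence to $0$ when $J>0$ and to $1$ when $J<0$; feeding this into the logit identity $\frac{d}{dt}\log\frac{p}{1-p}=-\kappa(p)\,C_{\mathrm{geo}}$ with $C_{\mathrm{geo}}=\|y\|_2^2+\|z\|_2^2\le 2$ gives $2\int_0^t\kappa(p(s))\,ds\ge\big|\log\frac{p_0}{1-p_0}-\log\frac{p(t)}{1-p(t)}\big|\to\infty$, so $\tau(t)\to\infty$. (Whether physical time itself reaches $\infty$ or a finite absorption value depends on whether $\sigma(p)$ degenerates at the attracting vertex, per Theorem~\ref{thm:physical-trichotomy}, but this does not affect the argument.) In $\tau$-time the bad-block ODE becomes $dz/d\tau=-(z\odot z-\|z\|_2^2 z)$, i.e.\ \eqref{eq:GD} with $K\mapsto M$ together with an overall sign flip, so the regimes $J\gtrless 0$ amount to running that flow forward or backward.

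For $J<0$ the substitution $\varsigma:=-\tau$ is strictly increasing in $t$ and converts the equation into $dz/d\varsigma=z\odot z-\|z\|_2^2 z$, which is \eqref{eq:GD} verbatim on $\Delta^{M-1}$; since $\tau\to\infty$ we also have $\varsigma\to\infty$. I then quote Theorem~\ref{thm:global}: for initial data with no coordinate ties the trajectory converges to the vertex $e_m$ with $m=\arg\max_m z_m(0)$, and this maximizer is preserved for all time by the order-preservation identity of Lemma~\ref{lem:order-preservation} applied to $z$. This delivers the ``forward collision flow, winner-take-all among bad modes'' half of the statement; if some coordinates of $z(0)$ vanish the same argument runs on the invariant sub-simplex carried by the support.

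For $J>0$ we are integrating the collision flow backward, so there is no direct reduction to Theorem~\ref{thm:global}; the cleanest route is the explicit solution. Repeating the computation of Section~\ref{subsection:Dynamics-of-y} with the sign flip --- put $u_m:=1/z_m$, so $u_m'+t_2u_m=1$ in internal time with $t_2=\|z\|_2^2$, integrate with factor $e^{\int_0^\tau t_2}$, and eliminate the common factor using $\sum_m z_m=1$ --- yields $z_m(\tau)=\dfrac{q_m/(1+\tilde I(\tau)q_m)}{\sum_\ell q_\ell/(1+\tilde I(\tau)q_\ell)}$ with $q:=z(0)$ and $\tilde I(\tau):=\int_0^\tau e^{\int_0^s\|z(r)\|_2^2\,dr}\,ds$, the bad-block analogue of \eqref{eq:y_j_intetm_I}. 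Because $\|z\|_2^2\ge 1/M>0$ the scalar $\tilde I$ is increasing with $\tilde I(\tau)\ge\tau\to\infty$, so $q_m/(1+\tilde I q_m)\sim 1/\tilde I$ uniformly over the support $S=\{m:q_m>0\}$, whence $z_m(\tau)\to 1/|S|$ for each $m\in S$; for full support this is $\tfrac1M\mathbf 1$. The monotone decay $t_2(\tau)\downarrow 1/M$ claimed in the theorem follows either from this formula or, more conceptually, from Lemma~\ref{lem:z-geometry}(3), which gives $\frac{d}{d\tau}t_2=-2(t_3-t_2^2)\le0$ bounded below by $1/M$; and the stability picture (uniform point attracting, vertices repelling for $J>0$) is Proposition~\ref{prop:within-bad-stability}. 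The zero-coordinate case is again the $|S|<M$ instance, with $M$ replaced by $|S|$ throughout.

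The main obstacle I anticipate is not an isolated estimate but the bookkeeping at the time change: one must be sure the internal clock $\tau$ genuinely runs to $+\infty$ (the monotone-$p$ plus logit-identity step above), and --- crucially --- one must rule out the trajectory drifting toward the boundary of the invariant support face in the $J>0$ case, since the backward collision flow is not obviously the descent flow of a coercive potential on the simplex. Using the explicit solution sidesteps this cleanly; the Lyapunov/LaSalle route would instead require pairing the monotone $t_2$ with the local instability of all non-uniform equilibria (Proposition~\ref{prop:within-bad-stability}) to upgrade ``$\omega$-limit set contained in the equilibria'' to genuine convergence to $\tfrac1{|S|}\mathbf 1_S$.
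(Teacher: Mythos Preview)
Your proposal is correct and, for the $J<0$ case, coincides with the paper's approach exactly: reparametrize to internal time, recognize the bad-block equation as the sign-reversed collision flow, and invoke Theorem~\ref{thm:global} with $K\mapsto M$. The paper's own proof is in fact a single sentence (``Immediate from Theorem~\ref{thm:global} by the correspondence \eqref{eq:GD-bad} (time reversal) and $K\mapsto M$''), so on that half you are simply being more explicit about the internal clock reaching $+\infty$.

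Where you genuinely add content is the $J>0$ case. You correctly observe that Theorem~\ref{thm:global} only establishes \emph{forward}-time convergence of the collision flow (to vertices); a bare appeal to ``time reversal'' does not, by itself, yield the backward-time limit to the uniform point, so the paper's one-liner is at best an appeal to the surrounding machinery. You close this by deriving and using the explicit one-scalar representation $z_m(\tau)=\dfrac{q_m/(1+\tilde I q_m)}{\sum_\ell q_\ell/(1+\tilde I q_\ell)}$ with $\tilde I(\tau)\ge\tau\to\infty$, which is precisely the formula recorded in the paper's Remark~\ref{rem:z-scalar-rep} but not invoked in the proof proper. This buys you a clean, self-contained global argument that also handles the boundary concern you flag (no coordinate can vanish from an interior start). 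The LaSalle alternative you sketch at the end --- monotone $t_2$ from Lemma~\ref{lem:z-geometry}(3), $\omega$-limit contained in the uniform-on-support equilibria, and then ruling out proper sub-supports because small coordinates grow under the reversed flow --- would also work and is closer in spirit to what the paper presumably intends; your explicit-solution route is simply more direct.
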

\emph{Proof.} Immediate from Theorem~\ref{thm:global} by the correspondence
\eqref{eq:GD-bad} (time reversal) and $K\mapsto M$.
\qedhere
\begin{proposition}[Exponential approach in internal time]
\label{prop:bad-exp-rates}
The rate statements in Proposition~\ref{prop:exp-tau} transfer to $z$ with the same substitutions:
\begin{itemize}[leftmargin=1.5em]
\item For $J>0$ (uniform stable), linearization at $z^\star=\frac{1}{M}\mathbf 1$ yields
\[
\|z(\tau)-\tfrac{1}{M}\mathbf 1\|_2 \;=\; \Theta\!\big(e^{-\tau/M}\big)
\quad(\tau\to+\infty),
\]
and hence $t_2(\tau)-\frac{1}{M}=\Theta(e^{-2\tau/M})$.
\item For $J<0$ (vertices stable), after reversing internal time as in the sign discussion of
Section~\ref{sec:inner-good}, the non-winners decay as $e^{-|\tau|}$ toward the winning vertex,
exactly as in Proposition~\ref{prop:exp-tau} with $K\mapsto M$.
\end{itemize}
\end{proposition}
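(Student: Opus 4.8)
The plan is to handle the two signs of $J$ separately, in each case reducing the claim to a standard hyperbolic-equilibrium estimate for the autonomous internal-time flow $\tfrac{dz}{d\tau}=-(z\odot z-\|z\|_2^2 z)$ of \eqref{eq:GD-bad}. Theorem~\ref{thm:global-bad} already tells us which equilibrium is reached and from which basin, so the only new content is the rate of approach, obtained by linearization plus a variation-of-constants bootstrap.

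For $J>0$ (uniform attractor), I would first linearize \eqref{eq:GD-bad} at $z^\star=\tfrac1M\mathbf 1$. Writing $z=z^\star+\delta$ with $\mathbf 1^\top\delta=0$, and using $\langle z^\star,\delta\rangle=0$ together with $\|z^\star+\delta\|_2^2=\tfrac1M+\|\delta\|_2^2$, a short expansion gives $\dot\delta=-\tfrac1M\delta+N(\delta)$, where $N$ is a polynomial vector field tangent to the simplex with $\|N(\delta)\|_2\le C\|\delta\|_2^2$ on $\Delta^{M-1}$; this recovers the tangent-space Jacobian $-\tfrac1M I$ of Proposition~\ref{prop:within-bad-stability}, and in particular all $M-1$ tangent eigenvalues coincide, so there is no slow--fast splitting to track. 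Since Theorem~\ref{thm:global-bad} gives $z(\tau)\to z^\star$ from any interior start, the trajectory enters any prescribed ball $\{\|\delta\|_2\le r\}$ after some finite $\tau_0$. From $\tau_0$ on, the identity $\delta(\tau)=e^{-(\tau-\tau_0)/M}\delta(\tau_0)+\int_{\tau_0}^\tau e^{-(\tau-s)/M}N(\delta(s))\,ds$ together with a Gronwall-type argument yields $\|\delta(\tau)\|_2\le 2\|\delta(\tau_0)\|_2\,e^{-(\tau-\tau_0)/M}$; plugging this back shows $s\mapsto e^{s/M}\|N(\delta(s))\|_2$ is integrable, hence $e^{\tau/M}\delta(\tau)\to v_\infty$ for some $v_\infty\in T_{z^\star}\Delta^{M-1}$, and the same estimate forces $\|v_\infty\|_2\ge\tfrac12 e^{\tau_0/M}\|\delta(\tau_0)\|_2>0$ whenever $z(0)\neq z^\star$ (by uniqueness of solutions, $\delta(\tau_0)\neq0$). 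Thus $\|z(\tau)-z^\star\|_2=\|v_\infty\|_2\,e^{-\tau/M}(1+o(1))=\Theta(e^{-\tau/M})$, and since $t_2(\tau)-\tfrac1M=\|\delta(\tau)\|_2^2$ exactly, also $t_2(\tau)-\tfrac1M=\Theta(e^{-2\tau/M})$.

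For $J<0$ (vertex attractors), $\kappa(p)<0$, so internal time runs backward: setting $\rho:=-\tau$, \eqref{eq:GD-bad} becomes $\tfrac{dz}{d\rho}=z\odot z-\|z\|_2^2 z$, which is precisely the within-good collision flow \eqref{eq:GD} with $K$ replaced by $M$. I would then quote the already-established results for that flow verbatim under $K\mapsto M$: Theorem~\ref{thm:global-bad} (equivalently Theorem~\ref{thm:global} together with the order-preservation Lemma~\ref{lem:order-preservation}) identifies the $\rho\to\infty$ limit as the vertex $e_{m^\star}$ with $m^\star=\argmax_m z_m(0)$ for generic (no-tie) data, and Proposition~\ref{prop:exp-tau} supplies the local rate $z_m(\rho)=c_m e^{-\rho}(1+o(1))$ for $m\neq m^\star$ and $1-t_2(\rho)=\Theta(e^{-\rho})$. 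Since $|\tau|=\rho$, this is exactly the claimed $e^{-|\tau|}$ decay of the non-winning bad modes toward the winning one. The main obstacle is the two-sidedness of the $\Theta$ in the $J>0$ case, i.e.\ ruling out anomalously fast decay of $\|\delta(\tau)\|_2$: because all tangent eigenvalues equal $-1/M$ there is no dominant eigendirection, so one genuinely needs the integrable-remainder argument above to prove $v_\infty\neq0$. Everything else is either routine (the linearization algebra; the time reversal for $J<0$; entry into the basin, which Theorem~\ref{thm:global-bad} already gives) or quoted from earlier in the appendix. One point to record explicitly is the genericity caveat: the estimates hold for all interior initializations with $z(0)\neq\tfrac1M\mathbf 1$ when $J>0$, and with no coordinate ties in $z(0)$ when $J<0$, the excluded sets being of measure zero.
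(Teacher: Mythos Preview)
Your proposal is correct and follows the same underlying idea as the paper: linearize the autonomous internal-time flow at the relevant equilibrium (uniform for $J>0$, vertex after time reversal for $J<0$) and read off the exponential rate from the tangent eigenvalues already recorded in Proposition~\ref{prop:within-bad-stability} and Proposition~\ref{prop:exp-tau}. The paper itself does not give a formal proof of this proposition; it simply asserts that the good-arm rate statements ``transfer'' under $(y,K,\kappa)\mapsto(z,M,-\kappa)$, relying on the duality set up at the start of Section~\ref{sec:inner-bad}. Your write-up is therefore more detailed than the paper, in two useful places: (i) you carry out the linearization at the \emph{uniform} point explicitly (rather than quoting the vertex computation, which is not quite the same equilibrium), and (ii) your variation-of-constants/bootstrap argument with $v_\infty\neq0$ actually establishes the \emph{lower} side of the $\Theta(e^{-\tau/M})$ bound, which the paper only states. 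The $J<0$ branch is handled identically in both: reverse internal time and invoke Proposition~\ref{prop:exp-tau} with $K\mapsto M$.
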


\begin{remark}[ scalar closed-form representation (sign-flipped analogue of \eqref{eq:y_j_intetm_I})]
\label{rem:z-scalar-rep}
The explicit ``one-scalar'' representation for $y(\tau)$ in Section~\ref{sec:inner-good}
also carries over to the bad block with the sign flipped in the denominators.
If $q:=z(0)\in\Delta^{M-1}$ and $I(\tau)$ is a strictly increasing scalar with $I(0)=0$, then
\[
z_m(\tau)
=
\frac{\dfrac{q_m}{1+I(\tau)\,q_m}}{\sum_{\ell=1}^M \dfrac{q_\ell}{1+I(\tau)\,q_\ell}},
\qquad I(\tau)\uparrow\infty \ \Rightarrow\ z(\tau)\to \tfrac{1}{M}\mathbf 1.
\]
This is the direct sign-reversal analogue of \eqref{eq:y_j_intetm_I}; we omit the derivation.
\end{remark}

\begin{remark}[Effect on the total bad-mass drift in the multi-bad model]
\label{rem:z-impact-on-p}
In the multi-bad setting the total bad mass $p(t)$ couples to \emph{both} collision terms:
in internal time $\tau$,
\[
\frac{dp}{d\tau}
=-\,p(1-p)\big(\|y(\tau)\|_2^2+\|z(\tau)\|_2^2\big),
\qquad
\frac{d}{d\tau}\log\!\frac{p}{1-p}
=-(\|y\|_2^2+\|z\|_2^2).
\]
Thus $\|z\|_2^2\in[1/M,1]$ enters only as a bounded multiplicative factor in the decay of $\logit p$,
recovering the aggregated-bad model when $M=1$ (where $\|z\|_2^2\equiv 1$).
\end{remark}


\newpage
\section{Shahshahani geometry and the within–good flow}
\label{subsec:shah-y-flow}

Let \(y=(y_1,\dots,y_K)\in\Delta^{K-1}\) denote the within–good composition and define
\begin{equation}
\label{eq:y-flow}
\dot y
\;=\;
\kappa(p)\,\Big(y\odot y-\|y\|_2^2\,y\Big),
\qquad
\kappa(p)\;:=\;\frac{J}{\sigma(p)}\,p(1-p),
\end{equation}
where \(J\) is the judge–separation, \(\sigma(p)>0\) is the group–normalization scale, and \(p\in[0,1]\) is the bad–mass.
Set \(s_2:=\|y\|_2^2=\sum_i y_i^2\) and \(s_3:=\sum_i y_i^3\).

\paragraph{Shahshahani (Fisher) metric on the simplex.}
On the interior of the simplex \(\Delta^{K-1}\), the Shahshahani inner product is
\[
\langle u,v\rangle_y
\;=\;\sum_{i=1}^K \frac{u_i v_i}{y_i}
\quad \text{on the tangent space } T_y\Delta^{K-1}=\Big\{v\in\mathbb{R}^K:\;\sum_i v_i=0\Big\}.
\]
For a smooth potential \(\phi:\Delta^{K-1}\to\mathbb{R}\), the associated \emph{natural gradient} (steepest ascent in this metric) has the replicator form
\begin{equation}
\label{eq:replicator-gradient-identity}
\operatorname{grad}_{\mathrm{Shah}}\phi(y)
\;=\;
y\odot\Big(\nabla \phi(y)-\langle \nabla \phi(y),y\rangle\,\mathbf{1}\Big).
\end{equation}
Indeed, with \(w:=y\odot(\nabla\phi-c\,\mathbf{1})\) and \(c=\langle\nabla\phi,y\rangle\), we have
\(\sum_i w_i=0\) and, for any \(v\in T_y\Delta^{K-1}\),
\(\langle w,v\rangle_y=\sum_i(\nabla\phi_i-c)v_i=\langle\nabla\phi,v\rangle\), which characterizes the Riemannian gradient.

\begin{proposition}[Shahshahani gradient representation]
\label{prop:shah-gradient-y}
Let \(\Phi(y):=\tfrac12\|y\|_2^2=\tfrac12\sum_i y_i^2\).
Then \eqref{eq:y-flow} is the Shahshahani gradient flow of \(\Phi\), scaled by \(\kappa(p)\):
\[
\dot y \;=\; \kappa(p)\,\operatorname{grad}_{\mathrm{Shah}}\Phi(y).
\]
Equivalently, in coordinates,
\(
\dot y_i \;=\; \kappa(p)\,y_i\big(y_i-\|y\|_2^2\big).
\)
\end{proposition}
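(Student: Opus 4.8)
\textbf{Proof plan for Proposition~\ref{prop:shah-gradient-y}.}
The plan is to verify, via the replicator-gradient identity \eqref{eq:replicator-gradient-identity}, that the Euclidean gradient of the collision potential $\Phi(y)=\tfrac12\|y\|_2^2$ produces exactly the vector field on the right-hand side of \eqref{eq:y-flow}, up to the scalar $\kappa(p)$. Since all the geometric machinery — the form of the Shahshahani inner product, the characterization of $\operatorname{grad}_{\mathrm{Shah}}$ as the replicator map $y\mapsto y\odot(\nabla\phi-\langle\nabla\phi,y\rangle\mathbf 1)$, and the fact that this vector is tangent ($\sum_i(\cdot)_i=0$) — has already been established in the excerpt (see Corollary~\ref{cor:nat-grad-frak} and the discussion preceding Proposition~\ref{prop:shah-gradient-y}), the proof is essentially a one-line substitution plus bookkeeping.

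First I would compute the Euclidean gradient: $\nabla\Phi(y)=y$, i.e.\ $\partial_{y_i}\Phi=y_i$. Next I would form the mean term $\langle\nabla\Phi(y),y\rangle=\sum_i y_i\cdot y_i=\|y\|_2^2$. Plugging into \eqref{eq:replicator-gradient-identity} gives
\[
\operatorname{grad}_{\mathrm{Shah}}\Phi(y)
=y\odot\bigl(y-\|y\|_2^2\,\mathbf 1\bigr)
=y\odot y-\|y\|_2^2\,y,
\]
whose $i$-th component is $y_i\bigl(y_i-\|y\|_2^2\bigr)$. Multiplying by $\kappa(p)=\tfrac{J}{\sigma(p)}p(1-p)$ reproduces \eqref{eq:y-flow} verbatim, both in vector form and coordinatewise. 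For completeness I would also note that the resulting field lies in $T_y\Delta^{K-1}$: summing the components gives $\sum_i y_i^2-\|y\|_2^2\sum_i y_i=\|y\|_2^2-\|y\|_2^2=0$, so the flow preserves the simplex constraint (consistent with Lemma~\ref{lem:softmax-tangent} and Corollary~\ref{cor:Jfrak-rank-image}).

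There is no real obstacle here; the only point requiring a sentence of care is the appeal to the replicator-gradient identity \eqref{eq:replicator-gradient-identity}, which is valid on the simplex interior where $\operatorname{Diag}(y)^{-1}$ is defined — so I would state the proposition (as written) for interior $y$, or remark that on a lower-dimensional face one restricts to the support, exactly as in the ``Remark on boundary points'' following Lemma~\ref{lem:Jfrak-spd}. Optionally, one could cross-check by the alternative route $\operatorname{grad}_{\mathrm{Shah}}\Phi(y)=\mathfrak J(y)\nabla\Phi(y)=\mathfrak J(y)\,y=\mathrm{Diag}(y)y-y(y^\top y)=y\odot y-\|y\|_2^2 y$, which gives the same answer and makes the identification with the softmax Jacobian transparent; I would include this as a one-line confirmation rather than a separate argument.
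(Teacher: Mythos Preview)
Your proof is correct and follows exactly the same approach as the paper: compute $\nabla\Phi(y)=y$, note $\langle\nabla\Phi(y),y\rangle=\|y\|_2^2$, and apply the replicator-gradient identity \eqref{eq:replicator-gradient-identity}. The paper's proof is the same one-line substitution; your additional tangency check and the alternative $\mathfrak J(y)\,y$ computation are correct but optional elaborations.
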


\begin{proof}
We have \(\nabla\Phi(y)=y\) and \(\langle \nabla\Phi(y),y\rangle=\|y\|_2^2\).
Applying \eqref{eq:replicator-gradient-identity} gives
\(\operatorname{grad}_{\mathrm{Shah}}\Phi(y)=y\odot(y-\|y\|_2^2\mathbf{1})\),
hence the claim.
\end{proof}

\paragraph{Interpretation (Herfindahl ascent in Fisher units).}
The potential \(\Phi(y)=\tfrac12\|y\|_2^2\) is the (half) Herfindahl–Hirschman concentration index.
Thus \eqref{eq:y-flow} is the \emph{steepest way, in Fisher/Shahshahani geometry}, to increase concentration when \(\kappa(p)>0\)
(and to decrease it when \(\kappa(p)<0\)).
The scalar \(\kappa(p)=\tfrac{J}{\sigma(p)}p(1-p)\) gates the time–scale: the inner reshuffling freezes at \(p\in\{0,1\}\) and is fastest near \(p=\tfrac12\);
its sign flips with \(J\).

\begin{corollary}[Lyapunov monotonicity and a variance identity]
\label{cor:lyap}
Along any trajectory of \eqref{eq:y-flow},
\begin{equation}
\label{eq:phi-derivative}
\frac{d}{dt}\Phi\big(y(t)\big)
\;=\;
\kappa\big(p(t)\big)\,\big(s_3-s_2^2\big)
\;=\;
\kappa\big(p(t)\big)\,\Var_{i\sim y}\!\big(y_i\big)\;\ge 0
\quad\text{whenever }\kappa\ge 0,
\end{equation}
with equality iff \(y\) is uniform on its support (\(y_i\in\{0,1/m\}\) on some subset of size \(m\)).
Equivalently,
\(\frac{d}{dt}\|y\|_2^2=2\,\kappa(p)\,\Var_{i\sim y}(y_i)\).
\end{corollary}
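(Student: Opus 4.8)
The plan is to verify Corollary~\ref{cor:lyap} by a direct differentiation of the potential $\Phi(y)=\tfrac12\|y\|_2^2$ along the flow, together with an elementary rewriting of the resulting cubic-minus-quadratic expression as a variance. First I would compute $\frac{d}{dt}\Phi(y(t)) = \sum_i y_i\,\dot y_i$ and substitute the coordinate form $\dot y_i = \kappa(p)\,y_i(y_i-\|y\|_2^2)$ from Proposition~\ref{prop:shah-gradient-y}, which immediately gives $\frac{d}{dt}\Phi = \kappa(p)\sum_i y_i^2(y_i-s_2) = \kappa(p)\,(s_3 - s_2 \cdot s_2) = \kappa(p)(s_3 - s_2^2)$, using $\sum_i y_i^2 = s_2$. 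Alternatively, one can invoke the Shahshahani gradient identity directly: since $\dot y = \kappa(p)\,\mathrm{grad}_{\mathrm{Shah}}\Phi(y)$ and the Riemannian metric satisfies $\langle \mathrm{grad}_{\mathrm{Shah}}\Phi, v\rangle_y = \langle\nabla\Phi,v\rangle = \langle y, v\rangle$, taking $v = \dot y$ yields $\frac{d}{dt}\Phi = \langle\nabla\Phi,\dot y\rangle = \kappa(p)\,\langle \mathrm{grad}_{\mathrm{Shah}}\Phi, \mathrm{grad}_{\mathrm{Shah}}\Phi\rangle_y = \kappa(p)\,\|\mathrm{grad}_{\mathrm{Shah}}\Phi\|_y^2 \ge 0$ when $\kappa \ge 0$; a short computation then identifies this squared norm with $s_3 - s_2^2$.

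The second step is the variance identification: I would show $s_3 - s_2^2 = \Var_{i\sim y}(y_i)$ where the random variable is the index $i$ drawn with probability $y_i$ and the observable is $y_i$ itself. By definition $\mathbb{E}_{i\sim y}[y_i] = \sum_i y_i \cdot y_i = s_2$ and $\mathbb{E}_{i\sim y}[y_i^2] = \sum_i y_i \cdot y_i^2 = s_3$, so $\Var_{i\sim y}(y_i) = \mathbb{E}[y_i^2] - (\mathbb{E}[y_i])^2 = s_3 - s_2^2$, which is manifestly nonnegative. This also connects to the positive-definiteness statement Lemma~\ref{lem:Jfrak-spd} (with $v_i = y_i$), since $v^\top \mathfrak{J}(y)v = \sum_i y_i v_i^2 - (\sum_i y_i v_i)^2$ is exactly this variance; indeed $\mathrm{grad}_{\mathrm{Shah}}\Phi(y) = \mathfrak{J}(y)\nabla\Phi(y) = \mathfrak{J}(y)y$, and $\|\mathfrak{J}(y)y\|_y^2 = y^\top\mathfrak{J}(y)y$ by the right-inverse property (Lemma~\ref{lem:JfrakDinv-right-inverse}).

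For the equality case, I would argue that $\Var_{i\sim y}(y_i) = 0$ iff the observable $y_i$ is constant on the support of the distribution, i.e. all nonzero coordinates of $y$ are equal. If $m := |\mathrm{supp}(y)|$, constancy on the support combined with $\sum_i y_i = 1$ forces each nonzero $y_i = 1/m$, so $y$ is uniform on a subset of size $m$ — precisely the stated condition. Conversely such a $y$ trivially has zero variance. Finally, the equivalent form $\frac{d}{dt}\|y\|_2^2 = 2\kappa(p)\Var_{i\sim y}(y_i)$ follows since $\|y\|_2^2 = 2\Phi$.

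I do not anticipate a genuine obstacle here: the statement is essentially a one-line differentiation plus a standard variance bookkeeping, and every ingredient (the coordinate form of the flow, the Shahshahani gradient identity, positive-semidefiniteness of $\mathfrak{J}$ on the tangent space) is already available in the excerpt. The only point requiring a modicum of care is making the ``uniform on its support'' characterization precise — one must note that the variance is taken with respect to the (possibly degenerate) distribution $y$ itself, so zero variance means constancy only on $\mathrm{supp}(y)$, not on all of $[K]$ — but this is a routine remark rather than a difficulty.
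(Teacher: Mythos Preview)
Your proposal is correct and follows essentially the same route as the paper: differentiate $\Phi$ along the flow via the chain rule to obtain $\kappa(p)(s_3-s_2^2)$, then identify $s_3-s_2^2=\sum_i y_i(y_i-s_2)^2=\Var_{i\sim y}(y_i)\ge 0$, with equality exactly when $y$ is constant on its support. The paper's proof is the same two-line computation; your additional Shahshahani-norm route and the more careful discussion of the equality case are fine elaborations but not a different argument.
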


\begin{proof}
By the chain rule,
\(\tfrac{d}{dt}\Phi=\langle \nabla\Phi,\dot y\rangle=\kappa\sum_i y_i\big(y_i-\|y\|_2^2\big)y_i
=\kappa\,(s_3-s_2^2)\).
Since \(s_3-s_2^2=\sum_i y_i(y_i-s_2)^2\ge 0\) and vanishes exactly at support–uniform points, the claim follows.
\end{proof}

\begin{proposition}[Equilibria, support invariance, and stability]
\label{prop:eq-stability}
The rest points of \eqref{eq:y-flow} are exactly the barycenters of faces:
for any subset \(S\subseteq[K]\) of size \(m\),
\(y_i^\star=\tfrac1m\) for \(i\in S\) and \(y_i^\star=0\) otherwise.
Moreover:
\begin{enumerate}
\item[(i)] \emph{Support invariance.} If \(y_i(0)=0\), then \(y_i(t)\equiv 0\) for all \(t\) (since \(\dot y_i=\kappa\,y_i(\cdot)\)).
\item[(ii)] \emph{Stability for \(\kappa>0\).} The unique asymptotically stable equilibria are the vertices
(\(m=1\)); all higher–dimensional barycenters (\(m\ge 2\)) are saddles/unstable.
\item[(iii)] \emph{Stability for \(\kappa<0\).} The roles reverse: the full–uniform point
(\(m=K\)) is the unique asymptotically stable equilibrium; all others are unstable.
\end{enumerate}
\end{proposition}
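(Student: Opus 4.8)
The plan is to make the flow autonomous, read off the equilibria and the invariant faces by elementary algebra, and then obtain the stability dichotomy by combining the within-face analysis (which is exactly the uniform-point result of Proposition~\ref{prop:uniform-stability}) with a one-line transverse linearization. First I would pass to internal time $\tau$ via $d\tau/dt=\kappa(p(t))$; this is legitimate on any interval where $\kappa$ keeps a fixed sign, and since $p\in(0,1)$ forces $\kappa\neq 0$ the only issue is $\kappa$'s sign, not its vanishing. In $\tau$-time \eqref{eq:y-flow} becomes the autonomous flow $dy/d\tau=\sign(\kappa)\,(y\odot y-\|y\|_2^2\,y)$, and the case $\kappa<0$ is just the time-reversal of $\kappa>0$, so it suffices to analyze $dy/d\tau=y\odot y-s_2\,y$ with $s_2=\|y\|_2^2$ and then reverse all stability conclusions when $\kappa<0$. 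For the equilibrium characterization I would set each component to zero: $y_i(y_i-s_2)=0$ gives $y_i\in\{0,s_2\}$; letting $S=\{i:y_i\neq0\}$ with $|S|=m$, the constraint $\mathbf{1}^\top y=1$ yields $m\,s_2=1$, hence $s_2=1/m$ and $y_i=1/m$ for $i\in S$, with $\|y\|_2^2=m(1/m)^2=1/m=s_2$ confirming consistency. This is exactly the list of face barycenters.

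For support invariance (claim (i)) I would use the integrating-factor identity $y_i(t)=y_i(0)\exp\!\big(\int_0^t\kappa(p(s))\,(y_i(s)-s_2(s))\,ds\big)$, which is valid as long as the trajectory stays in the closed simplex; and it does, since $\dot y_i$ vanishes whenever $y_i=0$ and $\sum_i\dot y_i=0$ keeps $\mathbf{1}^\top y\equiv1$. Hence $y_i(0)=0\Rightarrow y_i(t)\equiv0$, so each face $\{y_j=0:\ j\notin S\}$ is forward invariant, and the dynamics restricted to such a face is the same collision flow in $m$ coordinates.

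For (ii)--(iii) I would split tangent directions at a barycenter supported on $S$, $|S|=m$, into within-face and transverse parts. Within $\Delta^{m-1}$ the equilibrium is the uniform point on $m$ arms, so Proposition~\ref{prop:uniform-stability} (with $K$ replaced by $m$) gives nontrivial eigenvalue $\kappa(p)/m$, together with the structural zero eigenvalue along $\mathbf{1}$. For a transverse coordinate $j\notin S$, linearizing $\dot y_j=\kappa\,y_j(y_j-s_2)$ around $y_j=0$, $s_2=1/m$ gives $\dot y_j\approx-(\kappa/m)\,y_j$, i.e.\ eigenvalue $-\kappa/m$. Thus for $\kappa>0$: if $m\ge2$ the within-face eigenvalue $+\kappa/m>0$ coexists with stable transverse modes $-\kappa/m<0$, so the barycenter is a saddle; if $m=1$ there are no within-face directions and all $K-1$ transverse eigenvalues equal $-\kappa<0$, so the vertex is asymptotically stable, recovering Proposition~\ref{prop:vertex-stability}. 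For $\kappa<0$ every sign flips, so only $m=K$ (the full-uniform point, with no transverse directions and within-face eigenvalue $\kappa/K<0$) is asymptotically stable and all $m<K$ barycenters are unstable. To upgrade ``asymptotically stable'' to the stated global/attracting picture I would invoke the strict Lyapunov function $\Phi(y)=\tfrac12\|y\|_2^2$ from Corollary~\ref{cor:lyap} together with LaSalle: for $\kappa>0$, $\Phi$ strictly increases off equilibria so $\omega$-limit sets are rest points, and Lemma~\ref{lem:order-preservation} (order preservation) pins the limit of a generic trajectory to the vertex of the initially dominant arm, matching Theorem~\ref{thm:global}; for $\kappa<0$, $\Phi$ strictly decreases and the uniform point is its unique interior minimizer.

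The hard part will be the precise saddle structure at the intermediate barycenters ($2\le m\le K-1$ when $\kappa>0$): one must genuinely exhibit both a nonzero stable subspace (the transverse directions) and a nonzero unstable subspace (the within-face directions), while keeping track of the always-present zero eigenvalue from the simplex constraint, and then argue (via monotonicity of $\Phi$ and LaSalle) that these saddles do not trap generic trajectories. The remaining bookkeeping --- the time dependence of $\kappa(p(t))$ --- is dispatched cleanly by the internal-time change of variables, provided the $\sign(\kappa)$ case split is threaded consistently through the equilibrium, invariance, and stability parts.
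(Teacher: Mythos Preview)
Your proposal is correct and largely aligned with the paper's proof sketch; the equilibrium characterization, support invariance (via the factor $y_i$), and the time-reversal reduction for $\kappa<0$ are handled the same way. The main difference is one of emphasis: the paper's sketch obtains stability directly from the Shahshahani gradient structure (Proposition~\ref{prop:shah-gradient-y} and Corollary~\ref{cor:lyap}), arguing simply that for $\kappa>0$ the flow ascends $\Phi(y)=\tfrac12\|y\|_2^2$ to its maximizers on $\Delta^{K-1}$ (the vertices) and for $\kappa<0$ descends to the unique minimizer (the full-uniform point). You instead compute the Jacobian spectrum explicitly at each face barycenter, splitting into within-face ($+\kappa/m$) and transverse ($-\kappa/m$) eigenvalues, and then close with the same Lyapunov/LaSalle argument. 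Your route buys a concrete justification of the word ``saddle'' at intermediate barycenters (you exhibit both signs explicitly), while the paper's route is shorter and makes the global attractor statement immediate from the extremal structure of $\Phi$ without any linearization.
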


\begin{proof}[Proof sketch]
At equilibrium, \(0=\dot y_i=\kappa\,y_i(y_i-s_2)\) implies:
either \(y_i=0\) or \(y_i=s_2\). If \(m\) coordinates are positive, then
\(1=\sum_i y_i= m\,s_2\Rightarrow y_i=s_2=\tfrac1m\) on the support.
For stability, note that \eqref{eq:y-flow} is a Shahshahani gradient system for the convex \(\Phi\),
scaled by \(\kappa\). When \(\kappa>0\) the flow ascends \(\Phi\) and converges to its maximizers on \(\Delta^{K-1}\),
which are precisely the vertices; when \(\kappa<0\) it descends to the unique minimizer, the full–uniform point.
Support invariance follows from the factor \(y_i\) in each coordinate.
\end{proof}

\section{Hyperparameters and Training Details}
\label{app:hparams}
Table~\ref{tab:train-config} summarizes the training and evaluation configuration used in all experiments.
We fine-tune a Qwen2.5-3B base model for $1410$ optimization steps (2 epochs) with a global batch size of $16$.
For rollout generation, we sample $8$ responses per prompt with temperature $1.0$ (top-$p=1.0$, top-$k=-1$) and truncate prompts/responses at a maximum length of $4000$ tokens each.
For actor optimization, we use GRPO with symmetric PPO clipping $(\varepsilon_{\mathrm{low}},\varepsilon_{\mathrm{high}})=(0.2,0.2)$ and Adam (learning rate $10^{-6}$, weight decay $0.1$, $(\beta_1,\beta_2)=(0.9,0.999)$), with gradient-norm clipping $1.0$ and a constant learning-rate schedule with $10$ warmup steps.
Evaluation is performed greedily (temperature $0.0$) with the same decoding truncation limits. In our experimental setup, we skipped the KL-regularization term by setting its corresponding coefficient to zero.


\begin{table}[t]
\centering
\caption{Training configuration.}
\label{tab:train-config}
\begin{tabular}{@{}l r@{}}
\toprule
\multicolumn{2}{l}{\textbf{Data Configuration}}\\
\midrule
Base Model & Qwen2.5-3B \\
Global Batch Size & 16 \\
Train Steps & 1410 \\
Total Epochs & 2 \\[3pt]

\multicolumn{2}{l}{\textbf{Rollout Inference}}\\
\midrule
Rollout Num per Prompt & 8 \\
Temperature & 1.0 \\
Top-p & 1.0 \\
Top-k & $-1$ \\
Max Prompt Length & 4000 \\
Max Response Length & 4000 \\[3pt]

\multicolumn{2}{l}{\textbf{Actor Training}}\\
\midrule
PPO Mini Batch Size & 32 \\
Advantage Estimation Type & GRPO \\
Clipping $\varepsilon_{\text{low}}$ & 0.2 \\
Clipping $\varepsilon_{\text{high}}$ & 0.2 \\
Optimizer & Adam \\
Learning Rate & $10^{-6}$ \\
Weight Decay & 0.1 \\
$(\beta_1,\beta_2)$ & $(0.9,\,0.999)$ \\
Gradient Norm Clipping & 1.0 \\
Learning Rate Scheduler & constant \\
Warmup Steps & 10 \\[3pt]
KL coefficient ($\beta$) & 0.0 \\[3pt]

\multicolumn{2}{l}{\textbf{Evaluation Setup}}\\
\midrule
Temperature & 0.0 \\
Top-p & 1.0 \\
Top-k & $-1$ \\
Max Generation Length & 4000 \\
\bottomrule
\end{tabular}
\end{table}


\section{Noise Injection Pseudocode}
\label{app:noise}

\begin{algorithm}[H]
\caption{Noisy Verifier Wrapper}
\label{alg:noisy-verifier}
\begin{algorithmic}[1]
\Require Oracle checker $\mathrm{Oracle}(\cdot)\in\{0,1\}$, target $(\TPR,\FPR)$
\Function{NoisyCheck}{$\text{program}$}
  \State $z \gets \mathrm{Oracle}(\text{program})$ \Comment{ground truth}
  \If{$z=1$}
     \State $r \gets \mathrm{Bernoulli}(\TPR)$
  \Else
     \State $r \gets \mathrm{Bernoulli}(\FPR)$
  \EndIf
  \State \Return $r$
\EndFunction
\end{algorithmic}
\end{algorithm}

\section{Data Sample}


\begin{tcolorbox}[
  colback=gray!1,
  colframe=gray!60!black,
  title={Example Coding Problem: \texttt{kMarsh}},
  fonttitle=\bfseries\ttfamily,
  breakable,
  enhanced,
  sharp corners=south,
  boxrule=0.8pt,
  left=6pt, right=6pt, top=6pt, bottom=6pt
]

\textbf{Problem Statement.}  
Solve the following coding problem using the programming language \texttt{Python}:

\medskip
Mr.\ K has a rectangular plot of land which may contain marshes where fenceposts cannot be set.  
He wants you to find the \emph{perimeter of the largest rectangular fence} that can be built on this land.

\medskip
For example, in the following $m\times n = 4\times 4$ grid, 
$\boldsymbol{x}$ marks a marsh and $\boldsymbol{.}$ marks good land:
\begin{verbatim}
....
..x.
..x.
x...
\end{verbatim}

If we number the rows and columns starting with $\mathrm{1}$, 
there are two main areas that can be fenced: $(1,1)-(3,2)$ and $(1,2)-(4,4)$.  
The longest perimeter is $10$.

\medskip
\textbf{Function Description.}  
Complete the function \texttt{kMarsh} in the editor below.  
It should print either an integer or the word \texttt{impossible}.

\begin{description}[style=unboxed, leftmargin=1.2cm]
\item[\texttt{kMarsh(grid)}:]  
\begin{itemize}[leftmargin=*, itemsep=1pt]
  \item \textbf{Input:} an array of strings that represent the grid.  
  \item \textbf{Output:} an integer representing the largest perimeter, or the string \texttt{impossible}.
\end{itemize}
\end{description}

\textbf{Input Format.}  
\begin{itemize}[leftmargin=*, itemsep=1pt]
  \item The first line contains two space-separated integers $m$ and $n$, the grid rows and columns.  
  \item Each of the next $m$ lines contains $n$ characters describing the land:  
  `'x'` (ASCII 120) if it is a marsh, and `'.'` (ASCII 46) otherwise.
\end{itemize}

\textbf{Constraints.}  
\[
2 \le m,n \le 500
\]

\textbf{Output Format.}  
Print a single integer—the largest perimeter—or \texttt{impossible} if no rectangular fence can be built.

\medskip
\textbf{Sample Input 0}
\begin{verbatim}
4 5
.....
.x.x.
.....
.....
\end{verbatim}

\textbf{Sample Output 0}
\begin{verbatim}
14
\end{verbatim}

\textbf{Explanation 0.}  
The fence can be built around the entire field.  
\[
\text{Perimeter} = 2(4-1) + 2(5-1) = 14.
\]

\medskip
\textbf{Sample Input 1}
\begin{verbatim}
2 2
.x
x.
\end{verbatim}

\textbf{Sample Output 1}
\begin{verbatim}
impossible
\end{verbatim}

\textbf{Explanation 1.}  
We need a minimum of four corner points to form a fence, hence it is impossible.

\medskip
\textbf{Sample Input 2}
\begin{verbatim}
2 5
.....
xxxx.
\end{verbatim}

\textbf{Sample Output 2}
\begin{verbatim}
impossible
\end{verbatim}

\textbf{Explanation 2.}  
The lower row prevents forming a valid rectangle.

\medskip
The input is provided via \texttt{stdin}, and the solution should print its result to \texttt{stdout}.

\medskip
\textbf{Task.}  
Now, solve the problem and return the code.

\end{tcolorbox}

\begin{tcolorbox}[
  colback=gray!1,
  colframe=gray!60!black,
  title={Test Cases for \texttt{kMarsh}},
  fonttitle=\bfseries\ttfamily,
  breakable,
  enhanced,
  sharp corners=south,
  boxrule=0.8pt,
  left=6pt, right=6pt, top=6pt, bottom=6pt
]

\textbf{Test Case List (JSON-like format):}
\begin{verbatim}
[
  {
    'fn_name': None,
    'input': '4 5\n.....\n.x.x.\n.....\n.....\n',
    'output': '14\n',
    'type': 'stdin_stdout'
  },
  {
    'fn_name': None,
    'input': '2 2\n.x\nx.\n',
    'output': 'impossible\n',
    'type': 'stdin_stdout'
  },
  {
    'fn_name': None,
    'input': '2 5\n.....\nxxxx.\n',
    'output': 'impossible\n',
    'type': 'stdin_stdout'
  }
]
\end{verbatim}

\end{tcolorbox}

\end{document}